\renewcommand{\P}{\mathbb{P}}
\newcommand{\E}{\mathbb{E}}
\newcommand{\Z}{\mathbb{Z}}
\newcommand{\R}{\mathbb{R}}
\newcommand{\N}{\mathbb{N}}
\newcommand{\eps}{\varepsilon} 
\def\id{{\mathbf I}}
\newcommand{\<}{\langle}
\renewcommand{\>}{\rangle}
\newcommand{\sign}{\text{sign}}
\newcommand{\op}{{\rm op}}
\def\sT{{\mathsf T}}
\def\bzero{{\boldsymbol 0}}
\newtheorem{theorem}{Theorem}
\newtheorem*{theorem*}{Theorem}
\newtheorem{lemma}{Lemma}
\newtheorem{assumption}{Assumption}
\newtheorem{definition}{Definition}
\newtheorem{proposition}{Proposition}
\newtheorem{claim}{Claim}
\newtheorem{conjecture}{Conjecture}
\newtheorem{corollary}{Corollary}
\theoremstyle{definition}
\newtheorem{remark}{Remark}
\DeclareSymbolFont{rsfs}{U}{rsfs}{m}{n}
\DeclareSymbolFontAlphabet{\mathscrsfs}{rsfs}
\def\bI{{\boldsymbol I}}
\def\bH{{\boldsymbol H}}
\def\bM{{\boldsymbol M}}
\def\bP{{\boldsymbol P}}
\def\bR{{\boldsymbol R}}
\def\bU{{\boldsymbol U}}
\def\bV{{\boldsymbol V}}
\def\bW{{\boldsymbol W}}
\def\ba{{\boldsymbol a}}
\def\be{{\boldsymbol e}}
\def\bg{{\boldsymbol g}}
\def\bk{{\boldsymbol k}}
\def\bs{{\boldsymbol s}}
\def\bu{{\boldsymbol u}}
\def\bv{{\boldsymbol v}}
\def\bw{{\boldsymbol w}}
\def\bx{{\boldsymbol x}}
\def\bz{{\boldsymbol z}}
\def\bdelta{{\boldsymbol\delta}}
\def\bTheta{{\boldsymbol \Theta}}
\def\hf{{\hat f}}
\def\de{{\rm d}}
\def\de{{\rm d}}
\def\Unif{{\rm Unif}}
\def\cO{{\mathcal O}}
\def\cC{{\mathcal C}}
\def\cL{{\mathcal L}}
\def\cF{{\mathcal F}}
\def\cS{{\mathcal S}}
\def\cO{{\mathcal O}}
\def\cH{{\mathcal H}}
\def\Unif{{\sf Unif}}
\def\normal{{\sf N}}
\def\proj{{\mathsf P}}
\def\NN{{\sf NN}}
\def\naturals{{\mathbb N}}
\def\normal{{\sf N}}
\def\proj{{\mathsf P}}
\def\Unif{{\sf Unif}}
\def\normal{{\sf N}}
\def\proj{{\mathsf P}}
\def\NN{{\sf NN}}
\def\naturals{{\mathbb N}}
\def\proj{{\mathsf P}}
\def\bz{{\boldsymbol z}}
\def\proj{{\mathsf P}}
\def\He{{\rm He}}
\def\bdelta{{\boldsymbol\delta}}
\def\be{{\boldsymbol e}}
\def\bu{{\boldsymbol u}}
\def\bg{{\boldsymbol g}}
\def\bv{{\boldsymbol v}}
\def\bTheta{{\boldsymbol \Theta}}
\def\NN{{\sf NN}}
\def\naturals{{\mathbb N}}
\def\bw{{\boldsymbol w}}
\def\de{{\rm d}}
\def\bx{{\boldsymbol x}}
\def\bW{{\boldsymbol W}}
\def\ba{{\boldsymbol a}}
\def\cF{{\mathcal F}}
\def\Unif{{\rm Unif}}
\def\bU{{\boldsymbol U}}
\def\bV{{\boldsymbol V}}
\def\bz{{\boldsymbol z}}
\def\proj{{\mathsf P}}
\def\He{{\rm He}}
\def\normal{{\sf N}}
\def\be{{\boldsymbol e}}
\def\bu{{\boldsymbol u}}
\def\bg{{\boldsymbol g}}
\def\bH{{\boldsymbol H}}
\def\cS{{\mathcal S}}
\def\be{{\boldsymbol e}}
\def\bu{{\boldsymbol u}}
\def\bg{{\boldsymbol g}}
\def\bTheta{{\boldsymbol \Theta}}
\def\bP{{\boldsymbol P}}
\def\tbu{\Tilde \bu}
\def\hf{\hat f}
\def\bR{{\boldsymbol R}}
\def\cC{\mathcal{C}}
\def\cH{\mathcal{H}}
\def\bzeta{{\boldsymbol \zeta}}
\def\ind{\mathbbm{1}}
\def\eps{\varepsilon}
\def\Span{\mathsf{span}}
\def\Leap{\mathrm{Leap}}
\def\tbw{\widetilde \bw}
\def\obg{\overline{\bg}} 
\def\og{\overline{g}}
\def\bm{\boldsymbol{m}}
\def\oT{\overline{T}}
\def\bones{{\boldsymbol 1}}
\def\grad{\text{grad}}
\def\obw{\overline{\bw}}
\def\tbw{\widetilde{\bw}}
\def\uD{\underline{D}}
\def\uM{\underline{M}}
\def\tbv{\widetilde{\bv}}
\def\tv{\widetilde{v}}
\def\op{\overline{p}}
\def\up{\underline{p}}
\def\oD{\overline{D}}
\def\oM{\overline{M}}
\def\tw{\widetilde{w}}
\def\tC{\widetilde{C}}
\def\ow{\overline{w}}
\def\otau{\overline{\tau}}
\def\ot{\overline{t}}
\def\oD{\overline{D}}
\def\bU{{\boldsymbol U}}
\DeclarePairedDelimiter{\ceil}{\lceil}{\rceil}
\DeclarePairedDelimiter{\floor}{\lfloor}{\rfloor}
\def\balpha{{\boldsymbol \alpha}}
\newcommand{\poly}{\mathrm{poly}}
\newcommand{\fNN}{\hf_{\NN}}
\colorlet{linkequation}{blue}
\title{
SGD learning on neural networks: \\
leap complexity and saddle-to-saddle dynamics
}
\author{Emmanuel Abbe\thanks{Mathematics Institute, EPFL}, \;\; Enric Boix-Adser\`a\thanks{Department of Electrical Engineering and Computer Science, MIT}, \;\; Theodor Misiakiewicz\thanks{Department of
    Statistics, Stanford University} }
\begin{document}

\maketitle

\begin{abstract}%
   We investigate the time complexity of SGD learning on fully-connected neural networks with isotropic data. We put forward a complexity measure,
{\it the leap}, which measures how “hierarchical” target functions are. 
For $d$-dimensional uniform Boolean or isotropic Gaussian data, our main conjecture states that the time complexity to learn a function $f$ with low-dimensional support is $$\Tilde \Theta (d^{\max(\mathrm{Leap}(f),2)}) \,\,.$$   
We prove a version of this conjecture for a class of functions on Gaussian isotropic data and 2-layer neural networks, under additional technical assumptions on how SGD is run. We show that the training  sequentially learns the function support with a saddle-to-saddle dynamic. Our result departs from \cite{abbe2022merged} by going beyond leap 1 (merged-staircase functions), and by going beyond the mean-field and gradient flow approximations that prohibit the full complexity control obtained here.
Finally, we note that this gives an SGD complexity for the full training trajectory that matches that of Correlational Statistical Query (CSQ) lower-bounds. 
\end{abstract}

\tableofcontents

\section{Introduction}\label{intro}

Deep learning has emerged as the standard approach to exploiting massive high-dimensional datasets. At the core of its success lies its capability to learn effective features with fairly blackbox architectures without suffering from the curse of dimensionality. To explain this success, two structural properties of data are commonly conjectured: (i) a {\it low-dimensional} structure that SGD-trained neural networks are able to adapt to; (ii) a {\it hierarchical} structure that neural networks can leverage with SGD training. In particular,  
\begin{description}
\item[From a statistical viewpoint:] A line of work \citep{bach2017breaking,schmidt2020nonparametric,kohler2016nonparametric,bauer2019deep} has investigated the sample complexity of learning with deep neural networks, decoupled from computational considerations. By directly considering global solutions of empirical risk minimization (ERM) problems over arbitrarily large neural networks and sparsity inducing norms, they showed that deep neural networks can overcome the curse of dimensionality on classes of functions with low-dimensional and hierarchical structures. However, this approach does not provide efficient algorithms: instead, a number of works have shown computational hardness of ERM problems \citep{blum1988training,klivans2009cryptographic,daniely2014average} and it is unclear how much this line of work can inform practical neural networks, which are trained using SGD and variants.

\item[From a computational viewpoint:] A line of work in computational learning theory has provided time- and sample-efficient algorithms for learning  Boolean functions with low-dimensional structure,
based on their sparse Fourier spectrum
\citep{mansour1994,odonnell_2014}. However, these algorithms are a priori quite different from SGD-trained neural networks. While unconstrained architectures can emulate any efficient learning algorithms \citep{abbe2020universality,abbe2021power}, it is unclear whether more `standard' neural networks can succeed on these same classes of functions or whether they require additional structure that pertains to hierarchical properties.

\end{description}

Thus, an outstanding question emerges from the current state of affairs:

\begin{center}
{\em For neural networks satisfying ``regularity assumptions'' (e.g., fully-connected, isotropically initialized layers), are there structural properties of the data that govern the time complexity of SGD learning? How does SGD exploit these properties in its training dynamics?}
\end{center}

Here the key points are: (i) the ``regularity'' assumption, which prohibits the use of unorthodox neural networks that can emulate generic PAC/SQ learning algorithms as in \cite{abbe2020universality,abbe2021power}; (ii) the requirement on the time complexity, which prohibits direct applications of infinite width, continuous time or infinite time analyses as in \cite{chizat2018global,chizat2020implicit}. We discuss in Section \ref{related} the various works that  made progress towards the above, in particular regarding single- and multi-index models. We now specify the setting of this paper.

\paragraph{IID inputs and low-dimensional latent dimension.} 
We focus on the following class of data distributions. First of all, we consider IID inputs, i.e., 
\begin{align}
\bx=(x_1, \dots, x_d) \sim \mu^{\otimes d},
\end{align}
and we focus on the case where $\mu$ is either  $\mathcal{N}(0,1)$ or $\Unif(\{+1,-1\})$, although we expect that other distributions would admit a similar treatment. Incidentally, the latter distribution is of interest in reasoning tasks related to Boolean arithmetic or logic \citep{saxton2019analysing,pvr,l2r}. 
We now make a key assumption on the target function, that of having a {\it low latent dimension}, i.e., $f_*(\bx)=h_*(\bz)$ where $\bz=\bM\bx$ and
\begin{equation}\label{eq:support_bool_gauss}
\begin{aligned}
&\text{(Gaussian case) \,\,\,  $\bM$ a $P \times d$ dimensional, real-valued matrix such that $\bM\bM^{\top}=\bI_P$}\\
&\text{(Boolean case) \,\,\, $\bM$ a $P \times d$ dimensional, $\{0,1\}$-valued matrix such that $\bM\bM^{\top}=\bI_P$}
\end{aligned}
\end{equation}
with the assumption that 
$P=O_d(1)$. In other words, the target function has a large ambient dimension but depends only on a finite number of latent coordinates. In the Gaussian case the coordinates are not known because of a possible rotation of the input, and in the Boolean case the coordinates are not known because of a possible permutation of the input.

Data with large ambient dimension but low latent dimension have long been a center of focus in machine learning and data science. It is known that kernel methods cannot exploit low latent dimension, i.e., it was proved in \cite{hsu2021approximation,hsudimension,kamath_dim,abbe2022merged} that any kernel method needs a number of features $p$ or samples $n$ satisfying 
\begin{align}
\min(n,p) \ge \Omega(d^D)
\end{align}
in order to learn a Boolean function as above with degree $D=O_d(1)$. 
In other words, for kernel methods $D$ controls the sample complexity irrespective of any potential additional structural properties of $f_*$ (e.g., hierarchical properties). On the other hand, it is known that this is not the limit for deep learning, which can break the $d^D$ curse, as discussed next. 

\paragraph{The example of staircases.} Consider the following example: $\bx \sim \Unif (\{+1 , - 1\}^d)$ is drawn from the hypercube and the target function is $4$-sparse, either
 \[
 h_{*,1} (\bz)= z_1 + z_1z_2 + z_1z_2z_3 + z_1z_2z_3z_4 \, , \qquad \mbox{or} \qquad  h_{*,2} (\bz) = z_1 z_2 z_3 z_4 \, .
 \]
The first function is called a vanilla staircase of degree 4 \citep{abbe2021staircase,abbe2022merged}. The second is a monomial of degree 4.
Each of these functions induces a function class under the permutation of the variables (i.e., one can consider the class of all monomials on any 4 of the $d$ input variables, and similarly for staircases).
One can verify that these function classes have similar approximation and statistical complexity because of the low-dimensional structure, but have different computational complexity because of the hierarchical structure. For example, under the Correlational Statistical Query (CSQ) model of computation \citep{bendavid1995learning,kearns1998efficient,bshouty2002using}, the first class has CSQ dimension $\Theta(d)$ versus $\Theta(d^4)$ for the second class\footnote{See Section~\ref{csq} for more details on CSQ.}.

Consider now learning these two functions with online-SGD\footnote{Online-SGD means that on each SGD iteration a fresh sample $(\bx^t,y^t)$ is used.} on a two-layer neural network  $\hf_{\NN} (\bx ; \bTheta) = \sum_{j \in [M]} a_j \sigma ( \< \bw_j , \bx \> + b_j)$.
For online-SGD in the mean-field scaling, it was shown in \cite{abbe2022merged} that $ h_{*,1}$ can be learned in $\Theta_d (d)$ steps, while $h_{*,2}$ cannot be learned in $O_d(d)$ steps, but it was not shown in which complexity $h_{*,2}$ could be learned. How can we understand this? At initialization, the gradient of the neural network has correlation with each monomial of order $O(d^{-(k-1)/2})$ inside the support and $O(d^{-(k+1)/2})$ outside the support (for a degree $k$-monomial). In the first case, the gradient has $O(1)$ correlation with the first monomial $z_1$ and can learn the first coordinate, then the second coordinate becomes easier to learn using the second monomial, and so on and so forth. In the second case, the correlation is of order $d^{-3/2}$ on the support and SGD needs to align to all $4$ coordinates at once, which takes more time.  Indeed, we will see that to align with $k$ coordinates at once, we need $\Tilde O (d^{k-1})$ steps, which matches (up to logarithmic factors) the computational lower bound of CSQ algorithms.

In this paper, we treat these functions in a unified manner, with the ``leap'' complexity measure, where $h_{*,1}$ and $h_{*,2}$ are leap-1 and leap-4 functions respectively. Leap-$k$ functions will be learned in $\tilde{\Theta}(d^{\max(k-1,1)})$ online-SGD steps. Note that going from staircase functions having leap-1 to more general functions of arbitrary (finite) leaps is highly non-trivial. This is because the mean-field  gradient flow used in \cite{abbe2022merged} cannot be used beyond the scaling of $O(d)$ steps, as required for $k > 1$, because the mean-field PDE approximation breaks down \citep{mei2019mean}.

\begin{figure}
    \centering
    \includegraphics[width=0.87\textwidth]{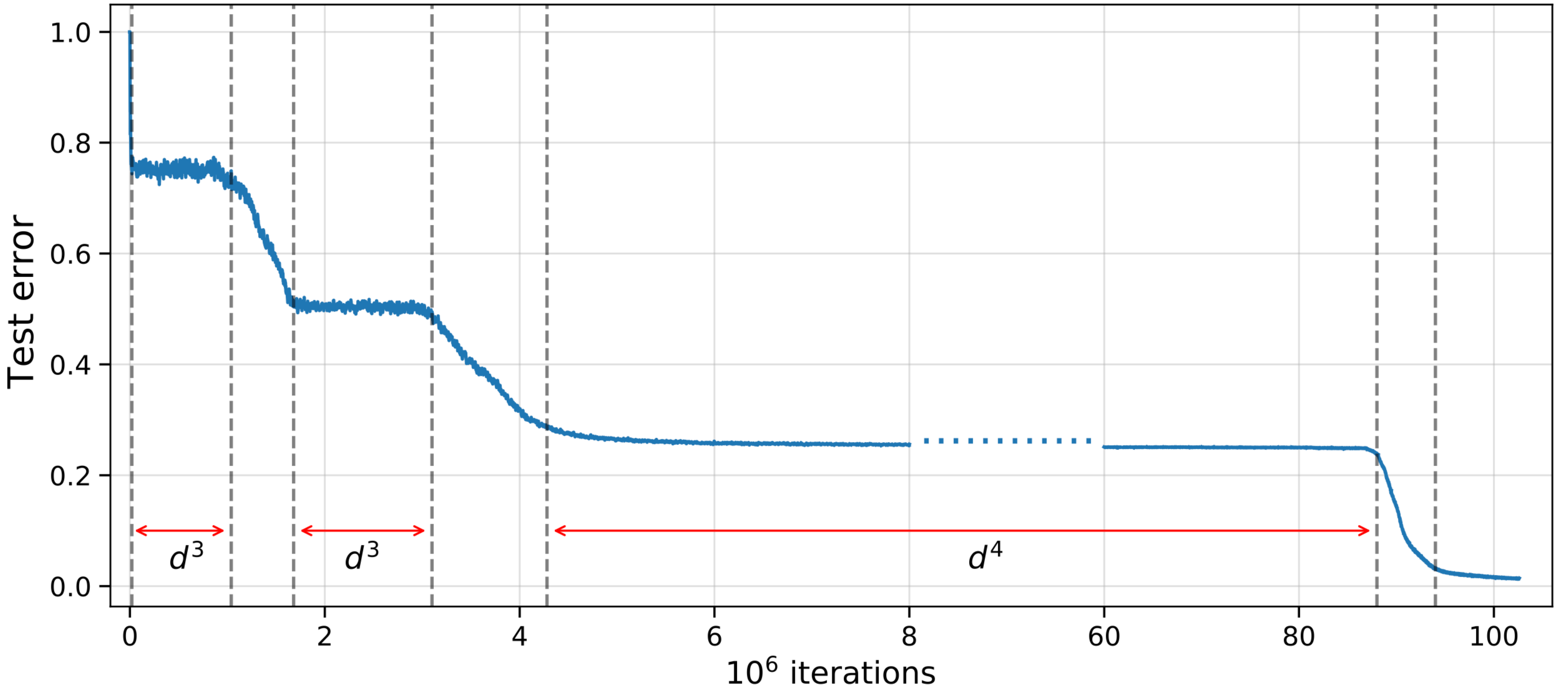}
    \caption{Test error versus the number of online-SGD steps to learn $h_*(\bz) = z_1 + z_1 z_2 \cdots z_5 + z_1 z_2 \cdots z_9 + z_1z_2 \cdots z_{14}$ in ambient dimension $d=100$ on the hypercube. We take $M = 300$ neurons with shifted sigmoid activation and train both layers at once with constant step size $0.4/d$. The SGD dynamics follows a saddle-to-saddle dynamic and sequentially picks up the support and monomials $z_1$ in roughly $d$ steps, $z_1z_2 \cdots z_5$ in $d^3$ steps (leap of size $4$), $z_1z_2 \cdots z_9$ in $d^3$ steps (leap of size $4$) and $z_1z_2 \cdots z_{14}$ in $d^4$ steps (leap of size $5$).}
    \label{fig:StoS_punchy}
\end{figure}

\subsection{The leap complexity}\label{sec:leap}
We now define the leap complexity. Any function in $L^2 (\mu^{\otimes P})$ can be expressed in the orthogonal basis of $L^2 (\mu^{\otimes P})$, i.e., the Hermite or Fourier-Walsh basis for $\mu \sim \normal (0,1)$ and $\mu \sim \Unif (\{+1,-1\})$ respectively, 
\begin{align}\label{eq:decompo_monomials}
h_*(\bz) =\sum_{S \in \mathcal{Z}^P} \hat{h}_*(S) \chi_S(\bz),
\end{align}
where $\mathcal{Z}=\{0,1\}$ for the Boolean case and $\mathcal{Z}=\mathbb{Z}_+$ for the Gaussian case, $\chi_S(\bz) = \prod_{i \in [P]} \chi_{S_i}(z_i)$, 
\begin{align}
\chi_{S_i}(z_i) =\begin{cases}
z_i^{S_i} &\text{ (Boolean case)}\\
\mathrm{He}_{S_i}(z_i) & \text{ (Gaussian case)}
\end{cases}
\end{align}
where $\mathrm{He}_k$ is the $k$-th Hermite polynomial, $k \in \mathbb{Z}_+$. The leap is given as follows.
\begin{definition}[Leap complexity] \label{def:leap_general} Let $h_*$ be as before with non-zero basis elements given by the subset $\cS (h_*) := \{ S_1 , \ldots , S_m \}$, $m \in \mathbb{Z}_+$. We define the leap complexity of $h_*$ as\footnote{$\Pi_{m}$ is the symmetric group of permutations on $[m]$.} 
\[
\mathrm{Leap}(h_*) : = \min_{\pi \in \Pi_{m}} \max_{i \in [m]} \big\| S_{\pi (i)}\setminus \cup_{j=0}^{i-1} S_{\pi(j)} \big\|_1 \,,
\]
where, for $S_j = (S_j(1) , \ldots , S_j(P) )$ in $\{0,1\}^P$ or $\mathbb{Z}_+^P$ for the Boolean or Gaussian case respectively, 
$\| S_{\pi (i)}\setminus \cup_{j=0}^{i-1} S_{\pi(j)} \|_1 := \sum_{k \in [P]} S_{\pi(i)}(k) \ind \{S_{\pi(j)}(k) = 0, \forall j \in [i-1] \}$, 
with $S_{\pi(0)}=0^P$. We then say that $h_*$ is a leap-$\mathrm{Leap}(h_*)$ function.
\end{definition}
In words, a function $h_*$ is leap-$k$ if its non-zero monomials can be ordered in a sequence such that each time a monomial is added, the support of $h_*$ grows by at most $k$ new coordinates, where each new coordinate is counted with multiplicity in the Gaussian case (and the 1-norm collapses to the cardinality of the difference set in the Boolean case). Note that the definition of leap-$k$ functions on the hypercube generalizes the definition of functions with the merged-staircase property (leap-1 functions) from \cite{abbe2022merged}.

Some examples in the Boolean case,
\[
\begin{aligned}
&\mathrm{Leap} ( z_1 + z_1z_2 + z_1z_2z_3 + z_1z_2z_3z_4 ) = 1\, , \qquad  && \mathrm{Leap} ( z_1 + z_2 + z_2z_3 z_4 ) = 2\, ,  \\
&\mathrm{Leap} ( z_1 + z_1z_2z_3 + z_2z_3 z_4 z_5z_6z_7) = 4\, , \qquad &&\mathrm{Leap} ( z_1 z_2 z_3 + z_2 z_3 z_4) = 3\, ,
\end{aligned}
\]
and on isotropic Gaussian data,
\[
\begin{aligned}
&\mathrm{Leap} ( \He_k (z_1) ) = \Leap(\He_1(z_1) \He_1 (z_2) \cdots \He_1(z_k)) = k \, ,  \\
&\mathrm{Leap} ( \He_{k_1} (z_1) + \He_{k_1} (z_1) \He_{k_2} (z_2) + \He_{k_1} (z_1) \He_{k_2} (z_2)\He_{k_3} (z_3) ) = \max(k_1 , k_2, k_3) \, ,  \\
&\mathrm{Leap} ( \He_2 (z_1) + \He_2 (z_2 ) + \He_2 (z_3) + \He_3(z_1)\He_8(z_3) ) = 2\, .
\end{aligned}
\]

\subsection{Summary of our contributions}
\label{sec:summary}

 \paragraph{Overview.} This paper puts forward a general conjecture characterizing the time complexity of SGD-learning on regular neural networks with isotropic data of low latent dimension. The key quantity that emerges to govern the complexity is the \textit{leap} (Definition \ref{def:leap_general}). 
This gives a formal measure of ``hierarchy'' in target functions, going beyond spectrum sparsity and emerging from the study of SGD-trained regular networks. The paper then proves a specialization of the conjecture to a representative class of functions on Gaussian inputs, but  for 2-layer neural networks and with certain technical assumptions on how SGD is run. The two main innovations of the proof are (i) a full control of the time complexity of SGD learning on a fully-connected network (without infinite width or continuous time approximations); (ii) going beyond one-step gradient analyses and showing that the leap controls the entire learning trajectory due to a sequential learning mechanism (saddle-to-saddle). We also provide experimental evidence towards the more general conjecture with vanilla SGD and derive CSQ lower-bounds for noisy GD that match our achievability bounds. 

\begin{conjecture}\label{conj:leap}
Let $f_* : \R^d \to \mathbb{R}$ in $L_2(\mu^{\otimes d})$ for $\mu$ either $\mathcal{N}(0,1)$ or $\Unif\{+1,-1\}$ satisfying the low-latent-dimension hypothesis $f_* (\bx) = h_* ( \bM \bx)$ in \eqref{eq:support_bool_gauss} for some $P=O_d(1)$. 
Let $\hf_{\NN}^{t}$ be the output of training a fully-connected neural network with $\mathrm{poly}(d)$-edges and rotationally-invariant weight initialization with $t$ steps of online-SGD on the square loss. Then, for all but a measure-$0$ set of functions (see below), the risk is bounded by
\begin{align*}
R(\hf_{\NN}^t) := \E_{\bx}\Big[\big(\hf_{\NN}^t(\bx) - f_*(\bx)\big)^2\Big] \le \eps \;\;\; \text{ if and only if } \;\;\; t = \tilde{\Omega}_d ( d^{\mathrm{Leap}(h_*)-1 \vee 1}) \mathrm{poly}(1/\eps)\,.
\end{align*}
So the total time complexity\footnote{The total time complexity is given by computing for each neuron and SGD step, a gradient in $d$ dimensions.} is $\tilde{\Omega}_d ( d^{\mathrm{Leap}(h_*)\vee 2 }) \mathrm{poly}(1/\eps)$ for bounded width/depth networks\footnote{The polylogs in $\tilde{\Omega}$ might not be necessary in the special case of $\mathrm{Leap}=1$ as indicated by \cite{abbe2022merged}.}. \end{conjecture}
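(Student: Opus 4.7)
The plan is to establish the upper and lower bounds of Conjecture~\ref{conj:leap} separately. The upper bound requires a \emph{saddle-to-saddle} analysis of the SGD trajectory, showing that the dynamics sequentially discover the latent coordinates in a leap-optimal order (the $\pi^\star$ achieving the minimum in Definition~\ref{def:leap_general}), spending $\tilde{\Theta}(d^{\ell_i - 1})$ steps at each stage, where $\ell_i$ is the number of new latent coordinates introduced at stage $i$. The lower bound follows from a reduction to the Correlational Statistical Query (CSQ) model, matching the $\tilde{\Theta}(d^{\Leap(h_*)})$ CSQ dimension of the orbit of $h_*$ under the isotropy group of $\mu^{\otimes d}$.

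For the \textbf{achievability}, write the neurons as $(a_j, \bw_j, b_j)$ and let $\bkappa_j^t := \bM \bw_j^t \in \R^P$ be the projection of $\bw_j^t$ onto the latent subspace. By the rotational or permutation invariance of both the initialization and $\mu^{\otimes d}$, the population-level dynamics of $(a_j, \bkappa_j^t, \|\bw_j^t\|, b_j)$ decouple from the orthogonal direction up to stochastic fluctuations that can be controlled by concentration over $\poly(d)$ steps. I proceed by induction on the leap-ordering $\pi^\star$: assuming that after $T_i$ SGD steps a constant fraction of neurons have $\bkappa_j^{T_i}$ with $\Theta(1)$ mass on the coordinates $V_i := \bigcup_{j \le i}\supp(S_{\pi^\star(j)})$, the population gradient with respect to the next basis element $\chi_{S_{\pi^\star(i+1)}}$ acquires a factor of $\Theta(1)$ from each already-aligned coordinate and a factor of $\Theta(d^{-1/2})$ per new coordinate in $V_{i+1}\setminus V_i$, producing a signal of order $d^{-(\ell_{i+1}-1)/2}$ along the $\ell_{i+1}$ new directions. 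A power-method / escape-from-saddle argument then shows $\tilde{O}(d^{\ell_{i+1}-1})$ SGD steps suffice to complete that stage. After all stages, freezing the first layer and running $\poly(1/\eps)$ steps on the output layer (a convex problem) drives the risk below $\eps$.

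For the \textbf{lower bound}, any online SGD algorithm on a fully-connected network with $\poly(d)$ edges and bounded mini-batch gradients can be simulated by a CSQ oracle with tolerance $\tau \gtrsim d^{-O(1)}$, so it suffices to show that the orbit class $\{h_*(\bM\cdot)\,:\,\bM\text{ as in }\eqref{eq:support_bool_gauss}\}$ has CSQ dimension $\tilde{\Theta}(d^{\Leap(h_*)})$. I would exhibit $\tilde{\Omega}(d^{\Leap(h_*)})$ near-orthogonal functions in this orbit by enumerating placements of the monomial carrying the maximum leap; orthogonality of the characters $\chi_S$ across distinct supports forces pairwise inner products of order $d^{-\Leap(h_*)}$, yielding a standard CSQ dimension lower bound.

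The \textbf{main obstacles} I anticipate are threefold. (i) Controlling the SGD trajectory beyond the $O(d)$-step validity window of the mean-field / gradient-flow approximations used in \cite{abbe2022merged} for $\Leap = 1$: concentration must be maintained over the full $\poly(d)$-step discrete stochastic dynamics on a non-convex landscape with many intermediate saddles, and the signal-to-noise ratio at the $i$-th saddle is only $d^{-(\ell_i-1)/2}$, so naive concentration bounds would fail by polynomial factors. (ii) Handling inter-neuron coupling near each saddle: multiple neurons must coordinate to escape along different directions of the newly learned sub-space $V_{i+1}\setminus V_i$, and one must ensure that enough neurons choose complementary escape directions despite generic initializations possessing high symmetry. (iii) Identifying the exceptional measure-zero set of functions; I expect these are degenerate cases where coefficient cancellations in the Hermite or Fourier expansion of the activation make the relevant gradient vanish, so a careful genericity argument on $\hat{h}_*$ (and possibly on the Taylor coefficients of $\sigma$) is required.
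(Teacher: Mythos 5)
The statement you are addressing is a conjecture: the paper does not prove it, and only establishes a specialization (two-layer networks, layerwise training with a projection step, Gaussian data, nested-monomial targets of the form \eqref{eq:nested_monomials}, plus CSQ/linear-method lower bounds that are separate from the SGD model). Your achievability sketch mirrors the strategy the paper uses for that specialization — a drift-plus-martingale (escape-from-saddle) analysis showing sequential alignment of first-layer weights, followed by a convex fit of the output layer — but the points you list as ``anticipated obstacles'' are exactly where the paper has to retreat to its restricted setting: the decoupling of neurons is enforced there by a vanishingly small second-layer initialization, layerwise training and an explicit projection, not derived for vanilla SGD; the tight per-saddle time $\tilde\Theta(d^{\ell_i-1})$ is only obtained with an adaptive step-size schedule and increasing leaps (Theorem~\ref{thm:sequential_alignment_adaptive}), while with a constant step size the proven per-stage scaling is $d^{(D_l+D)/2-1}$; and non-nested targets with cancelling directions (e.g.\ $\He_2(z_1)\He_3(z_2)-\He_3(z_1)\He_2(z_2)$) are explicitly out of reach. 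So the upper-bound part of your plan is a reasonable roadmap but not a proof, and asserting that the fluctuations ``can be controlled by concentration over $\mathrm{poly}(d)$ steps'' assumes away the main difficulty.

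The genuine gap is in your lower bound. You claim that any online-SGD run on a $\mathrm{poly}(d)$-edge fully-connected network can be simulated by a CSQ oracle with tolerance $\tau\gtrsim d^{-O(1)}$, and then invoke a CSQ-dimension bound. This reduction does not hold: online SGD uses one fresh sample per step, so each gradient is an unbiased but \emph{randomly} perturbed estimate of the population gradient, not an adversarially perturbed query with inverse-polynomial tolerance; with batch size one there is no concentration at the level of a single query, and it is known that SGD on (sufficiently unconstrained) polynomial-size networks can emulate algorithms that beat SQ/CSQ lower bounds \citep{abbe2020universality,abbe2021power}. The paper is explicit about this: its Propositions~\ref{prop:leap-csq-boolean} and \ref{prop:isoleap-csq-gaussian} bound CSQ/noisy-GD models, \emph{not} online-SGD, and whether the ``regularity'' of the architecture effectively imposes a CSQ-like constraint is raised as an open question (Appendix~\ref{app:beyond_CSQ}); the only SGD-level impossibility result cited is the mean-field, linear-sample-regime argument of \cite{abbe2022merged} for $\Leap=1$. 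Moreover, even within CSQ, the Gaussian-case bound proven is only $d^{\Leap(h_*)/2}$ (the $d^{\Leap}$ scaling there is itself conjectural), and one must use the rotation-invariant $\mathrm{isoLeap}$ rather than the basis-dependent leap, which is also where the measure-zero exceptional set enters — it is not merely a degeneracy of the activation's coefficients as you suggest, but of the target's coefficients (e.g.\ $z_1+z_2+z_3+z_1z_2z_3$, or leap-lowering rotations in the Gaussian case, cf.\ Appendix~\ref{app:discussion_leap}). As written, the ``only if'' direction of the conjecture therefore does not follow from your argument.
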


The ``measure-0'' statement in the conjecture means the following. For any set $\{S_1,\ldots,S_m\}$ of nonzero (Fourier or Hermite) basis elements, the conjecture holds for all $h_*$ with $\cS (h_*) = \{S_1 , \ldots , S_m\}$ in the decomposition \eqref{eq:decompo_monomials}, except for a set of coefficients $\{ (\hat h_* (S_i ))_{i\in [m]}  \}\subset \R^m$ of Lebesgue-measure $0$. 
This part of the conjecture is needed for Boolean functions, since it was proved in \cite{abbe2022merged} that a measure-$0$ set of ``degenerate'' leap-1 functions on the hypercube are not learned in $\Theta(d)$ SGD-steps by 2-layer neural networks in the mean-field scaling. However, we further conjecture that in the case of Gaussian data the measure-$0$ modification can be removed if we instead use a rotationally invariant version of the leap. See discussion in Appendix \ref{app:discussion_leap}.

\begin{remark}
We believe that the conjecture (in particular the time complexity scaling) holds for more general architectures than those with isotropically-initialized layers, as long as enough `regularity' assumptions are present at initialization (prohibiting the type of `emulation networks' used in \cite{abbe2020universality}). Note that it is not enough to ask for only the first layer to be initialized with a rotationally-invariant distribution, as this may be handled by using emulation networks on the subsequent layers, but weaker invariances of subsequent layers (e.g., permutation subgroups) may suffice.   
\end{remark}

\paragraph{Formal results.} In order to prove a formal result as close as possible to the general conjecture, we rely on the following specifications: (1) 2-layer NN with smooth activations, (2) layer-wise SGD, (3) projected gradient steps, and (4) a representative subclass of functions on Gaussian isotropic data. We refer to Section \ref{sec:learning_leap_SGD} for the details of the formal results. We also provide in Section \ref{csq} lower-bounds for kernels and CSQ algorithms on regular neural networks. In particular, our results show that SGD-training on fully-connected neural networks achieves the optimal $d^{\Theta(\Leap(h_*))}$ computational complexity of the best CSQ algorithm on this class of sparse functions, going beyond kernels.

The characterization obtained in this paper implies a relatively simple picture for learning low-dimensional functions with SGD on neural networks:

\begin{description}
\item[Picking up the support.] SGD sequentially learns the target function with a saddle-to-saddle dynamic. Specifically, in the first phase, the network learns the support that is reachable by the monomial(s) of lowest degree, and fits these monomials to produce a first descent in the loss. Then, iteratively, each time a new set of coordinates is added to the support, with cardinality bounded by the leap $L$, SGD takes at most $\tilde{\Theta}(d^{\max(L-1,1)})$ steps to identify the new coordinates, before escaping the saddle with another loss descent that fits the new monomials. Thus the dynamic moves from saddle points to saddle points, with plateaus of length corresponding to the leap associated with each saddle. See Figure \ref{fig:StoS_punchy} for an illustration.\footnote{As we will discuss, we only show a saddle-to-saddle behavior when the leaps are of increasing size as the dynamics progress, so it is theoretically open whether it holds in the more general setting where leaps can decrease.} 

\item[Computational time.] Since the total training time is dominated by the time to escape the saddle with the largest leap, our results imply a $\tilde{\Theta} (d^{\Leap(h_*) \vee 2})$ time complexity. This scaling matches the CSQ lower-bounds from Section \ref{csq}, which are also exponential in the leap. Thus SGD on regular neural networks and low latent dimensional data is shown to achieve a time scaling to learn that matches that of optimal CSQ algorithms; see Section \ref{csq} for further discussions.  

\item[Curriculum learning.] SGD on regular neural networks implicitly implements a form of `adaptive curriculum' learning. SGD first picks up low-level features that are computationally and statistically easier to learn, and by picking up these low level features, it makes the learning of higher-level features in turn easier. As mentioned in the examples of Table \ref{tab:comparison}: learning $z_1 \cdots z_{2k}$ takes $\widetilde{\Theta} (d^{2k-1})$ sample complexity (leap-$2k$ function). But if we add an intermediary monomial to our target to create $z_1 \cdots z_k + z_1 \cdots z_{2k}$,  then it takes $\widetilde{\Theta} (d^{k-1}) $ steps to learn (leap-$k$ function). If we have a full staircase, it only requires $\Theta(d)$ (leap-$1$ function). This thus gives an adaptive learning process that follows a curriculum learning procedure where features of increasing complexity guide the learning. 
\end{description}

Finally we note that we considered here the setting of online-SGD, and a natural question is to consider how the picture may change under ERM (several passes with the same batch of samples). The ERM setting is however harder to analyze. We consider this to be an important direction for future works. Note that our results imply a sample complexity equal to the number of SGD steps $n = t = \widetilde{\Theta} (d^{\max(\Leap - 1,1)})$. In ERM, we reuse samples and consequently reduce the sample complexity. We conjecture in fact that $n = \widetilde{\Theta} (d^{\max(\Leap /2, 1)})$ is optimal for ERM. Furthermore, this paper considers the case of low-dimensional functions $P = O_d(1)$, which allows to focus on the dependency on $d$ in the time-complexity of SGD. A natural future direction is to extend these results to larger $P$. See Appendix \ref{app:beyond_sparse} for further discussion.

\begin{table}[t!]
  \begin{center}
    \label{tab:table1}
    \def\arraystretch{1.2}
    \begin{tabular}{|c|c|c|c|} 
      \hline
      $h_* (\bz)=$ &  $ z_1 z_2 \cdots z_{2k} $ & $z_1 z_2 \cdots z_{k} + z_1z_2 \cdots z_{2k}$ & $ z_1 + z_1 z_2  + \ldots + z_1z_2 \cdots z_{2k} $ \\
      \hline
      Kernels & $ \Omega (d^{2k} )$ & $ \Omega (d^{2k} )$ & $ \Omega (d^{2k} )$ \\
      \hline
      SGD on NN & $ \Tilde \Theta (d^{2k-1})$ & $\Tilde \Theta (d^{k -1})$ & $\Theta (d )$  \\
      \hline
    \end{tabular}

        \caption{Sample size $n$ to fit $f_* (\bx) = h_* (\bM\bx)$. SGD-trained neural networks implicitly implement an `adaptive' or 'curriculum' learning scheme, by exploiting lower degree monomials to efficiently learn higher degree monomials.
        }\label{tab:comparison}
  \end{center}

\end{table}

\subsection{Related works}\label{related}

A string of works \citep{allen2019can,li2020learning,daniely2020learning,allen2020backward,suzuki2020benefit,ba2022high,ghorbani2019limitations,telgarsky2022feature} has explored the power of learning with neural networks beyond neural tangent kernels \citep{jacot2018neural}. In particular, much attention has been devoted to learning multi-index models \citep{chen2020towards,abbe2022merged,nichani2022identifying,barak2022hidden,damian2022neural,mousavi2022neural,bietti2022learning,refinetti2021classifying}, i.e., functions that only depend on a small number of (unknown) relevant directions $\E[y | \bx]=h_* ( \< \bu_1 , \bx\> , \ldots , \< \bu_P , \bx \>)$. These functions offer a simple setting where we expect to see a large benefit of non-linear `feature learning' (aligning the weights of the neural networks with the sparse support), compared to fixed-feature methods (kernel methods). The conjectural picture described in our paper offers a unified framework to understand learning multi-index functions with SGD-trained regular neural networks on square loss. For example, \cite{mousavi2022neural} considers learning monotone single-index functions, which is a special case of learning leap-$1$ functions, and shows that they can be learned in $\tilde{\Theta} (d)$ online-SGD steps. \cite{damian2022neural} considers learning a low-rank polynomial on Gaussian data, with null Hermite-$1$ coefficients and full rank Hessian, which implies that the polynomial is a leap-$2$ function. They show that it can be learned in $n = \Theta (d^2)$ samples with one-gradient descent step on the first layer weights, while we conjecture (and show for a subset of those polynomials) that $\tilde{\Theta} (d)$ online-SGD steps is sufficient. \cite{barak2022hidden} considers learning degree-$k$ monomials on the hypercube and shows that $n = d^{O(k)}$ samples are sufficient, using one gradient descent step on the first layer, while we conjecture (and prove in the Gaussian case) a tighter scaling of $\tilde{\Theta} (d^{k-1})$ online-SGD steps\footnote{Note that a kernel method can learn degree-$k$ monomials with $n = \Theta (d^k)$ samples, and this tighter analysis is necessary to obtain a separation here.}. \cite{bietti2022learning} considers a single-index leap-$k$ function on Gaussian data and obtains the tight scaling $\tilde{\Theta} (d^{k-1})$ with a neural network where all first layer weights are equal. An important innovation of our work compared with these previous results is that we show a \textit{sequential learning mechanism} with several learning phases, which prevents the use of single-index models \citep{bietti2022learning} or one gradient-descent step analysis \citep{daniely2020learning,barak2022hidden,damian2022neural}.

 In parallel, several works have studied the dynamics of SGD in simpler non-convex models in high dimensions \citep{ge2015escaping,tan2019online,chen2019gradient,arous2021online}. Our analysis relies on a similar drift plus martingale decomposition of online-SGD as in \cite{tan2019online,arous2021online}. In particular, the leap complexity is related to the \textit{information-exponent} introduced in \cite{arous2021online}. The latter considers a single-index model trained with online-SGD on a non-convex loss and the information exponent captures the scaling of the correlation between the model at a typical initialization and the global solution. \cite{arous2021online} showed that, with information exponent $k$, online-SGD requires $\widetilde{\Theta} (d^{k-1 \vee 1})$ steps to converge, similarly to the scaling presented in this paper. However our analysis and the definition of the leap complexity differ from \cite{arous2021online} in two major ways. First, our model is not a single parameter model, so a much more involved analysis is required for the dynamics. Second, the information exponent is a coefficient that only applies at initialization, while the leap-complexity is a measure of targets that controls the entire learning trajectory (our neural networks visit several saddles during training). 
 
 See Appendix \ref{add_refs} for further references.

\section{Lower bounds on learning leap functions}\label{csq}

Linear methods such as kernel methods suffer exponentially in the degree of the target function, and cannot use the ``hierarchical'' structure to learn faster. This was proved in \cite{abbe2022merged} for the Boolean case, and this work extends the result to the Gaussian case:

\begin{proposition}[Lower bound for linear methods; informal statement of Propositions~\ref{prop:degree-linear-boolean} and \ref{prop:degree-linear-gaussian}]\label{prop:informal-linear-lower-bound}
Let $h_*$ be a degree-$D$ polynomial over the Boolean hypercube (resp., Gaussian measure). Then there are $c_{h_*}, \eps_{h_*} > 0$, such that any linear method needs $c_{h_*} d^D$ samples to learn $f_*(\bx) = h_*(\bM \bx)$ to less than $\eps_{h_*} > 0$ error, where $\bM$ is an unknown permutation (resp., rotation) as in \eqref{eq:support_bool_gauss}.
\end{proposition}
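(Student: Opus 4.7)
The plan is to adapt the projection-based lower bound technique from \cite{abbe2022merged,hsudimension,kamath_dim}, extending it from the Boolean hypercube to Gaussian isotropic inputs. The key observation is that any linear method produces a predictor in a subspace $\cL \subset L^2(\mu^{\otimes d})$ of dimension at most $\min(n,p)$: for a fixed feature map $\phi_1,\ldots,\phi_p$ we have $\hat f \in \mathrm{span}(\phi_j)$, while for a kernel method with $n$ training points we have $\hat f \in \mathrm{span}(K(\cdot,\bx_1),\ldots,K(\cdot,\bx_n))$ by the representer theorem. In either case, the deterministic lower bound
\begin{align*}
R(\hat f) \,\ge\, \|f_* - \proj_\cL f_*\|^2 \,=\, \|f_*\|^2 - \|\proj_\cL f_*\|^2
\end{align*}
holds for any predictor $\hat f \in \cL$.

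Next I would average over a random $\bM$ uniform in the appropriate symmetry group $G$: the symmetric group on $[d]$ in the Boolean case, or the Stiefel manifold $\{\bM \in \R^{P \times d} : \bM\bM^\top = \bI_P\}$ with Haar measure in the Gaussian case. Writing $(\psi_j)$ for any orthonormal basis of $\cL$, Parseval gives
\begin{align*}
\E_\bM \|\proj_\cL f_{*,\bM}\|^2 \,=\, \sum_{j=1}^{\dim(\cL)} \E_\bM \langle f_{*,\bM}, \psi_j\rangle^2.
\end{align*}
The argument is then closed by a \emph{per-feature correlation bound}: for every unit-norm $\psi \in L^2(\mu^{\otimes d})$,
\begin{align*}
\E_\bM \langle f_{*,\bM}, \psi\rangle^2 \,\le\, C_{h_*}\, d^{-D}.
\end{align*}
Summing over the basis gives $\E_\bM \|\proj_\cL f_{*,\bM}\|^2 \le C_{h_*}\dim(\cL)/d^D$, so if $\min(n,p) < c_{h_*} d^D$ then the average risk exceeds $\|f_*\|^2/2$, and hence the same lower bound holds for some specific $\bM$ by an averaging argument.

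The heart of the proof is the per-feature correlation bound. In the Boolean case, expand $h_*$ and $\psi$ in the Fourier-Walsh basis: for a uniformly random permutation $\bM$, cross-terms of distinct characters vanish and the contribution from each character of support size $|S|$ is $1/\binom{d}{|S|}$; the degree-$D$ component of $h_*$ dominates and yields the $d^{-D}$ scaling. In the Gaussian case, expand $\psi$ in the multivariate Hermite basis indexed by $\balpha \in \Z_+^d$ and use that $\He_\balpha(\bM\bx) = \sum_\bbeta C_{\balpha,\bbeta}(\bM) \He_\bbeta(\bx)$, where the coefficients $C_{\balpha,\bbeta}(\bM)$ are polynomials in the entries of $\bM$ of matching degree. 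Squaring and integrating over $\bM$ reduces to moments of Haar-orthogonal entries, computable via Weingarten or zonal spherical function calculus. Since the irreducible $O(d)$-representation on degree-$D$ homogeneous polynomials has dimension $\Theta(d^D)$, Schur-type orthogonality yields that the average squared coefficient $\E_\bM C_{\balpha,\bbeta}(\bM)^2$ is at most $O(d^{-D})$ when $|\balpha|=|\bbeta|=D$, which combined with Cauchy-Schwarz against $\hat\psi$ gives the claimed bound.

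The main obstacle is making the Gaussian correlation bound tight: unlike the Boolean case where counting permutations fixing a support is elementary, one must handle the continuous orthogonal action on Hermite polynomials. The cleanest route is to use the fact that, under the Gaussian measure, Hermite polynomials $\He_\balpha$ with $|\balpha| = k$ span an irreducible $O(d)$-submodule of dimension $\Theta(d^k)$, and apply Schur's lemma to reduce $\E_\bM$ over the Stiefel manifold to a scalar multiple of the identity in each isotypic component. A secondary technical subtlety is the constant $\eps_{h_*}$: the bound only has bite when the top-degree component of $h_*$ carries nontrivial $L^2$-mass, so a mild nondegeneracy assumption on the degree-$D$ Hermite coefficients of $h_*$ must be folded into $c_{h_*}$ and $\eps_{h_*}$.
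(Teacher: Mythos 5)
Your general frame --- the predictor lies in a subspace $\cL$ of dimension at most $\min(n,p)$, average the captured mass over the symmetry group, and bound it by $\dim(\cL)\cdot\sup_{\psi}\E_{\bM}\<f_{*,\bM},\psi\>^2$ --- is the same skeleton the paper inherits from Proposition~11 of \cite{abbe2022merged}. In the Boolean case there is only a fixable slip: the per-feature bound must be applied to the degree-$D$ homogeneous \emph{projection} of $f_{*,\bM}$, not to $f_{*,\bM}$ itself, since lower-degree terms of $h_*$ have average correlations as large as $d^{-1}$ (they are not ``dominated'' at scale $d^{-D}$); this is exactly why the paper invokes Proposition~11 with $\Omega$ equal to the degree-$D$ homogeneous subspace, and no extra nondegeneracy assumption is needed because ``degree $D$'' already means that component carries nonzero mass.

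The genuine gap is in the Gaussian case. The bound $\sup_{\psi}\E_{\bM}\<f_{*,\bM},\psi\>^2\le C_{h_*}d^{-D}$ is false even after projecting onto Hermite degree $D$, because the degree-$D$ Hermite subspace is \emph{not} an irreducible $O(d)$-module: it decomposes into harmonic components of degrees $D, D-2,\dots$, and the low components are low-dimensional and nearly rotation-invariant. Concretely, for $h_*(\bz)=\He_2(z_1)$ (so $D=2$) and the single unit-norm feature $\psi(\bx)=\tfrac{1}{\sqrt{2d}}\sum_{i}\He_2(x_i)$, one has $\<f_{*,\bM},\psi\>^2=2/d$ for \emph{every} $\bM$, not $O(d^{-2})$. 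The correct single-function rate is $d^{-\lceil D/2\rceil}$ --- precisely the claim proved inside the paper's own CSQ analysis (Proposition~\ref{prop:isoleap-csq-gaussian}), which is why the Gaussian CSQ bound there is only $d^{\Leap(h_*)/2}$ --- so your route, made rigorous, yields only an $\Omega(d^{\lceil D/2\rceil})$ sample lower bound for linear methods, short of the claimed $\Omega(d^{D})$. The paper closes this gap by a different reduction: it never bounds correlations against rotated copies of $h_*$ directly, but first writes the multilinear monomial $\prod_{i\in[D]}x_i$ as an explicit signed combination of $2^D$ rotated copies $h_*(\bR^{\bdelta}\bx)$ (this cancellation kills the low-harmonic, rotation-invariant directions), and then transfers the lower bound to the class $\{\prod_{i\in[D]}(\bR\bx)_{\sigma(i)}\}_{\sigma}$, consisting of $\binom{d}{D}$ exactly orthogonal functions, where the dimension-counting argument of \cite{abbe2022merged} applies verbatim. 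Your Schur-lemma idea could be salvaged, but only by carrying out the full isotypic decomposition and showing the harmonic-degree-$D$ part of $h_*(\bM\cdot)$ retains constant mass while conceding the lower isotypic pieces; as written, the asserted irreducibility and the $O(d^{-D})$ per-feature bound are incorrect, and the proof does not go through.
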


Consider now the Correlational Statistical Query (CSQ) model of computation \citep{bendavid1995learning,bshouty2002using}. A CSQ algorithm accesses the data via expectation queries, plus additive noise. We show that for CSQ methods the query complexity scales exponentially in the \textit{leap} of the target function, which can be much less than the degree. %

\begin{proposition}[Lower bound for CSQ methods; informal statement of Propositions~\ref{prop:leap-csq-boolean} and \ref{prop:isoleap-csq-gaussian}]\label{prop:informal_CSQ}
In the setting of Proposition~\ref{prop:informal-linear-lower-bound}, the CSQ complexity of learning $f_*$ to less than $\eps_{h_*}$ error is at least $c_{h_*} d^{\Leap(h_*)}$ in the Boolean case, and at least $c_{h_*} d^{\Leap(h_*) / 2}$ in the Gaussian case.
\end{proposition}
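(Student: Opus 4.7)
My plan for both halves is the statistical-query-dimension route: exhibit a large family of candidate targets in $\{h_*(\bM\bx): \bM \text{ as in \eqref{eq:support_bool_gauss}}\}$ that are nearly pairwise uncorrelated at the appropriate scale, and then invoke the standard CSQ-dimension lower bound. The new ingredient is extracting the right family from the leap definition.

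\textbf{Structural reduction.} Let $k = \Leap(h_*)$. A short exchange argument gives the dual characterization
\[
\Leap(h_*) \;=\; \max_{U \subsetneq \cS(h_*)} \min_{S \in \cS(h_*)\setminus U} \|S \setminus C_U\|_1, \qquad C_U := \bigcup_{S'\in U}\supp(S').
\]
Indeed, if the right-hand side were $<k$ then we could repeatedly pick a ``cheap'' next element and assemble an ordering of max-leap $<k$, contradicting $\Leap(h_*) = k$. Pick a maximizer $U^*$ and a bottleneck $S^* \in \cS(h_*) \setminus U^*$ realizing $\|S^* \setminus C_{U^*}\|_1 = k$. Let $T$ denote (Boolean case) the $k$ new coordinates of $S^*$, or (Gaussian case) the corresponding multi-index with $\|T\|_1 = k$. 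By the maximizer property, every $S \in \cS(h_*) \setminus U^*$ satisfies $\|S \setminus C_{U^*}\|_1 \geq k$: conditional on having already identified $C_{U^*}$, pinning down $T$ is the hard sub-problem.

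\textbf{Boolean family and correlations.} Fix arbitrary embeddings of $C_{U^*}$ and of $[P]\setminus(C_{U^*}\cup T)$ into $[d]$, and vary only the embedding of $T$ over the $\Theta(d^k)$ injections into the remaining coordinates. For the resulting family $\{f_\bM\}_{\bM \in \mathcal{N}}$ and its mean $\bar f$, distinct $\bM \neq \bM'$ produce \emph{different} Fourier characters for every monomial that touches $T$, so by exact orthogonality on $\{\pm 1\}^d$,
\[
\langle f_\bM - \bar f, f_{\bM'} - \bar f\rangle \;=\; O\bigl(1/|\mathcal{N}|\bigr), \qquad \|f_\bM - \bar f\|_2^2 \;\gtrsim\; \hat h_*(S^*)^2.
\]
The standard CSQ-dimension theorem then yields query complexity $\Omega(|\mathcal{N}|) = \Omega(d^k)$. \textbf{Gaussian family and correlations.} Replace permutations by the Stiefel manifold of $P\times d$ frames whose $C_{U^*}$-block is fixed, and build a $\tau$-packing of the $T$-block at scale $\tau \sim d^{-1/2+o(1)}$, giving $|\mathcal{N}| = \Theta(d^{k/2})$ frames. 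The pairwise correlation expands, via the multi-dimensional Hermite product rule, into a sum over $(S,S')\in\cS(h_*)^2$ of monomials in the entries of $\bM\bM'^\top$; the structural reduction forces every surviving cross-term to contain at least $k$ factors bounded by $d^{-1/2+o(1)}$, hence is $O(d^{-k/2+o(1)})$. The CSQ-dimension theorem then yields $\Omega(d^{k/2})$.

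\textbf{Main obstacle.} The crux is not the CSQ-dimension invocation (which is by now standard once the family and correlations are in hand) but (i) the structural reduction above---specifically, verifying that every Fourier/Hermite mode of $h_*$ whose support overlaps $T$ only partially still contributes only negligibly to the centered correlation, which requires a careful block accounting of cross-terms whose $S$ is not equal to $S'$ but whose images under $\bM$ and $\bM'$ coincide; and (ii) in the Gaussian case, controlling products of Gram-matrix entries uniformly over the packing (via concentration on the Stiefel manifold) and handling rotational symmetries of $h_*$ itself---which presumably motivates the formal Gaussian statement's use of an ``isoleap'' rotation-invariant variant of the leap in place of $\Leap(h_*)$.
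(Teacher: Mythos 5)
Your high-level plan shares the paper's starting point (use the greedy/dual characterization of the leap to isolate a ``hard part'' whose basis elements need at least $\Leap(h_*)$ new coordinates, then run a CSQ correlation argument over the unknown embedding), but the specific family you construct does not satisfy the correlation bounds you assert, and those bounds are exactly where the work lies. You fix the embeddings of $C_{U^*}$ and of $[P]\setminus(C_{U^*}\cup T)$ and vary only the image of the $k$ new coordinates $T$ of one bottleneck element $S^*$. Whenever $h_*$ has another basis element $S$ with $0<|S\cap T|<k$, your pairwise claim fails: take $h_*(\bz)=z_1z_2z_3+z_2z_4z_5$ (leap $3$, $C_{U^*}=\emptyset$, $T=\{1,2,3\}$, images of $4,5$ fixed). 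Two embeddings $\pi\neq\pi'$ that agree on $\pi(2)$ map $\{2,4,5\}$ to the same character, so $\langle f_\pi-\bar f,\,f_{\pi'}-\bar f\rangle \approx \hat h_*(\{2,4,5\})^2$ is of constant order, not $O(1/|\mathcal{N}|)$; such pairs are a $\Theta(1/d)$ fraction, so the average correlation is $\Omega(1/d)$ and the standard (pairwise or average-correlation) CSQ-dimension theorem gives at best $\Omega(d)$, not $\Omega(d^{3})$. In fact your subfamily is genuinely easier than claimed: with the images of $4,5$ known, $O(d)$ correlational queries reveal $\pi(2)$ and then $O(d^2)$ more reveal $\pi(\{1,3\})$, so no argument can prove $\Omega(d^k)$ for it. The same defect hits your Gaussian construction (a cross term from $z_2z_4z_5$ carries a single small Gram factor, so pairwise correlations are only $O(d^{-1/2})$), and separately the cardinality bookkeeping is off: a $\tau$-packing at scale $d^{-1/2}$ of the relevant Stiefel block has exponentially many points, not $\Theta(d^{k/2})$ — what you want is a collection of $d^{k/2}$ frames with pairwise small block inner products, which is fixable but is not what you wrote. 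You flag the cross terms as the ``main obstacle,'' but as stated they falsify your key orthogonality claims for general leap-$k$ targets, so the proposal has a genuine gap.

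The paper avoids the issue with a different device: it gives the algorithm the covered coordinates for free (its $T$ is your $C_{U^*}$, revealed as $\sigma(1),\dots,\sigma(k)$ in the Boolean case and as the rows $\bM_{1,:},\dots,\bM_{r,:}$ in the Gaussian case) and randomizes the embedding of \emph{all} remaining coordinates — a random permutation of $[d]$, respectively a Haar-random frame. Then no centering or cross-term accounting is needed: for each remaining basis element $S$, which by the leap property has at least $\Leap(h_*)$ unknown coordinates, one bounds $\sup_{\|\phi\|\le 1}\E_{\sigma'}\big[\hat\phi(\sigma'(S))^2\big]\lesssim d^{-\Leap(h_*)}$ directly, and a Markov plus union bound over the $n$ queries (treating the final output as one more query) finishes the Boolean case. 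In the Gaussian case the analogous estimate is $\sup_{\|\phi\|_{L^2}=1}\E_{\bR}\big[\E_{\bx}[\He_{\alpha}(\bR\bx)\phi(\bx)]^2\big]=O\big(d^{-\lceil\|\alpha\|_1/2\rceil}\big)$, proved by iterated Gaussian integration by parts and a parity/moment counting argument for entries of a Haar rotation; this computation is the heart of the $d^{\Leap(h_*)/2}$ rate (stated formally via the rotation-invariant $\mathrm{isoLeap}$) and has no counterpart in your sketch. To repair your route you would essentially have to randomize over all uncovered coordinates rather than only the bottleneck block and prove these per-monomial average-correlation estimates — at which point you have reproduced the paper's argument.
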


The scaling $d^{\Leap (h_*)}$ for Boolean functions in Proposition \ref{prop:informal_CSQ} matches the total time complexity scaling in Conjecture \ref{conj:leap}. For Gaussian functions, we only prove $d^{\Leap (h_*)/2}$ scaling. However we conjecture that the same scaling as the Boolean case should hold.
See Appendix \ref{app:lower_bounds} for details.

\begin{remark}
We note that the above lower-bounds are for CSQ models or noisy population-GD models, and not for online-SGD since the latter takes a single sample per time step. Our proof does show a correspondence between online-SGD and population-GD, but without the additional noise. It is however intriguing that the regularity in the network model for online-SGD appears to act comparatively in terms of constraints to a noisy population-GD model (on possibly non-regular architectures), and we leave potential investigations of such  correspondences to future work (see also discussion in Appendix \ref{app:beyond_CSQ}). Further, we note that the correspondence to CSQ  may not hold beyond the finite $P$ regime. First there is the `extremal case' of learning the full parity function, which is efficiently learnable in CSQ (with 0 queries) but not necessarily with online-SGD on regular networks: \cite{abbe2022non} shows it is efficiently learnable by a i.i.d.\ Rademacher$(1/2)$ initialization, but not necessarily by a Gaussian isotropic initialization. Further, the positive result of the Rademacher initialization may disappear under proper hyperparameter `stability' assumptions. Beyond this extremal case, a more important nuance arises for large $P$: the fitting of the function on the support may become costly for regular neural networks in certain cases. For example, let $g : [P] \to \{0,1\}$ be a known function and consider learning $f_*(\bx)$ which depends on $P$ unknown coordinates as $h_*(\bz) = \sum_{i =1 }^P i z_i + \prod_{i = 1}^P z_i^{g(i)}$. This is a leap-1 function where the linear part reveals the support and the permutation, and with a parity term on the indices such that $g(i) = 1$. In this case, SGD on a regular network would first pick up 
the support, and then have to express a potentially large degree monomial on that support, which may be hard if $P$ is large (i.e., $P \gg 1$). The latter part may be non trivial for SGD on a regular network, while, since $g$ is known, it would require 0 queries for a CSQ algorithm once the permutation was determined from learning the linear coefficients.    
\end{remark}

\section{Learning leap functions with SGD on neural networks}\label{sec:learning_leap_SGD}

Let $h_* : \R^P \to \R$ be a degree-$D$ polynomial. We consider learning $f_* (\bx) = h_* (\bz)$ on isotropic Gaussian data, where $\bz = \bM \bx$ is the covariate projected on a $P$-dimensional latent subspace, using online-SGD on a two-layer neural network. At each step $t$, we get a new fresh (independent) sample $(\bx^t, y^t)$ where $\bx^t \sim \normal (0,\id_d)$ and $y^t = h_*(\bz^t) + \eps^t$, with additive noise $\eps^t$ independent and $K$-sub-Gaussian. For the purpose of our analysis, we assume that $\bz$ is a subset of $P$ coordinates of $\bx$ instead of a general subspace. This limitation of our analysis is because of an entrywise projection step that we perform during training for technical reasons, and which makes the training algorithm non-rotationally equivariant (see description of the algorithm below). Since we assume that $\bz$ is a subset of the coordinates, without loss of generality we choose $\bz$ to be the first $P$ coordinates of $\bx$.

\subsection{Algorithm}

We use a 2-layer neural network with $M$ neurons and weights $\bTheta = (a_j,b_j,\bw_j)_{j \in [M]} \in \R^{M (d+2)}$:
\begin{equation}\label{eq:2-NN}
\hf_{\NN} (\bx ; \bTheta) =  \sum_{j \in [M]}  a_j \sigma ( \< \bw_j , \bx \> + b_j) \, ,
\end{equation}
    We consider the following assumption on the activation function:\footnote{This is satisfied, for example, by the shifted sigmoid $\sigma(z) = 1 / (1 + e^{-z+c})$ for almost all shifts $c$.}
\begin{assumption}%
\label{ass:sigma_I} Let $\sigma : \R \to \R$ be an activation function that satisfies the following conditions. There exists a constant $K>0$ such that $\sigma$ is $(D+3)$-differentiable with $\| \sigma^{(k)} \|_{\infty} \leq K$ for $k = 0 , \ldots , (D+3)$ and $| \mu_k (\sigma) | > 1/K$ for $k = 0 , \ldots , D$, where $\mu_k (\sigma) = \E_G [ \He_k (G) \sigma(G) ] = \E_G [ \sigma^{(k)} (G) ] $ is the $k$-th Hermite coefficient of $\sigma$ and $G \sim \normal (0,1)$.
\end{assumption}

We train $\hf_{\NN}$ using online-SGD on the squared loss $\ell ( y, \hat y) =  \frac{1}{2} (y - \hat{y})^2$, with the goal of minimizing the population risk:
\begin{equation}\label{eq:test_error}
R ( \bTheta ) = \E_{\bx} \Big[ \ell \big( f_* (\bx) , \hf_{\NN} ( \bx ; \bTheta ) \big) \Big] \, .
\end{equation}

For the purposes of the analysis, we make two modifications to SGD training. First, we train layerwise: training $\{\bw_j\}_{j \in [M]}$ and then $\{a_j\}_{j \in [M]}$, while keeping the biases $\{ b_j\}_{j \in [M]}$ frozen during the whole training. Second, during the training of the first layer weights $\{\bw_j\}_{j \in [M]}$, we project the weights in order to ensure that they remain bounded in magnitude. See Algorithm~\ref{alg:sgd-training} for pseudocode, and see below for a detailed explanation. These modifications are not needed in practice for SGD to learn, as we demonstrate in our experiments in Figure~\ref{fig:StoS_punchy} and Appendix~\ref{app:experiments}.

\RestyleAlgo{ruled}%
\SetAlgoVlined%
\LinesNumbered%
\newcommand\mycommfont[1]{\small\ttfamily\textcolor{blue}{#1}}
\SetCommentSty{mycommfont}
\begin{algorithm}
\caption{Layerwise online-SGD with init scales $\kappa,\rho > 0$, learning rates $\eta_1,\eta_2 > 0$, step counts $\oT_1,\oT_2 > 0$, second-layer ridge-regularization $\lambda_a > 0$, and projection params $r,\Delta > 0$}\label{alg:sgd-training}

$a^0_j \sim \Unif (\{\pm \kappa \})$, $b^0_j \sim \Unif ([-\rho,+\rho ])$, $\bw_j^0 \sim \Unif ( \{ \pm 1/\sqrt{d} \}^d )$ $\;\;$ \tcp*[h]{Initialization}

\For($\quad$\tcp*[h]{Train first layer with projected SGD}){$t = 0$ {\bfseries to} $\oT_1-1$, \mbox{\bfseries and all } $j \in [M]$}{
$\tbw_j^{t+1} = \bw_j^t - \eta_1 \cdot \grad_{\bw_j^t} \ell \big( y^t , \hf_{\NN} (\bx^t ; \bTheta^t ) \big)$, where $\grad$ is spherical gradient in \eqref{eq:first-layer-sphere-grad}

$\bw_j^{t+1} = $ projection of $\tbw_j^{t+1}$ defined in  \eqref{eq:first-layer-projection}

$a_j^{t+1} = a_j^t$, $b_j^{t+1} = b_j^t$
}

\For(\quad\tcp*[h]{Train second layer with SGD}){$t = \oT_1$ {\bfseries to} $\oT_1+\oT_2-1$, \mbox{\bfseries and all } $j \in [M]$}{
$\bw_j^{t+1} = \bw_j^t$, $b_j^{t+1} = b_j^t$

$a_j^{t+1} = (1-\lambda_a)a_j^t - \eta_2 \cdot \frac{\partial}{\partial a_j^t} \ell \big( y^t , \hf_{\NN} (\bx^t ; \bTheta^t ) \big)$
}
\end{algorithm}

Analyzing layerwise training is a fairly standard tool in the theoretical literature to obtain rigorous analyses; it is used in a number of works, including \cite{daniely2020learning,barak2022hidden,damian2022neural,abbe2022merged}. In our setting, layerwise training allows us to analyze the complicated dynamics of neural network training, but it also leads to a major issue. During the training of the first layer, the target function $f_*$ is not fully fitted because we do not train the second layer concurrently. Therefore the first-layer weights continue to evolve even after they pick up the support of $f_*$. This is a challenge since we must train the first-layer weights for a large number of steps, and so they can potentially grow to a very large magnitude, leading to instability.\footnote{We emphasize that this problem is due to layerwise training, since in practice if we train both layers at the same time the residual quickly goes to zero after the support is picked up, and so the first-layer weights stop evolving and remain bounded in magnitude (see Appendix~\ref{app:experiments}).}

We correct the issue by projecting each neuron's first-layer weights $\bw_j$ to ensure that the coordinates do not blow up. First, we keep the ``small'' coordinates of $\bw_j$ on the unit sphere, i.e., for some parameter $r > 0$, we define the ``small'' coordinates for neuron $j$ at time $t$ by $S_{j,0} = [d]$ and 
$$S_{j,t} = \{i \in [d] : |\tilde{w}_{j,i}^{t'}| < r \mbox{ for all } 1 \leq t' \leq t\}.$$ We project these coordinates on the unit sphere using the operator $\proj^{t}_{j}$ defined by
\begin{equation}\label{eq:projection_on_St}
\big( \proj^{t}_{j} \bw_j^t  \big)_i =  w_{j,i}^t \quad \text{if $i \not\in S_{j,t}$}; \qquad \big( \proj^{t}_{j} \bw_j^t  \big)_i = \frac{w_{j,i}^t}{\| \cS_t (\bw_j^t) \|_2 } \quad \text{if $i \in S_{j,t}$},
\end{equation}
and use the spherical gradient with respect to the sphere $\| \cS_t (\bw_j^t) \|_2 = 1$, i.e., for any function $f$,
\begin{align}\label{eq:first-layer-sphere-grad}
\grad_{\bw_j^t} f(\bw_j^t) = \nabla_{\bw_j^t} f(\bw_j^t) - \cS_t (\bw_j^t) \< \cS_t (\bw_j^t) , \nabla_{\bw_j^t} f(\bw_j^t) \> \, .
\end{align}
In order to keep the ``large'' coordinates $i \not\in S_{j,t}$ from growing too large, we project them onto the $\ell_\infty$ ball of radius $\Delta$ for some $\Delta > r$, and denote this projection by $\proj_{\infty}$. In summary, the projection performed in the training of the first layer can be written compactly as
\begin{align}\label{eq:first-layer-projection}
\bw_j^{t+1} =   \proj^{t+1}_{j} \proj_{\infty} \tbw_j^{t+1}\,.
\end{align}

In the second phase, the training of the second layer weights $\ba$ is by standard SGD (without projection) with added ridge-regularization term $\frac{\lambda_a}{2} \|\ba\|^2$ to encourage low-norm solutions.

\subsection{Learning a single monomial}

We first consider the case of learning a single monomial with Hermite exponents $k_1, \ldots, k_P \geq 1$:
\[
h_* (\bz) = \He_{k_1} (z_1) \He_{k_2} (z_2) \cdots \He_{k_P} (z_P)\, .
\]
We assume $D = k_1 + \ldots + k_P \geq 2$ (the case $D=1$ is straightforward). $h_*$ is a leap-$D$ function. We start by proving that, during the first phase, the first layer weights grow in the directions of $z_1,\ldots,z_P$ which are the variables in the support of the target function.

\begin{theorem}[First layer training, single monomial, sum of monomials]\label{thm:alignment_one_monomial}
Assume $\sigma$ satisfy Assumption \ref{ass:sigma_I}. 
Then for $0 < r < \Delta$ sufficiently small (depending on $D,K$) and $\rho \leq \Delta$ the following holds. For any constant $C_*>0$, there exist $C_i$ for $i= 0, \ldots , 6$, that only depend on $D,K$ and $C_*$ such that
\[
\begin{aligned}
\qquad \oT_1 = C_0 d^{D-1} \log(d)^{C_0}\, , \qquad \eta_1 = \frac{1}{C_1 \kappa d^{D/2} \log(d)^{C_1} }\, , \qquad \kappa \leq \frac{1}{C_2 d^{C_2}}\,, \\
\end{aligned}
\]
and for $d$ large enough that $r \geq C_0 \log(d)^{C_0} / \sqrt{d}$, the following event holds with probability at least $1 - Md^{-C_*}$. For any neuron $j \in [M]$,
\begin{itemize}
    \item[(a)] Early stopping: $| w_{j,i}^{t} - w_{j,i}^0| \leq C_3 /\sqrt{d \log (d)}$ for all $i \in [d]$ and $t \leq \oT_1/ (C_4 \log (d)^{C_4})$.
\end{itemize}

And for any neuron $j \in [M]$ such that $a_j^0 \mu_D (\sigma) (w_{j,1}^0)^{k_1} \cdots (w_{j,P}^0)^{k_P} > 0$, 
\begin{itemize}
    \item[(b)] On the support: $\big\vert  w_{j,i}^{\oT_1}  - \sign (w_{j,i}^0) \cdot \Delta \big\vert \leq C_5 /\sqrt{d \log (d)}$ for $i = 1, \ldots , P$.

    \item[(c)] Outside the support: $| w_{j,i}^{\oT_1} - w_{j,i}^0| \leq C_6 r^2 /\sqrt{d}$ for $i = P+1, \ldots , d$, and $\sum_{ i >P} (w_{j,i}^{\oT_1})^2 = 1$.
\end{itemize}
\end{theorem}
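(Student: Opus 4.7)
The plan is to decompose each online-SGD update as $\bw_j^{t+1} = \proj_j^{t+1}\proj_\infty(\bw_j^t + \eta_1\bg_j^t + \eta_1\bxi_j^t)$, where $\bg_j^t := -\grad_{\bw_j^t}R(\bTheta^t)$ is the population (spherical) drift and $\bxi_j^t$ is a mean-zero martingale increment with $\|\bxi_j^t\|_\infty \lesssim \kappa\sqrt{\log d}$ with high probability. Since $\kappa \leq d^{-C_2}$ and the second layer is frozen, the network output $\fNN(\bx^t;\bTheta^t)$ is negligible throughout phase one and the leading drift reduces to the signal correlation $a_j^0\E[h_*(\bz)\sigma'(\langle \bw_j,\bx\rangle+b_j)\bx]$. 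Expanding $\sigma(g+b)=\sum_m \mu_m(\sigma,b)\He_m(g)$ and using the Hermite identity $\E[\prod_{i\leq P}\He_{k_i}(x_i)\He_m(\langle \bw,\bx\rangle)]\propto \prod_{i\leq P}w_i^{k_i}\cdot q_m(\bw_{>P})$ for $m\geq D$ on the unit sphere (with $q_m$ a polynomial of degree $m-D$ in the off-support coordinates), one finds that the drift on an on-support coordinate $\ell\leq P$ is, to leading order, $a_j^0\mu_D(\sigma,b_j)\cdot k_\ell w_{j,\ell}^{k_\ell-1}\prod_{i\neq\ell,i\leq P}w_{j,i}^{k_i}$, while the drift on an off-support coordinate $\ell>P$ is purely radial in the off-support block at leading order (from $m=D+1$) and is therefore annihilated by the spherical projection $\proj_j^t$, leaving only subleading tangential contributions.

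For the on-support dynamics I would introduce the scalar progress variable $u_j^t := \prod_{i\leq P}|w_{j,i}^t|^{k_i/D}$. The sign condition $a_j^0\mu_D(\sigma)\prod_{i\leq P}(w_{j,i}^0)^{k_i}>0$ ensures that each $|w_{j,i}|$ grows while preserving $\sign(w_{j,i}^t)=\sign(w_{j,i}^0)$, and a direct calculation gives the effective scalar ODE $\dot u\asymp \kappa u^{D-1}$. The integral $\int_{d^{-1/2}}^{\Theta(1)}du/u^{D-1}\asymp d^{(D-2)/2}$ for $D\geq 3$ (and $\asymp \log d$ for $D=2$), combined with the discrete step $\eta_1\kappa\asymp d^{-D/2}/\log^{C_1}d$, yields $\oT_1\asymp d^{D-1}\poly\log d$ steps to drive $u_j$ from $d^{-1/2}$ to $\Theta(1)$, at which moment $\proj_\infty$ clamps each $|w_{j,i}|$ to $\Delta$. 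An auxiliary lemma that the ratios $|w_{j,\ell}|^{k_\ell}/|w_{j,\ell'}|^{k_{\ell'}}$ stay bounded along the trajectory implies that all on-support coordinates reach $\pm\Delta$ essentially simultaneously, yielding (b). For (c), the radial off-support drift is killed by $\proj_j^t$ and the residual tangential drift integrated over $\oT_1$ steps contributes at most $r^2/\sqrt{d}$ per coordinate. The per-step noise $\|\eta_1\bxi_j^t\|_\infty \lesssim \eta_1\kappa\sqrt{\log d}$ combined with Freedman's inequality gives accumulated coordinate-wise fluctuation $\eta_1\kappa\sqrt{\oT_1\log d}\asymp 1/\sqrt{d\log d}$ (for $C_1$ large enough), matching the precision required throughout. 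The early-stopping statement (a) follows by noting that for $t\leq \oT_1/\log^{C_4}d$ the progress variable $u_j^t$ has not yet escaped the saddle, so both on- and off-support coordinates remain within the Freedman window $O(1/\sqrt{d\log d})$ of their initialization.

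The main obstacle is the saddle-escape phase itself, where $u_j^t$ is only polylogarithmically larger than its initial scale $d^{-1/2}$ and the per-step martingale noise can be comparable to the drift. To control this regime I would adapt the stopping-time approach of \cite{arous2021online}: fix $\varepsilon>0$, define $\tau:=\inf\{t:u_j^t\geq d^{-1/2+\varepsilon}\}$, and show via a Doob-type submartingale argument on an appropriate power of $u_j^t$ that $\tau\leq C_0 d^{D-1}\poly\log d$ with probability at least $1-d^{-C_*-1}$ for a single neuron; for $t\geq\tau$ the drift dominates and the discrete trajectory tracks the deterministic ODE up to the $\Delta$ boundary via a standard Gronwall comparison. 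A secondary subtlety is the interaction of the spherical-gradient correction in \eqref{eq:first-layer-sphere-grad} with the drift once some on-support coordinates exceed $r$ and the effective sphere reduces to the off-support block; this is handled by noting $\|\bw_j^t\|_2=\Theta(1)$ throughout and that the correction contributes only a scalar multiple of the radial direction, absorbable into lower-order terms. A union bound over the $M$ neurons finally yields the claimed $Md^{-C_*}$ failure probability.
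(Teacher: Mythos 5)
Your overall strategy (per-neuron reduction to a correlation drift because $\kappa$ is polynomially small, drift-plus-martingale decomposition in the spirit of Ben Arous et al., a Bihari/ODE comparison for the on-support growth over $\tilde\Theta(d^{D-1})$ steps, and a union bound over neurons) is the same as the paper's. The genuine gap is in your treatment of the off-support coordinates, which is the heart of part (c). You claim the leading off-support drift is ``purely radial and therefore annihilated by the spherical projection, leaving only subleading tangential contributions.'' This is false during the main phase of training: while the on-support coordinates still belong to the projection set $S_{j,t}$, the sphere is over \emph{all} of $S_{j,t}$, and the radial subtraction $-\cS_t(\bw)\langle \cS_t(\bw),\bv\rangle$ is dominated by the on-support gradient entries. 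The resulting population spherical gradient on an off-support coordinate $i>P$ is, to leading order, $-a^0\, w_i^t\,\bigl(\prod_{\ell\le P}(w_\ell^t)^{k_\ell}\bigr)\bigl(\sum_{\ell\in S_t\cap[P]}k_\ell\bigr)\mu_D(\sigma)$ (the paper's Lemma~\ref{lem:gradient_formula}, Eq.~\eqref{eq:grad_formula_notP}): the projection does not kill the off-support drift, it \emph{creates} one of order $w_i\chi_*\mu_D$, larger than the raw $\mu_{D+2}$ term you expand. A crude bound on its accumulation over $\oT_1\approx d^{D-1}$ steps (step size $\eta_1\kappa\approx d^{-D/2}$, drift up to $d^{-1/2}\Delta^D$) gives $d^{D/2-3/2}$, which blows up for $D\ge 3$; so the asserted $O(r^2/\sqrt d)$ cannot be obtained by a sup-bound or by ``absorbing into lower-order terms.'' The paper closes this with a self-bounding coupling argument (Step 3 of Appendix~\ref{sec:subection_Proof_prop_one_monomial}): for $t\le \tau^r$ one writes $\chi_*(\bw^s)\,w_i^s \le (C/\sqrt d)\,r\,\chi_*(\bw^s)/w_j^s$ for the slowest on-support coordinate $j$, and the time-integral $\eta_1 a^0\mu_D\sum_s \chi_*(\bw^s)/w_j^s$ is bounded by the net displacement of $w_j$, which is at most $Cr$ before it exits $S_t$; and for $t>\tau^r$ the induced drift disappears precisely because $S_t\cap[P]=\emptyset$ (only then is your ``annihilation'' picture correct). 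Your proposal contains neither ingredient, and this is exactly why the projection set $S_{j,t}$ is defined via the threshold $r$.

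Two secondary points. First, your single progress variable $u_j^t=\prod_{i\le P}|w_{j,i}^t|^{k_i/D}$ with an asserted ``ratios stay bounded'' lemma is plausible but unproved, and is not actually needed: the paper tracks $\alpha_t=\min_i w_i^t$ and $\beta_t=\max_i w_i^t$ with coordinate-wise stopping times $\tau_i^r,\tau_i^\Delta$ and a discrete Bihari--LaSalle lemma, and only needs each coordinate to reach and then remain near $\Delta$ by time $\oT_1$ (which requires a separate argument that the truncated drift stays nonnegative near the $\ell_\infty$ boundary, via the factors $\gamma_i^t$). Second, your more elaborate saddle-escape machinery (threshold $d^{-1/2+\varepsilon}$, submartingale argument) is unnecessary at this level of log factors: since $\eta_1\kappa\sqrt{\oT_1}\lesssim 1/(\sqrt{d}\log d)$, the cumulative martingale is uniformly a $\log$ factor below the initialization scale, so the drift recursion dominates from time zero; also note part (a) must hold for neurons violating the sign condition, where the drift is negative, which requires the matching lower bound on $\alpha_t$ handled in the paper's Step 5.
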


Theorem \ref{thm:alignment_one_monomial} shows that after the end of the first phase, the coordinates $\bw_j^{\oT_1}$ aligned with the support $\bz$ are all close to $\pm \Delta$ with the same signs as $w_{j,1}^0 , \ldots , w_{j,P}^0$ as long as $(w_{j,1}^0)^{k_1} \cdots (w_{j,P}^0)^{k_P} > 0$ has the same sign as $a_j^0 \mu_D (\sigma) $ at initialization. Furthermore, the correlation with the support only appears at the end of the dynamics, and does not appear if we stop early. 

The proof of Theorem \ref{thm:alignment_one_monomial} follows a similar proof strategy as \cite{arous2021online}, namely a decomposition of the dynamics into a drift and martingale terms with  information exponent $D$. However, our problem is multi-index, and the analysis will require a tighter control of the different contributions to the dynamics as the dynamics move from saddle to saddle. An heuristic explanation for this result can be found in Appendix \ref{app:intuition_monomial}. The complete proof of Theorem \ref{thm:alignment_one_monomial} is deferred to Appendix \ref{app:proof_alignment_one_monomial}.

The second layer weights training amounts to studying SGD on a linear model and is standard. The typical strategy consists in showing that the target function can be fitted with low-norm second-layer weights $\| \ba_*\|_2$ (see for example \cite{daniely2020learning,barak2022hidden,damian2022neural,abbe2022merged}). Because of the way we prove alignment of the first layer weights (weights $\pm \Delta$ on the support coordinates), we only prove this fitting for two specific monomials\footnote{For general monomials, we would require more diversity on the first layer weights (for example, adding randomness on the $\ell_\infty$ projection such that the weights are $\pm \beta \Delta$ with $\beta \sim \Unif([1/2,3/2])$). Again, these caveats are due to our proof technique to show alignment of the first layer weights.}.

\begin{corollary}[Second layer training, single monomial]\label{cor:learning_one_monomial}
Let $h_* (\bz) = z_1 \cdots z_D$ or $h_* (\bz) = \He_D (z_1)$ and assume $\sigma$ satisfies Assumption \ref{ass:sigma_I}. For any constants $C_*>0$ and $\eps>0$, there exist $C_i$ for $i= 0, \ldots , 11$, that only depend on $D, K$ and $C_*$ such that taking width $M = C_0 \eps^{-C_0}$,  bias initialization scale $\rho =\eps^{C_1} / C_{1}$, and $\Delta = \eps^{C_1} / C_1$ and second-layer initialization scale $\kappa = \frac{1}{C_2 M d^{C_2}}$, and second-layer regularization $\lambda_a = M\eps / C_3$, and , and $r = \eps^{C_{4}} / C_{4} $, and
\[
\begin{aligned}
\oT_1 =&~ C_5 d^{D-1} \log(d)^{C_5}\, , \qquad &&\eta_1 = \frac{1}{C_6 \kappa d^{D/2}\log(d)^{C_6}}\, , \\
\oT_2 =&~ C_7 \eps^{-C_7} \, , \qquad &&\eta_2 = 1/(C_8 M \eps^{-C_8} )\, ,
\end{aligned}
\]
for $d \geq C_9 \eps^{-C_9}$ we have with probability at least $1 - d^{-C_*} - \eps$:
\begin{itemize}
    \item[(a)] At the end of the dynamics,
    \[
    R ( \bTheta^{\oT_1 + \oT_2} ) \leq \eps \, .
    \]

    \item[(b)] If we train the first layer weights for $\oT_1 ' \leq \oT_1 / (C_{10} \log(d)^{C_{10}}) $ steps and for $M \leq C_{10} \log(d)$, then we cannot fit $f_*$ using the second-layer weights, i.e., 
    \[
    \min_{\ba \in \R^M} \E_{\bx } \Big[ \Big( f_*(\bx) - \sum_{j \in [M]} a_j \sigma ( \< \bw_j^{\oT_1'} , \bx \>) \Big)^2 \Big] \geq 1 - \frac{\log(d)}{d^D}\, .
    \]
\end{itemize}
\end{corollary}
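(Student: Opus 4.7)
The plan is to use Theorem~\ref{thm:alignment_one_monomial} to classify the neurons after phase~1 into ``good'' ones (those with $a_j^0 \mu_D(\sigma) (w_{j,1}^0)^{k_1}\cdots(w_{j,P}^0)^{k_P}>0$) and the rest. A constant fraction $\gtrsim 2^{-(P+1)}$ of the $M$ neurons are good with high probability by Hoeffding. For each good neuron $j$, one has $w_{j,i}^{\oT_1}= s_{j,i}\Delta + O(d^{-1/2}(\log d)^{-1/2})$ for $i\leq P$ with $s_{j,i}=\sign(w_{j,i}^0)\in\{\pm1\}$, and $\sum_{i>P}(w_{j,i}^{\oT_1})^2=1$. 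Hence, for $\bx\sim\normal(\bzero,\id_d)$, the activation decomposes as $\langle \bw_j^{\oT_1},\bx\rangle = \Delta \langle \bs_j,\bz\rangle + G_j + o(1)$, where $G_j:=\sum_{i>P}w_{j,i}^{\oT_1}x_i$ is a standard Gaussian independent of $\bz$.

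\textbf{Second-layer construction and SGD.}
I would then exhibit explicit weights $\ba^\star\in\R^M$ with $\|\ba^\star\|_2=\poly(1/\eps)$ such that $\E_{\bx}\bigl[(h_*(\bz) - \sum_j a_j^\star \sigma(\langle\bw_j^{\oT_1},\bx\rangle+b_j))^2\bigr]\leq \eps/4$. For good neurons, set $a_j^\star = \alpha\,\bigl(\prod_{i=1}^P s_{j,i}^{k_i}\bigr)\psi(b_j)$ for a scalar $\alpha$ and a bias-dependent coefficient $\psi$, chosen so that symmetrizing over the signs $\bs_j$ and the biases $b_j\sim\Unif([-\rho,\rho])$ extracts exactly the target Hermite monomial $\He_{k_1}(z_1)\cdots\He_{k_P}(z_P)$ from the Hermite/Taylor expansion of $\sigma(\Delta\langle\bs_j,\bz\rangle+G_j+b_j)$. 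The non-vanishing of $\mu_D(\sigma)$ (Assumption~\ref{ass:sigma_I}) guarantees the leading coefficient is nonzero; choosing $\Delta,\rho$ polynomially small in $\eps$ damps the higher-degree contamination terms, and concentration over the $M=\poly(1/\eps)$ random initializations controls the variance. With such an $\ba^\star$ in hand, phase~2 reduces to online SGD on a \emph{convex} squared loss with ridge regularization $\lambda_a$ and uniformly bounded features $\phi_j(\bx)=\sigma(\langle\bw_j^{\oT_1},\bx\rangle+b_j)$. Standard online strongly-convex regret bounds then show that for $\oT_2=\poly(1/\eps)$ steps with $\eta_2=1/\poly(M/\eps)$, the last iterate achieves population risk within $\eps/4$ of the regularized optimum, and tuning $\lambda_a=\Theta(\eps/\|\ba^\star\|_2^2)$ gives total risk $\leq\eps$ with probability $1-d^{-C_*}-\eps$.

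\textbf{Part (b): early-stopping lower bound.}
For $\oT_1'\leq \oT_1/(C_{10}(\log d)^{C_{10}})$, part~(a) of Theorem~\ref{thm:alignment_one_monomial} gives $|w_{j,i}^{\oT_1'}-w_{j,i}^0|\leq C_3/\sqrt{d\log d}$ coordinate-wise, so the first-layer features are essentially at their $\Unif(\{\pm1/\sqrt d\}^d)$ initialization. I would expand $\sigma$ in the Hermite basis with respect to $\langle\bw_j^{\oT_1'},\bx\rangle\sim\normal(0,1)$: only the degree-$D$ Hermite term contributes to the correlation with the monomial $h_*$, giving $|\E[f_*(\bx)\sigma(\langle\bw_j^{\oT_1'},\bx\rangle)]| \leq |\mu_D(\sigma)|\prod_{i\leq P}|w_{j,i}^{\oT_1'}|^{k_i} = O(d^{-D/2})$. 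Writing the best approximation error as $\|f_*\|_2^2 - \bv^\top \bPhi^{-1}\bv$ with $\bv_j=\langle f_*,\phi_j\rangle_{L^2}$ and Gram matrix $\bPhi=[\langle\phi_j,\phi_k\rangle_{L^2}]$ (well-conditioned since the features span an $M$-dimensional subspace), and union-bounding the per-feature correlations over the $M\leq C_{10}\log d$ neurons, I obtain the claimed lower bound $1-O(\log(d)/d^D)$.

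\textbf{Main obstacle.}
The most delicate step will be the second-layer construction: the effective bias/sign randomness $(b_j,\bs_j)$ must be (nearly) independent of the Gaussian ``noise'' $G_j$ produced by the off-support coordinates, even after conditioning on $j$ being good and on the realization of the high-dimensional vector $(w_{j,i}^{\oT_1})_{i>P}$ produced by the dynamics of phase~1. Arguing that this conditioning leaks only a negligible amount of information about $(b_j,\bs_j)$, and in parallel tracking how the higher-order Hermite/Taylor contaminations scale polynomially in $\Delta$ and $\rho$ so that they can be absorbed into $\eps$, is where the long list of implicit constants $C_1,\ldots,C_{11}$ gets tuned. The remaining ingredients (Hoeffding for the good-neuron count, standard online regret for phase~2, and the Hermite-basis computation for part~(b)) are essentially routine once Theorem~\ref{thm:alignment_one_monomial} is in place.
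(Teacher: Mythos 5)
Your route is the same as the paper's: phase-1 alignment from Theorem~\ref{thm:alignment_one_monomial}, a low-norm second-layer certificate built by symmetrizing trained neurons over sign patterns and biases, an SGD analysis of the ridge-regularized linear problem in phase 2, and, for part (b), the identity $\min_{\ba}\E[(f_*-\ba^\top\phi)^2]=\|f_*\|_{L^2}^2-\bV^\top\bU^{-1}\bV$ combined with a per-neuron correlation bound of order $d^{-D/2}$ and a union bound over $M\leq C_{10}\log d$ neurons. Two steps, however, are genuine gaps as written. First, in part (b) you justify invertibility by saying the Gram matrix is ``well-conditioned since the features span an $M$-dimensional subspace''; spanning only gives $\lambda_{\min}(\bU)>0$, while the conclusion $1-\log(d)/d^D$ requires a $d$-independent constant lower bound on $\lambda_{\min}(\bU)$. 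The paper proves this (Lemma~\ref{lem:lower_bound_U}) by exploiting that after early stopping each $\bw_j$ is within $O(1/\sqrt{d\log d})$ of its i.i.d.\ $\pm1/\sqrt d$ initialization, so distinct neurons are nearly orthogonal and $\bU$ is, up to $O(M/\sqrt{\log d})$ error, a rank-one matrix plus $(h(1)-h(0))\,\id$ with $h(1)>h(0)$ for non-constant $\sigma$; some quantitative argument of this kind is needed. Second, your certificate is stated for a general monomial $\He_{k_1}(z_1)\cdots\He_{k_P}(z_P)$. With the trained support weights pinned to $\pm\Delta$ and the bias confined to a tiny window, every available feature is a function of the single projection $\langle\bs,\bz\rangle$, and linear combinations of such features cannot produce a mixed monomial: e.g.\ for $\He_2(z_1)\He_1(z_2)$ the available homogeneous cubics are spanned by $(z_1\pm z_2)^3$, which do not contain $z_1^2z_2$. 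This is exactly why the corollary (and the paper's proof, see its footnote) restricts to $z_1\cdots z_D$ (sign symmetrization with coefficients proportional to $\prod_i\delta_i$) and $\He_D(z_1)$ (signed measures over the bias); your construction is correct if specialized to these two cases, but the claim that it extracts an arbitrary target monomial is false.

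Two smaller remarks. Phase 2 needs a high-probability \emph{last-iterate} guarantee, since the output is $\bTheta^{\oT_1+\oT_2}$; standard strongly-convex regret bounds control averaged iterates, so you either need an averaging/online-to-batch step (not what the algorithm does) or a bias–variance plus martingale analysis of the final iterate as in the paper's Lemma~\ref{lem:second-layer-certificate-square-loss}. Finally, the ``main obstacle'' you flag (independence of $(b_j,\bs_j)$ from $G_j$ after conditioning on the phase-1 dynamics) is resolved much more simply in the paper: the off-support weights move by at most $O(r^2/\sqrt d)$ from initialization, so one averages over the $R$ neurons in a given sign/bias bucket and bounds the resulting error by the pairwise inner products $|\langle\bw_{j_s,P+1:d}^{0},\bw_{j_{s'},P+1:d}^{0}\rangle|$, which are $O(\sqrt{\log d/d})$ by Hoeffding (Lemma~\ref{lem:construct-smoothed-activation}); no conditional-independence or information-leakage argument is required.
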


This result suggests that the dynamics of SGD with one monomial can be decomposed into a `search phase' (plateau in the learning curve) and a `fitting phase' (rapid decrease of the loss) similarly to \cite{arous2021online}. SGD progressively aligns the first layer weights with the support, with little progress, and as soon as SGD picks up the support, the second layer weights can drive the risk quickly to $0$. Because of the layer-wise training, we only show in Corollary \ref{cor:learning_one_monomial}.(b) that with early stopping on the training of the first layer weights, we cannot approximate the function $f_*$ at all using the second layer weights (hence, we cannot learn it even with infinite number of samples). The proof of Corollary \ref{cor:learning_one_monomial} is in Appendix \ref{app:second_layer_monomial}.

\subsection{Learning multiple monomials}\label{sec:multimonomials}

We now consider $h_*$ with several monomials in its decomposition. In order to simplify the statement and the proofs, we will specifically consider the case of nested monomials
\begin{equation}\label{eq:nested_monomials}
\begin{aligned}
h_* (\bz) =&~ \sum_{l = 1}^L \prod_{s \in [P_l]} \He_{k_s} (z_s) \, , 
\end{aligned}
\end{equation}
where $0 =:P_0 < P_1 < P_2 < \ldots <P_L =: P$ and $k_1 , \ldots , k_{P}$ are positive integers.
For $l \in [L]$, we denote $D_l = k_{P_{l-1}+1} + \ldots + k_{P_{l}}$, and $D = \max_{l \in [L]} D_l$ the size of the biggest leap (such that $h_*$ is a leap-$D$ function), $\oD_l = D_1+ \ldots + D_L$ and $\oD := D_L$ the total degree of the polynomial $h_*$. We will assume that $\min_{l \in [L]} D_l \geq 2$ (i.e., leap of size at least $2$ between monomials). This specific choice for $h_*$ allows for a more compact proof, similar to Theorem \ref{thm:alignment_one_monomial}. However, the compositionality of $h_*$ is not a required structure for the sequential alignment to hold and we describe in Appendix \ref{app:extending_analysis} how to modify the analysis for more general\footnote{However, our current proof techniques do not allow for fully general leap functions: e.g., $h_* (\bz) = \He_2(z_1) \He_3(z_2) - \He_3(z_1) \He_2 (z_2)$ has its two monomials pushing the $\bw_j$'s in two opposite directions.} $h_*$.

We first prove that the first-layer weights grow in the relevant directions during training.

\begin{theorem}[First layer training]\label{thm:sequential_alignment} Let $h_* : \R^P \to \R$ be defined as in Eq.~\eqref{eq:nested_monomials} and assume $\sigma$ satisfy Assumption \ref{ass:sigma_I}. Then with the same choice of hyperparameters as in Theorem \ref{thm:alignment_one_monomial}, with $D$ now corresponding to the biggest leap, we have with probability at least $1 - Md^{-C_*}$: for any neuron $j \in [M]$ that satisfies $a^0 \mu_{\oD_j } (\sigma) (w_{j,1}^0)^{k_1} \cdots (w_{j, P_l}^0)^{k_{P_l}} > 0$ for all $l \in [L]$,
\begin{itemize}
    \item[(a)] On the support: $\big\vert  w_{j,i}^{\oT_1}  - \sign (w_{j,i}^0) \cdot \Delta \big\vert \leq C_5 /\sqrt{d \log (d)}$ for $i = 1, \ldots , P$.

    \item[(b)] Outside the support: $| w_{j,i}^{\oT_1} - w_{j,i}^0| \leq C_6 r^2 /\sqrt{d}$ for $i = P+1, \ldots , d$ and $\sum_{i >P} (w_{j,i}^{\oT_1} )^2 = 1$.
\end{itemize}
\end{theorem}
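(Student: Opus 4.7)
The plan is to reduce to the single-monomial analysis of Theorem \ref{thm:alignment_one_monomial} by an induction over the $L$ ``layers'' of the nested decomposition \eqref{eq:nested_monomials}. Fix a neuron $j$ whose initialization satisfies the stated sign hypothesis for every $l$. I would partition the first-layer training window into consecutive phases $[0, \oT^{(1)}), [\oT^{(1)}, \oT^{(2)}), \ldots, [\oT^{(L-1)}, \oT_1]$, where phase $l$ ends the first time the coordinates $(w_{j,i}^t)_{P_{l-1} < i \leq P_l}$ are all within $C/\sqrt{d \log d}$ of $\sign(w_{j,i}^0)\,\Delta$. The inductive hypothesis carried into phase $l$ is that coordinates $i \leq P_{l-1}$ have been pinned at $\pm \Delta$ by the $\ell_\infty$ projection since the end of phase $l-1$, while the ``small'' coordinates $i > P_{l-1}$ still satisfy $|w_{j,i}^t - w_{j,i}^0| \leq C r^2/\sqrt{d}$, so in particular the spherical-projection index set $S_{j,t}$ still contains $\{P_{l-1}+1, \ldots, d\}$.

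The central step within a phase is the drift computation for a small coordinate $P_{l-1} < i \leq P_l$. Expanding the population gradient in the Hermite basis and using that $a_j$ is frozen during first-layer training, one obtains one drift contribution per monomial of \eqref{eq:nested_monomials}. Monomials $l' < l$ contribute zero because they do not depend on $z_i$; monomials $l' > l$ are suppressed by the factor $\prod_{s > P_l} (w_{j,s}^t)^{k_s} = O(d^{-(\oD_{l'} - \oD_l)/2})$ under the inductive hypothesis; and the $l$-th monomial supplies the dominant drift
\[
a_j^0\,\mu_{\oD_l}(\sigma) \prod_{s \leq P_{l-1}} (w_{j,s}^t)^{k_s} \cdot k_i (w_{j,i}^t)^{k_i - 1} \prod_{P_{l-1} < s \leq P_l,\ s \neq i} (w_{j,s}^t)^{k_s}
\]
(modulo the lower-order correction from the spherical projection). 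The first product is $\Theta(1)$ by saturation from prior phases, so the effective information exponent in the current small coordinates is $D_l = \sum_{P_{l-1} < s \leq P_l} k_s$, and the sign hypothesis ensures the drift points in the $\sign(w_{j,i}^0)$ direction. Running the drift--martingale decomposition of Theorem \ref{thm:alignment_one_monomial} verbatim (ODE comparison with $\dot x \sim c\, x^{D_l - 1}$ plus Freedman/Azuma on the stochastic-gradient martingale) yields a saddle-escape time of $\tilde{\Theta}(d^{D_l - 1})$ for phase $l$, which fits into the global budget $\oT_1 = \tilde{\Theta}(d^{D-1})$ since $D = \max_l D_l$.

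Gluing the phases requires two cross-phase controls. First, for coordinates $i > P_l$ during phase $l$, one must ensure they stay at scale $O(\log(d)^c/\sqrt{d})$ over the entire cumulative time $\tilde{O}(d^{D-1})$, so that conclusion (b) and the next inductive hypothesis both persist; because the drift on these coordinates has effective exponent at least $D_{l+1} \geq 2$ (by the assumption $\min_l D_l \geq 2$), this follows from a uniform martingale bound combined with the same spherical-projection correction used in Theorem \ref{thm:alignment_one_monomial}. Second, once phase $l$ closes, the $\ell_\infty$ projection $\proj_\infty$ pins $i \leq P_l$ at $\pm \Delta$ and the spherical gradient thereafter acts only on the remaining small coordinates, so later phases do not perturb the already-aligned support. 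The principal obstacle I expect is precisely this second cross-phase control: showing quantitatively that a saturated coordinate does not leave the $\ell_\infty$ boundary due to an adverse cumulative noise deviation over the remaining time horizon, which requires a careful uniform-in-time high-probability concentration (rather than a per-phase bound) and some bookkeeping to handle the interference between consecutive monomials when two leaps $D_l, D_{l+1}$ are of comparable size.
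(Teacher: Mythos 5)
Your overall architecture (induction over phases, effective information exponent $D_l$ once the previous block has saturated, reuse of the drift--martingale machinery of Theorem \ref{thm:alignment_one_monomial} with the same hyperparameters) is the same as the paper's, but the proposal has a genuine gap at exactly the point where the multi-monomial case is harder than the single-monomial one. Your claim that ``monomials $l' < l$ contribute zero because they do not depend on $z_i$'' is false for the actual update: by the Hermite integration-by-parts identity, a monomial supported on $[P_{l'}]$ exerts on \emph{every} coordinate $i > P_{l'}$ a drift of order $a^0\, w_i^t\, \chi_{*,l'}(\bw^t)\,\E_G[\sigma^{(\oD_{l'}+2)}(\|\bw^t\|G)]$, and the spherical-gradient correction adds a term of the same form (see Lemma \ref{lem:gradient_formula_multi}, Eqs.~\eqref{eq:grad_formula_multiP_St}--\eqref{eq:grad_formula_notmultiP}). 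During phase $l'$ this radial-type term is comparable to, and near the end of the phase larger than, the direct drift from the $l$-th monomial on a block-$l$ coordinate, and its sign is not controlled by Assumption \ref{ass:sigma_I} (only $|\mu_k|$ for $k\le D$ is lower bounded). The whole content of Step 1 of the paper's proof is to show that its \emph{cumulative} effect over a phase is only $O(r^2/\sqrt d)$, by relating $\eta_1\sum_s \chi_{*,l'}(\bw^s)/w_j^s$ to the total displacement $\lesssim r$ of the block-$l'$ coordinates (the same bookkeeping as Step 3 of Section \ref{sec:subection_Proof_prop_one_monomial}); this is also precisely where the $C_6 r^2/\sqrt d$ in conclusion (b) comes from, which your accounting does not produce. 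Your substitute argument for coordinates $i > P_l$ (``the drift has effective exponent at least $D_{l+1}\geq 2$'') misidentifies the dominant term: for genuinely off-support coordinates the drift is never of the $\chi/w_i$ type but always of this radial type, so the exponent heuristic does not apply.

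Two further points. First, the cross-phase stability of saturated coordinates, which you flag as ``the principal obstacle'' and leave unresolved, is not handled by $\proj_\infty$ ``pinning'' the weight at $\pm\Delta$ (the projection only caps it from above); the paper closes this by showing the population drift is strictly positive whenever $w_i^t \le \Delta - C|a^0|\eta_1\log(d)^C$, so that between excursions the coordinate can fall at most one step plus one martingale increment below $\Delta$ (Eqs.~\eqref{eq:lb_w_aafter_Delta}--\eqref{eq:lb_w_aafter_r}), and this argument transfers verbatim to the multi-monomial case after fixing the sign of the dominant gradient terms via small $\Delta$. Second, your per-phase escape time $\tilde\Theta(d^{D_l-1})$ is not what the Theorem \ref{thm:alignment_one_monomial} hyperparameters give: with $\eta_1 \asymp 1/(\kappa d^{D/2})$ fixed, phase $l$ takes $\asymp d^{D_l/2-1}/(\eta_1\kappa) \asymp d^{(D_l+D)/2-1}$ steps; this still fits in $\oT_1 = \tilde\Theta(d^{D-1})$ since $D_l \le D$, but the tighter $d^{D_l-1}$ per-phase scaling requires the adaptive step-size schedule of Theorem \ref{thm:sequential_alignment_adaptive}, not the constant one assumed in the statement you are proving.
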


The proof follows by showing the sequential alignment of the weights to the support: with high probability and for each neurons satisfying the sign condition at initialization, it takes between $d^{\frac{D_l +D}{2}- 1}/(C \log(d)^{C})$ and $ d^{\frac{D_l +D}{2}- 1} C \log(d)^{C}$ steps to align with coordinates $[P_l]$, after having picked up coordinates $[P_{l-1}]$. The proof can be found in Appendix \ref{sec:proof_alignment_several_monomials}. 

While Theorem \ref{thm:sequential_alignment} captures the tight scaling in overall number of steps, it does not capture the number of steps for smaller leaps $D_l < D$ shown in Figure \ref{fig:StoS_punchy} in the case of increasing leaps. In Appendix \ref{sec:adaptive_step_size}, we show that the scaling of $d^{D_l -1}$ steps to align to the next monomial can be obtained by varying the step size, in the case of increasing leaps. Note that in practice, neural networks with constant step size seem to achieve this optimal scaling for escaping each saddle (such as in Figure \ref{fig:StoS_punchy}). Hence, there might be a mechanism in the SGD training that can implicitly control the martingale part of the dynamics, without rescaling the step sizes. However, understanding such a mechanism would require to study the joint training of both layers, which is currently out of reach of our proof techniques.

As in the single monomial case, we consider fitting the second layer weights only for a specific class of functions (where all monomials are multilinear):
\begin{equation}\label{eq:form_fct_fitting_a}
h_* (\bz) = z_1 \cdots z_{P_1} + z_1 \cdots z_{P_2} + \ldots + z_1 \cdots z_{P_L} \, .
\end{equation}
 We require extra assumptions on the activation function to prove that the fitting is possible. The following is an informal statement, and we leave the formal statement and proof to Appendix~\ref{sec:proof_seq_learning}.
 
\begin{corollary}[Second layer training, sum of monomials; informal statement]\label{cor:sequential_learning}
Let $h_*(\bz)$ be as in \eqref{eq:form_fct_fitting_a}. Then there is an activation function $\sigma$ satisfying Assumption \ref{ass:sigma_I} such that for any $\eps> 0$ there are choices of hyperparameters for SGD on a $\poly(1/\epsilon)$-width two-layer network (Algorithm~\ref{alg:sgd-training}) such that for step counts $\oT_1 = \tilde{\Theta}(d^{\Leap(h_*) -1 })$ and $\oT_2 = \poly(1/\epsilon)$ we have with high probability $R ( \bTheta^{\oT_1 + \oT_2} ) \leq \eps$.
\end{corollary}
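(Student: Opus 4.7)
The plan is to combine the first-layer alignment of Theorem~\ref{thm:sequential_alignment} with a constructive random-features argument for the second layer. I first choose the first-phase hyperparameters $\kappa,\rho,\Delta,r,\eta_1,\oT_1$ exactly as in Theorem~\ref{thm:sequential_alignment} with $D=\Leap(h_*)$. The $L$ sign conditions at initialization are parity constraints on independent Rademacher variables (involving $\sign(a_j^0)$ and disjoint blocks of $\sign(w_{j,i}^0)$), hence independent, so a fraction $2^{-L}$ of the $M$ neurons are ``good''. For each good neuron, Theorem~\ref{thm:sequential_alignment} gives $w_{j,i}^{\oT_1}=\sign(w_{j,i}^0)\Delta+o_d(1)$ for $i\leq P$ and $\sum_{i>P}(w_{j,i}^{\oT_1})^2=1$ with each off-support coordinate of size $O(1/\sqrt{d})$. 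Writing $\bs_j:=(\sign(w_{j,i}^0))_{i\leq P}\in\{\pm1\}^P$ and $\xi_j(\bx):=\sum_{i>P}w_{j,i}^{\oT_1}x_i$, a good neuron's post-activation decomposes as $\varphi_j(\bx)=\sigma(\Delta\langle\bs_j,\bz\rangle+b_j+\xi_j(\bx))+o_d(1)$, where $\xi_j\sim\normal(0,1)$ is independent of $\bz$.

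The heart of the proof is constructing a low-norm $\ba_*$ with $\sum_j a_{*,j}\varphi_j(\bx)\approx h_*(\bz)$ in $L^2$. Projecting onto functions of $\bz$ yields the effective feature $\tilde\varphi_j(\bz)=\tilde\sigma(\Delta\langle\bs_j,\bz\rangle+b_j)$ with $\tilde\sigma(t):=\E_{G\sim\normal(0,1)}[\sigma(t+G)]$. Let $S^{(l)}\in\{0,1\}^P$ be the multi-index with $1$'s on $[P_l]$ and $0$'s elsewhere, so $g_l(\bz):=\prod_{i\leq P_l}z_i=\He_{S^{(l)}}(\bz)$ and $h_*=\sum_l g_l$. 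Repeated Gaussian integration by parts yields the exact identity
\[
\E_{\bz}[\He_S(\bz)\tilde\varphi_j(\bz)]=\Delta^{|S|}\Big(\prod_{i=1}^P s_{j,i}^{S_i}\Big)\,\E_G[\tilde\sigma^{(|S|)}(\Delta\sqrt{P}G+b_j)]
\]
for every $S\in\Z_+^P$. I take the ansatz $a_{*,j}=\sum_{l=1}^L\gamma_l\,c_l(b_j)\prod_{i\leq P_l}s_{j,i}$ with $c_l(b):=\tilde\sigma^{(P_l)}(b)$, and $\gamma_l$ chosen so that the sign-and-bias average reproduces $g_l(\bz)$ at leading order. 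Two facts drive the construction: (a) orthogonality of the Rademacher characters $\chi_T(\bs)=\prod_{i\in T}s_i$ on $\{\pm1\}^P$ makes cross-terms with $l\neq l'$ vanish in expectation, so the ``mixing matrix'' at the target Hermite indices is diagonal up to $O(M^{-1/2})$ concentration error; (b) higher-order Hermite contamination from the $l$-th summand carries an extra factor of $\Delta^2$. The residual $L^2$ error therefore splits into: (i) $O(\Delta^2)$ from higher Hermite leakage, summable thanks to $\|\tilde\sigma^{(k)}\|_\infty\leq K$ from Assumption~\ref{ass:sigma_I}; (ii) $O(\|\ba_*\|_2 M^{-1/2})$ from finite-$M$ concentration of the sign and bias averages; (iii) the component of $\sum_j a_j(\varphi_j-\tilde\varphi_j)$ orthogonal to functions of $\bz$, which a pairwise covariance calculation using $\langle\bw_{j,>P}^{\oT_1},\bw_{j',>P}^{\oT_1}\rangle=O(1/\sqrt{d})$ (inherited from the Rademacher initialization and preserved by the alignment) bounds by $O(\|\ba_*\|_2^2(1+M/\sqrt{d}))$. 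Taking $\Delta=\Delta(\eps)$ small, $M=\poly(1/\eps)$, and $d$ sufficiently large drives all three below $\eps/2$ while keeping $\|\ba_*\|_2^2=\poly(1/\eps)/M$.

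Once such an $\ba_*$ exists, the second phase is vanilla online-SGD with fresh samples on the convex quadratic loss $\frac12\E[(y-\sum_j a_j\varphi_j(\bx))^2]+\frac{\lambda_a}{2}\|\ba\|^2$ with $\lambda_a=M\eps$; standard one-pass SGD arguments (cf.\ \cite{daniely2020learning,barak2022hidden,damian2022neural}) then yield $R(\bTheta^{\oT_1+\oT_2})\leq\eps$ with high probability after $\oT_2=\poly(1/\eps)$ steps. I expect the main obstacle to be arranging a choice of $\sigma$ that simultaneously satisfies Assumption~\ref{ass:sigma_I} and keeps $\E_b[(\tilde\sigma^{(P_l)}(b))^2]$ bounded below for every $l\in[L]$ with $b\sim\Unif([-\rho,\rho])$; this is where the existential quantifier on $\sigma$ in the statement is used, and it can be realized by a small smooth perturbation of any nondegenerate bounded activation, producing a countable family of ``forbidden'' parameter values that can be avoided. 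A secondary technical step is the bound on cross-neuron interactions in (iii), which requires $M=o(\sqrt{d})$; this is automatic since $M=\poly(1/\eps)$ and $d$ is taken large in terms of $\eps$.
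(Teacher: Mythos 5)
Your proposal has a genuine gap at its central step, and it stems from misreading what the existential quantifier on $\sigma$ is for. After layerwise training, Theorem~\ref{thm:sequential_alignment} only aligns neurons that satisfy \emph{all $L$ sign conditions simultaneously}: $a_j^0\,\mu_{P_l}(\sigma)\prod_{i\le P_l}\sign(w^0_{j,i})>0$ for every $l\in[L]$. You correctly compute that a $2^{-L}$ fraction of neurons are good, but you then treat the sign patterns $\bs_j=(\sign(w^0_{j,i}))_{i\le P}$ of the \emph{good} neurons as if they were uniform on $\{\pm1\}^P$ and invoke orthogonality of the characters $\chi_{[P_l]}(\bs)=\prod_{i\le P_l}s_i$ to diagonalize your mixing matrix. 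Conditioned on being good, those $L$ parities are \emph{deterministic} given $\sign(a_j^0)$ (each condition pins $\prod_{i\le P_l}s_{j,i}$ to $\sign(a_j^0)\sign(\mu_{P_l}(\sigma))$), so across the aligned population they take at most two fixed values and are certainly not mean-zero; the cross-terms you need to vanish do not vanish, and the ansatz $a_{*,j}=\sum_l\gamma_l c_l(b_j)\prod_{i\le P_l}s_{j,i}$ no longer isolates the monomials. More fundamentally, for $L\ge2$ a fixed activation with the small bias scale $\rho\le\Delta$ that you inherit from Theorem~\ref{thm:sequential_alignment} only makes a $2^{-(L-1)}$ fraction of the sign patterns $\bdelta\in\{\pm1\}^P$ realizable among aligned neurons (a union of two cosets of the subgroup cut out by the $L$ parity checks), whereas the certificate construction needs directional features $\E_G[\sigma(\Delta\<\bdelta,\bz\>+G+b)]$ for a rich enough family of $\bdelta$; your genericity argument for $\sigma$ ("avoid a countable family of forbidden parameters so that $\E_b[(\tilde\sigma^{(P_l)}(b))^2]$ is bounded below") does not address this at all.

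This is precisely the obstruction the paper's proof is built around. It \emph{constructs} a special activation out of truncated Hermite bumps placed at $2^P$ well-separated bias locations $b(\bs)$, so that $\mu_k(\sigma(\cdot+b(\bs)))=s_k$ for every prescribed sign vector $\bs$, and it initializes the biases at constant scale $\rho=C_1$ (not $\rho\le\Delta$). Then, for each target direction $\bdelta$, one picks the bias channel $\bs$ with $s_{P_l}\prod_{i\le P_l}\delta_i>0$ for all $l$, so that neurons with first-layer signs $\bdelta$ and bias near $b(\bs)$ do satisfy the alignment conditions of Theorem~\ref{thm:sequential_alignment}; Lemma~\ref{lem:construct-smoothed-activation} plus the bias-smoothing measures of the single-index Hermite argument then recover $\<\bdelta,\bz\>^{k}$ up to $O(\Delta)$ for every $\bdelta\in\{\pm1\}^P$, and the sign-averaging identity over \emph{all} $2^P$ patterns yields $h_*$; the last-iterate ridge-SGD bound (Lemma~\ref{lem:second-layer-certificate-square-loss}) finishes as in Corollary~\ref{cor:learning_one_monomial}. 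Without this engineered neurodiversity (or some substitute argument showing the restricted coset of directions still spans the needed monomials, which you do not provide and which is not clear in general), your certificate step fails, so the proof as written does not go through.
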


\section{Discussion}
\paragraph{Summary of contributions:} In this work, we have considered learning multi-index functions over the hypercube or the Gaussian measure. For classical linear methods, the complexity of the task is determined by the \textit{degree} of the target function (Proposition~\ref{prop:informal-linear-lower-bound}). However, for neural networks, we have conjectured that the complexity is determined by the \textit{leap complexity} of the target function (introduced in Definition~\ref{def:leap_general}). This would generalize the result of \cite{abbe2022merged}, which shows the conjecture in the case of $\Leap(h_*) = 1$. As evidence for this conjecture, we have proved lower-bounds showing $d^{\Omega(\Leap(h_*))}$ complexity of learning in the Correlational Statistical Query (CSQ) framework 
(Proposition~\ref{prop:informal_CSQ}). Conversely, we have proved that $d^{O(\Leap(h_*))}$ samples and runtime suffices for a modified version of SGD to successfully learn the relevant indices in the case of ``leap-staircase'' functions of the form \eqref{eq:nested_monomials}, and to fit the function in the case of ``multilinear leap-staircase'' functions of the form \eqref{eq:form_fct_fitting_a}.

\paragraph{Future work:} One direction for future work is to remove the modifications to vanilla SGD used in the analysis (layerwise training and the projection step). Another direction is to prove the conjecture by extending our analysis of the training dynamics to general functions, beyond those of the form \eqref{eq:nested_monomials}. Another direction is to study extensions of the leap complexity measure beyond isotropic input distributions.

\section*{Acknowledgement} Part of this work was supported by the NSF-Simons Research Collaborations on the Mathematical and Scientific Foundations of Deep Learning  (MoDL) Award and the EPFL PhD Exchange Fellowship. EB was also generously supported by
Apple with an AI/ML fellowship. TM also acknowledges the NSF grant CCF-2006489 and the ONR grant N00014-18-1-2729.

\bibliographystyle{amsalpha}
\bibliography{bibliography.bib}

\clearpage

\appendix

\section{Additional numerical simulations}\label{app:experiments}

In Figures~\ref{fig:aesthetic_tanh_stair}, \ref{fig:aesthetic}, \ref{fig:aesthetic_hermite_sum} and \ref{fig:bool_stair148_resnet} we plot the risk versus number of samples for SGD training of 5-layer ResNets with fully-connected layers for various different target functions and for Boolean and Gaussian data. In these plots, the saddle-to-saddle dynamics are visible, which are caused by the neural network sequentially picking up the support using the hierarchical structure of the monomials in the function. In Figures~\ref{fig:bool_stair3} and \ref{fig:gauss_stair3}, we study learning a leap-1 function (merged-staircase function), and we experiment with the effect of adding depth to see its effect on fitting. There is also an interesting edge-of-stability behavior during the ``second-layer fitting'' part, where the loss does not decrease monotonically \citep{cohen2021gradient}. We leave understanding this to future work.

\begin{figure}[h!]
    \centering
    \begin{tabular}{c}
        \includegraphics[scale=0.5]{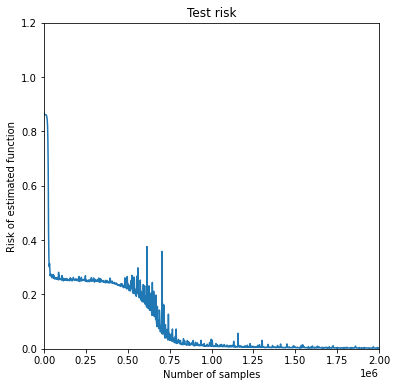} \includegraphics[scale=0.5]{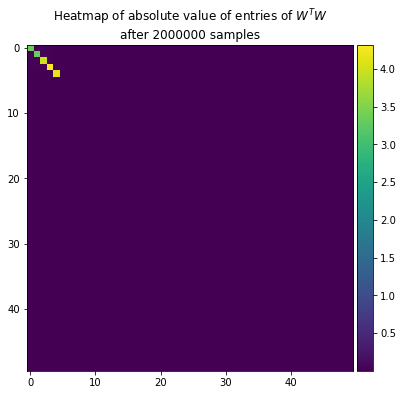}
    \end{tabular}

    \caption{In this figure we consider training a 5-layer ResNet with fully-connected layers with SGD the leap-3 function $h_*(\bz) = 2\cdot\prod_{i=1}^2 \tanh(z_i) + 5\cdot\prod_{i=1}^5 \tanh(z_i)$ with data $\bx \sim \normal(0,I_d)$ and $d = 50$. While our paper considered bounded degree polynomials, the leap complexity, which drives the sequential alignment to the support, also holds for non-polynomial functions. In this case, the leap depends on the first non-zero monomials in the Hermite decomposition. For $h_*$ considered in this plot, we have first a leap of size 2 to align with $x_1,x_2$ followed by a leap of size 3 to align with $x_3x_4x_5$.   In the plot of test risk over time, we indeed see first a short saddle to align with $x_1 , x_2$, followed by a quick decrease of the loss (corresponding to the neural networks fitting $2\tanh(z_1)\tanh(z_2)$). This is followed by a plateau while SGD slowly picks up $x_3,x_4,x_5$ (saddle) and a sharp decrease in the loss when the neural network fit the remainder of $h_*$. 
    We also plot the heatmap of the absolute value of the entries of $\bW^{\top} \bW \in \R^{d \times d}$ where $\bW$ is the first-layer matrix after training. This shows that the first layer indeed picks up the relevant coordinates (first $5$ coordinates) in the support after training.}
    \label{fig:aesthetic_tanh_stair}
\end{figure}

\begin{figure}[h!]
    \centering
    \begin{tabular}{cc}
      \includegraphics[scale=0.5]{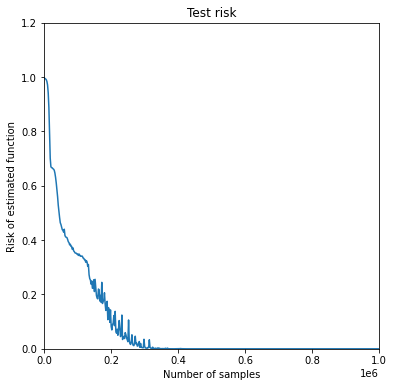}  & 
        \includegraphics[scale=0.5]{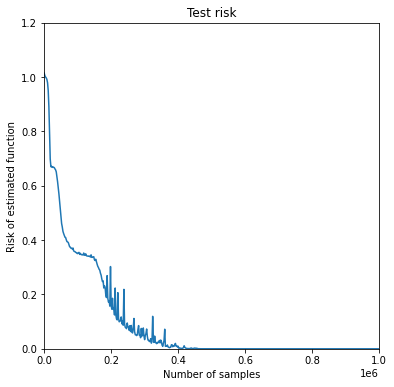}    \\
        (a) Ambient dimension $d = 50$& %
        (b) Ambient dimension $d = 100$ \\
        \includegraphics[scale=0.5]{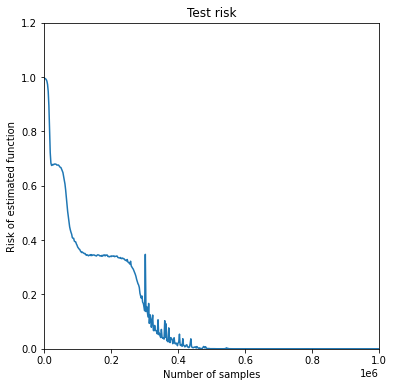} & \includegraphics[scale=0.5]{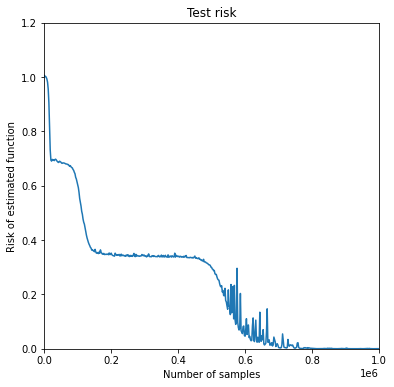} \\
        (c) Ambient dimension $d = 200$ & Ambient dimension $d = 400$
    \end{tabular}

    \caption{In (a)-(d) we show the evolution of the risk for training a 5-layer ResNet with fully-connected layers with SGD to learn the leap-3 function  $h_*(\bz) = z_1 + z_1z_2z_3 + z_1z_2z_3z_4z_5z_6$ with binary hypercube data in ambient dimension $d = 50, 100, 200, 400$, respectively. Notice that the evolution of the risk follows a saddle-to-saddle dynamic. This dynamic becomes more salient as the ambient dimension increases and escaping the saddles dominates the SGD trajectory.}
    \label{fig:aesthetic}
\end{figure}

\begin{figure}[h!]
    \centering
    \begin{tabular}{cc}
        \includegraphics[scale=0.5]{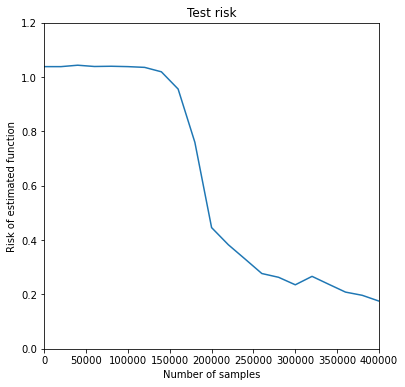} & \includegraphics[scale=0.5]{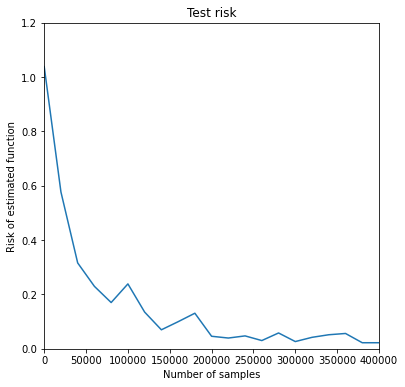} \\
        (a) Leap-3 function $h_*(\bz) = \He_3(z_1)$ & (b) Leap-1 function $h_*(\bz) = \He_1(z_1) + \He_3(z_1)$
    \end{tabular}

    \caption{We consider training a 5-layer ResNet with fully-connected layers with SGD on covariate distribution $\bx \sim \normal(0,I_d)$ with $d = 500$. In (a) we show the risk from learning the leap-3 function $h_*(\bz) = \He_3(z_1)$, and in (b) we show the risk from learning the leap-1 function $h_*(\bz) = \He_1(z_1) + \He_3(z_1)$. Notice that the leap-3 task is much more difficult for SGD, and it gets stuck in a saddle where the loss plateaus. On the other hand, the $\He_1(z_1)$ term in the leap-1 task means that SGD is not stuck in a saddle.}
    \label{fig:aesthetic_hermite_sum}
\end{figure}

\begin{figure}[h!]
    \centering
    \includegraphics[scale=0.5]{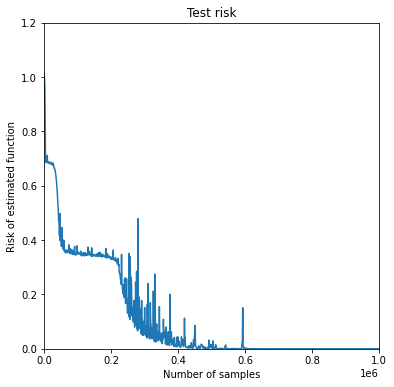}
    \includegraphics[scale=0.5]{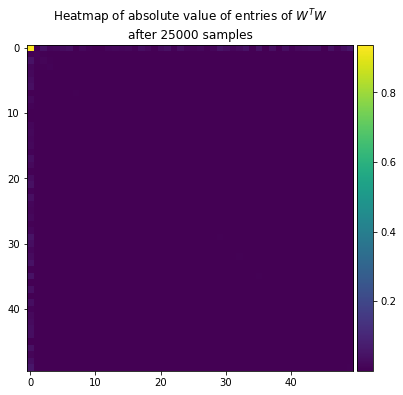}
    \includegraphics[scale=0.5]{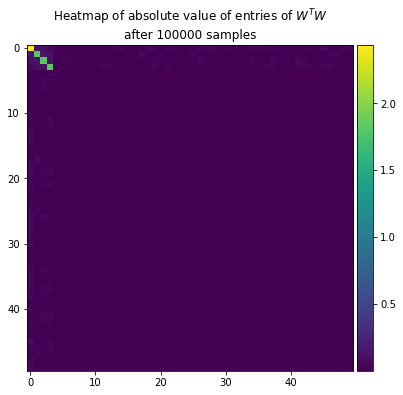}
    \includegraphics[scale=0.5]{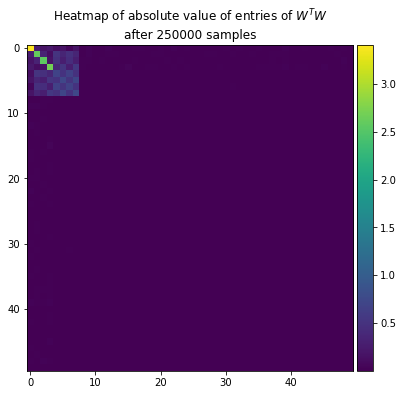}
    \caption{A width-$1000$ 5-layer ResNet network with ReLU activation trained with one-pass SGD with mini-batch size $100$ and step size $0.1$. The data is $\bx \sim \{+1,-1\}^{d}$ for ambient dimension $d = 50$, and $h_*(z) = z_1 + z_1z_2z_3z_4 + z_1z_2z_3z_4z_5z_6z_7z_8$, which is a leap-4 function. We observe saddle-to-saddle dynamics. And we observe that the first layer picks up the relevant support iteratively.}
    \label{fig:bool_stair148_resnet}
\end{figure}

\begin{figure}[h!]
    \centering
    \includegraphics[scale=0.5]{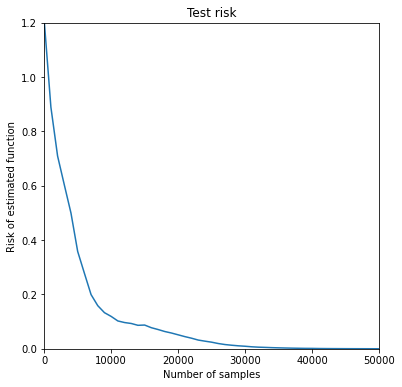}
    \includegraphics[scale=0.5]{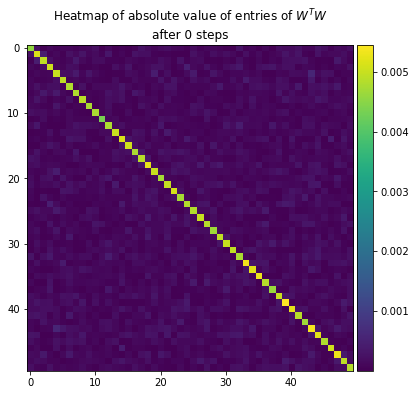}
    \includegraphics[scale=0.5]{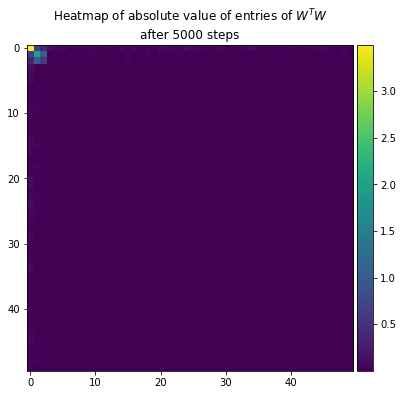}
    \includegraphics[scale=0.5]{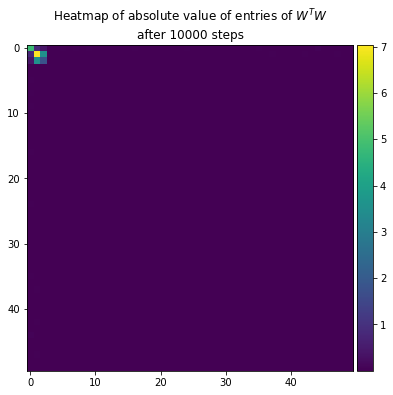}
    \caption{We train a width-$1000$, 2-layer network with sigmoid activation with mini-batch size $100$ and learning rate $0.5$. The data is from the Boolean hypercube with ambient dimension $d = 50$, and the target function is $h_*(z) = z_1 + z_1z_2 + z_1z_2z_3$, which is a leap-1 function. Notice that the weights quickly align to the support of the function (no saddles) after less than 5000 steps.}
    \label{fig:bool_stair3}
\end{figure}

\begin{figure}[h!]
    \centering
    \begin{tabular}{cc}
    \includegraphics[scale=0.5]{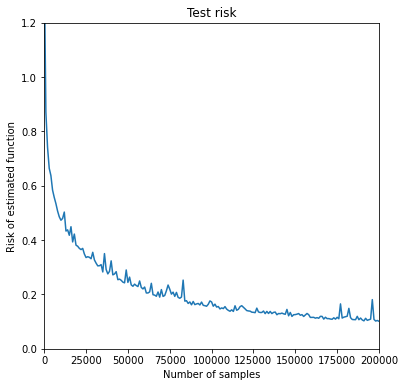} &
    \includegraphics[scale=0.5]{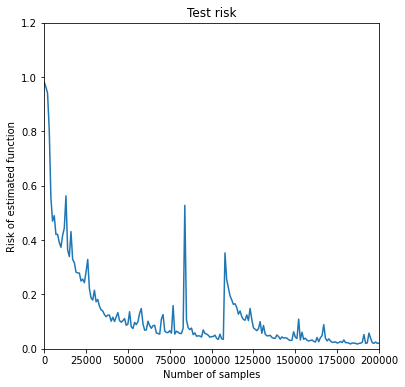} \\
    (a) 2-layer network & (b) depth-5 ResNet
    \end{tabular}
    \caption{We consider either (a) training a width-$1000$, 2-layer network with sigmoid activation or (b) training a width-$1000$, 5-layer ResNet network with ReLU activation and fully-connected layers. Our ambient dimension is $d = 50$, our data is $\bx \sim \normal(0,I_d)$ and our target function is $h_*(z) = z_1 + z_1z_2 + z_1z_2z_3$. This is a leap-1 function, so the weights quickly align to the coordinates $x_1,x_2,x_3$ after a small number of steps. However, the two-layer neural network struggles to fit the different monomials in $h_*$. This can be mitigated by training a deeper network which finds a better fit faster. Hence, besides the alignment phenomenon to the low-dimensional support explored in this paper, it is an interesting question for future work to understand why depth helps in this situation. Note that this is a different phenomenon than the one explored in depth-separation papers such as \cite{safran2022optimization}, which considers learning functions which cannot be efficiently approximated by 2-layer neural networks (here, $h_*$ can be easily approximated with a two-layer network).}
    \label{fig:gauss_stair3}
\end{figure}

\clearpage

\section{Additional discussion from the main text}
\subsection{Additional references}\label{add_refs}

In addition to the references listed in the main text, we further review other relevant papers.

A line of work in computational learning theory studied the complexity of learning Boolean functions under the uniform input distribution. It was realized that functions with concentrated Fourier spectrum can be learned efficiently, both in sample and time complexity using the sparse Fourier algorithm \citep{mansour1994}. Namely, under knowledge of a set of basis elements $\mathcal{S}$ such that $\sum_{S \in \mathcal{S}} f^2(S) \ge 1-\epsilon/2$ for all $f \in \mathcal{F}$, one can learn $\mathcal{F}$ with error $\epsilon$, sample complexity $O((1/\epsilon)|\mathcal{S}|\log (|\mathcal{S}|/\delta))$ and polynomial time complexity if $|\mathcal{S}|$ is polynomial using the sparse Fourier algorithm that estimates the coefficients in $\mathcal{S}$. Many interesting classes of functions fall under this setting, such as juntas, low degree functions, bounded-size or -depth decision trees \citep{odonnell_2014}.  While $\mathcal{S}$ has to be known\footnote{The set knowledge can be relaxed under the query access model using the Kushilevitz-Mansour algorithm (based on the Goldreich-Levin algorithm) that uses a divide-and-conquer procedure to identify the coefficients to be estimated \cite{Kushilevitz-Mansour,GL}.} under the random sample model, no degree constraints are imposed. In particular, the low-degree assumption (degree at most $k$) is just a special case that provides this knowledge (with order $d^k$ time complexity), monomials of degree $k$ or $d-k$ are equivalent in the eye of the sparse Fourier algorithm. This is not necessarily the case for SGD-trained neural networks.

A line of work has considered SGD learning on `unconstrained' neural networks (besides polynomial size) and shows that we can emulate any efficient PAC or SQ algorithm \citep{abbe2020universality,abbe2021power,malach2020implications}. Such networks are far from the practical neural networks used in applications. Against this state of affairs, several works have attempted to derive computational lower bounds on learning with regular neural networks. For example, \cite{inal} shows that for fully connected 2-layer networks, if the initial alignment (INAL) of a network with a Boolean target function     
 (measured by the maximal expected correlation between target and neurons) is not significant, then noisy-GD cannot amplify the correlation to any significant level. This is achieved by showing that a low INAL implies a large minimal degree in the target function (thus a large leap) under some additional conditions. Another work \citep{abbe2022non} uses the permutation, sign-flip, or rotational equivariance of noisy-GD training of fully-connected neural networks to show a lower bound on the number of gradient descent steps required for global convergence, when we have access to population gradients with an additive Gaussian noise. In particular, for leap-$L$ functions on the hypercube and the hypercube, $\Omega (d^{L} \tau^2)$ steps are to shown to be required, where $\tau^2$ is the Gaussian noise variance. This roughly matches the conjecture in this paper in its exponential dependence on the leap -- however, the computational model is different (noisy-GD versus online-SGD).

Finally let's remark that a large body of work in the statistics and machine learning literature has studied the problem of learning multi-index models. These include for example phase retrieval \citep{candes2013phaselift}, intersection of halfspaces \citep{klivans2004learning,vempala2010random} and subspace juntas \citep{vempala2011structure,de2019your}. We refer to \cite{dudeja2018learning,chen2020learning} and references therein for an overview of this line of work. In particular, it is well understood that in order to break the ``curse of dimensionality'', the algorithm needs to estimate the low-dimensional support. In contrast with this line of work, we consider learning these multi-index functions with generic SGD on regular neural networks, with no a priori information on the target function. Surprisingly, we show that this generic algorithm can nearly match the computational complexity of the best CSQ algorithm. Note that specialized algorithms can achieve better sample and computational complexity: for example, \cite{chen2020learning} showed an algorithm that can learn low-rank Gaussian polynomials in $\widetilde{O}_d (d)$ samples and $\widetilde{O}_d (d^3)$ runtime, regardless of the leap-complexity, by going beyond CSQ algorithms.

\subsection{Discussion on the definition of the leap complexity}
\label{app:discussion_leap}

It was noted in \cite{abbe2022merged} that some ``degenerate'' leap-1 functions on the hypercube  are not learned in $\Theta (d)$ SGD-steps. Take for example $h_*(\bz) = z_1 + z_2 + z_3 + z_1z_2z_3$: by permutation symmetry on the support of $h_*$, $O(d)$ steps of SGD will learn first layer weights $\bw_j$ aligned with $(1,1,1)$ on the support $(z_1 ,z_2,z_3)$. SGD will require many more steps to break this symmetry\footnote{We conjecture $\Theta (d\log (d)^C)$ steps are required, see following discussion in the Gaussian case.} and fit $h_*$. \cite{abbe2022merged} circumvents this difficulty under a smoothed complexity analysis, and shows that the set of degenerate leap-$1$ functions has $\{ \hat h_* (S) \}_{S \in \cS} $ of Lebesgue-measure $0$. Alternatively, a possible approach to learn these degenerate cases (for ``axis-aligned'' sparse functions) is to use different random learning rates for each coordinates and break the symmetry in learning.

On the other hand, Gaussian data offers a more natural setting to understand these degenerate functions: indeed, the decomposition \eqref{eq:decompo_monomials} depends on the specific coordinate axis used to define the product of Hermite polynomials. In particular, $\mathrm{Leap} (h_*)$ will depend on the specific basis for this expansion. By rotational symmetry of the Gaussian distribution and equivariance of neural networks with isotropic initialization, the time complexity of SGD will be driven by the following ``isotropic leap'' complexity:
\[
\mathrm{isoLeap}(h_*) = \max_{R \in \cO_P} \mathrm{Leap}(h_*, R) \, ,
\]
where $\mathrm{Leap}(h_*, R)$ corresponds to the leap complexity of Definition \ref{def:leap_general}, where we made the dependency on the specific choice of axis $R$ for the Hermite expansion explicit. Adapting the proofs of \cite{abbe2022merged}, we can show that if $\mathrm{isoLeap}(h_*) > 1$, then $h_*$ cannot be learned in $\Theta (d) $ SGD-steps in the mean-field regime, while if $\mathrm{isoLeap}(h_*) = 1$, then the span of $\bw_j$'s covers the entire support of $h_*$ and not a subspace as for degenerate functions\footnote{Proving full learnability in $\Theta (d)$ steps of functions with $\mathrm{isoLeap}(h_*) = 1$ on Gaussian data is technically challenging and not the purpose of the current paper, and would require a separate analysis, see \cite{abbe2022merged} for details on what it would entail.}. Consider for example the case of $h_* (\bz) = z_1 + z_2 + z_1z_2$: in this coordinate basis, $h_*$ is a leap-1 function\footnote{Note that $\He_1 (x) = x$ and we can rewrite $h_* (\bz) = \He_1(z_1) + \He_1(z_2) + \He_1(z_1)\He_1(z_2)$.}. However, we can consider the following rotation $(u_1,u_2) = (z_1+z_2, z_1 - z_2)/\sqrt{2}$ and $h_* (\bz) = u_1 + \He_2 (u_1)/\sqrt{8} - \He_2 (u_2)/ \sqrt{8}$ in this basis. Therefore $\mathrm{isoLeap}(h_*) = 2$ and $h_*$ cannot be learned in $\Theta(d)$ SGD steps in the mean-field regime.

For technical reasons, we consider in Section \ref{sec:learning_leap_SGD} the support and the decomposition of $h_*$ aligned with the canonical basis of $\bx$. This can be seen as a smoothed complexity setting similar to \cite{abbe2022merged}: almost surely over the Hermite coefficients, the basis that maximizes the leap is the original basis in Eq.~\eqref{eq:decompo_monomials}, i.e., fixing the axis coordinates $R_*$, then $\mathrm{isoLeap}(h_*) = \mathrm{Leap}(h_*, R_*) $ almost surely over the Hermite coefficients $\{\hat h (S) \}_{S \in \cS}$ in the basis aligned with $R_*$. However, we stress here that $\mathrm{isoLeap}$ is the right measure for SGD-complexity in the case of general (not axis-aligned) low-dimensional support.

\subsection{Intuition for the proof of Theorem \ref{thm:alignment_one_monomial}}\label{app:intuition_monomial}

In this section, we give some intuition behind the proof of Theorem \ref{thm:alignment_one_monomial}. The complete proof can be found in Appendix \ref{app:proof_alignment_one_monomial}.

We first consider a simple SGD dynamics, with no projection step, and neglect the biases. We later discuss our choice of algorithm and how the analysis needs to be modified to control the projection step. The dynamics on the first layer weights is now simply given by
\[
\bw_j^{t+1} = \bw_j^t + \eta_1 a_j^0 ( y^t - \hf_{\NN} (\bx^t ; \bTheta^t ) ) \sigma ' ( \< \bw_j^t , \bx^t \> ) \bx^t \, .
\]

\paragraph*{Reduction to a correlation flow:} Recall that we initialize the second layer weights $|a_j^0| = \kappa$. By Assumption \ref{ass:sigma_I}, we have
\[
| \hf_{\NN} (\bx^t ; \bTheta^t ) | \leq \sum |a_j^0| | \sigma ( \< \bw_j^t , \bx^t \>) | \leq M K \kappa \, .
\]
With high probability over a polynomial number of steps, $\| \bx^t \|_\infty \leq C \log (d)$ with $C$ constant chosen sufficiently large. Hence,
\[
\bw_j^{t+1} = \bw_j^t + \eta_1 a^0_j y^t \sigma ' ( \< \bw_j^t , \bx^t \> ) \bx^t + O( M \eta_1 \kappa^2 \log (d) )\, ,
\]
and we can chose $\kappa,\eta_1$ with $\eta_1 \kappa^2$ sufficiently small, while keeping $\eta_1 \kappa$ constant, so that we can neglect the interaction term between the different neurons and get:
\[
\bw_j^{t+1} \approx \bw_j^t + \eta a^0_j y^t \sigma ' ( \< \bw_j^t , \bx^t \> ) \bx^t \, .
\]
This means that in this scaling and with high probability, the derivative of the square loss $\ell(y,\hat{y}) = \frac{1}{2}(y-\hat{y})^2$ in $\hat{y}$ is well-approximated by $\ell' ( y, \hat y) \approx -y$. Under this approximation, the different neurons do not interact while the first-layer weights are being fit, and so the neuron dynamics can be analyzed individually.

\paragraph*{Heuristic derivation of $\oT_1$ and $\eta_1$:}
Let us directly consider the correlation loss and track the dynamics of a unique neuron $(a,\bw)$. We assume that $w^0_1 = \ldots = w^0_d = 1 /\sqrt{d}$, $a^0 = \kappa$ and $ \mu_D (\sigma) >0$. We further make the following heuristic simplification: we assume the dynamics is described by only two parameters
\[
\alpha_1^t = w_1^t = \ldots = w_P^t \, , \qquad \alpha_2^t = w_{P+1}^t = \ldots = w_d^t \, ,
\]
with SGD updates $g_1^t = y^t \sigma ' ( \< \bw^t , \bx^t \>) x_1^t$ and $g_2^t =  y^t \sigma ' ( \< \bw^t , \bx^t \>) x_{P+1}^t$, i.e.,
\[
\begin{aligned}
    \alpha_1^{t+1} =&~ \alpha_1^t + \eta_1 \kappa g_1^t \, , \\
    \alpha_2^{t+1} =&~ \alpha_2^t + \eta_1 \kappa g_2^t \, .
\end{aligned}
\]
The computation follows from a similar strategy as in \cite{tan2019online,arous2021online}: namely, we will decompose  the dynamics into a drift term (deterministic) and a martingale term.
Introduce the population gradient $\og_i^t = \E_{y^t,\bx^t} [ g_i^t]$, $i \in \{1,2\}$. 
We can decompose the dynamics into a sum of population gradients (deterministic drift) and the martingale difference between the stochastic gradients and population gradients (recall that the $(y^t,x^t)$'s are independent):
\begin{equation}\label{eq:heur_dyn}
\begin{aligned}
    \alpha_i^{t+1} =&~ \alpha_i^t + \eta_1 \kappa \og_i^t +  \eta_1 \kappa (g_i^t - \og_i^t) = \alpha_i^0 + \eta_1 \kappa \sum_{s = 0}^{t} \og_i^t +  \eta_1 \kappa \sum_{s = 0}^{t} (g_i^t - \og_i^t ) \, .
\end{aligned}
\end{equation}
By using that for Hermite polynomials $\E_G [ \He_k (G) f(G)] = \E_G [ f^{(k)}  (G)]$ with $G \sim \normal (0,1)$, we can show that
\[
\begin{aligned}
\E [ \He_D (g_1) f (\bu^T \bg) g_1 ] =&~ D u_1^{D-1}\E [ f^{(D-1)} (\bu^T \bg)] + u_1^{D+1} \E [ f^{(D+1)} (\bu^T \bg)] \, ,\\
\E [ \He_D (g_1) f (\bu^T \bg) g_2 ] =&~  u_1^{D} u_2 \E [ f^{(D+1)} (\bu^T \bg)] \, ,
\end{aligned}
\]
where $\bu = (u_1,u_2)$ and $\bg = (g_1,g_2) \sim \normal (0,\id_2)$. We deduce that to leading term (assuming $\| \bw^t \|_2 \approx 1$)
\[
\begin{aligned}
    \og_1^t =&~ \E [ h_* (\bz) \sigma' (\< \bw^t , \bx \>) x_1] \approx (\alpha_1^t)^{D-1} \E [  \sigma^{(D)} (\| \bw^t \| G)] \approx \mu_D (\sigma) (\alpha_1^t)^{D-1} \, , \\
    \og_2^t = &~ \E [ h_* (\bz) \sigma' (\< \bw^t , \bx \>) x_{P+1}] \approx (\alpha_1^t)^{D} \alpha_2^t \E [  \sigma^{(D+2)} (\| \bw^t \| G)] \approx \mu_{D+2} (\sigma) (\alpha_1^t)^{D} \alpha_2^t\, .
\end{aligned}
\]

Let us now control the different contributions to the dynamics:
\begin{itemize}
    \item[(i)] \textbf{Martingale part:} By Doob's maximal inequality for martingales, we have with high probability
    \[
    \sup_{1 \leq t \leq \oT_1 -1} \left\vert \eta_1 \kappa \sum_{s = 0}^t  (g_i^t - \og_i^t) \right\vert \lesssim \eta_1 \kappa \sqrt{\oT_1} \, .
    \]
    We choose $\eta_1 \kappa$ so that we can neglect the martingale contribution during the entire dynamics by taking $\eta_1 \kappa \sqrt{\oT_1} \lesssim \alpha_i^0 = d^{-1/2}$.

    \item[(ii)] \textbf{Drift part for $\alpha_1$:} We now neglect the martingale term and write for all $0 \leq t  \leq \oT_1 -1$
    \begin{equation}\label{eq:drift_alpha1}
    \alpha_1^{t+1} \approx \alpha_1^0 + \eta_1 \kappa \sum_{s=0}^t \og_1^t \approx \alpha_1^0 + \eta_1 \kappa \mu_D (\sigma) \sum_{s=0}^t (\alpha_1^s)^{D-1} \, .
    \end{equation}
    We can study this sequence (see \cite{arous2021online}) and show that
    \[
    \alpha_1^{t} \approx \frac{1}{ \left( (\alpha_1^0)^{-(D-2)} - \eta_1 \kappa \mu_D (\sigma) t \right)^{1/(D-2)}} \, .
    \]
    In order for $\alpha^{\oT_1}_1 \approx 1$, we need to take $\eta_1 \kappa \mu_D (\sigma ) \oT_1 \gtrsim (\alpha_1^0)^{-(D-2)} = d^{D/2 - 1}$.

    \item[(iii)] \textbf{Drift part for $\alpha_2$:} Again, by neglecting the martingale contribution and for $0 \leq t \leq \oT_1 - 1$,
    \[
    \alpha_2^{t+1} \approx \alpha_2^0 + \eta_1 \kappa \mu_{D+2} (\sigma)  \sum_{s=0}^t (\alpha_1^s)^{D} \alpha_2^s \, .
    \]
    We can show that this sequence is bounded by
    \begin{equation}\label{eq:alpha2_bound}
    \begin{aligned}
    \ln \left( \frac{\alpha_2^{t+1}}{\alpha_2^0} \right) \lesssim&~ \eta_1 \kappa \mu_{D+2} (\sigma)  \sum_{s=0}^t (\alpha_1^s)^{D}\\
    \leq&~ \frac{\mu_{D+2} (\sigma)}{\mu_D (\sigma)} \eta_1 \kappa \mu_D (\sigma) \sum_{s=0}^t (\alpha_1^s)^{D-1} \leq \frac{\mu_{D+2} (\sigma)}{\mu_D (\sigma)} \alpha^{t+1}_1 \, ,
    \end{aligned}
    \end{equation}
    where we used Eq.~\eqref{eq:drift_alpha1} in the last inequality. 
\end{itemize}

We deduce from Eq.~\eqref{eq:alpha2_bound} that for $\oT_1$ chosen such that $\alpha^{\oT_1}_1 \approx 1$, then $\ln ( \alpha_2^{t+1} / \alpha_2^0) \lesssim 1$. Hence, during the dynamics, the weights $\alpha_2^t$ not aligned with the support of $h_*$ remain small, of order $1/\sqrt{d}$, while the weights $\alpha_1^t$ aligned with the support of $h_*$ become of order $1$.
From the bounds in (i) and (ii), we need to choose $\eta_1 $ and $\oT_1$ such that $\eta_1 \kappa \sqrt{\oT_1} \lesssim d^{-1/2}$ (martingale part) and $\eta_1 \kappa \oT_1 \gtrsim d^{D/2 - 1}$ (drift part), i.e., we can take
\[
\oT_1 \approx   d^{D - 1}\, , \qquad \eta_1 \approx \frac{1}{\kappa d^{D/2}} \, , 
\]
which matches the scaling in Theorem \ref{thm:alignment_one_monomial}.

\paragraph*{Adding a projection step:} While the above heuristic derivation was useful to get intuitions, the assumption that the weights remain equal (or approximately equal) on and outside the support is not valid. Because of the statistical fluctuations over $\widetilde{\Theta} (d^{D - 1})$ steps, different coordinates over different neurons will grow to be order 1 on the support at a stochastic time (with high probability between $d^{D - 1}/(C \log(d)^C)$ and $d^{D - 1}C \log(d)^C$ for some large enough constant $C$). To prevent these coordinates to continue growing (because we neglected the interaction term in the dynamics, which could otherwise prevent this growth), we introduce the projection step
\begin{equation}\label{eq:correlation_dynamics_oneM}
\begin{cases}
\tbw^{t+1} =& \bw^t + a y^t  \eta_1 \cdot \grad_{\bw^t} \sigma ( \< \bw^t, \bx^t \>) \, ,\\
\bw^{t+1} = & \proj^{t+1} \proj_{\infty}  \tbw^{t+1}  \, ,
\end{cases}
\end{equation}
where $\proj^{t+1} \proj_{\infty}$ is the projection step defined in Eq.~\eqref{eq:first-layer-projection}, and we use the spherical gradient defined in Eq.~\eqref{eq:first-layer-sphere-grad}. Note that because of the choice $\Delta > r$ and the definition of the set $S_{j,t}$ on which we do the projection on the sphere, $\proj^{t+1}$ and $ \proj_{\infty}$ commute.

Thanks to the spherical gradient, we can show that the projection steps only have a negligible impact on the dynamics (similarly to the analysis in \cite{arous2021online}). By carefully arranging these additional terms, we can essentially recover the drift plus martingale analysis presented heuristically above.

\subsection{Going beyond sparsity}
\label{app:beyond_sparse}

In the $P=O_d(1)$ regime the complexity scaling in $d$ is dominated by the `hard' part of learning the low-dimensional latent space on which the function depends, and the complexity of fitting the function on the support is secondary and only results in constants. This also makes the conjecture fairly general in terms of architecture choices as long as there is enough expressivity to fit the function on the support. One could also consider functions that depend on a finite number of basis elements, without necessarily involving a finite number of coordinates. For instance the full parity $\prod_{i\in [d]}x_i$ function is such an example. For SQ algorithms, the class of monomials of degree 0 (more generally $k$) has equivalent complexity to the class of monomials of degree $d$ (more generally degree $d-k$), and the SQ-dimension is symmetrical for these dual cases. However for SGD learning on regular nets, this is not exactly the case. It is true that the full parity can be learned by regular nets under a specific setting; \cite{abbe2022non} provides a regular 2-layer neural net that can learn the full parity if the weight measure of the first layer at initialization is i.i.d.\ Rademacher(1/2) and the activation is a ReLU. A constant number of step can also be sufficient in such cases, as for the 0-degree monomial. It is however conjectured that this is not achievable with a polynomial number of steps for weights that have a Gaussian initialization. Thus, for isotropic layers, it is possible that the full parity is not polytime learnable. This means that the generalized notion of leap to non-coordinate sparse may depend on more specific choices of the parameters. Further, in the non-isotropic case where the full parity is efficiently learnable, one may define the leap with basis sets that can either grow from the 0-monomial or descend from the full-monomial, with the mirror symmetry as for SQ algorithms.

Another notion to factor in when considering non-coordinate sparse function is the fitting of the function once the support is learned. First of all, there may be a non-polynomial number of coefficients to handle, although one can probably cover enough interesting cases with functions that are well-approximated by polynomially many coefficients \citep{odonnell_2014}. Further, there is the fitting of the function by the neural net that may now turn non-trivial. Consider even a function with few basis elements, $h_*(\bz) = \sum_{i=1}^P ix_i + \prod_{i=1}^P x_i^{g(i)}$, where $g : [P] \to \{0,1\}$ is an arbitrary, but known function, and $P \gg 1$ is large. SGD on a regular neural network would first pick up the $P$ coordinates in the support and then learn the monomial $\prod_{i=1}^P z_i^{g(i)}$ based on that support. The latter part may not be trivial for SGD on a regular net, while it would require 0 queries for an SQ algorithm (once the linear part is learned, the permutation is identified and the coefficients in front of each variable would allow us to calculate $g(i)$). Thus the complexity of learning the second monomial on the detected support set is likely to factor in for such cases, and this is likely going to depend more on the model hyperparameters and architecture choice. In less contrived cases, the naive generalization of the leap applied verbatim to non-constant $P$ remains likely relevant.

\subsection{Lower-bounds: beyond noisy GD}\label{app:beyond_CSQ}
Note that the CSQ and noisy-GD models do not exactly match the SGD learning model; we do prove in this paper that the drift of the population gradient dominates the dynamic on the considered horizon, but the CSQ model also has noise added to the query outputs. It is nonetheless interesting that the regularity of the network model drives us to an achievability result that matches that of CSQ lower-bounds. Since it is known how to go beyond the CSQ/SQ lower-bounds with non-regular networks \cite{abbe2020universality}, e.g., learning dense parities by emulating matrix inversions with irregular networks, our results raise an intriguing question: may the model ``regularity'' act comparably to a CSQ constraint? We leave this to future work.

A result in  \cite{abbe2022merged} does derive a lower-bound that does not require additive noise and that applies to  online-SGD. This work requires however a few restrictions: the mean-field parametrization (and not just any isotropic distribution) and a linear sample complexity (e.g.,  finite number of time steps with linear batches). These are used to derive a specialization of the mean-field PDE approximation \citep{mei2018mean} to the coordinate sparse setting. In turn, this allows to show that the sample complexity for SGD learning on functions that do not satisfy the merged-staircase property ($\mathrm{Leap}=1$) cannot be learned with a linear sample complexity.

\clearpage

\section{Proof of Theorem \ref{thm:alignment_one_monomial}: alignment with a single monomial}\label{app:proof_alignment_one_monomial}

In this appendix, we prove the alignment of the first layer's weights with the support of one monomial. The proof will follow from a similar proof strategy as in \cite{tan2019online,arous2021online}, namely decomposing the dynamics into drift and martingale terms. However, it will differ in a key aspect: while \cite{arous2021online} considers a single-index model, we will need to track for each neuron $P$ parameters (the first $P$ coordinates of $\bw_j$) and show that the $d-P$ other parameters remain well behaved along their whole trajectories, which requires a tighter control of the different contributions to the dynamics.

Recall that we denote by $K$ a constant that only depends on $\sigma$ (Assumption \ref{ass:sigma_I}) and the sub-Gaussianity of the label noise $\eps$. Throughout the proofs, we will write $C,c>0$ for generic constants that only depend on $D$ and $K$. The values of these constants are allowed to change from line to line or within the same line.

\subsection{Preliminaries}

In the proof, we will consider $0<r\leq \Delta \leq 1$ to be small enough constants that can depend on $D$ and $K$, but are independent of $d$. We will track the dependency in $r,\Delta$ when necessary, and otherwise use that they are bounded by $1$ (in particular, the constants $c,C$ in the proof will be independent of $r,\Delta$). These constants $r,\Delta$ will be fixed in Theorem \ref{cor:learning_one_monomial}.

We will show that we can take initialization scale $\kappa$ of second layer weights $a^0$ and step size $\eta$ such that the dynamics of the first layer training can be approximated by a correlation dynamics, with no interactions between the neurons, so that we can analyze each neuron independently. We consider below an arbitrary neuron $(a_j,b_j,\bw_j)$ for $j \in [M]$. In the case that $a_j^0 \mu_D(\sigma ( \cdot + b_j)) (w_{j,1})^{k_1} \cdots (w_{j,P}^0)^{k_P} > 0$ we 
prove that the event claimed in Theorem~\ref{thm:alignment_one_monomial}.(b) and (c) holds with probability at least $1 - d^{-C_*}$ for neuron $j$. Theorem~\ref{thm:alignment_one_monomial}.(a) will follow from a similar analysis. The result for all neurons follows by a union bound. 

We further consider $|b_j^0| \leq \rho \leq \Delta $ small enough such that $1/2 \leq |\mu_k(\sigma ( \cdot + b_j))|/ |\mu_k (\sigma) | \leq 3/2$ for $k=0, \ldots , D+2$ (see comments below Lemma \ref{lem:gradient_formula}). Hence, the biases will not impact the training of the first layer weights and for the simplicity, we will fix $b_j = 0$ in the proof.

\paragraph{Nonnegative first layer weights} Without loss of generality, we assume that all of the first-layer coordinates of neuron $j$ have positive sign at initialization $w_{j,1}^0 = \ldots = w_{j,d}^0 = 1/\sqrt{d}$ (and therefore $a^0 \mu_D (\sigma)>0$ by our choice of $(a_j^0, \bw_j^0)$). 
To see why, define $s_0 = \prod_{i \in [P]} \sign (w_{j,i}^0)^{k_i}$ and consider instead initializing the network at $\breve{\bTheta}^0 = (\breve{\ba}^0, \breve{\bW}^0)$ where $\breve{\ba}^0 = s_0 \ba^0$ and $\breve{\bw}_{j'}^0 = \bw_{j'} \odot \sign(\bw_j^0)$ for all $j'$. Then consider training the network with samples $(\breve{y}^t,\breve{\bx}^t)$ where $\breve{\bx}^t = \bx^t \odot \sign(\bw_j^0)$ and $\breve{y}^t = f_*(\breve{\bx}^t) + \eps_t$. The distribution of data $(\breve{y}^t,\breve{x}^t)$ is the same as that of $(y^t,\bx^t)$, and the the training dynamics $\breve{\bTheta}$ match those of $\bTheta$ up to sign flips, and $\breve{\bw}_j^0 = [1/\sqrt{d}, \ldots, 1/\sqrt{d}]$.
\\

For brevity of notation, let us drop the subscript $j$ in the remainder of the analysis of this section, and 
write $(a,\bw)$ to denote $(a_j,\bw_j)$ whenever it can be inferred from context.

\paragraph*{Stopping times on the dynamics:} Recall that the loss is the square loss $\ell(y,\hat y) = \frac{1}{2} (y - \hat y)^2$. We denote $\bv^t$ the (negative) stochastic gradient at time $t$:
\begin{align*}
\bv^t := -  \nabla_\bw \ell \big(y^t, \hf_\NN(\bx^t;\bTheta^t)) \big) = (y^t - \fNN(\bx^t;\bTheta^t)) a^0 \sigma' (\< \bw^t , \bx^t \>) \bx^t \, ,
\end{align*}

 Further recall that we constrain the dynamics of $\bw^t$ in two ways: the $\ell_\infty$ constraint $\| \bw^t \|_\infty \leq \Delta$ and the spherical constraint $\| \cS_t ( \bw^t ) \|_2 = 1$, where $\cS_t (\bw^t) = ( w_i^t \ind_{i \in S_t})_{i \in [d]}$ is the projection on the subset of coordinates $S_t$ which contains all coordinates that verify $|\tw^{t'}_i | < r$ for all times $t ' \leq t$. Let us define $\tbv^t$ the spherical gradient update on support $S_t$:
 \[
\begin{aligned}
\tbv^t = &~ - \grad_{\bw^t} \ell(y^t,\fNN(\bx^t;\bTheta^t)) = \bv^t - \cS_t (\bw^t) \< \cS_t ( \bw^t) , \bv^t \>
\end{aligned}
\]
The update equations are given by
\[
\begin{cases}
\tbw^{t+1} =& \bw^t + \eta_1 \tbv^t \, ,\\
\obw^{t+1} = & \proj_\infty \tbw^{t+1}  \, , \\
\bw^{t+1} = & \proj^{t+1} \obw^{t+1} \, ,
\end{cases}
\]
where we recall that we defined
\[
(\proj^{t+1} \obw^{t+1})_i = \begin{cases}
    \ow_i^{t+1} & \text{if $i \not\in S_{t+1}$}\, , \\
    \frac{\ow_i^{t+1}}{\| \cS_{t+1} (\obw^t ) \|_2} &\text{if $i \in S_{t+1}$}\, , \\
\end{cases}
\]
with $S_{t+1} = S_t \setminus \{ i \in [d] : | \tw_i^{t+1} | \geq r\}$.

Let us introduce the following stopping times on the dynamics:
\[
\begin{aligned}
\tau^+  = &~ \inf \Big\{ t \geq 0 : \max_{ i= P+1 , \ldots , d}  \{ |\tw_i^{t+1}| \vee | w_i^{t+1} |\} \geq 3/(2 \sqrt{d}) \Big\}\, , \\
 \tau^- =&~ \inf \Big\{ t \geq 0 : \min_{ i\in [d]} \{ | \tw_i^{t+1} | \wedge | w_i^{t+1} |\} \leq 1/(2\sqrt{d} ) \Big\} \, , \\
 \tau^0 = &~ \inf \Big\{ t \geq 0 : \max(\| \bv^t / a^0 \|_\infty, \| \tbv^t / a^0 \|_\infty, | y^t |, \| \bx^t \|_\infty) \geq C_0 \log (d)^{C_0} \Big\} \, .
\end{aligned}
\]
Note that $\{ \tau = t\} \in \cF_t := \sigma \big( \bTheta^0, \{ \bx^s,y^s \}_{s \leq t} \big)$ for $\tau \in \{ \tau^+, \tau^-, \tau^0\}$ and $\sigma(\bw^{t+1}),\sigma(S_{t+1}) \subseteq \cF_t$. For $t \leq \tau^+$ and $r\geq 3/(2\sqrt{d})$, we have $\{P+1,\ldots , d\} \subseteq S_t$, and $\| \bw^t\|_2 \leq \| \bw_{1:P}^t\|_2 +\| \cS_t (\bw^t ) \|_2 \leq  \sqrt{P} \Delta + 1$. We further define for all $i\in [d]$,
\[
\begin{aligned}
\tau_i^r =&~ \inf \Big\{ t \geq 0:|  \tw_i^{t+1} | \geq r \Big\} \, , \\
\tau_i^{\Delta} =&~ \inf \Big\{ t \geq 0 :| w^{t+1}_i | \geq \Delta - |a^0| \eta_1 C_0 \log(d)^{C_0} \Big\}\, , \\
\tau^r =&~ \sup_{i \in [P]} \tau_i^r\, , \qquad \qquad\tau^\Delta =\sup_{i \in [P]} \tau_i^\Delta \, ,
\end{aligned}
\]
where $C_0$ is a constant that will be chosen large enough. In particular, at time $\tau_i^r+1$, the $i$-th coordinate is removed from the set on which we do the projection, i.e., $\{ i \} \subseteq S_{\tau_i^r } \setminus S_{\tau_i^r +1 }$. We will show in the proof that $\tau^+ \wedge \tau^- \wedge \tau^0 > \oT_1$ with high probability. 

By concentration of polynomials of Gaussian variables, we have:

\begin{lemma}\label{lem:bound_on_tau_0}
Assume that $\Delta \leq 1$. Then for any $C_* >0$, there exists $C_0$ large enough that only depends on $C_*,D$ and $K$, such that for $d \geq 2$,
\[
 \P ( \tau^0 \leq \tau^+  \wedge d^{D} ) \leq d^{-C_*} \, . 
\]
\end{lemma}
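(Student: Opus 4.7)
The plan is to control each of the four quantities in the definition of $\tau^0$ at a single fixed time step $t \leq \tau^+ \wedge d^D$, and then union bound over time. The key observation is that $(\bx^t, \eps^t)$ is independent of $\cF_{t-1}$ (which determines $\bw^t$, $S_t$, and $\bTheta^t$), and that on the event $\{t \leq \tau^+\}$ we have $\|\bw^t\|_\infty \leq \Delta \leq 1$ and $\|\bw^t\|_2 \leq \sqrt{P}\,\Delta + \|\cS_t(\bw^t)\|_2 \leq \sqrt{P}+1$. So all required tail bounds reduce, conditional on $\cF_{t-1}$, to standard Gaussian and sub-Gaussian estimates.

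First I would bound the two ``external'' quantities. By a Gaussian max bound, $\P(\|\bx^t\|_\infty \geq s) \leq 2d\, e^{-s^2/2}$. For $y^t = h_*(\bz^t) + \eps^t$, Gaussian hypercontractivity applied to the fixed degree-$D$ polynomial $h_*$ in $P=O_d(1)$ coordinates with bounded Hermite coefficients yields $\P(|h_*(\bz^t)| \geq s) \leq C e^{-c s^{2/D}}$, and $K$-sub-Gaussianity of $\eps^t$ gives $\P(|\eps^t|\geq s) \leq 2 e^{-s^2/(2K^2)}$. Plugging $s = C_0 \log(d)^{C_0}$, each of these probabilities is bounded by $d^{-(C_*+D+2)}$ once $C_0$ is chosen large enough in terms of $D, K, C_*$. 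Next I handle $\bv^t/a^0 = (y^t - \hf_\NN(\bx^t;\bTheta^t))\sigma'(\<\bw^t,\bx^t\>)\bx^t$: since $|a_j^0|=\kappa$ and $\|\sigma\|_\infty \leq K$, we have $|\hf_\NN(\bx^t;\bTheta^t)| \leq MK\kappa \leq 1$ under the choice of $\kappa$ in Theorem \ref{thm:alignment_one_monomial}, and $|\sigma'| \leq K$, so $\|\bv^t/a^0\|_\infty \leq K(|y^t|+1)\|\bx^t\|_\infty$, a polylogarithmic product controlled by the same tail bounds.

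The slightly more delicate piece is the spherical gradient $\tbv^t = \bv^t - \cS_t(\bw^t)\<\cS_t(\bw^t),\bv^t\>$. I would write
\[
\|\tbv^t/a^0\|_\infty \leq \|\bv^t/a^0\|_\infty + \|\cS_t(\bw^t)\|_\infty\cdot |\<\cS_t(\bw^t),\bv^t\>/a^0|,
\]
use that on $\{t\leq\tau^+\}$ the definition of $S_t$ forces $\|\cS_t(\bw^t)\|_\infty \leq r \leq 1$, and exploit $\|\cS_t(\bw^t)\|_2 = 1$: conditional on $\cF_{t-1}$, $\<\cS_t(\bw^t),\bx^t\>$ is a standard scalar Gaussian, so $|\<\cS_t(\bw^t),\bv^t\>/a^0| = |y^t-\hf_\NN|\,|\sigma'(\<\bw^t,\bx^t\>)|\,|\<\cS_t(\bw^t),\bx^t\>|$ obeys the same polylogarithmic tail. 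Summing the four tail events at each $t$ and union bounding over $t = 0,1,\ldots,d^D$ gives $\P(\tau^0 \leq \tau^+ \wedge d^D) \leq d^D \cdot d^{-(C_*+D+2)} \cdot O(1) \leq d^{-C_*}$ for $C_0$ large enough.

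There is no real obstacle: this is a standard concentration-plus-union-bound argument. The only care needed is (i) using the sphere constraint $\|\cS_t(\bw^t)\|_2=1$ to keep the inner product $\<\cS_t(\bw^t),\bx^t\>$ standard Gaussian when handling $\tbv^t$, and (ii) choosing the constant $C_0$ large enough (depending on $D$, $K$, $C_*$) so that the $\log(d)^{C_0}$ factor converts the polynomial $d^D$-many time steps into a harmless prefactor.
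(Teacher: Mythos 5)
Your argument is correct and is essentially the paper's proof: on $\{t\leq\tau^+\}$ one has $\|\bw^t\|_2\leq\sqrt{P}+1$, so conditional on $\cF_{t-1}$ all four quantities obey stretched-exponential tails, and a union bound over the $d^{D}$ time steps (and coordinates) with $C_0$ chosen large enough in terms of $D,K,C_*$ gives the claim. The only difference is cosmetic: the paper invokes the packaged tail estimates of Lemma~\ref{lem:bounds_gradient} (Gaussian hypercontractivity), whereas you re-derive them inline, handling the spherical term via $\|\cS_t(\bw^t)\|_2=1$ exactly as the paper does implicitly.
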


\begin{proof}[Proof of Lemma \ref{lem:bound_on_tau_0}]
For $t \leq \tau^+$, we must have $\| \bw^t \|_2 \leq \sqrt{P} + 1 $. Using the bounds \eqref{eq:tail_bounds_fct_gaussian} in Lemma \ref{lem:bounds_gradient} and a union bound, there exists a constant $C_0$ such that
\[
\begin{aligned}
\P ( \tau^0 \leq \tau^+  \wedge d^{D} )  
\leq&~ \sum_{t \leq \tau^+  \wedge d^{D} } \P_{\bw^t} \big( \max(\| \bv^t / a^0 \|_\infty, \| \tbv^t / a^0 \|_\infty, | y^t  |, \| \bx^t \|_\infty) > z_* + c  \big) \\
\leq&~ c d^{D+1} \exp ( - C (z_*)^{2/(3D +3)} ) \leq d^{-C_*}\, ,
\end{aligned}
\]
where $z_* = C_0 \log (d)^{C_0} - c$.
\end{proof}

\paragraph*{Reducing to the correlation flow}
We now show that for second-layer initialization scale $\kappa$ small enough, the updates $\bv^t$ and $\tilde{\bv}^t$ mostly come from correlation term in the square loss, and the self-interaction term contributes negligibly. Define the gradient and spherical gradient from the correlation term as:
\begin{align*}
\bu^t = a^0 y^t \sigma'(\<\bw^t,\bx^t\>) \bx^t\, \qquad \mbox{ and } \qquad \tbu^t = \bu^t - \cS_t(\bw^t)\<\cS_t(\bw^t),\bu^t\>\,.
\end{align*}
Then $\bv^t$ is close to $\bu^t$ and $\tilde{\bv}^t$ is close to $\tilde{\bu}^t$.
For $t < \tau^0$, we have the following bounds. Use (a) $\|\sigma\|_{\infty}, \|\sigma'\|_{\infty} \leq K$ and $\|\ba^0\|_{\infty} = \kappa$, and (b) $\|\cS_t(\bw^t)\|_{\infty} < r < 1$ and  $\|\cS_t(\bw^t)\|_1 < d$,
\begin{equation}\label{eq:v-vs-u}
\begin{aligned}
\|\bv^t - \bu^t\|_{\infty} / |a^0| &\stackrel{(a)}{\leq} \|\fNN(\bx^t;\bTheta^t) \sigma'(\<\bw^t,\bx^t\>)\bx^t\|_{\infty} \leq \kappa K^2MC_0 \log(d)^{C_0} \leq \tilde{\kappa}\, \\
\|\tilde{\bv}^t - \tilde{\bu}^t\|_{\infty} / |a^0| &= \|\bv^t - \bu^t - \cS_t(\bw^t)\<\cS(\bw^t), \bv^t - \bu^t\>\|_{\infty} / |a^0| \\
&\stackrel{(b)}{\leq} (1+d)\|\bv^t - \bu^t\|_{\infty} \leq \tilde{\kappa}\,,
\end{aligned}
\end{equation}
for
$\tilde{\kappa} = 2\kappa dK^2MC_0$.

\paragraph*{Simplifying the update equations:} Note that for $t < \tau_i^\Delta \wedge \tau_0$, we have $|\tw_i^{t+1}| = |w_i^t + \eta_1 \tv_i^t | \leq \Delta$, and therefore $\ow_i^{t+1} = \tw_i^{t+1}$.  Let us introduce the truncated spherical gradient $\bg^t$ defined by
\[
g^t_i := \begin{cases}
    \tv_i^t & \text{for $t < \tau_i^\Delta \wedge \tau_0$}\; , \\
    \gamma_i^t \tv_i^t & \text{for $t \geq \tau_i^\Delta \wedge \tau_0$}\; , \\
\end{cases}
\]
where $\gamma_i^t \in [0,1]$ is a multiplicative factor that models the projection step $\obw^{t+1} = \proj_{\infty} \tbw^{t+1}$,
\[
\gamma_i^t = \min \Big( \frac{\Delta - \sign ( \tv_i^t ) w_i^t }{\eta_1 | \tv_i^t | }, 1 \Big) \, .
\]
It is easy to check that $\sigma (\bg^t) \subseteq \cF_t$ and $\obw^{t+1} = \proj_{\infty} ( \bw^t + \eta_1 \tbv^t ) = \bw^t + \eta_1 \bg^t$ for all $t \geq 0$. With these notations, our dynamics are now simply given by
\begin{equation}\label{eq:dyn_eq_simplified}
\begin{cases}
\obw^{t+1} =& \bw^t + \eta_1 \bg^t \, ,\\
\bw^{t+1} = & \proj^{t+1} \obw^{t+1}    \, .
\end{cases}
\end{equation}

\paragraph*{Population gradients:} Let us define the population spherical gradient $\obg^t = \E_{\bx^t,y^t} [\bg^t ]$. We have the following formula on $\og_i^t$ for $t < \tau_i^\Delta \wedge \tau_0$:

\begin{lemma}\label{lem:gradient_formula}
Denote $\chi_* ( \bw^t )= \prod_{j \in [P]} (w_j^t)^{k_j}$.
For $i \in [P]$ and $t < \tau_i^\Delta \wedge \tau_0$, we have the following formulas that approximate the population gradient: if $t \leq \tau_i^r$ (i.e., $i \in S_t$), then $\og_i^t = \E_{\bx^t,y^t} \big[ \tv_i^t \big]$ and
\begin{equation}\label{eq:grad_formula_P_St}
\Bigg|\og_i^t - a^0 \frac{\chi_* (\bw^t)}{w_i^t} \left( k_i - (w_i^t)^2 \sum_{j \in S_t \cap [P]} k_j \right) \E_{G} \Big[ \sigma^{(D)} (\| \bw^t \|_2 G ) \Big]\Bigg| \leq |a^0|\tilde{\kappa} \, ,
\end{equation}
while if $t> \tau^r_i$ (i.e., $i \not\in S_t$), then $\og_i^t = \E_{\bx^t,y^t} [ v_i^t]$ and 
\begin{equation}\label{eq:grad_formula_P_notSt}
 \Bigg|\og_i^t -  a^0 \frac{\chi_* (\bw^t)}{w_i^t} \left( k_i  \E_{G} \Big[ \sigma^{(D)} (\| \bw^t \|_2 G ) \Big] + (w_i^t)^2 \E_{G} \Big[ \sigma^{(D+2)} (\| \bw^t \|_2 G ) \Big] \right)\Bigg| \leq |a^0|\tilde{\kappa} \, .
\end{equation}
For $i > P$ and $t < \tau^+ \wedge \tau_0$,
\begin{equation}\label{eq:grad_formula_notP}
\Bigg|\og_i^t + a^0 w_i^t \chi_* (\bw^t) \left( \sum_{j \in S_t \cap [P]} k_j \right)\E_{G} \Big[ \sigma^{(D)} (\| \bw^t \|_2 G ) \Big]\Bigg| \leq |a^0|\tilde\kappa\, .
\end{equation}
\end{lemma}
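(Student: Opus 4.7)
The plan is to reduce each $\bar{g}_i^t$ to an expectation of the correlation-only update $\bu^t$, then use Gaussian integration by parts ($\E[\He_k(G)f(G)] = \E[f^{(k)}(G)]$ for $G \sim \normal(0,1)$) to compute the Gaussian expectations, and finally handle the spherical projection algebraically using the constraint $\|\cS_t(\bw^t)\|_2 = 1$.

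First I would replace $\bv^t$ by $\bu^t$ and $\tbv^t$ by $\tbu^t$ using the bound \eqref{eq:v-vs-u}, which costs at most $|a^0|\tilde{\kappa}$ per coordinate in $\ell_\infty$. Since the label noise $\eps^t$ is mean-zero, this gives $\E[\bu^t \mid \bw^t] = a^0\, \E[h_*(\bx_{[P]}^t)\,\sigma'(\<\bw^t,\bx^t\>)\,\bx^t]$, so the remaining task is to evaluate these deterministic Gaussian integrals. Note that all the stopping-time conditions $t<\tau_i^{\Delta} \wedge \tau_0$ (and, where relevant, $t \leq \tau^+$) are there only so that the projection step is absent on coordinate $i$, i.e., $g_i^t = \tilde v_i^t$ (or $g_i^t = v_i^t$ when $i \notin S_t$), so no additional $\gamma_i^t$-factor appears.

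Second, I would compute $\E[u_i^t]/a^0 = \E[\prod_{j\in[P]}\He_{k_j}(x_j)\,\sigma'(\<\bw^t,\bx^t\>)\,x_i]$ by applying the Hermite identity coordinatewise in $x_1,\ldots,x_P$ (the $x_j$ are independent standard Gaussians). For $j \in [P]$ with $j \neq i$, differentiating $k_j$ times in $x_j$ pulls down a factor $(w_j^t)^{k_j}$ from $\sigma'$. For the $i$-th coordinate the argument depends on whether $i \in [P]$ or not: if $i \in [P]$, the product rule on $\sigma'(\<\bw^t,\bx^t\>)\,x_i$ combined with one extra Stein integration-by-parts in $x_i$ produces the two-term formula $\E[u_i^t]/a^0 = \frac{\chi_*(\bw^t)}{w_i^t}\bigl[k_i\,\mu_D^t + (w_i^t)^2\,\mu_{D+2}^t\bigr]$ where $\mu_k^t := \E_G[\sigma^{(k)}(\|\bw^t\|_2 G)]$; if $i>P$, one additional Stein integration-by-parts in $x_i$ yields $\E[u_i^t]/a^0 = \chi_*(\bw^t)\,w_i^t\,\mu_{D+2}^t$. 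Assumption~\ref{ass:sigma_I} gives $\sigma \in C^{D+3}$ with bounded derivatives, justifying all these integrations by parts.

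Third, to pass from $\bu^t$ to $\tbu^t$ for coordinates in $S_t$, I would compute $\<\cS_t(\bw^t),\E[\bu^t]\>/a^0 = \sum_{j \in S_t} w_j^t\,\E[u_j^t]/a^0$. Plugging in the formulas above and splitting over $j \in S_t\cap[P]$ and $j \in S_t\setminus[P]$, the key algebraic step is $\sum_{j \in S_t}(w_j^t)^2 = \|\cS_t(\bw^t)\|_2^2 = 1$, which collapses the $\mu_{D+2}^t$ contribution into a single term and gives $\<\cS_t(\bw^t),\E[\bu^t]\>/a^0 = \chi_*(\bw^t)\bigl[\mu_{D+2}^t + \mu_D^t \sum_{j\in S_t\cap[P]} k_j\bigr]$. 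Subtracting $w_i^t$ times this from $\E[u_i^t]$ yields the stated formulas \eqref{eq:grad_formula_P_St} and \eqref{eq:grad_formula_notP}: in \eqref{eq:grad_formula_P_St} the $\mu_{D+2}^t$ terms cancel exactly (leaving only the lower-order $\mu_D^t$ drift), which is the mechanism behind the ``hierarchy'' in the dynamics.

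The main obstacles are bookkeeping rather than deep. One has to be careful that the Hermite/Stein manipulations actually apply to each of $\sigma'(\<\bw^t,\bx^t\>)\,x_i$ and its derivatives, which is ensured by Assumption~\ref{ass:sigma_I}, and that the $|a^0|\tilde\kappa$ error from the $\bv^t \to \bu^t$ reduction is maintained through the linear map $\bu^t \mapsto \bu^t - \cS_t(\bw^t)\<\cS_t(\bw^t),\bu^t\>$ (it grows by at most a factor $1 + \|\cS_t(\bw^t)\|_2^2 = 2$, which is absorbed in $C$-constants). Finally one must be attentive to the multiplicity conventions around $w_i^t$ in $\chi_*(\bw^t)/w_i^t$ when $k_i = 0$ is allowed (it is not—we have $k_i \geq 1$ for $i \in [P]$).
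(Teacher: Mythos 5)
Your proposal is correct and follows essentially the same route as the paper's proof: reduce to the correlation update $\bu^t$ via \eqref{eq:v-vs-u}, evaluate the Gaussian expectations by Hermite integration by parts (the paper phrases your ``product rule plus one extra Stein step'' via the recurrence $x\He_k(x)=\He_{k+1}(x)+k\He_{k-1}(x)$, which is the same computation), and let the constraint $\sum_{j\in S_t}(w_j^t)^2=1$ cancel the $\sigma^{(D+2)}$ contribution in the projected coordinates. The resulting formulas and error bookkeeping match the paper's derivation.
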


\begin{proof}[Proof of Lemma \ref{lem:gradient_formula}]
We recall the following useful identities (where $G \sim \normal (0,1)$)
\[
\E_G [ \He_k(G) g (G) ] = \E_G [g^{(k)} (G)]\, , \qquad  x \He_k (x) = \He_{k+1} (x) +k \He_{k-1} (x) \, .
\]
In particular, by integration by parts, we have
\[
\E \Big[ \prod_{ j \in [P]} \He_{v_j} (x_j ) \sigma ' ( \< \bw^t , \bx \> ) \Big] = \Big( \prod_{j \in [P]} (w_j^t)^{v_j} \Big) \cdot \E_{G} \big[ \sigma^{(1 + v_1 + \ldots + v_P)} (\| \bw^t \|_2 G ) \big] \, .
\]
Furthermore, if $i \in [P]$,
\[
x_i f_* (\bx) = \Big( \prod_{j \in [P], j \neq i} \He_{k_j} ( \bx_j)\Big) \big\{ k_i \He_{k_i - 1} (x_i) + \He_{k_i + 1} (x_i) \big\} \, .
\]
Hence, for $i \in [P]$ and $i \in S_t$, we have $|\og_i^t - \E_{\bx^t,y^t}[\tilde{u}_i^t]| \leq |a^0|\tilde{\kappa}$ by \eqref{eq:v-vs-u}, and
\[
\begin{aligned}
\E_{\bx^t,y^t}[\tilde{u}_i^t] =&~ \E_{\bx^t,\eps^t} \Big[ a^0 (f_* (\bx^t) + \eps^t) \Big\{ x_i^t - w_i^t \sum_{ j \in S_t} w_j^t x_j^t \Big\} \sigma ' ( \< \bw^t , \bx^t \> ) \Big] \\
=&~ a^0 \frac{\chi_* (\bw^t) }{w_i^t} \Big\{ k_i - (w_i^t)^2 \sum_{ j \in S_t \cap [P]} k_j \Big\}  \E_{G} \big[ \sigma^{(D)} (\| \bw^t \|_2 G ) \big] \\
&~+ a^0 \chi_* (\bw^t) \Big\{ w_i^t  - w_i^t \sum_{j \in S_t} (w_j^t)^2 \Big\} \E_{G} \big[ \sigma^{(D+2)} (\| \bw^t \|_2 G ) \big] \, ,
\end{aligned}
\]
which gives Eq.~\eqref{eq:grad_formula_P_St} by using $\| \cS_t (\bw^t) \|_2^2 = 1$. Eqs.~\eqref{eq:grad_formula_P_notSt} and \eqref{eq:grad_formula_notP} are obtained similarly.
\end{proof}

From Assumption \ref{ass:sigma_I}, we can choose $\Delta $ small enough and depending only on $K$ and $D$ such that that for all $u,v \in [-P\Delta,P\Delta]$ and $0 \leq k \leq D$
\begin{equation}\label{eq:choice_Delta_sigma}
 \Big\vert \E_{G} [ \He_k (G) \sigma ((1+v)G + u) ] - \mu_k (\sigma) \Big\vert \leq  \frac{|\mu_k (\sigma )|}{2}\, .
\end{equation}
and for $k =D+1$ or $D+2$,
\begin{equation}\label{eq:choice_Delta_sigma_2}
 \Big\vert \E_{G} [ \He_k (G) \sigma ((1+v)G + u) ] \Big\vert \leq  2K \, .
\end{equation}

We further assume that $\Delta$ is chosen small enough such that $\Delta^2 \leq 1/(2D)$ and $\Delta^2 \leq 1 / (4 K^2)$. With this choice of $\Delta$, there exist constants $C,c > 0$ that only depend on $D,K$ such that for all $t < \tau_i^\Delta \wedge \tau^+ \wedge \tau^-$, if $i \in [P]$,
\[
c a^0 \frac{\chi_* (\bw^t)}{w_i^t} \mu_D (\sigma) - |a^0| \tilde{\kappa} \leq \og_i^t \leq C a^0  \frac{\chi_* (\bw^t)}{w_i^t} \mu_D (\sigma) + |a^0| \tilde{\kappa}\, ,
\]
where we recall that we are now assuming that $\sign (w_i^0) = 1$ and therefore $\sign (w_i^t) =1$ for $t < \tau^-$, and $a^0 \chi_* (\bw^0) \mu_D (\sigma) >0$. Because $t < \tau^{-}$, and because $\kappa$ is chosen small enough so that $\tilde{\kappa} \ll (1/(2\sqrt{d}))^{D-1}$, we have
\begin{equation}\label{eq:og-on-first-P-bounds}
c a^0 \frac{\chi_* (\bw^t)}{w_i^t} \mu_D (\sigma) \leq \og_i^t \leq C a^0  \frac{\chi_* (\bw^t)}{w_i^t} \mu_D (\sigma)\, .
\end{equation}
And if $i >P$,
\[
|\og_i^t| \leq   C a^0 \mu_D (\sigma) w_i^t \chi_* (\bw^t) \Big\{ \sum_{j \in S_t \cap [P]} k_j \Big\} + |a^0| \tilde{\kappa}\, . 
\]
Notice that $|\og_i^t| \leq |a^0| \tilde{\kappa}$ for all $i>P$ and $t \geq \tau^r +1$ (i.e., $S_t \cap [P] = \emptyset$).

\subsection{Bounding the different contributions to the dynamics}
\label{sec:bounding_contributions_dynamics}

The following lemma tracks the contribution of the projection on the sphere $\bw^{t+1} = \proj^{t+1} \obw^{t+1}$:

\begin{lemma}\label{lem:proj_step_bounds}
Assume that $C_0 \log(d)^{C_0}/\sqrt{d} \leq r \leq \Delta /2 \leq 1/(\sqrt{8P})$, $|a^0| \leq 1$ and $\eta_1 \leq 1/d$. Then there exist constants $C, C' >0$ (that only depend on $D, K$ and $C_0$) such that for $d \geq C'$ and all $t < \tau^0 \wedge \tau^+$, if $S_{t+1} = S_t$,
\begin{equation}\label{eq:proj_step_bound_I}
\frac{1}{2} \leq 1 - C \eta_1^2 \|  \bg^t  \|_2^2 \leq \frac{1}{\| \cS_t ( \obw^{t+1} ) \|_2 } \leq 1 + C \eta_1^2 \|  \bg^t  \|_2^2 \, ,
\end{equation}
and if $S_{t+1} \neq S_t$, then
\begin{equation}\label{eq:proj_step_bound_II}
\frac{1}{2} \leq 1 - C r^2 \leq \frac{1}{\| \cS_t ( \obw^{t+1} ) \|_2 }  \leq 1 + Cr^2 \, .
\end{equation}
\end{lemma}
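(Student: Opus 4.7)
\textbf{Plan for the proof of Lemma \ref{lem:proj_step_bounds}.}

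The whole argument will reduce to a single identity. The core step is to establish
\[
\|\cS_t(\obw^{t+1})\|_2^2 \;=\; 1 + \eta_1^2\,\|\cS_t(\bg^t)\|_2^2,
\]
after which Taylor expansion of $y \mapsto 1/\sqrt{1+y}$ about $y=0$ yields both inequalities, with the two cases distinguished only by how one bounds the right-hand side.

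To get this identity I will first argue that the relevant coordinates are unaffected by the $\proj_\infty$ and $\gamma^t$ truncations, so that $\cS_t(\obw^{t+1}) = \cS_t(\bw^t) + \eta_1\,\cS_t(\bg^t)$. For $i \in S_t$, the definition of $S_t$ gives $|\tw_i^{t'}|<r$ for all $t'\leq t$, and combined with the (inductive) control $\|\cS_t(\obw^t)\|_2\approx 1$ this forces $|w_i^t|<r+o(1)<\Delta/2$. Since $t<\tau^0$ yields $\eta_1|\tv_i^t|\leq \eta_1|a^0|C_0\log(d)^{C_0} \leq C_0\log(d)^{C_0}/d$, for $d$ large enough we get $|w_i^{t+1}|<\Delta - |a^0|\eta_1 C_0\log(d)^{C_0}$ and $|\tw_i^{t+1}|<\Delta$. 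Therefore $t<\tau_i^\Delta$ (so $g_i^t=\tv_i^t$) and the $\ell_\infty$ projection acts as the identity on coordinate $i$, giving $\ow_i^{t+1}=w_i^t+\eta_1 g_i^t$. Next, the spherical-gradient orthogonality $\<\cS_t(\bw^t),\cS_t(\bg^t)\>=0$ is immediate from $\tv_i^t=v_i^t-w_i^t\<\cS_t(\bw^t),\bv^t\>$ for $i\in S_t$ and $\|\cS_t(\bw^t)\|_2^2=1$: summing against $w_i^t$ produces $\<\cS_t(\bw^t),\bv^t\>-\|\cS_t(\bw^t)\|_2^2\<\cS_t(\bw^t),\bv^t\>=0$. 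Expanding the squared norm then gives the identity.

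For the final inequalities I will use the elementary fact that $1-y/2\leq 1/\sqrt{1+y}\leq 1$ for all $y\geq 0$, so only an upper bound on $y=\eta_1^2\|\cS_t(\bg^t)\|_2^2$ is needed. In Case~1 ($S_{t+1}=S_t$) the direct estimate $\|\cS_t(\bg^t)\|_2\leq \|\bg^t\|_2$ yields $|1/\|\cS_t(\obw^{t+1})\|_2-1|\leq C\eta_1^2\|\bg^t\|_2^2$. In Case~2 ($S_{t+1}\neq S_t$), where this bound may be too loose because some coordinate has jumped past the threshold $r$, I will instead use the crude uniform bound $\|\bg^t\|_\infty \leq |a^0|C_0\log(d)^{C_0}$ together with $|S_t|\leq d$ and $\eta_1\leq 1/d$ to get
\[
\eta_1^2\|\cS_t(\bg^t)\|_2^2 \;\leq\; d\cdot \eta_1^2 |a^0|^2 C_0^2\log(d)^{2C_0} \;\leq\; C_0^2\log(d)^{2C_0}/d \;\leq\; r^2,
\]
where the last step invokes the standing hypothesis $r\geq C_0\log(d)^{C_0}/\sqrt d$. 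The lower bound $\geq 1/2$ in each case then follows by noting that the respective upper bounds on $y$ can be made $\leq 1$ for $d$ large (Case~1) or by using $r^2\leq 1/(32P)$ from $r\leq\Delta/2\leq 1/\sqrt{8P}$, possibly shrinking $\Delta$ further in terms of $D,K$ (Case~2).

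Main obstacle. Nothing in the argument is deep, but the bookkeeping is delicate: one must carefully check that for every $i\in S_t$ neither the $\proj_\infty$ step nor the $\gamma_i^t$ truncation has fired yet, so that $\cS_t(\obw^{t+1})=\cS_t(\bw^t)+\eta_1\cS_t(\bg^t)$ literally holds. This rests on the inductive hypothesis that the previous step's normalization factor was close to $1$, so the lemma must ultimately be wielded as part of an induction when used in the larger proof.
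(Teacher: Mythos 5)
Your treatment of the case $S_{t+1}=S_t$ is essentially the paper's proof: both rest on the exact identity $\|\cS_t(\obw^{t+1})\|_2^2 = 1+\eta_1^2\|\cS_t(\tbv^t)\|_2^2$, obtained from $\|\cS_t(\bw^t)\|_2=1$, the orthogonality of the spherical gradient, and the observation that for $i\in S_t$ neither the $\ell_\infty$ clipping nor the factor $\gamma_i^t$ has fired (so $g_i^t=\tv_i^t$ there), followed by the crude bound $\eta_1^2\|\bg^t\|_2^2\le C\log(d)^{C}/d$ on the event $t<\tau^0$. Your explicit remark that the claim ``$t<\tau_i^\Delta$ for $i\in S_t$'' needs the previous steps' normalization factors to be close to $1$, hence an induction in $t$, is a fair (and welcome) elaboration of a point the paper leaves implicit.

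The gap is in the case $S_{t+1}\neq S_t$. You take the subscript in the statement literally and bound $1/\|\cS_t(\obw^{t+1})\|_2$ again, via the same identity and the cruder estimate $\eta_1^2\|\cS_t(\bg^t)\|_2^2\le r^2$; but that quantity is already controlled (more sharply) by your case-1 argument, so your case 2 adds nothing — a sign that it is not what the lemma is for. The normalization actually applied by the algorithm at step $t+1$ divides by $\|\cS_{t+1}(\obw^{t+1})\|_2$ (see \eqref{eq:projection_on_St}--\eqref{eq:first-layer-projection}), and this is how \eqref{eq:proj_step_bound_II} is invoked later to get \eqref{eq:update_StnotSt_pre_2}; the subscript $t$ in the statement should be read as the post-update set. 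The whole content of case 2 is the bookkeeping for the coordinates that exit the projection set: for $t<\tau^+$ only coordinates $i\in[P]$ can cross the threshold $r$, so $|S_t\setminus S_{t+1}|\le P$, and each removed coordinate satisfies $|\ow_i^{t+1}|\le r + C_0\log(d)^{C_0}/\sqrt d\le 2r$, whence $\big|\,\|\cS_{t+1}(\obw^{t+1})\|_2^2-\|\cS_t(\obw^{t+1})\|_2^2\,\big|\le 4Pr^2$; combining with the case-1 identity and $\eta_1^2\|\cS_t(\tbv^t)\|_2^2\le Pr^2$ gives the $1\pm Cr^2$ bound. This removed-coordinate accounting — which is precisely where the $r^2$ (rather than $\eta_1^2\|\bg^t\|_2^2$) scaling comes from, and which also quietly uses $t<\tau^+$ — is absent from your plan, so as written it does not establish the estimate the lemma is used for downstream.
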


\begin{proof}[Proof of Lemma \ref{lem:proj_step_bounds}]
First consider the case $S_{t+1} = S_t$. We have $\cS_{t+1} ( \obw^{t+1} ) = \cS_t ( \bw^t) + \eta_1 \cS_t ( \bg^t)$. Note that on $i \in S_t$, we have $t < \tau_i^{\Delta}$ and therefore $\gamma_i^t ( \bw^t) = 1$ and $\cS_t ( \bg^t) = \cS_t ( \tbv^t)$. We therefore have
\begin{equation}\label{eq:proj_step_I}
\| \cS_{t+1} (\obw^{t+1} ) \|_2^2 = 1 + \eta_1^2 \| \cS_t ( \tbv^t) \|_2^2 \, ,
\end{equation}
where we used that $\| \cS_t ( \bw^t)\|_2^2 = 1$ and $\<\cS_t ( \bw^t), \cS_t ( \tbv^t) \> = 0$ by definition of the spherical gradient.
Furthermore, $\eta_1^2 \| \cS_t ( \tbv^t ) \|_2^2 \leq \eta_1^2 \| \bg^t \|_2^2 \leq d^{-2} \|\bg^t\|_2^2 \leq d^{-1} \| \bg^t \|_\infty^2 \leq C d^{-1} \log(d)^C \leq 1/4$ for $t < \tau_0$. Therefore, there exists a constant $C >0$ such that bound \eqref{eq:proj_step_bound_I} holds.

In the case $S_{t+1} \neq S_t$, we have $|S_{t} \setminus S_{t+1}| \leq P$ for $t < \tau^+$ and the coordinates that are removed at time $t+1$ satisfy $w_i^t + \eta_1 g_i^t \leq r + C_0 d^{-1/2} \log(d)^{C_0} \leq 2r$. Hence
\[
- 4 Pr^2  + \| \cS_{t} (\obw^{t+1} ) \|_2^2 \leq \| \cS_{t+1} (\obw^{t+1} ) \|_2^2 \leq  4 Pr^2  + \| \cS_{t} (\obw^{t+1} ) \|_2^2 \, .
\]
We can then use Eq.~\eqref{eq:proj_step_I} and that $\eta_1^2 \| \cS_t ( \tbv^t) \|_2^2 \leq Pr^2$ to derive Eq.~\eqref{eq:proj_step_bound_II}.
\end{proof}

Let us decompose the different contributions to the dynamics. We define $\bm^t = \bg^t - \obg^t$ the martingale updates. Let us bound the change of a coordinate after one update. For $t < \tau^0 \wedge \tau^+ \wedge \tau^-$, if $S_{t+1} = S_t$, then by Eq.~\eqref{eq:proj_step_bound_I}, we have for $i \in S_{t+1}$
\begin{equation}\label{eq:update_StSt_pre}
\begin{aligned}
w_i^{t+1} =&~ \frac{w_i^t + \eta_1 g_i^t}{\| \cS_{t+1} (\bw^t + \eta_1 \bg^t) \|_2}  \geq w_i^t + \eta_1 g_i^t - C \eta_1^2 \| \bg^t \|_2^2 | w_i^t | - C \eta_1^3 \| \bg^t \|_2^2 | g_i^t| \, ,\\
w_i^{t+1} \leq&~ w_i^t + \eta_1 g_i^t + C \eta_1^2 \| \bg^t \|_2^2 | w_i^t | + C \eta_1^3 \| \bg^t \|_2^2 | g_i^t|  \, .
\end{aligned}
\end{equation}
(Note that for $t < \tau^-$, we have $\sign (w_i^{t+1}) = \sign (w_i^t) = 1$.) 
For $t < \tau^0$, we have $\| \bg^t \|_\infty \leq C |a^0| \log(d)$ and $|w_i^t|/|w_i^{t+1}| \leq C$ because $\eta_1 \leq 1/d$. Hence, we can rearrange Eqs.~\eqref{eq:update_StSt_pre} and obtain
\begin{equation}\label{eq:update_StSt_pre_2}
\begin{aligned}
\big( 1 + C|a^0|^2 \eta_1^2 d \log(d)^{C}  \big) w_i^{t+1} \geq &~ w_i^t + \eta_1g_i^t -  C \eta_1^3 |a^0|^3 d \log(d)^C \, , \\
\big( 1 - C|a^0|^2 \eta_1^2 d \log(d)^{C}  \big) w_i^{t+1} \leq &~ w_i^t + \eta_1g_i^t +  C \eta_1^3 |a^0|^3 d \log(d)^C \, , 
\end{aligned}
\end{equation}
On the other hand, if $S_{t+1} \neq S_t$, by Eq.~\eqref{eq:proj_step_bound_II}, we have for $i \in S_{t+1}$,
\[
\begin{aligned}
\frac{w_i^{t+1}}{1-Cr^2} \geq&~  w_i^t + \eta_1 g_i^t \geq w_i^t + \eta_1 g_i^t - C \eta_1^2 \| \bg^t \|_2^2 | w_i^t | - C \eta_1^3 \| \bg^t \|_2^2 | g_i^t| \, , \\
\frac{w_i^{t+1}}{1+Cr^2} \leq &~ w_i^t + \eta_1 g_i^t + C \eta_1^2 \| \bg^t \|_2^2 | w_i^t | + C  \eta_1^3 \| \bg^t \|_2^2 | g_i^t| C \, .
\end{aligned}
\]
Rearranging these equations, we obtain for $t < \tau^0 \wedge \tau^-$ and $S_{t+1} \neq S_t$ (using $|Cr^2|\leq 1/2$),
\begin{equation}\label{eq:update_StnotSt_pre_2}
\begin{aligned}
\frac{1 + C|a^0|^2 \eta_1^2 d \log(d)^{C}}{1-Cr^2}  \cdot w_i^{t+1} \geq &~ w_i^t + \eta_1g_i^t -  C \eta_1^3 |a^0|^3 d \log(d)^C \, , \\
\frac{ 1 - C|a^0|^2 \eta_1^2 d \log(d)^{C}  }{1+ Cr^2} \cdot w_i^{t+1} \leq &~ w_i^t + \eta_1g_i^t +  C \eta_1^3 |a^0|^3 d \log(d)^C \, , 
\end{aligned}
\end{equation}
On the other hand, if $i \not\in S_{t+1}$, then
\[
w_i^{t+1} = w_i^t + \eta_1 g_i^t \, .
\]

Define $X_t = \# \{ i \in [P]: \tau_i^r < t \}$, and
\[
\up^t := \frac{ (1 - C|a^0|^2 \eta_1^2 d \log(d)^{C} )^t }{(1+ Cr^2)^{X_t}}\, , \qquad \op^t :=  \frac{ (1 + C|a^0|^2 \eta_1^2 d \log(d)^{C} )^t }{(1- Cr^2)^{X_t}} \, .
\]
Note $\sigma(\up^t), \sigma (\op^t) \in \cF_{t-1}$, so that $\up^t \bm^t$ and $\op^t \bw^t$ are still martingale updates.
By induction on Eqs~\eqref{eq:update_StSt_pre_2} and \eqref{eq:update_StnotSt_pre_2}, we deduce that for $t <\tau^0 \wedge \tau^+ \wedge \tau^-$,
\begin{equation}
    \begin{aligned}
    \up^{t \wedge \tau_i^r} w_i^t \leq&~ w_i^0 + \eta_1 \sum_{s= 0}^{t-1}  \up^{s \wedge \tau_i^r} \og_i^s + \eta_1 \sum_{s= 0}^{t-1} \up^{s \wedge \tau_i^r} m_i^s + C (t \wedge \tau_i^r) \up^{t \wedge \tau_i^r} \eta_1^3 |a^0|^3 d \log(d)^C \, , \\
    \op^{t \wedge \tau_i^r} w_i^t \geq&~ w_i^0 + \eta_1 \sum_{s= 0}^{t-1}  \op^{s \wedge \tau_i^r} \og_i^s + \eta_1 \sum_{s= 0}^{t-1} \op^{s \wedge \tau_i^r} m_i^s -   C (t \wedge \tau_i^r) \op^{t \wedge \tau_i^r} \eta_1^3 |a^0|^3 d \log(d)^C \, .
    \end{aligned}
\end{equation}
Let us introduce the following quantities:
\[
\begin{aligned}
D_i^{t,t'} =  \sum_{s = t}^{t'-1}  \og_i^s\, , \qquad \uD_i^{t,t'} = &~ \sum_{s = t}^{t'-1} \up^{s \wedge \tau_i^r} \og_i^s\, , \qquad \oD_i^{t,t'} = &~ \sum_{s = t}^{t'-1} \op^{s \wedge \tau_i^r} \og_i^s \, ,
\end{aligned}
\]
and 
\[
\begin{aligned}
M_i^{t,t'} =  \sum_{s = t}^{t'-1}  m_i^s\, , \qquad \uM_i^{t,t'} = &~ \sum_{s = t}^{t'-1} \up^{s \wedge \tau_i^r} m_i^s\, , \qquad \oM_i^{t,t'} = &~ \sum_{s = t}^{t'-1} \op^{s \wedge \tau_i^r} m_i^s \, .
\end{aligned}
\]
The term $D^{t,t'}$, which is the sum of population gradients, plays the role of a drift term, while the term $M^{t,t'} $ is a martingale and corresponds to the comparison between the stochastic and the population gradients.

We can choose $C'$ a constant large enough, depending only on $K,D$, such that for
\begin{equation}\label{eq:cond_eta1_1}
\eta_1^2 T \leq \frac{1}{C' |a^0|^2 \log(d)^{C'} d}\, , 
\end{equation}
we have
\[
1 - \frac{1}{\log(d)} \leq (1 - C|a^0|^2 \eta_1^2 d \log(d)^{C} )^T \leq (1 + C|a^0|^2 \eta_1^2 d \log(d)^{C} )^T \leq 1 + \frac{1}{\log(d)} \, . 
\]
In particular, this implies that for any $t < T\wedge \tau^+ \wedge \tau^-$ and $r$ constant sufficiently small,
\begin{equation}\label{eq:opt_upt}
\frac{1}{\op^t} \geq 1 - Cr^2 - \frac{C}{\log(d)}\, , \qquad \frac{1}{\up^t} \leq 1 + Cr^2 + \frac{C}{\log(d)} \, .
\end{equation}
Similarly, we can choose $C'$ a constant large enough, depending only on $K,D$, such that for
\begin{equation}\label{eq:cond_eta1_2}
\eta_1^3 T \leq \frac{1}{C' |a^0|^3 \log(d)^{C'} d^{3/2}}\, , 
\end{equation}
we have
\[
 \sup_{t < T \wedge \tau^+ \wedge \tau^-}  C (t \wedge \tau_i^r) \op^{t \wedge \tau_i^r} \eta_1^3 |a^0|^3 d \log(d)^C \leq \frac{1}{\sqrt{d} \log(d) } \, .
\]

Hence for $\eta_1$ and $T$ satisfying Eqs~\eqref{eq:cond_eta1_1} and \eqref{eq:cond_eta1_2}, we get the following bounds on the trajectory for $t < T \wedge \tau^+ \wedge \tau^- \wedge \tau^0$: for $i \geq P+1$ or $i \in [P]$, $ t\leq \tau_i^r$,
\begin{equation}\label{eq:trajectories_w_bounds}
\begin{aligned}
w_i^t \geq&~ \Big( 1 - Cr^2 - \frac{C}{\log(d)} \Big) \Big[ (1 - \log(d)^{-1}) w_i^0 + \eta_1 \oD^{0,t}_i + \eta_1 \oM^{0,t}_i \Big] \, , \\
w_i^t \leq &~ \Big( 1 + Cr^2 + \frac{C}{\log(d)} \Big) \Big[ (1 + \log(d)^{-1}) w_i^0 + \eta_1 \uD^{0,t}_i + \eta_1 \uM^{0,t}_i  \Big] \, , \\
\end{aligned}
\end{equation}
while for $i \in [P]$ and $t > \tau_i^r$,
\begin{equation}\label{eq:traj_w_P_Delta}
\begin{aligned}
w_i^t =&~ w_i^{\tau_i^r} + \eta_1 D^{\tau_i^r,t}_i + \eta_1 M^{\tau_i^r,t}_i \, .
\end{aligned}
\end{equation}

We prove the following bounds on the martingale part:

\begin{lemma}[Martingale part $M^t$]\label{lem:bound_M}
Assume $\Delta \leq 1$ and $r$ are chosen as in Lemma \ref{lem:proj_step_bounds}.  Fix $T\leq d^D$ and $C_* >0$. There exists a constant $C$ that only depends on $D$, $K$, and $C_*$, such that if we choose 
\begin{equation}\label{eq:cond_eta1_martingale}
\eta_1^2 T \leq \frac{1}{C|a^0|^2 \log(d)^C d }\, ,
\end{equation}
then with probability at least $1 - d^{-C_*}$, we have 
\begin{equation}\label{eq:bounds_martingale_lem}
\max_{t< t' \leq T \wedge \tau^+} \max_{i \in [d]}  \big\{ |\eta_1 M_i^{t,t'}| \vee |\eta_1 \oM_i^{t,t'}| \vee | \eta_1 \uM_i^{t,t'} | \big\} \leq \frac{1}{\sqrt{d} \log(d)}\, .
\end{equation} 
\end{lemma}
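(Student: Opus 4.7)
The plan is to apply Azuma-Hoeffding combined with Doob's maximal inequality to suitably stopped versions of the three martingale partial sums, then union-bound over the $d$ coordinates and over the three sums, and finally convert a partial-sum bound into the interval bound by the triangle inequality.

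First I replace each increment $m_i^s$ by its stopped variant $\hat m_i^s := m_i^s \, \ind\{s \leq \tau^+ \wedge \tau^0\}$, and similarly $\op^{s\wedge \tau_i^r} \hat m_i^s$ and $\up^{s\wedge \tau_i^r} \hat m_i^s$ for the weighted versions. Since $\tau^+,\tau^0,\tau_i^r$ are $(\cF_s)$-stopping times, the indicator is $\cF_{s-1}$-measurable, and the weights $\op^{s\wedge\tau_i^r},\up^{s\wedge\tau_i^r}$ are $\cF_{s-1}$-measurable by their definition in terms of $X_s$ and $\tau_i^r$; therefore all three sequences are honest $(\cF_s)$-martingale differences. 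Next I show that these increments are uniformly bounded by $c := C|a^0|\log(d)^{C_0}$. On $\{s \leq \tau^0\}$, the definition of $\tau^0$ forces $|g_i^s| \leq |a^0|C_0\log(d)^{C_0}$; on $\{s \leq \tau^+\}$, the weights lie in a compact set so Lemma~\ref{lem:gradient_formula} together with Assumption~\ref{ass:sigma_I} gives $|\og_i^s| \leq C|a^0|$ provided $\tilde\kappa$ is small (which is forced by the stated smallness of $\kappa$); and \eqref{eq:opt_upt}, valid under \eqref{eq:cond_eta1_1} and hence under the hypothesis \eqref{eq:cond_eta1_martingale}, bounds $\op^{s\wedge\tau_i^r},\up^{s\wedge\tau_i^r}$ by a universal constant.

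Given bounded martingale differences, Azuma-Hoeffding combined with Doob's maximal inequality yields, for each coordinate $i \in [d]$ and each of the three partial sums $\hat M_i^{0,t}, \hat{\oM}_i^{0,t}, \hat{\uM}_i^{0,t}$,
\[
\P\Big( \max_{t \leq T} \big|\hat M_i^{0,t}\big| \geq \lambda \Big) \leq 2 \exp\Big( - \frac{\lambda^2}{2 T c^2} \Big).
\]
Choosing $\lambda = \tfrac{1}{2 \eta_1 \sqrt{d}\log(d)}$, the hypothesis $\eta_1^2 T \leq 1/(C|a^0|^2 \log(d)^C d)$ with $C$ sufficiently large makes the exponent exceed $(C_*+2)\log d$, so each tail is at most $d^{-(C_*+2)}$. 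A union bound over the $d$ coordinates and the three martingale types absorbs a $\log d$-independent factor into the constant and yields failure probability at most $d^{-C_*}/2$; Lemma~\ref{lem:bound_on_tau_0} absorbs the event $\{\tau^0 \leq \tau^+ \wedge T\}$ into another $d^{-C_*}/2$. On the resulting good event, the stopped and unstopped sums coincide on $[0, T \wedge \tau^+]$, so $|M_i^{t,t'}| \leq |M_i^{0,t}| + |M_i^{0,t'}|$ (and likewise for $\oM,\uM$) delivers \eqref{eq:bounds_martingale_lem}, with the factor of $2$ absorbed into $C$.

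The one technical subtlety I expect to pay attention to is the martingale-preservation of the weighted sums: one needs the scalings $\op^{s\wedge\tau_i^r}, \up^{s\wedge\tau_i^r}$ to be $\cF_{s-1}$-measurable (they depend on $X_s = \#\{i : \tau_i^r < s\}$, which is $\cF_{s-1}$-measurable since $\{\tau_i^r \leq s-1\} \in \cF_{s-1}$) and also bounded, both of which ride on having already imposed \eqref{eq:cond_eta1_1}. Once this bookkeeping is done, the remainder is a mechanical application of Azuma-Hoeffding and a union bound, with the step-size hypothesis \eqref{eq:cond_eta1_martingale} being exactly the threshold that clears the $\log$-factors arising from the boundedness constant $c$ and from the union bound.
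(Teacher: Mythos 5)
Your overall skeleton (predictable weights, Doob/Azuma, union bound over coordinates and the three sums, triangle inequality to pass from prefix sums to $M_i^{t,t'}$) is the right shape, but there is a genuine gap at the crucial step: the claim that the stopped increments are uniformly bounded. You stop with the indicator $\ind\{s\le \tau^+\wedge\tau^0\}$, which is indeed $\cF_{s-1}$-measurable, so the stopped sequence is an honest martingale difference sequence; however, the event $\{s\le\tau^0\}=\{\tau^0\ge s\}$ only says that the trigger condition failed at times $0,\dots,s-1$. It places no constraint on the gradient at time $s$ itself: $g_i^s$ depends on the fresh sample $(\bx^s,y^s)$, which is an unbounded (polynomial-in-Gaussian times sub-Gaussian) quantity, so on $\{s\le\tau^0\}$ the increment $\ind\{s\le\tau^0\}(g_i^s-\og_i^s)$ is not almost surely bounded and Azuma--Hoeffding does not apply. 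If you instead try to use $\ind\{s<\tau^0\}$, which would bound $|g_i^s|$ by $|a^0|C_0\log(d)^{C_0}$, that indicator is only $\cF_s$-measurable (the event $\{\tau^0=s\}$ is decided by $(\bx^s,y^s)$), so the product no longer has zero conditional mean given $\cF_{s-1}$ and the martingale structure is lost. You also cannot condition on the high-probability event $\{\tau^0>T\wedge\tau^+\}$ from Lemma~\ref{lem:bound_on_tau_0} and then invoke Azuma, since conditioning changes the conditional law of the increments.

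The paper resolves exactly this tension with a truncation-and-comparison argument: it defines $U_i^t$ by truncating each raw increment at the deterministic level $\tC\log(d)^{\tC}$ and recentering by the conditional expectation \emph{of the truncated variable}, which yields a genuine martingale with a.s.\ bounded increments to which Doob/Azuma applies; it then bounds $|U_i^t-\uM_i^t|$ separately, using the sub-Weibull tail bounds of Lemma~\ref{lem:bounds_gradient} to show that (i) with probability $1-d^{-C_*}/2$ no gradient ever exceeds the truncation level up to $T\wedge\tau^+$, and (ii) the expected truncated tail contributes at most $o(1/\sqrt{d\log d})$ over $T\le d^D$ steps. Your proof needs this step (or an equivalent Freedman/Bernstein-type inequality for martingales with conditionally light-tailed but unbounded increments) inserted where you currently assert boundedness; the remaining bookkeeping in your write-up (predictability and boundedness of $\op^{s\wedge\tau_i^r},\up^{s\wedge\tau_i^r}$, the union bound, and the triangle-inequality reduction of $M_i^{t,t'}$ to prefix sums, which is in fact slightly slicker than the paper's union bound over starting times $t$) is fine.
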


\begin{proof}[Proof of Lemma \ref{lem:bound_M}] We will show the theorem for $\max_{0<t\leq T \wedge \tau^+} \uM_i^{0,t}$. The result for $\max_{t<t'\leq T \wedge \tau^+} \uM_i^{t,t'}$ will follow from an union bound on all $t \leq T$, which are also martingales (the proofs for $\oM_i^{t,t'}$ and $M_i^{t,t'}$ will follow by the same argument). 

Denote $\uM^t_i := \uM^{0,t}_i$. We will use a truncation argument. For some $\tC$, define for all $t \geq 1$ and $i \in [d]$,
\[
U_i^t =  \sum_{s = 0}^{t-1} \up^s \Big\{ g_i^t  \ind_{|g_i^t /a^0| < \tC \log(d)^{\tC}} - \E \Big[ g_i^t  \ind_{|g_i^t /a^0 | < \tC \log(d)^{\tC}} \Big]\Big\} \, ,
\]
so that
\[
| U_i^t - \uM_i^t | \leq (1 - Cr^2)^{-P} \sum_{s=0}^{t-1} \Big\{ |g_i^t | \ind_{|g_i^t /a^0| \geq \tC \log(d)^{\tC}} + \E \Big[ |g_i^t|  \ind_{|g_i^t /a^0| \geq \tC \log(d)^{\tC}} \Big]\Big\} \, .
\]
For $t \leq \tau^+$, we have $\| \bw \|_2 \leq \sqrt{P}\Delta + 1$ and we can use Lemma \ref{lem:bounds_gradient} to choose $\tC$ that only depends on $D,K,C_*$ such that
\[
\P \big( \exists t \leq T \wedge \tau^+, \exists i \in [d], | g_i^t/a^0| \geq \tC \log (d)^{\tC} \big) \leq d^{-C_*}/2\, ,
\]
and  for all $t \leq T\wedge \tau^+$ and $i \in [d]$
\[
\eta_1 (1 - Cr^2)^{-P} \E \Big[ |g_i^t|  \ind_{|g_i^t/a^0 | \geq \tC \log(d)^{\tC}} \Big] \leq d^{-D} \cdot \frac{1}{2\sqrt{d \log (d)}} \, .
\]
Hence with probability at least $1 - d^{-C_*}/2$,
\[
\max_{t \leq T \wedge \tau^+} \max_{i \in [d]}  | \eta_1 \uM_i^t|  \leq \frac{1}{2\sqrt{d} \log(d)} + \max_{t \leq T \wedge \tau^+} \max_{i \in [d]}  | \eta_1 U_i^t|   \, .
\] 
Let us now apply Doob's maximal inequality on $U_i^t$: the increments are bounded by $2 |a^0| \tC \log (d)^{\tC}$, hence we have
\[
\P \Big( \max_{1 \leq t \leq T} | \eta_1 U_i^t | \geq \eps \Big) \leq 2 \exp \Big\{  - \frac{\eps^2}{C (\eta_1 a^0 \tC \log(d)^{\tC})^2 T } \Big\} \, .
\]
Choosing $\eps = 1/(2 \sqrt{d} \log (d))$ and $\eta_1 $ as in Eq.~\eqref{eq:cond_eta1_martingale}, as well as a union bound, yields the result.
\end{proof}

\subsection{Proof of Theorem \ref{thm:alignment_one_monomial}}
\label{sec:subection_Proof_prop_one_monomial}

\noindent
\textbf{Step 0: Bounds on the dynamics.} 

We consider the dynamics \eqref{eq:dyn_eq_simplified} up to time $T \wedge \otau$ where $\otau := \tau^0 \wedge \tau^+ \wedge \tau^-$. We assume $T$ and $\eta_1$ satisfy conditions \eqref{eq:cond_eta1_1}, \eqref{eq:cond_eta1_2} and \eqref{eq:cond_eta1_martingale}. In particular, with probability at least $1 - d^{-C_*}$, the dynamics of $w_i^t$ satisfies the bounds in Eqs~\eqref{eq:trajectories_w_bounds} and \eqref{eq:traj_w_P_Delta}, with $M_i^{t,t'}, \oM_i^{t,t'}, \uM_i^{t,t'}$ satisfying the bounds \eqref{eq:bounds_martingale_lem}. In the rest of the proof, we show that on this high probability event, we can choose $\eta_1$ and $T$ such that $T < \otau$ and Theorem \ref{thm:alignment_one_monomial}.(a), (b) and (c) are satisfied.

\noindent
\textbf{Step 1: Controlling the coordinates $i \in [P]$ at the end of the dynamics.}

Let us first show that as soon as $t > \tau^\Delta_i$, then $w_i^t$ stays close to $\Delta$. Note that we can choose $C > 0$ constant large enough independent of $d$, such that if $w_i^t \leq \Delta - |a^0| C \eta_1 \log(d)^C$, then by \eqref{eq:og-on-first-P-bounds}
\begin{equation}\label{eq:positive_g}
\begin{aligned}
\E_{\bx} [ g_i^t ] = \E_{\bx} [ \gamma_i^t \tv_i^t] \geq&~ \E_{\bx} [ \tv_i^t] - \E_\bx [ |\tv_i^t |^2]^{1/2}  \P\big\{| v_i^t | \geq |a^0| C \eta_1 \log(d)^C \big\} \\
\geq&~ C d^{-(D-1)/2} - C d^{-D} > 0  \, .
\end{aligned}
\end{equation}
Hence, for any $\otau > t > \tau_i^\Delta$, consider $t' = \sup \{ t' \leq t : w_i^{t'} \geq \Delta - |a^0| C \eta_1 \log(d)^C \}$ (in particular, $t' \geq \tau^\Delta_i +1 $). From Eq.~\eqref{eq:traj_w_P_Delta} and by Lemma \ref{lem:bound_M}, we have 
\[
w_i^{t} = w_i^{t' + 1} + \eta_1 D_i^{t' +1,t } + \eta_1 M_i^{t' +1,t} \geq \Delta - |a^0| C \eta_1 \log(d)^C - \frac{1}{\sqrt{d\log d}} \geq \Delta - \frac{2}{\sqrt{d\log(d)}} \, ,
\]
where we used that $w_i^s < \Delta - |a^0| C \eta_1 \log(d)^C$ for $t'+1 \leq s < t$ and therefore by Eq.~\eqref{eq:positive_g}, we have $D_i^{t' +1,t } \geq 0$. We deduce that
\begin{equation}\label{eq:lb_w_aafter_Delta}
\inf_{\tau_i^\Delta < t < \otau \wedge T} w_i^t \geq \Delta - \frac{2}{\sqrt{d \log (d)}} \, .
\end{equation}

Similarly, we show that for any $t \geq \tau^r_i + 1$, we have $w_i^t \geq r/2$. Indeed, for any $\tau^r_i + 1 \leq t \leq \tau_i^\Delta \wedge \otau \wedge T$, we have
\[
w_i^t = w_i^{\tau_i^r +1 } + \eta_1 D_i^{\tau_i^r +1 , t} + \eta_1 M_i^{\tau_i^r+1, t} \geq r - \frac{1}{\sqrt{d \log(d)}}\, , 
\]
where we used that $w_i^{\tau_i^r +1 } \geq r$ by definition of $\tau_i^r$, and $\og_i^t \geq 0 $ for all $s \leq \tau_i^\Delta \wedge \otau \wedge T$. We deduce that
\begin{equation}\label{eq:lb_w_aafter_r}
    \inf_{\otau \wedge T >t > \tau_i^r} w_i^t \geq r - \frac{1}{\sqrt{d\log(d)}} \geq \frac{r}{2} \, .
\end{equation}

\noindent
\textbf{Step 2: Bounding the growth of $w_i^t$ for $i \in [P]$.}

Define $\alpha_t = \min \{ w_i^t : i \in S_t \cap [P]\} $ (i.e., the minimum of $w_i^t$ that have $\tau_i^r \geq t$). Note that $w_i^t \geq r/2$ for $i\not\in S_t$ by Eq.~\eqref{eq:lb_w_aafter_r} and therefore $w_i^t \geq \alpha_t /2$. By Eq.~\eqref{eq:opt_upt} and Lemma \ref{lem:gradient_formula}, we have for $s \leq \tau_i^r \wedge \otau \wedge T$,
\[
\up^s \og_i^s \geq C a^0 \frac{\chi_* (\bw^t)}{w_i^t} \mu_D (\sigma) - |a^0| \tilde{\kappa} \geq C a^0 \mu_D (\sigma) \alpha_t^{D-1} - |a^0| \tilde{\kappa} \, .
\]
Combining this lower bound with Eq.~\eqref{eq:trajectories_w_bounds} and the bound on the martingale in Lemma~\ref{lem:bound_M}, we get that for all $t \leq \tau_i^r \wedge \otau \wedge T$
\begin{equation}\label{eq:lower_bound_alpha}
\begin{aligned}
\alpha_t &\geq \frac{1 - Cr^2}{\sqrt{d}} - \eta_1 t|a^0|\tilde{\kappa} + C \eta_1 a^0\mu_D (\sigma) \sum_{s= 0}^{t-1} \alpha_t^{D - 1} \\
&\geq \frac{1}{2\sqrt{d}} + C \eta_1 a^0\mu_D (\sigma) \sum_{s= 0}^{t-1} \alpha_t^{D - 1}\,,
\end{aligned}
\end{equation}
where we have used that $r$ is sufficiently small, and $\eta_1 t |a^0| \tilde{\kappa} \leq \tilde{\kappa} / (C' \eta_1 |a^0| \log(d)^{C'} d) \leq 1/(4\sqrt{d})$ for all $t \leq T$ since we take $\kappa$ sufficiently small. We can use the bound on this sequence derived in Lemma \ref{lem:bound_sequences}: for $D> 2$,
\[
\alpha_t \geq r \wedge \left\{ \frac{1}{\big( (4d)^{D/2 - 1} - C \eta_1 a^0\mu_D (\sigma) t \big)^{\frac{1}{k-2}} } \right\} \, ,
\]
and we deduce that we must have
\begin{equation*}
\tau^r \wedge \otau \leq \frac{C d^{D/2 -1}}{\eta_1 a^0 \mu_D (\sigma)} \, .
\end{equation*}
For $D= 2$, we use that 
\[
\alpha_t \geq \frac{1}{2\sqrt{d}} \big(1 + C \eta_1 a^0 \mu_D (\sigma) \big)^t \, ,
\]
and deduce that
\begin{equation*}
\tau^r \wedge \otau \leq \frac{C \log (d) }{\eta_1 a^0 \mu_D (\sigma)} \, .
\end{equation*}
Similarly, consider $\alpha_t = \min \{ w_i^t : i \in [P], \tau^{\Delta}_i \geq t \} $ for all $\tau^r +1 \leq t \leq \tau^\Delta \wedge \otau$. By Eq.~\eqref{eq:lb_w_aafter_Delta}, we have $w_i^t \geq \Delta/2$ for $t > \tau_i^\Delta$. Hence by Eq.~\eqref{eq:traj_w_P_Delta}, we get 
\[
\alpha_t \geq \frac{r}{2} + C \eta_1 a^0\mu_D (\sigma) \sum_{s= 0}^{t-1} \alpha_t^{D - 1} \, ,
\]
and we deduce that if $\tau^{\Delta} < \otau$ then
\begin{equation*}
\tau^{\Delta} - \tau^r \leq \frac{C}{\eta_1 a^0 \mu_D (\sigma)}\,  .
\end{equation*}
Combining the above bounds, we deduce that
\begin{equation}\label{eq:cond_T_1}
    \begin{aligned}
    &\text{for $D = 2$:} \qquad \tau^\Delta \wedge \otau \leq \frac{C \log (d) }{\eta_1 a^0 \mu_D  (\sigma)} \, , \\
    &\text{for $D > 2$:} \qquad \tau^\Delta \wedge \otau \leq \frac{C d^{D/2 -1} }{\eta_1 a^0 \mu_D  (\sigma)} \, , \\
    \end{aligned}
\end{equation}

On the other hand, consider $\beta_t = \max \{ w_i^t : i \in [P] \}$ and let us lower bound the time $\ot = \inf \{ t : \beta_{t} \geq 2/\sqrt{d} \}$. For $t \leq \otau \wedge \ot $, we have $S_t = [d]$ and $\frac{1}{\up^t} \leq 1 + \frac{C}{\log(d)}$. Hence by Eq.~\eqref{eq:trajectories_w_bounds},
\[
\beta_t \leq (1 + C/\sqrt{\log(d)} ) \frac{2}{\sqrt{d}} + C \eta_1 a^0 \mu_D (\sigma) \sum_{s = 0}^{t-1} \beta_s^{D-1}\, ,
\]
and therefore by Lemma \ref{lem:bound_sequences}, we get for $D>2$,
\begin{equation}\label{eq:bound_beta_D}
\beta_t \leq \frac{1}{\big( (d/2(1+C/\sqrt{\log d}))^{D/2-1} - C \eta_1 a^0 \mu_D (\sigma) t \Big)^{ \frac{1}{D-2}}}\, .
\end{equation}
We deduce that
\begin{equation}\label{eq:lower_bound_beta_1}
 \ot \geq \otau \wedge \frac{ d^{D/2 -1}}{C\eta_1 a^0 \mu_D (\sigma)} \, .
\end{equation}
For $D =2$, we have by Lemma \ref{lem:bound_sequences},
\begin{equation}\label{eq:bound_beta_D2}
\beta_t \leq \frac{3}{\sqrt{d}} \big(1 + C \eta_1 a^0 \mu_D(\sigma) \big)^t\, ,
\end{equation}
and therefore 
\begin{equation}\label{eq:lower_bound_beta_2}
 \ot \geq \otau \wedge \frac{ 1 }{C\eta_1 a^0 \mu_D (\sigma)} \, .
\end{equation}

\noindent
\textbf{Step 3: Bounding the coordinates $P+1 \leq i \leq d$.}

From Eq.~\eqref{eq:trajectories_w_bounds} and Lemma \ref{lem:gradient_formula}, we have for all $i \geq P+1$ and $t < \otau \wedge T$,
\[
\begin{aligned}
w_i^t \geq&~ (1 - Cr^2) \frac{1}{\sqrt{d}} - C |\eta_1  \oD_i^{0,t \wedge ( \tau^r +1 )}| - Ct \eta_1 |a_i^0| \tilde\kappa \, ,\\
w_i^t \leq &~ (1 + Cr^2) \frac{1}{\sqrt{d}}  + C |\eta_1 \uD_i^{0,t \wedge (\tau^r +1) }| + Ct \eta_1 |a_i^0| \tilde\kappa \, .
\end{aligned}
\]
We have
\begin{equation}\label{eq:bound_drift_P_d}
|\eta_1  \oD_i^{0,t \wedge ( \tau^r +1 )}| \vee |\eta_1  \uD_i^{0,t \wedge ( \tau^r +1 )}| \leq   C \eta_1 a^0 \mu_D (\sigma) \sum_{s=0}^{t \wedge ( \tau^r +1 ) - 1} w_i^s \chi_* (\bw^s ) + C t \eta_1 |a_i^0| \tilde\kappa\, .
\end{equation}
Consider $j \in [P]$ such that $\tau^r_j = \tau^r$. Then by Eq.~\eqref{eq:trajectories_w_bounds}, we have, for any $t < (\tau^r+1) \wedge \otau \wedge T$, that
\[
2r \geq w^{t + 1 }_j - (1 -Cr^2) \frac{1}{\sqrt{d}} \geq C \eta_1 a^0 \mu_D(\sigma) \sum_{s = 0}^{t \wedge \tau_j^r} \frac{\chi_* (\bw^s) }{w_j^s}\, .
\]
Hence we deduce that 
\begin{equation}\label{eq:bound_drift_P_d_2}
\eta_1 a^0 \mu_D(\sigma) \sum_{s = 0}^{t \wedge \tau^r_j} \frac{\chi_* (\bw^s) }{w_j^s} \leq Cr \, .
\end{equation}
Using that $w_j^t\leq Cr$ for $t \leq \tau_j^r$ and $w_i^s \leq 3 /(2 \sqrt{d})$ for $s < \otau$ in Eq.~\eqref{eq:bound_drift_P_d}, we get that, for any $t < \otau \wedge T$,
\[
\begin{aligned}
|\eta_1  \oD_i^{0,t \wedge ( \tau^r +1 )}| \vee |\eta_1  \uD_i^{0,t \wedge ( \tau^r +1 )}| \leq&~  Ct\eta_1|a^0|\tilde{\kappa} + \frac{C}{\sqrt{d}} r \eta_1 a^0 \mu_D (\sigma) \sum_{s=0}^{t \wedge ( \tau^r +1 ) - 1} \frac{ \chi_* (\bw^s ) }{w_j^s} \\
\leq&~  \frac{Cr^2}{\sqrt{d}} \, . 
\end{aligned}
\]
We deduce that for all $P+1 \leq i \leq d$ and $t < \otau$ and $t+1 < \tau^0$
\begin{equation}\label{eq:bound_w_P_d}
    \frac{1 - Cr^2}{\sqrt{d} } \leq w_i^{t+1} \leq \frac{1 + Cr^2}{\sqrt{d}} \, ,
\end{equation}
and therefore taking $r$ sufficiently small, $\tau^+ \wedge \tau^- \geq \tau^0$.

\noindent
\textbf{Step 4: Proof of Theorem \ref{thm:alignment_one_monomial}.(b) and (c).}

Choose $\oT_1 := T$ and $\eta_1 $ that satisfy Eqs.~\eqref{eq:cond_eta1_1}, \eqref{eq:cond_eta1_2}, \eqref{eq:cond_eta1_martingale} and \eqref{eq:cond_T_1}. We have with probability at least $1 - C d^{-C_*}$ by Lemma \ref{lem:bound_on_tau_0} and Lemma \ref{lem:bound_M} that $\tau^0 > \tau^+ \wedge \tau^- \wedge T$. And Eqs.~\eqref{eq:bound_w_P_d} and \eqref{eq:lower_bound_alpha} imply that $\tau^+ \wedge \tau^- \wedge \tau^0 > \oT_1$. Furthermore, by Eq.~\eqref{eq:cond_T_1}, we have $\tau^{\Delta} < \oT_1$ which implies Theorem \ref{thm:alignment_one_monomial}.(b) by Eq.~\eqref{eq:lb_w_aafter_Delta}. Theorem \ref{thm:alignment_one_monomial}.(c) follows from Eq.~\eqref{eq:bound_w_P_d}.

\noindent
\textbf{Step 5: Upper bound for all neurons with early stopping.}

Theorem \ref{thm:alignment_one_monomial}.(a) follows from Eqs.~\eqref{eq:lower_bound_beta_1} and \eqref{eq:lower_bound_beta_2} for neurons with initialization satisfying 
\[
a_j^0 \mu_D (\sigma) (w_{j,1}^0)^{k_1} \cdots (w_{j,P}^0)^{k_P} > 0 \, .
\]
For neurons that do not satisfy this condition, the analysis in Section \ref{sec:bounding_contributions_dynamics} still holds and we get bounds on the dynamics similar to the ones in Eq.~\eqref{eq:trajectories_w_bounds}, with the difference that $a^0 \mu_D (\sigma) < 0$, and therefore the drift has a negative contribution to the dynamics, and $\tau^+, \tau^-$ are now defined on all coordinates instead of only $i = P+1 , \ldots , d$.

We can upper bound the drift contribution using that $\beta_t = \max_{i \in [P]} w_i^t$ satisfy for $t < \otau$
\begin{equation}\label{eq:beta_t_dynamics}
\beta_t  \leq (1 + C/\sqrt{\log(d)} ) \frac{1}{\sqrt{d}} + C \eta_1 |a^0 \mu_D (\sigma)| \sum_{s = 0}^{t-1} \beta_s^{D-1}\, ,
\end{equation}
and therefore, taking the same bounds \eqref{eq:bound_beta_D} and \eqref{eq:bound_beta_D2}, we get for $t \leq \otau \wedge \oT_1/ (C \log (d)^C)$ for $C$ a constant sufficiently large that
\begin{equation}\label{eq:bound_beta_t}
\beta^t \leq \frac{1}{\sqrt{d}} (1 + C'/\sqrt{\log(d)} ) \, .
\end{equation}
Furthermore, denoting $\alpha_t = \min_{i \in [P]} w_i^t$, we have
\[
\alpha_t \geq  (1 - C/\sqrt{\log(d)} ) \frac{1}{\sqrt{d}} - C \eta_1 |a^0 \mu_D (\sigma)| \sum_{s = 0}^{t-1} \beta_s^{D-1} \, .
\]
Using the analysis of Lemma \ref{lem:bound_sequences}, we get that the drift has the same upper bound as $\beta_t$ in Eq.~\eqref{eq:bound_beta_t},
\[
(1 + C/\sqrt{\log(d)} ) \frac{1}{\sqrt{d}} + C \eta_1 |a^0 \mu_D (\sigma)| \sum_{s = 0}^{t-1} \beta_s^{D-1}  \leq \frac{1}{\sqrt{d}} (1 + C'/\sqrt{\log(d)} )\, 
\]
for $t \leq \otau \wedge \oT_1/ (C \log (d)^C)$ and therefore
\[
\alpha_t \geq \frac{1}{\sqrt{d}} (1 - C/\sqrt{\log(d)} )\, .
\]
The bound on coordinates $i > P$ follows similarly to step 3. In particular, we deduce that for $t < \otau \wedge \oT_1/ (C \log (d)^C)$ and $t+1 < \tau^0$, we have for all $i \in [d]$
\[
(1 - C/\sqrt{\log(d)} ) \frac{1}{\sqrt{d}}  \leq w_i^{t+1} \leq (1 + C/\sqrt{\log(d)} ) \frac{1}{\sqrt{d}} \, , 
\]
and therefore $\tau^+ \wedge \tau^- > \tau^0 \wedge \oT_1/ (C \log (d)^C) $, which concludes the proof of Theorem \ref{thm:alignment_one_monomial}.(a).

\subsection{Technical lemmas}

\begin{lemma}[Tail bounds on functions of Gaussians]\label{lem:bounds_gradient}
Assume that $\| \bw^t \|_2 \leq 1 + \sqrt{D}$. Then there exist constants $c,C$ that only depend on $D$ and $K$ such that
\begin{equation}\label{eq:tail_bounds_fct_gaussian}
\begin{aligned}
&\P(|y^t | \geq z + c) \leq \exp\big\{-Cz^{2/D}\big\}, \\
&\P_{\bw^t} \left( |y^t \sigma ( \< \bw^t , \bx^t \>) | \geq z + c \right) \leq \exp \big\{ - C z^{2/D} \big\} \, , \\
&\P_{\bw^t} \big( |g_i^t/a^0| \geq z + c\big) \leq \exp \big\{ - C z^{2/(D+1)} \big\}  \, , \\ 
&\P_{\bw^t} \big( \| \bg^t /a^0\|_2 \geq \sqrt{d} (z + c )   \big)\leq  \exp \big\{ - C z^{2/(D+1)} \big\}  \, , \\
& \P_{\bw^t} \big( \| \bg^t /a^0\|_2^2 \cdot | g_i^t /a^0 | \geq d (z + c )  \big) \leq   \exp \big\{ - C z^{2/(3D+3)} \big\}  \, ,
\end{aligned}
\end{equation}
where $\P_{\bw^t} (\cdot) := \P (\cdot | \bw^t)$ denotes the conditional probability. Furthermore, for any $q \geq 2$, there exists a constant $C_q$ that only depends on $q,D,K$ such that
\begin{equation}
    \E_{\bw^t} \Big[ \big\Vert \bg^t/a^0 \big\Vert_q^q \Big]^{1/q} \leq C_q \sqrt{d} \, ,
\end{equation}
where $\E_{\bw^t} [\cdot] := \E [ \cdot | \bw^t ]$ denotes the conditional expectation.
\end{lemma}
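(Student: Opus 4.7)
The plan is to reduce each tail bound to Gaussian hypercontractivity for polynomials of Gaussians, combined with boundedness of $\sigma, \sigma'$, sub-Gaussianity of $\eps^t$, and $\chi^2$-concentration of $\|\bx^t\|_2$. Recall the standard fact: if $p : \R^n \to \R$ is a polynomial of degree at most $k$ and $\bg \sim \normal(0, \id_n)$, then $\P(|p(\bg)| \geq z \|p\|_{L^2}) \leq 2 \exp(-c z^{2/k})$ for a universal $c > 0$ (see e.g.\ Theorem~6.7 in Janson's \emph{Gaussian Hilbert Spaces}). We also use $\P(\|\bx^t\|_2 \geq \sqrt{d}(1+\delta)) \leq 2\exp(-c d \delta^2)$.

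For the first two bounds, write $y^t = h_*(\bz^t) + \eps^t$, where $h_*$ is a fixed degree-$D$ polynomial of $P = O_d(1)$ standard Gaussians whose $L^2$ norm is bounded by a constant depending only on $D,K$. Hypercontractivity yields $\P(|h_*(\bz^t)| \geq z) \leq \exp(-C z^{2/D})$, and combining with the sub-Gaussian tail of $\eps^t$ (which is at least as fast) gives the first bound; the second follows at once from $|\sigma| \leq K$.

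The main step is the third bound. Unfold
\[
g_i^t/a^0 = \gamma_i^t\, (y^t - \fNN(\bx^t;\bTheta^t))\, \sigma'(\<\bw^t,\bx^t\>)\, L_i^t\,, \qquad |\gamma_i^t| \leq 1,
\]
where $L_i^t := x_i^t - w_i^t \ind_{i \in S_t} \<\cS_t(\bw^t), \bx^t\>$. The crucial observation is that, conditionally on $\bw^t$, $L_i^t$ and $\<\cS_t(\bw^t), \bx^t\>$ are Gaussian linear forms with unit variance (since $\|\cS_t(\bw^t)\|_2 = 1$), so in particular $L_i^t$ is standard Gaussian up to variance $\leq 1$; also $|\fNN(\bx^t;\bTheta^t)| \leq MK\kappa$. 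Using $|\sigma'| \leq K$ to strip the non-polynomial factor,
\[
|g_i^t/a^0| \leq K\big(|h_*(\bz^t)\, L_i^t| + |\eps^t|\,|L_i^t| + MK\kappa\, |L_i^t|\big).
\]
The first summand is the absolute value of a degree-$(D+1)$ polynomial in Gaussians, giving tail $\exp(-C z^{2/(D+1)})$ by hypercontractivity; the second is sub-exponential (product of sub-Gaussian $\eps^t$ with a Gaussian); the third is sub-Gaussian. Combining gives the third bound. The fourth bound follows from $\|\bg^t/a^0\|_2 \leq 2\|\bv^t/a^0\|_2 \leq 2K(|y^t|+MK\kappa)\|\bx^t\|_2$, $\chi^2$-concentration of $\|\bx^t\|_2/\sqrt{d}$ around $1$, and the first bound; here the spherical correction $|\<\cS_t(\bw^t), \bv^t/a^0\>|$, which is degree-$(D+1)$ by the same reasoning, dominates the tail. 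The fifth bound is obtained by the same approach applied to $\|\bg^t/a^0\|_2^2 \cdot |g_i^t/a^0|/d \lesssim (|y^t|+MK\kappa)^2 |g_i^t/a^0|$ after $\chi^2$-concentrating $\|\bx^t\|_2^2/d$: the worst-case effective polynomial has degree $3(D+1) = 3D+3$, yielding the claimed tail.

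Finally, the $L^q$ moment bound follows by tail integration: since $q \geq 2$ we have $\|\bg^t/a^0\|_q \leq \|\bg^t/a^0\|_2$, and integrating $\P_{\bw^t}(\|\bg^t/a^0\|_2/\sqrt{d} \geq z+c) \leq \exp(-Cz^{2/(D+1)})$ gives $\E_{\bw^t}[\|\bg^t/a^0\|_q^q]^{1/q} \leq C_q \sqrt{d}$. The only real subtlety is the handling of the spherical correction: one must observe that $\|\cS_t(\bw^t)\|_2 = 1$ forces $\<\cS_t(\bw^t),\bx^t\>$ to be exactly unit-variance Gaussian regardless of how the stopping-time-adapted set $S_t$ evolves; otherwise a naive bound using $\|\cS_t(\bw^t)\|_1 \leq \sqrt{d}$ would spoil the $d$-independence of the hypercontractive constants.
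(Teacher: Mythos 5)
Your proposal is correct and follows essentially the same route as the paper's proof: reduce each quantity to a bounded factor ($\sigma$, $\sigma'$, the $O(M K \kappa)$ self-interaction, $|\gamma_i^t|\leq 1$) times a low-degree polynomial of Gaussians, then apply Gaussian hypercontractivity with the degree counts $D$, $D+1$, $3D+3$ together with sub-Gaussianity of the label noise, and conclude the $L^q$ bound by moments/tail integration. The only step to tighten is the fourth bound, where a fixed-threshold truncation of $\|\bx^t\|_2$ leaves an additive $\exp(-cd)$ term that does not beat $\exp(-Cz^{2/(D+1)})$ for very large $z$; this is repaired by the device you already use elsewhere, e.g.\ treating $(y^t)^2\|\bx^t\|_2^2/d$ as a degree-$(2D+2)$ polynomial with $O(1)$ second moment, or splitting at a $z$-dependent threshold.
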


\begin{proof}[Proof of Lemma \ref{lem:bounds_gradient}]
Recall that for polynomials $f:\R^d \to \R$ of degree $D$ on Gaussian variables, we have the hypercontractivity inequality $\| f \|_{L^q} \leq (q - 1)^{D/2} \| f \|_{L^2}$ for any $q \geq 2$. Hence, there exist constants $C, C'$ that only depend on $D$ such that
\begin{equation}\label{eq:tail_poly_gaussian}
\P \big( | f(\bx) - \E_{\bx} [f(\bx)] | \geq z\sqrt{ \text{Var}_\bx (f)} \big) \leq C' \exp \big\{ -C z^{2/D} \big\}  \, .
\end{equation}
Recall that $y = \He_{\bk} (\bx) + \eps$, where $\He_{\bk}$ is a degree-$D$ multivariate Hermite polynomial and $\eps$ is $K$-subgaussian. Further, recall that we assumed $\| \sigma \|_\infty, \| \sigma ' \|_\infty \leq K$. Applying Eq.~\eqref{eq:tail_poly_gaussian}, there exist constants $C,c$ that only depend on $D$ and $K$ such that for any $\bw \in \R^d$,
\[ 
\P ( |y | \geq z + c) \leq \exp \big\{ - Cz^{2/D} \big\} \, , \qquad \P ( |y \sigma ( \< \bw , \bx \>) | \geq z + c ) \leq \exp \big\{ - Cz^{2/D} \big\} \, ,
\]
where we used that $\E[ |y \sigma ( \< \bw , \bx \>) | ] \leq K \E [ |y|^2]^{1/2} \leq c $. Following a similar reasoning, we get for any $\| \bw \|_2^2 \leq 1 + \sqrt{D}$ and $i \in [d]$, 
\[
\begin{aligned}
&\P ( | y x_i \sigma ' (\< \bw , \bx \>)| \geq z + c) \leq \exp \big\{ - C z^{2/(D+1)} \big\} \, , \\ &\P ( | y w_i \< \bw , \bx \> \sigma ' (\< \bw , \bx \>)| \geq z + c ) \leq \exp \big\{ - C z^{2/(D+1)} \big\} \, .
\end{aligned}
\]
Recall that $|g_i^t /a^0| \leq |\gamma_i^t | \big(| y x_i \sigma ' (\< \bw , \bx \>)| +  | y w_i \< \bw , \bx \> \sigma ' (\< \bw , \bx \>)| \big)$. Conditioning on $\bw^t$ and assuming that $\| \bw^t \|_2 \leq 1 + \sqrt{D}$, we obtain
\[
\begin{aligned}
&\P_{\bw^t} \big( |g_i^t /a^0| \geq z + c\big) \leq \exp \big\{ - C z^{2/(D+1)} \big\} \, , \\ 
&\P_{\bw^t} \big( \| \bg^t  /a^0\|_2 \geq \sqrt{d} (z + c )   \big)\leq \P_{\bw^t} \big( \| \bg^t  /a^0\|_2^2 \geq d (z^2 + c )   \big) \exp \big\{ - C z^{2/(D+1)} \big\} \, , \\
& \P_{\bw^t} \big( \| \bg^t /a^0\|_2^2 | g_i^t /a^0| \geq d (z + c )  \big) \leq d \exp \big\{ - C z^{2/(3D+3)} \big\} \, .
\end{aligned}
\]
Furthermore, again assuming that $\| \bw^t \|_2 \leq 1 + \sqrt{D}$, we have for any $q \geq 2$,
\[
\E_{\bw^t} \big[ \| \bg^t/a^0 \|_{q}^q  \big]^{1/q} \leq K \E_{\bw^t} \big[ \| y^t  \bx^t  \|_{q}^q  \big]^{1/q}  + K \E_{\bw^t} \big[ \| y^t  \bw^t \< \bw^t , \bx^t \>  \|_{q}^q \big]^{1/q} \leq C_q \sqrt{d} \, ,
\]
which concludes the proof.
\end{proof}

The following lemma provides simple upper and lower bounds on sequences satisfying some geometric bound on their evolution. The upper bound can be seen as a discrete version of Bihari–LaSalle inequality. This upper bound was proven in \cite[Appendix C]{arous2021online}, and we modify their proof to obtain a lower bound.

\begin{lemma}[Bounds on sequences]\label{lem:bound_sequences}
Let $k\geq 2$ be an integer and $a_0,a_1,b_0,b_1>0$ be four positive constants with $a_0 \leq b_0$ and $a_1 \leq b_1$. Consider a sequence $(u_t)_{t \in \naturals}$ that satisfy for all $t \in \naturals$,
\[
\begin{aligned}
a_0 + a_1 \sum_{s = 0}^{t-1} u_s^{k-1} \leq u_t \leq b_0 + b_1 \sum_{s = 0}^{t-1} u_s^{k-1} \, .
\end{aligned}
\]
If $k = 2$, then for any $t \in \naturals$,
\[
a_0 (1 +a_1)^t \leq u_t \leq b_0 (1 + b_1)^t \, .
\]
If $k > 2$, then for any $\Delta > 0$ and any $t \in \naturals$,
\[
\Delta \wedge \left\{ \frac{1}{\left(a_0^{-(k-2)} - \frac{k-2}{(1 +a_1 \Delta^{k-2} )^{k-1}} a_1 t \right)^{\frac{1}{k-2}}} \right\}  \leq u_t \leq \frac{1}{\left(b_0^{-(k-2)} -  (k-2) b_1  t \right)^{\frac{1}{k-2}}} \, .
\]
\end{lemma}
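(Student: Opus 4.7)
The plan is to introduce the partial sums $v_t := a_0 + a_1 \sum_{s<t} u_s^{k-1}$ and $V_t := b_0 + b_1 \sum_{s<t} u_s^{k-1}$, which by hypothesis sandwich the iterate as $v_t \leq u_t \leq V_t$, so each target bound follows by iterating a one-step inequality on the relevant sequence. Plugging the sandwich bounds into the increments yields $V_{t+1} - V_t = b_1 u_t^{k-1} \leq b_1 V_t^{k-1}$ and $v_{t+1} - v_t = a_1 u_t^{k-1} \geq a_1 v_t^{k-1}$. For $k = 2$, this is immediate: $V_{t+1} \leq (1+b_1) V_t$ and $v_{t+1} \geq (1+a_1) v_t$ iterate to $V_t \leq b_0(1+b_1)^t$ and $v_t \geq a_0(1+a_1)^t$, whence the claim via $v_t \leq u_t \leq V_t$.

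For $k > 2$, my plan is to apply the convex, positive, decreasing function $f(x) = x^{-(k-2)}$. For the upper bound, first-order convexity gives $f(V_{t+1}) \geq f(V_t) + f'(V_t)(V_{t+1} - V_t)$; combining with $V_{t+1} - V_t \leq b_1 V_t^{k-1}$ and $f'(V_t) = -(k-2) V_t^{-(k-1)}$ produces the clean telescopic inequality $V_{t+1}^{-(k-2)} \geq V_t^{-(k-2)} - (k-2) b_1$, which, iterated from $V_0 = b_0$, yields $V_t \leq 1/(b_0^{-(k-2)} - (k-2) b_1 t)^{1/(k-2)}$ and hence the stated upper bound on $u_t$.

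The lower bound for $k > 2$ is the main obstacle, since first-order convexity now runs the wrong way. The plan is instead to work with the multiplicative recursion $v_{t+1} \geq v_t(1 + a_1 v_t^{k-2})$ and to track $h_t := v_t^{-(k-2)}$, which satisfies $h_{t+1} \leq h_t (1 + a_1 v_t^{k-2})^{-(k-2)}$, hence
\[
h_t - h_{t+1} \geq h_t \bigl(1 - (1 + a_1 v_t^{k-2})^{-(k-2)}\bigr) \geq \frac{(k-2)\, a_1}{(1 + a_1 v_t^{k-2})^{k-1}},
\]
where the last step uses the elementary mean-value estimate $1 - (1+x)^{-(k-2)} \geq (k-2) x / (1+x)^{k-1}$ together with $h_t v_t^{k-2} = 1$. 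As long as $v_t \leq \Delta$, the denominator is at most $(1 + a_1 \Delta^{k-2})^{k-1}$, so telescoping gives $h_t \leq a_0^{-(k-2)} - (k-2) a_1 t / (1 + a_1 \Delta^{k-2})^{k-1}$, i.e., $v_t \geq 1/\bigl(a_0^{-(k-2)} - \tfrac{k-2}{(1+a_1\Delta^{k-2})^{k-1}} a_1 t\bigr)^{1/(k-2)}$. Once $v_t$ crosses $\Delta$, the monotonicity $v_{s+1} \geq v_s$ combined with $u_s \geq v_s$ yields the trivial bound $u_s \geq \Delta$ for all later $s$, and combining the two regimes produces the claimed $u_t \geq \Delta \wedge [\ldots]$. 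The truncation at $\Delta$ is essential: it is precisely what produces the discretization factor $(1 + a_1 \Delta^{k-2})^{k-1}$ measuring the gap between the discrete iteration and its continuous counterpart $u' = a_1 u^{k-1}$.
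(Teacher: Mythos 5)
Your proof is correct and follows essentially the same route as the paper: both arguments track the quantity $x^{-(k-2)}$ along comparison sequences and show its per-step change is at least (resp.\ at most) a constant, with the same truncation at $\Delta$ producing the discretization factor $(1+a_1\Delta^{k-2})^{k-1}$ in the lower bound. The only cosmetic differences are that you use the partial-sum sequences $v_t, V_t$ built from the actual $u_s$ (so the sandwich is the hypothesis itself, rather than the paper's induction on auxiliary sequences defined by equality in the recursion), and you replace the paper's integral comparisons by the equivalent convexity bound and the scalar estimate $1-(1+x)^{-(k-2)} \ge (k-2)x/(1+x)^{k-1}$.
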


\begin{proof}[Proof of Lemma \ref{lem:bound_sequences}] Note that by induction, we have $w_t \leq u_t \leq v_t$ for any $t \geq 0$ where
\[
v_t = b_0 + b_1 \sum_{s = 0}^{t-1} v_s^{k-1} \, , \qquad w_t = a_0 + a_1 \sum_{s = 0}^{t-1} w_s^{k-1} \, .
\]
For $k = 2$, it is straightforward to get $v_t = v_{t-1} (1 + b_1) = b_0 (1 + b_1)^t$ and $w_t = a_0 (1 + a_1 )^t$. 

For $k > 2$, we consider the upper bound on $v_t$. First, notice that
\[
b_1 = \frac{v_t - v_{t-1}}{v_{t-1}^{k-1}} \geq \int_{v_{t-1}}^{v_t} \frac{1}{x^{k-1} } \de x = \frac{1}{k-2} \left[ \frac{1}{v_{t-1}^{k-2}} - \frac{1}{v_t^{k-2}} \right] \, .
\]
Hence, rearranging the terms, we get for any $t$,
\[
\frac{1}{v_t^{k-2}} \geq \frac{1}{v_{t-1}^{k-2}} - (k-2) b_1 \geq  \frac{1}{v_{0}^{k-2}}  - (k-2)b_1 t \, .
\]
We deduce that
\[
v_t \leq \frac{1}{\left(b_0^{-(k-2)} -  (k-2) b_1  t \right)^{\frac{1}{k-2}}}\, .
\]
Let us now lower bound $w_t$. We have
\[
\begin{aligned}
a_1 = \frac{w_t - w_{t-1}}{w_{t-1}^{k-1}} =&~  \int_{w_{t-1}}^{w_t} \frac{1}{x^{k-1} } \de x + \int_{w_{t-1}}^{w_t} \frac{x^{k-1} - w_{t-1}^{k-1}}{w_{t-1}^{k-1} x^{k-1} } \de x   \\
\leq&~   \frac{w_{t}^{k-1} }{ w_{t-1}^{k-1}} \int_{w_{t-1}}^{w_t} \frac{1}{x^{k-1} } \de x  \\
=&~  \frac{(1 + a_1 w_{t-1}^{k-2} )^{k-1} }{k-2} \left[ \frac{1}{w_{t-1}^{k-2}} - \frac{1}{w_t^{k-2}} \right] \, .
\end{aligned}
\]
Hence, as long as $w_t \leq \Delta$, we get
\[
\frac{1}{w_t^{k-2}} \leq \frac{1}{w_{t-1}^{k-2}} - \frac{k-2}{(1 + a_1 \Delta^{k-2} )^{k-1}}  a_1 \leq  \frac{1}{v_{0}^{k-2}}  - \frac{k-2}{(1 + a_1 \Delta^{k-2} )^{k-1}} a_1 t \, ,
\]
and therefore
\[
w_t \geq \frac{1}{\left(a_0^{-(k-2)} - \frac{k-2}{(1 +a_1 \Delta^{k-2} )^{k-1}} a_1 t \right)^{\frac{1}{k-2}}} \, .
\]
\end{proof}

\clearpage

\section{Proof of Theorem \ref{thm:sequential_alignment}: sequential alignment to the support}
\label{sec:proof_alignment_several_monomials}

In this appendix, we consider the sequential alignment to the support in Section \ref{sec:multimonomials}. The proofs will follow from a similar argument as in the single monomial case. However, the dynamics will be now split in $L$ phases corresponding to the alignment to each of the $L$ monomials. 

Recall that throughout the proofs, we will denote for simplicity $C,c>0$ generic constants that only depend on $D$ and $K$ (note that all the other constants $P_l,D_j,\oD_l \leq D$). The values of these constants are allowed to change from line to line or within the same line.

\subsection{Proof of Theorem \ref{thm:sequential_alignment}: alignment to the full support}\label{app:proof_theorem_2}

We will use notations and results from Appendix \ref{app:proof_alignment_one_monomial} and outline the main difference with the proof of Theorem \ref{thm:alignment_one_monomial}. We can again reduce the problem to tracking one neuron, and we assume without loss of generality that $w_1^0 = \ldots = w_d^0 = 1/\sqrt{d}$ and $a^0 \mu_{\oD_l} (\sigma ) >0$ for all $l \in [L]$. 

Let us introduce the following new stopping times on the dynamics: for $l\in [L]$,
\[
\tau^{r,l} = \sup_{i \in [P_l]} \tau_i^r \, , \qquad \qquad  \tau^{\Delta,l} = \sup_{i \in [P_l]} \tau_i^{\Delta} \, .
\]
The population gradients for $t \leq \tau_i^\Delta \wedge \tau_0$ are now given by:

\begin{lemma}\label{lem:gradient_formula_multi} Denote $\chi_{*,l} (\bw^t) = \prod_{j \in [P_l]} (w_j^t )^{k_j} $ for $j \in [L]$. For $i \in [P_l] \setminus [P_{l-1}]$ and $t < \tau_i^\Delta$, the population gradient is given by: if $t \leq \tau_i^r$ (i.e., $i \in S_t$),
\begin{equation}\label{eq:grad_formula_multiP_St}
\begin{aligned}
\og_i^t =&~ a^0 \sum_{l' \geq l} \frac{\chi_{*,l'} (\bw^t)}{w_i^t} \left( k_i - (w_i^t)^2 \sum_{s \in S_t \cap [P_{l'}]} k_s \right) \E_{G} \Big[ \sigma^{(\oD_{l'})} (\| \bw^t \|_2 G ) \Big] \\
&~ - a^0 w_i^t \sum_{l' < l} \chi_{*,l'} (\bw^t) \left( \sum_{s \in S_t \cap [P_{l'}]} k_s \right)\E_{G} \Big[ \sigma^{(\oD_{l'})} (\| \bw^t \|_2 G ) \Big] + O(|a^0|\tilde{\kappa})\, ,
\end{aligned}
\end{equation}
while if $t > \tau_i^r$ (i.e., $i \not\in S_t$)
\begin{equation}\label{eq:grad_formula_multiP_notSt}
\begin{aligned}
\og_i^t =&~  a^0 \sum_{l' \geq l} \frac{\chi_{*,l'} (\bw^t)}{w_i^t} \left( k_i  \E_{G} \Big[ \sigma^{(\oD_{l'})} (\| \bw^t \|_2 G ) \Big] + (w_i^t)^2 \E_{G} \Big[ \sigma^{(\oD_{l'}+2)} (\| \bw^t \|_2 G ) \Big] \right) \\
&~ + a^0 \sum_{l' < l} w_i^t \chi_{*,l'} (\bw^t) \E_{G} \Big[ \sigma^{(\oD_{l'}+2)} (\| \bw^t \|_2 G ) \Big] + O(|a^0|\tilde{\kappa}) \, .
\end{aligned}
\end{equation}
For $i > P$ and $t < \tau^+ \wedge \tau_0$,
\begin{equation}\label{eq:grad_formula_notmultiP}
\og_i^t = -a^0  \sum_{j \in [L]} w_i^t \chi_* (\bw^t) \left( \sum_{s \in S_t \cap [P_{l}]} k_s \right)\E_{G} \Big[ \sigma^{(\oD_{l})} (\| \bw^t \|_2 G ) \Big] + O(|a^0|\tilde{\kappa}) \, .
\end{equation}
\end{lemma}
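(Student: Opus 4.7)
The proof is a direct extension of Lemma \ref{lem:gradient_formula} combined with linearity of expectation across the $L$ monomials comprising $h_*$. Writing $h_* = \sum_{l'=1}^L H_{l'}$ with $H_{l'}(\bz) = \prod_{s \in [P_{l'}]} \He_{k_s}(z_s)$, and using the bound \eqref{eq:v-vs-u} to pass from $\bv^t,\tbv^t$ to the correlation surrogates $\bu^t,\tbu^t$ at cost $O(|a^0|\tilde\kappa)$, one obtains
\[ \og_i^t = \sum_{l'=1}^L \og_{i,l'}^t + O(|a^0|\tilde\kappa), \]
where $\og_{i,l'}^t$ is the population (spherical) gradient that would arise if $h_*$ were replaced by the single monomial $H_{l'}$. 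Each $\og_{i,l'}^t$ is then computed by the same Hermite identities recalled in the proof of Lemma \ref{lem:gradient_formula}, in particular $\E_G[\He_k(G) g(G)] = \E_G[g^{(k)}(G)]$ and the factorization $\E[\prod_{j \in [P_{l'}]} \He_{v_j}(x_j) \sigma'(\<\bw^t,\bx\>)] = \bigl(\prod_{j} (w_j^t)^{v_j}\bigr)\E_G[\sigma^{(1+\sum_j v_j)}(\|\bw^t\|_2 G)]$.

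Fix $i \in [P_l] \setminus [P_{l-1}]$. The key combinatorial observation is that, by nestedness $[P_1]\subsetneq\cdots\subsetneq[P_L]$, one has $i \in [P_{l'}]$ if and only if $l' \geq l$, which partitions the $L$ contributions into ``on-support'' and ``off-support'' terms. For $l' \geq l$, coordinate $i$ is in the support of $H_{l'}$, so applying \eqref{eq:grad_formula_P_St} (resp.\ \eqref{eq:grad_formula_P_notSt} when $i \notin S_t$) to the single monomial $H_{l'}$ of total degree $\oD_{l'}$ yields directly the first line of \eqref{eq:grad_formula_multiP_St} (resp.\ \eqref{eq:grad_formula_multiP_notSt}). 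For $l' < l$, coordinate $i$ lies outside the support of $H_{l'}$; applying \eqref{eq:grad_formula_notP} to $H_{l'}$ yields the second line of \eqref{eq:grad_formula_multiP_St}. As in the single-monomial case, the cancellation between the $\sigma^{(\oD_{l'}+2)}$ contribution coming from $\E[u_{i,l'}^t]$ and the spherical projection correction $-w_i^t\<\cS_t(\bw^t), \E[\bu_{l'}^t]\>$, via $\|\cS_t(\bw^t)\|_2^2 = 1$, is what makes the off-support contribution proportional to $\sigma^{(\oD_{l'})}$ rather than $\sigma^{(\oD_{l'}+2)}$. For $i \notin S_t$, no spherical correction is applied and the $\sigma^{(\oD_{l'}+2)}$ terms survive, giving the second line of \eqref{eq:grad_formula_multiP_notSt}.

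The case $i > P$ is identical but simpler: $i \notin [P_{l'}]$ for every $l'$, so every contribution is off-support and summing the single-monomial formulas \eqref{eq:grad_formula_notP} across $l' \in [L]$ gives \eqref{eq:grad_formula_notmultiP}. There is no serious obstacle here; the lemma is a bookkeeping extension of Lemma \ref{lem:gradient_formula} where the only care needed is to track, for each coordinate $i$ and each monomial index $l'$, whether $i$ lies in $[P_{l'}]$ and whether $i$ currently lies in the active projection set $S_t$. The $O(|a^0|\tilde\kappa)$ additive error accumulates only a harmless factor of $L = O(1)$ across the sum, which is absorbed into the stated bound.
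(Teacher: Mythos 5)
Your proposal is correct and follows exactly the paper's route: the paper's proof of Lemma~\ref{lem:gradient_formula_multi} is a one-line reduction to Lemma~\ref{lem:gradient_formula} applied to the sum of monomials, which is precisely the linearity-plus-casework (on $i \in [P_{l'}]$ versus not, and $i \in S_t$ versus not, with the $\sigma^{(\oD_{l'}+2)}$ cancellation from $\|\cS_t(\bw^t)\|_2 = 1$) that you spell out. Nothing is missing; your write-up is simply a more detailed version of the same argument.
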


\begin{proof}[Proof of Lemma \ref{lem:gradient_formula_multi}]
    The proof follows from Lemma \ref{lem:gradient_formula} applied to a sum of monomials.
\end{proof}

Again, by Assumption \ref{ass:sigma_I}, we can choose $\Delta$ small enough and depending only on $K$ and $\oD$ such that Eqs~\eqref{eq:choice_Delta_sigma} and \eqref{eq:choice_Delta_sigma_2} are satisfied (with $D$ replaced by $\oD$). We can further chose $\Delta$ small enough (only depending on $K$ and $\oD$ such that there exists constants $C,c$ such that for all $t < \tau_i^\Delta \wedge \tau^+ \wedge \tau^-$, if $i \in [P_l] \setminus [P_{l-1}]$, then if $t > \tau^{r,l-1}$, 
\[
0 < c a^0 \frac{\chi_{*,l} (\bw^t)}{w_i^t}  \leq \og_i^t \leq C a^0 \frac{\chi_{*,l} (\bw^t)}{w_i^t}  \, ,
\]
and if $\tau^{r,l' - 1} < t \leq \tau^{r,l'}$ for $l' \leq l -1$, 
\[
c a^0 \frac{\chi_{*,l} (\bw^t)}{w_i^t}  - C a^0 w_i^t \chi_{*,l'} (\bw^t) \leq \og_i^t \leq C a^0 \frac{\chi_{*,l} (\bw^t)}{w_i^t} (\sigma) - c a^0 w_i^t \chi_{*,l'} (\bw^t)  \, ,
\]
while for $i \geq P+1$ and $\tau^{r,l - 1} < t \leq \tau^{r,l}$,
\[
| \og_i^t | \leq C a^0 w_i^t \chi_{*,l} (\bw^t) \, ,
\]
and $\og_i^t = 0$ for $t > \tau^{r,L}$.

\begin{proof}[Proof of Theorem \ref{thm:sequential_alignment}]
\textbf{Step 0: Bounds on the dynamics.}

We consider the dynamics up to time $T \wedge \otau$ where $\otau := \tau^0 \wedge \tau^+ \wedge \tau^-$. We again assume that $T$ and $\eta_1$ satisfy conditions \eqref{eq:cond_eta1_1}, \eqref{eq:cond_eta1_2} and \eqref{eq:cond_eta1_martingale}, so that the dynamics of $w_i^t$ satisfies the bounds in Eqs~\eqref{eq:trajectories_w_bounds} and \eqref{eq:traj_w_P_Delta}, with $M_i^{t,t'}, \oM_i^{t,t'}, \uM_i^{t,t'}$ satisfying the bounds \eqref{eq:bounds_martingale_lem}, with probability at least $1 - d^{-C_*}$. The following steps will follow closely the proof of Theorem~\ref{thm:alignment_one_monomial}.

\noindent
\textbf{Step 1: Controlling the coordinates $i\in[P_l]\setminus [P_{l-1}]$ during the first $l-1$ phases.}

Note that for $i\in [P_l] \setminus [P_{l-1}]$, during the $l' \leq l-1$ phase, we have for $\tau^{r,l'-1}+1 < t \leq (\tau^{r,l'} +1)\wedge \tau^{\Delta}_i$,
\begin{equation}\label{eq:coordinates_previous_phases}
\begin{aligned}
w_i^t \geq&~ (1 -Cr^2) w^{\tau^{r,l'-1}+1}_i + c \eta_1 \oD^{\tau^{r,l'-1}+1, t} \\
\geq&~ (1 -Cr^2) w^{\tau^{r,l'-1}+1}_i + a^0 \eta_1 \sum_{s = \tau^{r,l'-1}+1}^{t-1} \Big[c \frac{\chi_{*,l} (\bw^s)}{w_i^t} - C w_i^t \chi_{*,l'} (\bw^s)  \Big] \, , \\
w_i^t \leq &~ (1 + Cr^2) w^{\tau^{r,l'-1}+1}_i + C \eta_1 \uD^{\tau^{r,l'-1}+1, t} \\
\geq &~ (1 +Cr^2) w^{\tau^{r,l'-1}+1}_i + a^0 \eta_1 \sum_{s = \tau^{r,l'-1}+1}^{t-1} \Big[C \frac{\chi_{*,l} (\bw^s)}{w_i^t} - c w_i^t \chi_{*,l'} (\bw^s)  \Big] \, .
 \end{aligned}
\end{equation}
Assume that $\min_{i \in [P_l] \setminus [P_{l-1}]} \tau_i^\Delta > \tau^{r,l'-1}+1$ and
\[
\max_{i \in [P_l] \setminus [P_{l-1}]} | w^{\tau^{r,l'-1}+1}_i - 1/\sqrt{d} | \leq \frac{Cr^2}{\sqrt{d} } \, .
\]
Denote 
\[
\tau^{+,l} = \min \{ t \geq \tau^{r,l'-1}+1 : \max_{i \in [P_j] \setminus [P_{l-1}]} |w_i^t - 1/\sqrt{d} | > 1/(2 \sqrt{d} ) \}  \, .
\]
As long as $t < \tau^{+,l}$, then
\[
\frac{1 - Cr^2}{\sqrt{d}} - C\frac{a^0 \eta_1}{\sqrt{d}} \sum_{s = \tau^{r,l'-1}+1}^{t-1} \chi_{*,l'} (\bw^s) \leq w_i^t \leq \frac{1 + Cr^2}{\sqrt{d}} +C \frac{a^0 \eta_1}{\sqrt{d}} \sum_{s = \tau^{r,l'-1}+1}^{t-1} \chi_{*,l'} (\bw^s) \, ,
\]
where we used the assumption that $D_{l'} \geq 2$.
Using the same argument as in Step 3 of Section \ref{sec:subection_Proof_prop_one_monomial}, we deduce that as long as $t \leq (\tau^{r,l'} +1) \wedge \tau^{+,l}$, then 
\[
\frac{1 - Cr^2}{\sqrt{d}}  \leq w_i^t \leq \frac{1 + Cr^2}{\sqrt{d}}\, .
\]
In particular, we deduce that we must have $\tau^{+,l} > \tau^{r,l'} +1$ and therefore $\tau^{\Delta}_i > \tau^{r,l'} +1$ for all $i \in [P_l] \setminus [P_{l-1}]$.

By induction, we deduce that for all $i \in [P_l] \setminus [P_{l-1}]$, $\tau^{\Delta}_i > \tau^{r,l-1} +1$ and
\[
\frac{1 - Cr^2}{\sqrt{d}}  \leq w_i^{\tau^{r,l-1} +1} \leq \frac{1 + Cr^2}{\sqrt{d}}\, .
\]

\noindent
\textbf{Step 2: Bounding the growth of $w_i^t$ for $i \in [P_l]\setminus [P_{l-1}]$.}

The same argument as in Step 1 of Section \ref{sec:subection_Proof_prop_one_monomial} (recalling that by the previous argument, $\tau^{r}_i \wedge \tau^{\Delta}_i > \tau^{r,l-1} +1$ for all $i \in [P_l]\setminus [P_{l-1}]$) yields
\[
\inf_{\otau \wedge T >t > \tau_i^r} w_i^t \geq r - \frac{1}{\sqrt{d\log(d)}} \geq \frac{r}{2}  \, , \qquad \inf_{\tau_i^\Delta < t < \otau \wedge T} w_i^t \geq \Delta - \frac{2}{\sqrt{d \log (d)}} \, .
\]

Denote $\alpha_t = \min \{ w_i^t : i \in S_t \cap [P_l] \setminus [P_{l-1}] \}$ (noting that $w_i^t \geq r/2$ for $i \in [P_l] \setminus [P_{l-1}]$ but $i\not\in S_t$). Furthermore, for $i \in [P_{l-1}]$ and $t > \tau^{r,l-1}$, we have $w_i^t \geq r/2$. Hence, for $t \leq \tau^{r,j} +1$,
\[
\alpha_t \geq \alpha^{\tau^{r,l-1}+1} + C \eta_1 a^0 r^{\oD_{l-1}} \sum_{s = \tau^{r,l-1}+1}^{t-1} \alpha_s^{D_l-1}\, .
\]
Furthermore, $\alpha^{\tau^{r,l-1}+1} \geq (1 - Cr^2)/\sqrt{d}$ by the previous step.
We deduce by Lemma \ref{lem:bound_sequences}: for $D_l> 2$,
\[
\alpha_t \geq r \wedge \left\{ \frac{1}{\big( C d^{D_l/2 - 1} - C \eta_1 a^0  r^{\oD_{l-1}}  t \big)^{\frac{1}{k-2}} } \right\} \, ,
\]
which implies
\[
\tau^{r,l} \leq \frac{C d^{D_l/2-1}}{\eta_1 a^0  r^{\oD_{l-1}}} \, .
\]
Similarly, for $D_l = 2$,
\[
\tau^{r,l} \leq \frac{C \log(d)}{\eta_1 a^0  r^{\oD_{l-1}}} \, .
\]
Similarly, we obtain similar bounds on $\tau^{\Delta,l}$ (see Step 2 of Section \ref{sec:subection_Proof_prop_one_monomial}).

\noindent
\textbf{Step 3: Concluding the proof.}

Theorem~\ref{thm:sequential_alignment}.(a) follows by Step 2 and taking $\eta_1$ and $\oT_1 :=T$ that satisfy \eqref{eq:cond_eta1_1}, \eqref{eq:cond_eta1_2} and \eqref{eq:cond_eta1_martingale}, and the growth conditions in Step 2. Theorem~\ref{thm:sequential_alignment}.(b) follows by the same argument as in Step 3 of Section \ref{sec:subection_Proof_prop_one_monomial}.
\end{proof}

\subsection{Extending the analysis: adaptive step size and non-nested monomials}\label{app:extending_analysis}

\subsubsection{Adaptive step-size}
\label{sec:adaptive_step_size}

Let us consider 
\begin{equation}\label{eq:nested_monomials_increasingleap}
\begin{aligned}
h_* (\bz) =&~ \sum_{l = 1}^L \prod_{s \in [P_l]} \He_{k_s} (z_s) \, ,
\end{aligned}
\end{equation}
with increasing\footnote{The case $D_1 = 1$ in the first phase of the dynamics can be studied easily by modifying the proof of Theorem \ref{thm:sequential_alignment} and noting that the 
drift is now just a sum of constant terms.} leaps $1 \leq D_1 < D_2 < \ldots < D_L =: D$, so that neurons align with the support sequentially at increasing time scales. As mentioned below Theorem \ref{thm:sequential_alignment}, the time complexity to escape each of these leaps is only tight for the biggest leap if we take a constant step size $\eta \propto d^{-D/2}$. Indeed, for the first phases of the dynamics, SGD requires a number of steps $d^{(D_l+D)/2 - 1}$ much smaller than $d^{D-1}$ to align to the $l$-th monomial. In that case, we can take bigger step sizes and still have negligible contribution from the martingale part of the dynamics. In practice, such as in Figure \ref{fig:StoS_punchy}, we can see a saddle-to-saddle dynamics\footnote{Again, we expect this saddle-to-saddle dynamic to occur in the case of increasing leaps, otherwise we might have mixing of the different phases for different neurons and no plateaus, except at the biggest leap.} to occur, with a number $O(d^{D_l - 1})$ of steps to escape each saddle even for constant step size. 

To prove these tight scalings for each plateau with constant step size, we would need to study the joint training of the two layers, which is currently out of reach of our proof techniques. Instead, we show in the next theorem that we can use a learning rate schedule $\eta^t$, i.e.,
\[
\tbw_j^{t+1} = \bw_j^t - \eta^t \cdot \grad_{\bw_j^t} \ell \big( y^t , \hf_{\NN} (\bx^t ; \bTheta^t ) \big)\, ,
\]
to get a scaling $\widetilde{\Theta} (d^{D_l -1})$ to align to each new monomial.

\begin{theorem}[First layer training adaptive step size]\label{thm:sequential_alignment_adaptive} Let $h_* : \R^P \to \R$ be defined as in Eq.~\eqref{eq:nested_monomials_increasingleap} and assume $\sigma$ satisfy Assumption \ref{ass:sigma_I}. Then for $0 < r < \Delta$ sufficiently small (depending on $D,K$) and $\rho \leq \Delta$ the following holds. For any constant $C_*>0$, there exist $C_i$ for $i= 0, \ldots , 5$, that only depend on $D,K$ and $C_*$ such that, by splitting our learning rate schedule in $L$ phases with step sizes $\eta^t = \eta_l$ for $t \in \in \{\oT_{l-1},\oT_{l-1}+1, \ldots ,\oT_l- 1\}$, with
\[
\begin{aligned}
 \oT_l = C_0 d^{D_l-1} \log(d)^{C_0}\, , \qquad \eta_l = \frac{1}{C_1 \kappa d^{D_l/2} \log(d)^{C_1} }\, , \qquad \kappa \leq \frac{1}{C_2 d^{C_2}}\,, \\
\end{aligned}
\]
the following events hold with probability at least $1 - MC_3 d^{-C_*}/r$. For any neuron $j \in [M]$, 
\begin{itemize}
    \item[(a)] Early stopping for $l\in [L]$: $|w_{j,i}^t - w_{j,i}^0| \leq C_4/\sqrt{d \log (d)}$ for all $i = P_{l-1} +1 , \ldots , d$ and $t \leq \oT_l / (C_5 \log (d)^{C_5} )$.
\end{itemize}

For any neuron $j\in[M]$ such that $a^0 \mu_{\oD_l } (\sigma) (w_{j,1}^0)^{k_1} \cdots (w_{j,P_l}^0)^{k_{P_l}} > 0$ for all $l \in [L]$,
\begin{itemize}
    \item[(b)] On the support: $\big\vert  w_{j,i}^{T_l}  - \sign (w_{j,i}^0) \cdot \Delta \big\vert \leq C_4 /\sqrt{d \log (d)}$ for $i = 1, \ldots , P_l$ and $l \in [L]$.

    \item[(c)] Outside the support: $| w_{j,i}^{T_l} - w_{j,i}^0| \leq C_5 r^2 /\sqrt{d}$ for $i = P_l+1, \ldots , d$ and $l \in [L]$. Furthermore, $\sum_{i >P_l} (w_{j,i}^{\oT_l} )^2 = 1$.
\end{itemize}
\end{theorem}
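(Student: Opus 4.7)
The plan is to adapt the proof of Theorem \ref{thm:sequential_alignment} by inducting on the phases $l = 1, \ldots, L$, exploiting the fact that once a coordinate $w_{j,i}$ for $i \in [P_{l-1}]$ reaches $\pm \Delta$ during phase $l' < l$, it is pinned there by the $\ell_\infty$ projection, so subsequent phases with larger step sizes cannot destabilize it, provided the population gradient at $w_{j,i} = \pm \Delta$ keeps pushing outward (which follows from the sign condition in the theorem and the computation in Lemma \ref{lem:gradient_formula_multi}). The key insight is that the $l$-th phase with step size $\eta_l \asymp 1/(\kappa d^{D_l/2} \log(d)^{C_1})$ and duration $\oT_l - \oT_{l-1} \asymp d^{D_l - 1}\log(d)^{C_0}$ is exactly the scaling from Theorem \ref{thm:alignment_one_monomial} applied to the $l$-th monomial alone, up to the ``boundary contributions'' coming from already-aligned coordinates and yet-to-be-aligned higher-index coordinates.

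The concrete steps in order would be the following. First, I would define phase-specific stopping times $\tau^{+,l}, \tau^{-,l}, \tau^{0,l}, \tau^{r,l}, \tau^{\Delta,l}$ restricted to the time interval $[\oT_{l-1}, \oT_l]$ and coordinates $[P_l] \setminus [P_{l-1}]$, and re-run the martingale control of Lemma \ref{lem:bound_M} and the sphere-projection control of Lemma \ref{lem:proj_step_bounds} with step size $\eta_l$ on the horizon $\oT_l - \oT_{l-1}$; conditions \eqref{eq:cond_eta1_1}, \eqref{eq:cond_eta1_2}, \eqref{eq:cond_eta1_martingale} are then satisfied phase-by-phase with $d^{D_l}$ in place of $d^D$, so the per-phase martingale contribution is bounded by $1/(\sqrt{d}\log d)$. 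Second, assuming inductively that after phase $l-1$ we have $|w_{j,i}^{\oT_{l-1}} - \sign(w_{j,i}^0) \Delta| \leq C/\sqrt{d\log d}$ for $i \in [P_{l-1}]$ and $|w_{j,i}^{\oT_{l-1}} - w_{j,i}^0| \leq C r^2/\sqrt{d}$ for $i > P_{l-1}$, I would argue, using Lemma \ref{lem:gradient_formula_multi} together with the sign hypothesis $a^0 \mu_{\oD_l}(\sigma) \prod_{i \in [P_l]}(w_{j,i}^0)^{k_i} > 0$, that the dominant term in $\og_i^t$ for $i \in [P_l] \setminus [P_{l-1}]$ comes from the $l$-th monomial $\chi_{*,l}$ and has the correct sign, while terms involving $\chi_{*,l'}$ for $l' < l$ either vanish (because the corresponding weights are pinned at $\pm\Delta$ which contribute through the $\sum_{s \in S_t \cap [P_{l'}]} k_s$ factor that is $0$ since $S_t \cap [P_{l'}] = \emptyset$) or contribute as lower-order drift that pushes in the same direction.

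Third, I would import the drift analysis from Step 2 of the proof of Theorem \ref{thm:sequential_alignment}: the minimum coordinate $\alpha_t := \min\{w_{j,i}^t : i \in [P_l] \setminus [P_{l-1}], i \in S_t\}$ satisfies a lower bound of the form $\alpha_t \geq \alpha^{\oT_{l-1}} + c \eta_l |a^0| r^{\oD_{l-1}} \sum_{s < t}\alpha_s^{D_l - 1}$ (the $r^{\oD_{l-1}}$ factor coming from the $\chi_{*,l}$ term evaluated at pinned $\pm\Delta$ coordinates, which is $\Theta(1)$, not $r^{\oD_{l-1}}$---actually $\Theta(1)$ since the pinned coordinates are $\pm \Delta$), which combined with Lemma \ref{lem:bound_sequences} gives a time $O(d^{D_l/2 - 1}/(\eta_l |a^0|)) = O(d^{D_l - 1})$ up to logarithmic factors to reach the $r$-threshold, matching $\oT_l - \oT_{l-1}$. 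Simultaneously, the analog of Step 3 in Section \ref{sec:subection_Proof_prop_one_monomial} bounds the coordinates $i > P_l$ by $|w_{j,i}^t - w_{j,i}^0| \leq Cr^2/\sqrt{d}$ throughout phase $l$.

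The main obstacle, and the place where the extension is genuinely non-trivial, is showing that the pinned coordinates $i \in [P_{l-1}]$ remain at $\pm \Delta$ during phase $l$, despite the larger step size $\eta_l \gg \eta_{l-1}$. One martingale fluctuation of size $\eta_l \|\bg^t\|_\infty \asymp \eta_l |a^0| \log(d)^C$ is much larger than in previous phases; I would handle this by showing that whenever $w_{j,i}^t$ drops below $\Delta - |a^0|\eta_l C \log(d)^C$, the population gradient $\og_i^t$ is lower-bounded by a strictly positive quantity (again via Lemma \ref{lem:gradient_formula_multi}, using that the relevant monomial factor is $\Theta(1)$ because all other coordinates in the support are at $\pm\Delta$), so by the argument leading to Eq.~\eqref{eq:lb_w_aafter_Delta} the coordinate is quickly restored; the updated bound becomes $\inf_{\oT_{l-1} < t \leq \oT_l} w_{j,i}^t \geq \Delta - 2|a^0|\eta_l\log(d)^C$, which stays $\Delta - o(1/\sqrt{d\log d})$ provided $\kappa$ is taken small enough ($\kappa \leq 1/(C_2 d^{C_2})$ with $C_2$ large). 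The early-stopping claim (a) of the theorem follows from the same argument as the upper bound in Step 5 of Section \ref{sec:subection_Proof_prop_one_monomial}, applied separately in each phase to the coordinates $[P_{l-1}+1, d]$, since $\beta_t^{(l)} := \max_{i > P_{l-1}} w_{j,i}^t$ satisfies the analog of Eq.~\eqref{eq:beta_t_dynamics} with $D$ replaced by $D_l$, yielding $\beta_t^{(l)} \leq (1 + C/\sqrt{\log d})/\sqrt{d}$ on the truncated horizon $t \leq \oT_l/(C_5 \log(d)^{C_5})$. Finally, a union bound over the $L$ phases and over the $M$ neurons (conditioned on the sign event for (b)(c), unconditionally for (a)) gives the global probability bound $1 - MC_3 d^{-C_*}/r$, absorbing the $L$ union-bound factor into constants.
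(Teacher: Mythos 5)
Your proposal follows essentially the same route as the paper, which proves this result only by a brief sketch: rerun the drift-plus-martingale analysis of Theorems \ref{thm:alignment_one_monomial} and \ref{thm:sequential_alignment} phase by phase, with the per-phase choice of $\eta_l,\oT_l$ making the martingale negligible on each horizon and the separation of time scales (plus the $\ell_\infty$ cap and the positive restoring drift at $\pm\Delta$) keeping previously aligned and not-yet-aligned coordinates under control. Your elaboration of the induction over phases, the vanishing of the $l'<l$ drift terms once $S_t\cap[P_{l-1}]=\emptyset$, and the pinning argument is consistent with the paper's intended argument (minor quibbles: the stability of pinned coordinates follows from $\kappa\eta_l = O(d^{-D_l/2}\log(d)^{-C_1})$ rather than from taking $\kappa$ small, and you do not account for the $1/r$ loss in the stated probability, though your bound is only stronger).
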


There are two key differences between Theorem \ref{thm:sequential_alignment_adaptive} and Theorem \ref{thm:sequential_alignment}. First we prove a tighter scaling $\widetilde{\Theta} (d^{D_l -1})$ of number of steps for the first phases of the training. Second we show that the alignment is sequential for all the neurons at the same time: at the end $\oT_l$ of each phase, we exactly picked up the support $[P_l]$ and nothing else. In particular, using a similar proof as in Corollary \ref{cor:learning_one_monomial}.(b), we can show that the neural network at time $\oT_l$ cannot fit the remaining $L-l$ monomials at all using the second layer weights. This agrees with the picture obtained in the numerical simulation in Figure \ref{fig:StoS_punchy}.

The $\oT_l$ and $\eta_l$ are chosen such that the martingale term remain negligible during the whole dynamics. Furthermore, because of the separation of time scales between the different phases of the dynamics, we can show that for $t \leq \oT_l$ and step size $\eta_l$, the contribution of the drift terms coming from the next monomials remains small. The proof follows almost identically to the proofs of Theorems \ref{thm:alignment_one_monomial} and \ref{thm:sequential_alignment}.

\subsubsection{Non-nested monomials}
\label{sec:non_nested}

While we wrote Theorems \ref{thm:sequential_alignment} and \ref{thm:sequential_alignment_adaptive} in the case of $h_*$ a nested sum of monomials, we note that this is foremost a convenient assumption that help simplify the equations in the proofs. However, the compositionality of the monomials in the decomposition of $h_*$ is not a required structure for the leap complexity to hold. Note that this compositionality might be favorable if we consider large $P = \omega_d (1)$ (such as in \cite{abbe2021staircase}) or for the dependency in $\eps$ and the Hermite coefficients of $h_*$ in the prefactor of $\widetilde{\Theta} ( d^{(\Leap (h_*) -1) \vee 1})$.

Below we describe how we can modify the proof of Theorem \ref{thm:sequential_alignment} for non-compositional $h_*$ and leave the task of proving Conjecture \ref{conj:leap} for general leap functions to future works. 

Consider $\bk_l = (k^{(l)}_1, \ldots , k^{(l)}_{P_l} ) \in \naturals^{P_l}$ for $l \in [L]$ such that
\begin{equation}\label{eq:non_nested}
h_* ( \bz) = \sum_{l = 1}^L \He_{\bk_l} (\bz) \, , \qquad \He_{\bk} (\bz) = \prod_{s \in [|\bk|]} \He_{k_s} (z_s) \, ,
\end{equation}
and $k^{(l)}_s >0$ for $s \in [P_l]\setminus [P_{l-1}]$ (each new coordinates appear in the next monomial) and $\oD_l = \| \bk_l \|_1$ with $\oD_1 < \oD_2 < \ldots < \oD_L$, and denote $D_l = \oD_l - \oD_{l-1}$ (with $\oD_0 = 0$). Denote $D = \max_{l \in [L]} D_l$ which corresponds to the leap complexity of $h_*$. 

First note that the same formulas as in Lemma \ref{lem:gradient_formula_multi} hold with 
\[
\chi_{*,l} (\bw^t) = \ind \{k_i^{(l)} >0 \} \prod_{j \in [P_l]} (w_j^t)^{k_j^{(l)}}\, , 
\]
however, we cannot simplify the gradient to be of order $\chi_{*,l} (\bw^t)/w_i^t$ during the $l$-th phase. Below, we outline how to modify the proof of Theorem \ref{thm:alignment_one_monomial} in Section \ref{app:proof_theorem_2} to the case \eqref{eq:non_nested}. The bounds on the martingale terms and on the dynamics from Section \ref{sec:bounding_contributions_dynamics} still hold in that case, with the difference being in the formulas of the population gradients.

By taking $\Delta$ small enough, there exists constants $C,c$ such that we can upper and lower bound $\og_i^t$ as follows. For $i \in [P_l] \setminus [P_{l-1}] $, during the $l' \leq l- 1$ phase, we have for $\tau^{r,l' - 1} + 1 <t \leq ( \tau^{r,l'}+1) \wedge \tau_i^\Delta$,
\[
\begin{aligned}
\og_i^t \leq&~ C a^0 \sum_{q \geq l} \frac{\chi_{*,q} (\bw^t)}{w_i^t} - ca^0 \sum_{l' \leq q < l} w_i^t \chi_{*,q} (\bw^t)\, , \\
\og_i^t \geq&~ c a^0 \sum_{q \geq l} \frac{\chi_{*,q} (\bw^t)}{w_i^t} - Ca^0 \sum_{l' \leq q < l} w_i^t \chi_{*,q} (\bw^t)\, .
\end{aligned}
\]
If $t > \tau^{r,l-1}$, then 
\[
c a^0 \sum_{q \geq l} \frac{\chi_{*,q} (\bw^t)}{w_i^t} \leq \og_i^t \leq C a^0 \sum_{q \geq l} \frac{\chi_{*,q} (\bw^t)}{w_i^t} \, .
\]
We can plug these population gradients in steps 1 and 2 in Theorem \ref{thm:sequential_alignment}, and control the contribution of each of these terms using $\alpha_{l,t} = \min \{ w_i^t : i \in S_t \cap [P_l] \setminus [P_{l-1}] \}$ and $\beta_{l,t} = \max \{ w_i^t : i \in S_t \cap [P_l] \setminus [P_{l-1}] \}$, with similar arguments as in step 2 of Section \ref{sec:subection_Proof_prop_one_monomial}.

\clearpage

\section{Fitting the second layer weights: proof of Corollaries \ref{cor:learning_one_monomial} and \ref{cor:sequential_learning}}

\subsection{Proof of Corollary \ref{cor:learning_one_monomial}: fitting one monomial}\label{app:second_layer_monomial}

We first focus on the case $h_* (\bz) = z_1 \cdots z_P$ and prove parts (a) and (b) separately in Sections \ref{sec:cor_1_part_a_parity} and \ref{sec:cor_1_part_b_parity}. The case of $h_* (\bz) = \He_D (z_1)$ follows from a similar argument and we outline the differences in Section \ref{sec:single_hermite_case}.

\subsubsection{Second-layer fitting}
\label{sec:cor_1_part_a_parity}

Recall that in this case $P = D$ and we can use both interchangeably. We consider the case of no biases in this part, i.e., fixing $b_j = 0, j \in [M]$.

\paragraph*{Phase I: first layer weights.}

By Theorem~\ref{thm:alignment_one_monomial}, with probability at least $1 - M d^{-C_*}$, for each neuron $(a,\bw)$ satisfying $a^0 \mu_P(\sigma ) w_1^0 \cdots w_P^0 >0$ at initialization, we get at the end of the dynamics:
\begin{align}
&\text{For $i \in [P]$:} \qquad  &&| w_i^{\oT_1} - \sign (w_i^0) \cdot \Delta | \leq C / \sqrt{d\log(d)} \, , \label{eq:first-layer-monomial-1} \\
&\text{For $i \not\in [P]$:} \qquad  &&| w_i^{\oT_1} - w_i^0  | \leq C r / \sqrt{d} \, , \qquad \sum_{i = P+1}^d (w_i^{\oT_1})^2 = 1 \label{eq:first-layer-monomial-2} \, .
\end{align}

For the remainder of the proof, assume the above event is true.

\paragraph*{Constructing good features.} Now we show that for any sign vector $\bdelta \in \{\pm 1\}^P$ we can combine multiple trained neurons to approximate the function $\E_{G}[\sigma(\Delta\<\bdelta,\bz\> + G)]$.
\begin{lemma}\label{lem:construct-smoothed-activation}
There exists a constant $C$ that depends only on $D,K$ such that the following is true. For any $R$ weights $\{\bw_{j_1}^0, \ldots , \bw_{j_R}^0\}$ which coincide on the first $P$ coordinates $\bw_{j_s,1:P}^0 = \bdelta / \sqrt{d}$ where $\bdelta \in \{\pm 1\}^P$, and with biases $b_{j_s}$ such that $|b_{j_s} - b| \leq 1/R$, and with $\sign(a^0_{j_s}) = \sign(\mu_P(0) w_{j_s,1}^0 \dots w_{j_s,P}^0)$, there exists a constant $C$ that only depends on $P,K$ such that 
\begin{align*}
\E_\bx \Big[ \Big( \frac{1}{R} &\sum_{s \in [R]} \sigma ( \<\bw_{j_s}^{\oT_1} , \bx \> + b_{j_s}) -  \E_G [ \sigma ( \Delta \<\bdelta , \bz\> +G +b)] \Big)^2 \Big] \\
&\leq  \frac{C}{\sqrt{d\log(d)}} + C r + \frac{C}{R} + \frac{C}{R^2} \sum_{s, s' \in [R]} |\<\bw_{j_s,P+1:d}^{0},\bw_{j_{s'},P+1:d}^{0}\>|\,
\end{align*}
\end{lemma}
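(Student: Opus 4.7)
The plan is to decompose each trained neuron's output into an on-support signal plus a Gaussian off-support fluctuation, and then to control the deviation of the resulting $R$-average from its Gaussian expectation via a second-moment calculation. Concretely, I would write
$$\sigma(\<\bw_{j_s}^{\oT_1},\bx\>+b_{j_s})=\sigma\big(\<\bw_{j_s,1:P}^{\oT_1},\bz\>+\xi_s+b_{j_s}\big),\qquad \xi_s:=\<\bw_{j_s,P+1:d}^{\oT_1},\bx_{P+1:d}\>.$$
The sign hypothesis on each $(a_{j_s}^0,\bw_{j_s}^0)$ places us, after a union bound over $s\in[R]$, on the event of Theorem~\ref{thm:alignment_one_monomial}: part (b) gives $|w_{j_s,i}^{\oT_1}-\delta_i\Delta|\leq C/\sqrt{d\log d}$ for $i\in[P]$, while part (c) gives $\|\bw_{j_s,P+1:d}^{\oT_1}\|_2=1$ and $\|\bw_{j_s,P+1:d}^{\oT_1}-\bw_{j_s,P+1:d}^{0}\|_2\leq Cr$.

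Next I would substitute out the imperfections to reduce to the idealized neuron $\tilde\psi_s(\bx):=\sigma(\Delta\<\bdelta,\bz\>+\xi_s+b)$. Using $\|\sigma'\|_\infty\leq K$ together with $|b_{j_s}-b|\leq 1/R$,
$$\E_\bx\big[(\sigma(\<\bw_{j_s}^{\oT_1},\bx\>+b_{j_s})-\tilde\psi_s(\bx))^2\big]\leq 2K^2\,\E_\bz\big[\<\bw_{j_s,1:P}^{\oT_1}-\Delta\bdelta,\bz\>^2\big]+2K^2/R^2\leq C/(d\log d)+C/R^2,$$
and the same bound survives Jensen after averaging over $s$. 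This matches (up to $1/(d\log d)\leq 1/\sqrt{d\log d}$) the first and part of the third terms of the claim. It remains to bound the variance of $\bar\Psi(\bx):=\frac1R\sum_s\tilde\psi_s(\bx)$ around $\Psi(\bz):=\E_G[\sigma(\Delta\<\bdelta,\bz\>+G+b)]$. Conditional on $\bz$, the vector $(\xi_s)_{s\in[R]}$ is centered Gaussian with $\Var(\xi_s)=1$ (by the unit-norm conclusion of Theorem~\ref{thm:alignment_one_monomial}(c)) and cross-covariances $\rho_{s,s'}:=\<\bw_{j_s,P+1:d}^{\oT_1},\bw_{j_{s'},P+1:d}^{\oT_1}\>$, so each marginal expectation equals $\Psi(\bz)$ exactly.

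The core step is then a standard Hermite-expansion (Mehler's formula) computation: expanding $u\mapsto\sigma(Y+u)$ in the probabilists' Hermite basis at fixed $Y=\Delta\<\bdelta,\bz\>+b$ and using $\E_G[\sigma(Y+G)^2]\leq K^2$ yields
$$\big|\E_\xi[\sigma(Y+\xi_s)\sigma(Y+\xi_{s'})]-\Psi(\bz)^2\big|\leq K^2|\rho_{s,s'}|\qquad(s\neq s'),$$
with a trivial $K^2$ diagonal bound. Summing produces $C/R+(C/R^2)\sum_{s,s'}|\rho_{s,s'}|$. Theorem~\ref{thm:alignment_one_monomial}(c) and Cauchy--Schwarz then give $|\rho_{s,s'}-\<\bw_{j_s,P+1:d}^0,\bw_{j_{s'},P+1:d}^0\>|\leq Cr$, so the trained cross-correlations can be swapped for the initial ones at an extra $Cr$ cost. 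Assembling everything via $(a+b)^2\leq 2a^2+2b^2$ together with the loose bounds $1/R^2\leq 1/R$, $r^2\leq r$, $1/(d\log d)\leq 1/\sqrt{d\log d}$ delivers the stated inequality. The main technical obstacle is the off-diagonal covariance estimate, and the Hermite/Mehler representation is the cleanest tool because it yields the $|\rho|$-Lipschitz dependence using only the uniform boundedness of $\sigma$, with no further regularity assumption on $\sigma$ needed.
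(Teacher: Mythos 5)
Your proof is correct, and its skeleton coincides with the paper's: replace the trained on-support weights and biases by the idealized values $\Delta\bdelta$ and $b$ at cost $O(1/\sqrt{d\log d})+O(1/R)$ using $\|\sigma'\|_\infty\le K$ and Theorem~\ref{thm:alignment_one_monomial}(b); exploit the exact unit-norm conclusion of Theorem~\ref{thm:alignment_one_monomial}(c) so that, conditionally on $\bz$, each idealized neuron has mean exactly $\E_G[\sigma(\Delta\<\bdelta,\bz\>+G+b)]$; bound the variance of the $R$-average through pairwise covariances controlled by $|\<\bw_{j_s,P+1:d}^{\oT_1},\bw_{j_{s'},P+1:d}^{\oT_1}\>|$; and finally swap trained for initial off-support inner products at cost $Cr$ (in fact $Cr^2$, since part (c) gives $\|\bw^{\oT_1}_{P+1:d}-\bw^0_{P+1:d}\|_2\le Cr^2$). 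Where you genuinely diverge is the key off-diagonal covariance estimate. The paper orthogonalizes, writing $\tilde\bu=\bu-\bv\<\bu,\bv\>$ so that $\<\bx,\tilde\bu\>$ and $\<\bx,\bv\>$ are independent, kills the leading term because the second factor is exactly centered, and pays the replacement error via the Lipschitz bound $\|\sigma'\|_\infty\le K$, getting $K|\<\bu,\bv\>|$. You instead use the Hermite/Mehler expansion $\E[g(\xi_s)g(\xi_{s'})]=\sum_k c_k^2\rho_{s,s'}^k$, which (with $|\rho_{s,s'}|\le 1$, immediate from Cauchy--Schwarz and the unit norms) gives $|\,\E[g(\xi_s)g(\xi_{s'})]-(\E g)^2|\le K^2|\rho_{s,s'}|$ using only $\|\sigma\|_\infty\le K$. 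Your route is marginally more general in the regularity it demands of $\sigma$ (boundedness rather than a Lipschitz bound, though both hold under Assumption~\ref{ass:sigma_I}) and needs the marginals to be exactly standard so the Mehler kernel applies, which the projection step guarantees; the paper's trick is more elementary, avoids the Hermite machinery, and yields a constant linear rather than quadratic in $K$ — immaterial here. Both arguments hinge on the same two structural facts, the exact spherical normalization of the off-support weights and the near-orthogonality of the initializations, so the final bound is the same.
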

\begin{proof}
First if we replace $\bw_{j_s}^{\oT_1}$ by $\Delta \bdelta$ and $b_{j_s}$ by $b$, the error is bounded by
\begin{align*}
|\sigma(\<\bw_{j_s}^{\oT_1},\bx\> + b_{j_s}) - \sigma(\Delta \<\bdelta,\bz\>+\<\bw_{j_s,P+1:d}^{\oT_1},\bx_{P+1:d}\> + b)| \leq  \frac{2PK}{\sqrt{d\log(d)}} + \frac{K}{R},
\end{align*}
which is accounted for by the first two terms since we can take $C$ large enough depending on $P,K$. For the last two error terms, 
\begin{align*}
&\E_\bx \Big[ \Big( \frac{1}{R} \sum_{s \in [R]} \sigma ( \Delta \<\bdelta , \bz\> + \< \bw_{j_s,P+1:d}^{\oT_1} , \bx_{P+1:d} \>) -  \E_G [ \sigma ( \Delta \<\bdelta , \bz\> +G )] \Big)^2 \Big] \\
&= \E_\bx \Big[\Big(\frac{1}{R} \sum_{s \in [R]} h(\bz,\<\bx_{P+1:d},\bw_{j_s,P+1:d}^{\oT_1}\>))\Big)^2\Big] = (\ast)
\end{align*}
where $h(\bz,u) = \sigma ( \Delta \<\bdelta , \bz\> + u + b) - \E_G [ \sigma ( \Delta \<\bdelta , \bz\>+G +b)]$.

If $\bu$ satisfies $\|\bu\| = 1$, then $\<\bu,\bx_{P+1:d}\>$ is distributed as $\normal(0,1)$. So for all $\bz$,
\begin{align*}
\E_{\bx_{P+1:d}}[h(\bz,\<\bx_{P+1:d},\bu\>)] = 0\,
\end{align*}
Furthermore, for any $\bu,\bv$ satisfying $\|\bu\| = \|\bv\| = 1$, let $\tilde{\bu} = \bu - \bv \<\bu,\bv\>$, which satisfies $\<\tilde{\bu},\bv\> = 0$. Then $\<\bx_{P+1:d},\tilde{\bu}\>$ and $\<\bx_{P+1:d},\bv\>$ are independent so
\begin{align*}
&|\E_{\bx_{P+1:d}}[h(\bz,\<\bx_{P+1:d},\bu\>)h(\bz,\<\bx_{P+1:d},\bv\>)]| \\
&\leq |\E_{\bx_{P+1:d}}[h(\bz,\<\bx_{P+1:d},\tilde{\bu}\>)h(\bz,\<\bx_{P+1:d},\bv\>)]| + |\E_{\bx_{P+1:d}}[|K\<\bx_{P+1:d},\tilde{\bu} - \tilde{\bv}\>|h(\bz,\<\bx_{P+1:d},\bv\>)]| \\
&= K|\E_{\bx_{P+1:d}}[|\<\bx_{P+1:d},\tilde{\bu} - \bu\>|h(\bz,\<\bx_{P+1:d},\bv\>)]| \\
&\leq K \sqrt{\E_{\bx_{P+1:d}}[\<\bx_{P+1:d},\tilde{\bu} - \bu\>^2]} \\
&= K |\<\bu,\bv\>|\,.
\end{align*}
So
\begin{align*}
(\ast) \leq \frac{K}{R^2} \sum_{s,s' \in [R]} |\<\bw_{j_s,P+1:d}^{\oT_1},\bw_{j_{s'},P+1:d}^{\oT_1}\>| \leq Cr + \frac{K}{R^2} \sum_{s,s' \in [R]} |\<\bw_{j_s,P+1:d}^{0},\bw_{j_{s'},P+1:d}^{0}\>|\,,
\end{align*}
which concludes the proof of the lemma.
\end{proof}

\paragraph*{Certificate.} 
We now write $h_*(\bz) = \prod_{i=1}^P z_i$ as a linear combination of functions of the form $\E_G[\sigma(\Delta\<\bdelta,\bz\>+G)]$ for different $\bdelta \in \{+1,-1\}^P$. By a Taylor approximation, for any $0 < s < 1$ and $x \in [-s, s]$,
\begin{align*}
\E_G [ \sigma (x + G) ] =&~  \sum_{k = 0}^P \frac{\mu_k(\sigma) }{k!} x^k + O(s^{P+1})\,,
\end{align*}
with a constant in the $O(\cdot)$ that depends only on $P,K$.
So if we define the coefficient 
\begin{align*}
c_{\bdelta} = \frac{P!}{2^P \Delta^P \mu_P(\sigma)} \prod_{i=1}^P \delta_i
\end{align*}
then we can approximate $h_*(\bz) = \prod_{i=1}^P z_i$ as follows for any $\bz$ such that $\Delta|\<\bdelta,\bz\>| < 1$,
\begin{align*}
\sum_{\bdelta \in \{\pm 1\}^P} c_{\bdelta} \E_G [ \sigma (\Delta\<\bdelta,\bz\> + G) ] &= \sum_{k = 0}^P \frac{\mu_k(\sigma)}{k!} \sum_{\bdelta \in \{\pm 1\}^P} \Delta^k\<\bdelta,\bz\>^k c_{\bdelta} + O(\Delta|\<\bdelta,\bz\>|^k) \\
=&~ \prod_{i=1}^P z_i + O(\Delta|\<\bdelta,\bz\>|^k)\,,
\end{align*}
where we use that for any $S \subseteq [P]$, we have $\frac{1}{2^P} \sum_{\bdelta} (\prod_{i=1}^P \delta_i)(\prod_{i \in S} \delta_i) = \begin{cases} 0, & S \neq [P] \\ 1, & S = [P] \end{cases}$.

Putting this together with Lemma~\ref{lem:construct-smoothed-activation} and the guarantees on the first layer weights after training \eqref{eq:first-layer-monomial-1} and \eqref{eq:first-layer-monomial-2}, we obtain the following lemma.
\begin{lemma} \label{lem:certificate-formal-monomial} There exists a constant $C > 0$ depending only on $P,K$ such that with probability at least $1 - d^{-C_*} - C\eps$ there exists a set of weights $\bTheta^{cert} = (\bW^{cert}, \ba^{cert})$ satisfying
\begin{itemize}
\item (First layer weights are the trained weights) For all $j \in [M]$, we have $\bw_j^{\oT_1} = \bw^{cert}_j$.
\item (Second-layer weights are small) We have $\|\ba^{cert}\| \leq C / (\Delta^P \sqrt{M})$.
\item (Squared error is small) We have $R^{sq}(\bTheta^{cert}) \leq \eps / 4$.
\end{itemize}
\end{lemma}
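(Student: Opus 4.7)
The plan is to exhibit a certificate $\bTheta^{cert}$ whose first layer is forced to equal the post-training weights $\bw_j^{\oT_1}$ (as required by the first bullet) and whose second-layer coefficients are read off directly from the Taylor identity displayed just above the lemma statement. Concretely, partition the neurons into classes
\[
J_\bdelta := \bigl\{ j \in [M] : \sign(\bw_{j,1:P}^0) = \bdelta \text{ and } a_j^0\,\mu_P(\sigma)\,\textstyle\prod_{i\in[P]}\delta_i > 0 \bigr\}, \qquad \bdelta \in \{\pm 1\}^P.
\]
These are exactly the neurons which, by Theorem~\ref{thm:alignment_one_monomial}, end up with $\bw_{j,1:P}^{\oT_1}$ within $C/\sqrt{d\log d}$ of $\Delta\bdelta$. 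For $j \in J_\bdelta$ I would set $a_j^{cert} = c_\bdelta/|J_\bdelta|$, with the coefficients from the Taylor identity: $c_\bdelta = \frac{P!}{2^P \Delta^P \mu_P(\sigma)}\prod_{i\in[P]}\delta_i$.

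The next step is to isolate an auxiliary high-probability event over the random initialization, conditional on which the rest of the argument is deterministic. Each neuron falls into a given $J_\bdelta$ independently with probability $2^{-(P+1)}$, so a Chernoff bound plus a union bound over the $2^P$ sign patterns yields $|J_\bdelta| \geq cM$ for every $\bdelta$ with probability $1 - 2^P e^{-cM}$. A separate Hoeffding bound on the Rademacher inner products $\<\bw_{j,P+1:d}^0,\bw_{j',P+1:d}^0\>$ (sums of $d-P$ i.i.d.\ $\pm 1/d$ terms), combined with a union bound over $\binom{M}{2}$ pairs, gives $\max_{j\ne j'} |\<\bw_{j,P+1:d}^0,\bw_{j',P+1:d}^0\>| \leq C\sqrt{\log M}/\sqrt{d}$, which makes the last error term in Lemma~\ref{lem:construct-smoothed-activation} comparable to $1/|J_\bdelta| + C\sqrt{\log M/d}$ inside every sign class. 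Simultaneously, restricting to the regime $\rho \leq 1/R$ (compatible with the corollary's parameter choices) puts all biases in a window of width $2/R$ around $0$, which is exactly the hypothesis of Lemma~\ref{lem:construct-smoothed-activation} with $b=0$.

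Combining Lemma~\ref{lem:construct-smoothed-activation} inside each sign class with the pointwise Taylor identity, and taking the expectation over $\bx$ with a separate truncation argument on the event $\{\Delta\|\bz\|_\infty > 1\}$ (super-polynomial tail by Gaussian concentration, while $\|\sigma\|_\infty \leq K$ bounds the truncated contribution), the squared error becomes
\[
R^{sq}(\bTheta^{cert}) \leq \sum_{\bdelta\in\{\pm 1\}^P} c_\bdelta^2 \Bigl(\frac{C}{\sqrt{d\log d}} + Cr + \frac{C}{R} + \frac{C\sqrt{\log M}}{\sqrt{d}}\Bigr) + C\Delta^2.
\]
Using $|c_\bdelta| \leq C/\Delta^P$ and summing over $2^P$ classes, the corollary's choices $\Delta, r, \rho \propto \eps^{C_1}$, $R, M \propto \eps^{-C}$, and $d \geq C\eps^{-C}$ drive the total below $\eps/4$. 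The norm bound follows immediately: $\|\ba^{cert}\|^2 = \sum_\bdelta c_\bdelta^2/|J_\bdelta| \leq C/(\Delta^{2P} M)$.

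The main obstacle I anticipate is quantitative bookkeeping: the $1/\Delta^P$ blow-up of the certificate coefficients multiplies the per-class error of Lemma~\ref{lem:construct-smoothed-activation} and has to be beaten by the $\Delta^{P+1}$ Taylor remainder, which is what forces the polynomial-in-$\eps$ scaling of the parameters $\Delta, r, R, M$. A secondary subtlety is handling the nonzero biases: since the Taylor identity is stated at $b=0$, each neuron carries an additional $O(\rho)$ deterministic error, which is why one must impose $\rho \leq 1/R$ rather than merely $\rho = O(1)$ in the parameter choice.
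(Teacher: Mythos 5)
Your proposal matches the paper's own proof in all essentials: the same certificate coefficients $a_j^{cert}=c_\bdelta/|S_\bdelta|$ within sign classes, the same high-probability events (class sizes of order $M$ via Chernoff/Hoeffding and small off-support inner products of the Rademacher initializations), the same invocation of Lemma~\ref{lem:construct-smoothed-activation} together with the Taylor identity and a Gaussian-tail truncation of the remainder, and the identical norm bound $\|\ba^{cert}\|\leq C/(\Delta^P\sqrt{M})$. The only deviations are cosmetic — you bound the maximum pairwise inner product by a union bound where the paper bounds the class-averaged sum, and you keep the small biases with $\rho\leq 1/R$ where the paper simply sets $b_j=0$ in this part — so the argument is correct and essentially the paper's.
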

\begin{proof}
Consider the event that for each $\bdelta \in \{\pm 1\}^P$ the set $S_{\bdelta} = \{j : a_j^0 \mu_D(\sigma) w_{j,1}^0 \cdots w_{j,P}^0 > 0\}$ is of size $|S_{\bdelta}| \geq R := M / 2^{P+2}$. This holds with probability at least $1 - O(\eps)$ by a union bound and a Hoeffding bound, so we condition on it from now on. Consider the event that for all $\bdelta$ we have
\begin{align*}
\frac{1}{|S_{\bdelta}|^2} \sum_{j,j' \in S_{\bdelta}} |\<\bw_{j,P+1:d}^0, \bw_{j',P+1:d}^0\>| \leq \frac{1}{R} + C_{11}\sqrt{\frac{\log(d)}{d}}
\end{align*}
and note that this holds with probability at least $1 - d^{-C^*}$ by a Hoeffding bound for a constant $C_{11}$ depending on $P,K,C_*$, so we also condition on it.

Let $\ba^{cert}$ be given by $a^{cert}_j = c_{\bdelta} / |S_{\bdelta}|$ if  $j \in S_{\bdelta}$, and  0 otherwise. From this it follows that
\begin{align*}
\|\ba^{cert}\| \leq (\sqrt{M} /R) \max_{\bdelta} |c_{\bdelta}| \leq C / (\Delta^P \sqrt{M})\,,
\end{align*}
for a constant $C$ depending only on $P,K$. By Lemma~\ref{lem:construct-smoothed-activation},
\begin{align*}
R^{sq}(\bTheta^{cert}) &= 
\mathbf{E}_{\bx}\Big[\Big(\prod_{i=1}^P x_i - \sum_{\bdelta} c_{\bdelta} \frac{1}{|S_{\bdelta}|} \sum_{j \in S_{\bdelta}} \sigma(\<\bw_j^{\oT_1},\bx\>)\Big)^2\Big] \\
&\leq C \mathbf{E}_{\bx}\Big[\Big(\prod_{i=1}^P x_i - \sum_{\bdelta} c_{\bdelta} \E_{G}[\sigma(\Delta \<\bdelta,\bx_{1:P}\> + G)]\Big)^2\Big]\\
&\qquad \qquad+ C \Big(\sqrt{\frac{C_{11}\log(d)}{d}} + \frac{1}{R} + r\Big)^2 \\
&\leq \min_{s \in (0,1/\Delta)} C\Big\{ K^2 \P_{\bx}[|\<\bdelta,\bx_{1:P}\>| > s] + \Delta^2 s^{2k} + \frac{C_{11}\log(d)}{d} + \frac{1}{R^2} + r^2 \Big\} \\
&\leq \min_{s \in (0,1/\Delta)} C \Big\{ K^2 \exp(-(s/P)^2) + \Delta^2 s^{2k} + \frac{C_{11}\log(d)}{d} + \frac{1}{R^2} + r^2 \Big\} \\
&\leq \eps / 4\,,
\end{align*}
by taking a small enough choice of parameters $\Delta, r$ and large enough $d, M$.
\end{proof}

\paragraph*{Concluding fitting of the monomial:} Now that we have constructed the certificate $\bTheta^{cert}$, we 
show that SGD on the second layer converges quickly to a solution with low population loss by a bias-variance analysis of SGD for ridge-regularized least-squares linear regression in Lemma~\ref{lem:second-layer-certificate-square-loss}. We train the second-layer while keeping the weights of the first layer fixed, which corresponds to linear regression with input embedding $$\phi(\bx) = [\sigma(\<\bw_{j}^{\oT_1},\bx\>)]_{j \in [m]} \in \R^m\,.$$
Because of the boundedness of $\sigma$, we have $\|\phi(\bx)\| \leq K \sqrt{M}$ almost surely over $\bx$. Also, the initialization of the second layer implies $\|\ba^0\| \leq 1 / \sqrt{M}$. Finally, the labels $y^{\oT_1},y^{\oT_1+1},\ldots,y^{\oT_2-1}$ satisfy $\E[(y^t)^2] \leq K$ and $|y^t| \leq C_0 \log(1/\eps)^{C_0}$ for all $\oT^1 < t \leq \oT^2-1$ with probability at least $1 - d^{-C_*}$ by \eqref{eq:tail_bounds_fct_gaussian} and a union bound. So applying Lemma~\ref{lem:second-layer-certificate-square-loss} from Section~\ref{sec:technical-fitting}, there is a constant $C_{12}$ depending on $D,K$ such that if $\lambda_a \leq M$,
\begin{equation}\label{eq:second-layer-fitting-square-loss-monomial}
\begin{aligned}
\P\Big[R^{sq}(\bTheta^{\oT_1 + \oT_2})& \geq R^{sq}(\bTheta^{cert}) + \frac{\lambda_a}{2} \|\ba^{cert}\|^2  \nonumber \\
&+ C_{12}\log(1/\eps)^{C_{12}} M\left((1-\lambda_a \eta)^{2\oT^2} (\frac{1}{M} + \frac{1}{\lambda_a}) + \log(\oT_2 / \delta) \frac{\eta M^2}{\lambda_a^2}\right)\Big] \leq \delta \,. 
\end{aligned}
\end{equation}
So, plugging in Lemma~\ref{lem:certificate-formal-monomial} and taking $\lambda_a = \eps \Delta^{2P} M / (4C)$, $\delta = \eps$
\begin{align*}
&~\P\Big[R^{sq}(\bTheta^{\oT_1 + \oT_2}) \geq \eps /2 \\
&\qquad \qquad + C_{12}\log(1/\eps)^{C_{12}} M \Big((1-\frac{\eps \Delta^{2P}M}{4C}\eta)^{2\oT^2} (\frac{1}{M} + \frac{4C}{\eps M\Delta^{2P}}) + \log(\frac{\oT_2}{\eps}) \frac{16C^2\eta}{\eps^2 \Delta^{4P}}\Big)\Big] \\ \leq&~ C\eps + d^{-C_*}\,. 
\end{align*}
By taking $\eta = \frac{\eps^4 \Delta^{4P}}{16MC^2}$ and $\oT^2 = \frac{64C^3}{\eps^6 \Delta^{6P}}$, for small enough $\eps$,
\begin{align*}
\P\Big[R^{sq}(\bTheta^{\oT_1 + \oT_2}) \geq & \eps \Big] \leq C\eps + d^{-C_*}\,. 
\end{align*}
This proves part (a) of Corollary~\ref{cor:learning_one_monomial}.

\subsubsection{Converse if early stopping}
\label{sec:cor_1_part_b_parity}

We now prove the converse. The proof will follow very similarly to the proof of \cite[Theorem 1]{ghorbani2021linearized}. By Theorem~\ref{thm:alignment_one_monomial}, if we train the first layer for time $\oT_1' \leq \oT_1 / (C_8 \log(d)^{C_8})$ steps for a large enough $C_8 > 0$, then with probability at least $1 - Md^{-C_*}$ for each neuron $j \in [M]$,
\begin{align}\label{eq:event_earlystop_small}
| w_{j,i}^{\oT_1'} - w_{j,i}^0| \leq C_4 /\sqrt{d \log (d)}\mbox{ for all }i \in [d]\,,
\end{align}
and some constant $C_4$. In particular, this implies that for large enough $d$, 
\begin{align*}
\Big\vert w_{j,i}^{\oT_1'} \Big\vert \leq 2 / \sqrt{d}\,.
\end{align*}
For ease of notations, denote $\bw_j := \bw_j^{\oT_1'}$. 
Let us introduce $\phi (\bx) = [ \sigma ( \< \bw_1 , \bx\>), \ldots ,  \sigma ( \< \bw_M , \bx\>)]$ and $\phi_0 (\bx) = [ \sigma ( \< \bw_1^0 , \bx\>), \ldots ,  \sigma ( \< \bw_M^0 , \bx\>)]$.

By a simple calculation, we have
\[
\min_{\ba \in \R^M} \E_{\bx} \Big[ \big( f_* (\bx) - \ba^\sT \phi (\bx) \big)^2 \Big] = \| f_* (\bx) \|_{L^2}^2 - \bV^\sT \bU^{-1} \bV \, ,
\]
where we denoted
\[
\bV = \E_{\bx} [ \phi (\bx) f_* (\bx)] \in \R^M\, , \qquad \bU = \E_{\bx} [ \phi (\bx) \phi (\bx)^\sT] \in  \R^{ M \times M}\, .
\]
Corollary \ref{cor:learning_one_monomial} will follow by showing that there exist constants $c,C$ that only depend on $K,P$ such that with high probability, we have 
\[
\lambda_{\min} ( \bU ) \geq c,\, \qquad \| \bV \|_2^2 \leq C M d^{-P}\, .
\]
These are proved in the following two lemmas.

\begin{lemma}\label{lem:lower_bound_U}
    Under the same setting as in Corollary \ref{cor:learning_one_monomial}, there exist  constants $c,C>0$ such that with probability at least $1 - CMd^{-C_*}$,
    \[
    \lambda_{\min} ( \bU ) \geq c \, .
    \]
\end{lemma}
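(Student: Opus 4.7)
The plan is to expand $\bU$ in the Hermite basis and extract a well-conditioned summand, exploiting that after early stopping the weights remain close to a random initialization on the unit sphere. By Theorem~\ref{thm:alignment_one_monomial}(a), with probability at least $1-Md^{-C_*}$ one has $|w_{j,i}-w_{j,i}^0|\leq C/\sqrt{d\log d}$ for all $j\in[M]$ and $i\in[d]$. Since every coordinate remains below $r$ throughout the considered horizon, the spherical projection preserves $\|\bw_j\|_2=1$. Writing $\rho_{jk}:=\langle\bw_j,\bw_k\rangle$, the pair $(\langle\bw_j,\bx\rangle,\langle\bw_k,\bx\rangle)$ is jointly Gaussian with unit marginals and covariance $\rho_{jk}$, so the Hermite expansion of $\sigma$ yields
\[
\bU_{jk}\;=\;\sum_{l\geq 0}\frac{\mu_l(\sigma)^2}{l!}\,\rho_{jk}^l\,.
\]
For each $l\geq 0$, the matrix $\bH^{(l)}\in\R^{M\times M}$ defined by $H^{(l)}_{jk}=\mu_l(\sigma)^2\,\langle\bw_j^{\otimes l},\bw_k^{\otimes l}\rangle$ is PSD as a Gram matrix. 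Hence $\bU=\sum_{l\geq 0}\bH^{(l)}/l!\succeq\bH^{(D)}/D!$, and it suffices to show $\lambda_{\min}(\bH^{(D)})\geq c$ for some $c>0$ depending only on $D,K$.

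The diagonal is $H^{(D)}_{jj}=\mu_D(\sigma)^2\geq 1/K^2$ directly from Assumption~\ref{ass:sigma_I}. For the off-diagonal with $j\neq k$, I bound $|\rho_{jk}|$ by decomposing
\[
\rho_{jk}\;=\;\langle\bw_j^0,\bw_k^0\rangle+\langle\bdelta_j,\bw_k^0\rangle+\langle\bw_j^0,\bdelta_k\rangle+\langle\bdelta_j,\bdelta_k\rangle\,,
\]
where $\bdelta_j:=\bw_j-\bw_j^0$ satisfies $\|\bdelta_j\|_\infty\leq C/\sqrt{d\log d}$, hence $\|\bdelta_j\|_2\leq C/\sqrt{\log d}$. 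Hoeffding on the iid Rademacher initialization yields $|\langle\bw_j^0,\bw_k^0\rangle|\leq C\sqrt{\log d/d}$ uniformly over $j\neq k$ after a union bound. For the cross term $\langle\bdelta_j,\bw_k^0\rangle$, I invoke the near-decoupling of neurons from Section~\ref{sec:bounding_contributions_dynamics}: $\bdelta_j$ differs from a ``correlation-flow'' counterpart (which depends only on $\bw_j^0,a_j^0$ and the data, and so is independent of $\bw_k^0$) by an $O(\eta_1\tilde\kappa\oT_1)$ error, which is negligible for the chosen $\kappa$. Conditional Hoeffding on the independent Rademacher coordinates of $\bw_k^0$ then gives $|\langle\bdelta_j,\bw_k^0\rangle|\leq C\sqrt{\log d}\,\|\bdelta_j\|_2/\sqrt d\leq C/\sqrt d$. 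Finally Cauchy--Schwarz gives $|\langle\bdelta_j,\bdelta_k\rangle|\leq\|\bdelta_j\|_2\|\bdelta_k\|_2\leq C/\log d$, the dominant contribution. Combining yields $|\rho_{jk}|\leq C/\log d$ for all $j\neq k$ with probability at least $1-CMd^{-C_*}$.

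Gershgorin's circle theorem, applied with $M\leq C_{10}\log d$, then gives
\[
\lambda_{\min}(\bH^{(D)})\;\geq\;\frac{1}{K^2}-M\cdot K^2\Big(\frac{C}{\log d}\Big)^D\;\geq\;\frac{1}{K^2}-\frac{K^2 C^D}{\log^{D-1}d}\,.
\]
Since $D\geq 2$, the negative term vanishes as $d\to\infty$, so for $d$ large enough $\lambda_{\min}(\bH^{(D)})\geq 1/(2K^2)$, and therefore $\lambda_{\min}(\bU)\geq 1/(2K^2 D!)=:c$.

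The main obstacle is bounding $|\rho_{jk}|$ finely enough to survive the $M\sim\log d$ scaling. The naive Cauchy--Schwarz estimate $|\langle\bdelta_j,\bw_k^0\rangle|\leq\|\bdelta_j\|_2=O(1/\sqrt{\log d})$ only yields $|\rho_{jk}|=O(1/\sqrt{\log d})$, in which case the Gershgorin sum $M\,|\rho_{jk}|^D$ fails to vanish when $D=2$. The resolution is to use the correlation-flow decoupling from Section~\ref{sec:bounding_contributions_dynamics} to apply conditional Hoeffding on the independent Rademacher coordinates of $\bw_k^0$, sharpening $|\langle\bdelta_j,\bw_k^0\rangle|$ to $O(1/\sqrt d)$ and leaving the deterministic bound $|\langle\bdelta_j,\bdelta_k\rangle|\leq C/\log d$ as the dominant cross-term.
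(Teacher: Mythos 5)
Your core argument is correct and takes a genuinely different route from the paper. The paper's proof writes $U_{ij}=\E[\sigma(\<\bw_i,\bx\>)\sigma(\<\bw_j,\bx\>)]$ as a smooth function $h$ of the norms and the normalized overlap, uses a Lipschitz estimate on $h$ together with $\max_{i\neq j}|\<\bw_i^0,\bw_j^0\>|\lesssim \log(M)/\sqrt{d}$ and $|\,\|\bw_i\|_2-1|\lesssim 1/\sqrt{\log d}$ to show $\|\bU-h(0)\bones\bones^\sT-(h(1)-h(0))\id\|_{\rm op}\leq CM/\sqrt{\log d}$, and concludes from $h(1)-h(0)=\mathrm{Var}(\sigma(G))>0$ and $M=O_d(1)$. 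Your proof instead expands in the Hermite basis, notes each level is a PSD Gram matrix, keeps only level $D$, and applies Gershgorin; this buys an explicit constant $c$ tied to $|\mu_D(\sigma)|\geq 1/K$ from Assumption~\ref{ass:sigma_I} (the paper only needs $\sigma$ non-constant), and for $D\geq 3$ it even tolerates $M$ growing polylogarithmically, which the paper's written argument does not. Two small caveats: your claim that $\|\bw_j\|_2=1$ exactly uses facts from the \emph{proof} of Theorem~\ref{thm:alignment_one_monomial} (that $S_{j,t}=[d]$ throughout early stopping, which requires controlling the pre-projection iterates), not its statement; this is easily sidestepped by allowing $\|\bw_j\|_2=1+O(1/\sqrt{\log d})$, since each Hermite level remains a PSD Schur/Gram product and its diagonal $\mu_D(\sigma(\|\bw_j\|_2\,\cdot))^2/D!$ stays bounded below.

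The genuine gap is the sharpening of the cross term $\<\bdelta_j,\bw_k^0\>$ via ``near-decoupling.'' Section~\ref{sec:bounding_contributions_dynamics} only establishes a per-step comparison $\|\bv^t-\bu^t\|_\infty\leq |a^0|\tilde\kappa$ \emph{along the true trajectory}; it does not construct, nor control the distance to, a decoupled correlation-flow trajectory depending only on $(\bw_j^0,a_j^0)$ and the data. Turning the per-step bound into the trajectory-level statement you assert (error $O(\eta_1\tilde\kappa\,\oT_1)$) requires a Gr\"onwall-type stability argument, and the naive one fails: the gradient's Lipschitz factor in $\bw$ is of order $|a^0|\,d\,\mathrm{polylog}(d)$, so the amplification over $\oT_1=\tilde\Theta(d^{D-1})$ steps is $\exp\big(\tilde\Theta(\eta_1\kappa\, d\,\oT_1)\big)=\exp\big(\tilde\Theta(d^{D/2})\big)$, which cannot be beaten by shrinking $\kappa$ because $\eta_1\kappa$ is pinned by the schedule. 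Fortunately you only need this sharpening in the corner case $D=2$ with $M\asymp\log d$; in the setting of Corollary~\ref{cor:learning_one_monomial} actually handled by the paper ($M=O_d(1)$ -- the paper's own proof explicitly invokes this), the crude Cauchy--Schwarz bound $|\<\bdelta_j,\bw_k^0\>|\leq\|\bdelta_j\|_2\leq C/\sqrt{\log d}$ already gives $M\max_{j\neq k}|\rho_{jk}|^D\to 0$, so you should simply drop the decoupling step and state the lemma in that regime; extending to $M\asymp\log d$ with $D=2$ would need a genuinely new stability argument (or a different treatment of level $2$), not a citation of Section~\ref{sec:bounding_contributions_dynamics}.
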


\begin{proof}[Proof of Lemma \ref{lem:lower_bound_U}]
    Consider the event described in Eq.~\eqref{eq:event_earlystop_small}. By rotational invariance of the distribution of $\bx$, the entries $\bU = (U_{ij})_{i,j \in [n]}$ are given by
    \[
    U_{ij} = \E_{\bx} [ \sigma ( \< \bw_i, \bx \>) \sigma ( \< \bw_j , \bx \> ) ] = \E_{G_1,G_2} \Big[\sigma ( \alpha_{i} G_1 ) \sigma \Big( \alpha_j \beta_{ij} G_1 + \alpha_j \sqrt{1 - \beta_{ij}^2 } G_2 \Big)   \Big]\, , 
    \]
    where $(G_1,G_2) \sim \normal (0, \id_2)$, $\alpha_i = \| \bw_i\|_2 $ and $\beta_{ij}= \< \bw_i , \bw_j \>/(\alpha_i \alpha_j )$.

    For $i =j$, we have
    \[
    \begin{aligned}
    U_{ii} =&~ \E_G [ \sigma ( \alpha_{i} G )^2] =  \E_G [ \sigma ( G )^2] +  \E_G [ \sigma ( \alpha_{i} G )^2 - \sigma ( G )^2] \, .
    \end{aligned}
    \]
    We can do a Taylor expansion and bound the second term
    \[
    \begin{aligned}
      \big\vert \E_G [ \sigma ( \alpha_{i} G )^2 - \sigma ( G )^2] \big\vert \leq &~ \big\vert \E_G \Big[ \Big( \sigma ( G ) + (\alpha_i - 1)G \sigma' (c(G))\Big)^2 -\sigma ( G )^2 \Big] \big\vert \\
      \leq&~ 2 K^2 \vert \alpha_i - 1 \vert \E [ |G| ] + K^2 \vert \alpha_i - 1 \vert^2 \E [ |G|^2 ] \\
      \leq&~ \frac{C}{\sqrt{\log(d)}}\, ,
    \end{aligned}
    \]
    where we used that $|\| \bw_i \|_2 -1| \leq C /\sqrt{\log (d) }$ by Eq.~\eqref{eq:event_earlystop_small}.

    Consider now $i \neq j$. Note that
    \[
    h(t) = \E_{G_1,G_2} \Big[\sigma ( G_1 ) \sigma \Big( t G_1 + \sqrt{1 - t^2 } G_2 \Big)   \Big]\, ,
    \]
    has derivative
    \[
   h'(t)  = \E_{G_1,G_2} \Big[\sigma ( G_1 ) \sigma ' \Big( t G_1 + \sqrt{1 - t^2 } G_2 \Big)  (G_1 + t/\sqrt{1- t^2} G_2 ) \Big]\, .
    \]
    Hence, for $|t|\leq 1/2$, we have $|h'(t)| \leq C |t|$. Note that $|\beta_{ij} - \< \bw_i^0 , \bw_j^0\>| \leq C/\sqrt{\log(d)}$. By standard concentration, using that $\bw_i^0 \sim \Unif ( \{ \pm 1 / \sqrt{d} \}^d)$, there exists constants $c,C$ such that with probability at least $1 - e^{-cd}$, we have
    \[
    \max_{i\neq j \in [M]} | \< \bw_i^0 , \bw_j^0 \> | \leq C \log(M) / \sqrt{d} \, .
    \]
    Using the same computation as above, we can replace $\alpha_i$ and $\alpha_j$ by $1$ while only incurring an error $C/\sqrt{\log(d)}$, and show that
    \[
    | U_{ij} - h(0) | \leq C/\sqrt{\log(d)} + C \log(M) / \sqrt{d} \, .
    \]

    From the above bounds, we deduce (using $\| \bM \|_{\text{op}} \leq \| \bM\|_F$) that with high probability
    \[
    \| \bU - h(0)^2 \bones \bones^\sT - (h(1) - h(0)) \id \|_{\text{op}} \leq CM /\sqrt{\log(d)}\, .
    \]
    For $\sigma$ not constant, $h(1) > h(0)$ and using that $M = O_d(1)$, we deduce that
    \[
    \lambda_{\min} ( \bU) \geq \frac{h(1) - h(0)}{2}\, , 
    \]
    which concludes the proof.
\end{proof}

\begin{lemma}\label{lem:correlation-at-initialization-bound}
Under the same setting as in Corollary \ref{cor:learning_one_monomial}, there exists  constants $C>0$ such that with probability at least $1 - CMd^{-C_*}$,
    \[
    \| \bV \|_2^2 \leq C M d^{-P} \, .
    \]
\end{lemma}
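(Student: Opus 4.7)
The plan is to compute $V_j = \E_{\bx}[\sigma(\<\bw_j, \bx\>) f_*(\bx)]$ explicitly for each neuron $j$ by using a Hermite integration-by-parts identity, and then substitute the coordinate-wise bound $|w_{j,i}^{\oT_1'}| \leq 2/\sqrt{d}$ from the early-stopping event \eqref{eq:event_earlystop_small} to conclude.

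First I would condition on the high-probability event $\mathcal{E}$ described in \eqref{eq:event_earlystop_small}, which by Theorem~\ref{thm:alignment_one_monomial}(a) and a union bound holds with probability at least $1 - M d^{-C_*}$, and which implies $|w_{j,i}| \leq 2/\sqrt{d}$ for all $i \in [d]$ and $j \in [M]$ (in particular $\|\bw_j\|_2 \leq 2$). Next, I would rewrite the target $f_*(\bx) = x_1 \cdots x_P$ as a product of Hermite polynomials, $f_*(\bx) = \prod_{i=1}^P \He_1(x_i)$, since $\He_1(z)=z$ in the Gaussian case. Then applying the identity
\[
\E_{\bx}\Big[\prod_{i \in [P]} \He_{v_i}(x_i) \, g(\<\bw_j, \bx\>)\Big] = \Big(\prod_{i\in[P]} w_{j,i}^{v_i}\Big) \cdot \E_{G}\big[g^{(v_1+\cdots+v_P)}(\|\bw_j\|_2 G)\big],
\]
already used in the proof of Lemma~\ref{lem:gradient_formula}, with $v_1=\cdots=v_P=1$ and $g = \sigma$, I obtain the exact formula
\[
V_j = \Big(\prod_{i=1}^P w_{j,i}\Big) \cdot \E_{G}\big[\sigma^{(P)}(\|\bw_j\|_2 G)\big].
\]

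Now I would bound each factor. By Assumption~\ref{ass:sigma_I}, $\|\sigma^{(P)}\|_\infty \leq K$, so $|\E_G[\sigma^{(P)}(\|\bw_j\|_2 G)]| \leq K$. On the event $\mathcal{E}$, we have $\prod_{i=1}^P |w_{j,i}| \leq (2/\sqrt{d})^P = 2^P d^{-P/2}$. Combining gives $|V_j| \leq K \cdot 2^P d^{-P/2}$, hence
\[
\|\bV\|_2^2 = \sum_{j=1}^M V_j^2 \leq K^2 \cdot 4^P \cdot M d^{-P},
\]
which yields the claim with $C = 4^P K^2$.

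I do not expect any significant obstacle: the argument is a direct computation once we notice that for Gaussian data the target is already a product of Hermite polynomials, and that the early-stopping conclusion of Theorem~\ref{thm:alignment_one_monomial}(a) controls every coordinate of every neuron uniformly. The only minor care needed is to ensure the high-probability event is compatible with the one used in Lemma~\ref{lem:lower_bound_U}, which is handled by taking a union bound over both events so that the final statement of Corollary~\ref{cor:learning_one_monomial}(b) holds with probability at least $1 - C' M d^{-C_*}$ for some $C'$.
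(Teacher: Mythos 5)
Your proposal is correct and follows essentially the same route as the paper: the same Hermite integration-by-parts identity giving $V_j = \bigl(\prod_{i\in[P]} w_{j,i}\bigr)\E_G[\sigma^{(P)}(\|\bw_j\|_2 G)]$, the bound $\|\sigma^{(P)}\|_\infty \leq K$, and the early-stopping event \eqref{eq:event_earlystop_small} giving $|w_{j,i}| \leq 2/\sqrt{d}$, hence $\|\bV\|_2^2 \leq C M d^{-P}$.
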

\begin{proof}[Proof of Lemma \ref{lem:correlation-at-initialization-bound}]
Note that we have
\[
\| \bV \|_2^2 = \sum_{j \in [M]} \E_{\bx} [f_*(\bx) \sigma ( \< \bw_j , \bx\>)]^2
\]
First note that for any $\bw$, the correlation of $\sigma(\<\bw,\bx\>)$ with $f_*(\bx) = \prod_{i=1}^{P} x_i$ is bounded by
\begin{align*}
|\E_{\bx}[\sigma(\<\bw,\bx\>)f_*(\bx)]| \leq K \prod_{i \in [P]} |w_i|\,.
\end{align*}
Indeed, as in the proof of Lemma~\ref{lem:gradient_formula}, we use the formula from integration by parts:
\begin{align*}
\E_{\bx} \Big[\sigma(\<\bw,\bx\>)\prod_{i \in [P]} x_i \Big] &= \Big(\prod_{i \in [P]} w_i \Big) \cdot \E_G[\sigma^{(P)}(\|\bw\|_2 G)]\,,
\end{align*}
using $\|\sigma^{(P)}\|_{\infty} \leq K$.

We conclude by noting that on the high probability event \eqref{eq:event_earlystop_small}, we have $| w_{j,i} | \leq 2 / \sqrt{d}$.
\end{proof}

\subsubsection{Proof for a single-index Hermite monomial}
\label{sec:single_hermite_case}

Let's now consider $h_* (\bz) = \He_D (z_1)$. In this case, we consider the biases $b_j \sim \Unif ( [ -\Delta, \Delta])$, where $\Delta$ is chosen sufficiently small as discussed in Theorem \ref{thm:alignment_one_monomial}. We can use the same proof strategy as in Section \ref{sec:cor_1_part_a_parity} and construct good features 
\[
\E_G [ \sigma ( \Delta z_1 + b + G )]
\]
for any $b \in [-\Delta, \Delta]$, by considering neurons with initializations $\{ (\bw_{j_1}^0,b_{j_1}^0) , \ldots , (\bw_{j_R}^0,b_{R}^0) \}$ with $w_{j_s}^1 = 1/\sqrt{d}$ and $\sign (a_{j_s}^0 ) = \sign (\mu_D (0) w_{j_s , 1}^D)$, and $|b_{j_s}^0 - b | \leq r$ (by an easy modification of Lemma \ref{lem:construct-smoothed-activation}). We will take sufficiently many neurons (but still independent of $d$) so that we have a sufficiently large $R$ for any intervals of size $r$ for $b \in [-\Delta, \Delta]$ with high probability.

Let us now construct a certificate for $\He_D (z_1)$ based on these good features. By a Taylor approximation, for any $0<s<1$ and $x  \in [-s,+s]$,
\[
\begin{aligned}
\E_G [  \sigma ( x + b + G )] =&~ \sum_{k=0}^D \frac{\mu_k (\sigma)}{k!} (x+b)^k + O(s^{D+1} + \Delta^{D+1}) \\ 
=&~ \sum_{k = 0}^D b^k \left[ \sum_{s = 0}^{D -k} \frac{\mu_{k+s}(\sigma)}{(k+s)!} {{k+s}\choose{s}} x^s \right] +  O(s^{D+1} + \Delta^{D+1}) \\ 
=: &~ \sum_{k = 0}^D b^k Q_{D-k} (x) + O(s^{D+1} + \Delta^{D+1}) \, .
\end{aligned}
\]
We can consider measures with density $\nu_\ell(b)$ with respect to $b \sim \Unif([-\Delta , \Delta])$ such that
\[
\begin{aligned}
\int_{-\Delta}^\Delta \E_G [  \sigma ( x+ b + G )] \nu_\ell (b) \de b =&~ \sum_{k=0}^P Q_{P-k} (x) \int_{-\Delta}^\Delta b^k \nu_\ell (b) \de b + O(s^{D+1} + \Delta^{D+1}) \\
=&~ Q_{D-\ell} (x) + O(s^{D+1} + \Delta^{D+1})  \, .
\end{aligned}
\]
Note that the polynomials $\{ Q_k\}_{k =0, \ldots , D}$ are linearly independent (distinct degrees) with coefficients that only depend on $D,K$. Hence we can take a linear combination of $\nu_\ell(b)$ with coefficients that only depend on $D$ and $K$ such that we have $\Tilde{\nu}_\ell$ with
\[
\int_{-\Delta}^\Delta \E_G [  \sigma ( x+ b + G )] \Tilde{\nu}_\ell (b) \de b = x^\ell + O(s^{D+1} + \Delta^{D+1}) \, . 
\]
In particular, we can rescale and sum these coefficients such that for some $\nu_* (b)$ that has second moment bounded by $1/\Delta^{CD}$,
\[
\int_{-\Delta}^\Delta \E_G [  \sigma ( \Delta z + b + G )] \nu_* (b) \de b =\He_D (z_1) + O(s(s/\Delta)^{D} + \Delta) \, .
\]
We can now construct a certificate by sampling $b_s$ from the signed measure $\nu_* (b)$, and for each $b_s$ constructing an approximate good feature, as described in Lemma \ref{lem:construct-smoothed-activation}. The proof for the low test error then follows from applying the bound on the least squares linear regression of Lemma \ref{lem:second-layer-certificate-square-loss}.

For the lower bound with early stopping, we use that
\[
| \E_\bx [ \sigma( \< \bw, \bx\> + b) \He_D (x_1) ]| = | \E_\bx [ w_1^D \sigma^{(D)} ( \< \bw, \bx\> + b) ] | \leq K | w_1|^D \, ,
\]
and we can conclude using the same argument as in Section \ref{sec:cor_1_part_b_parity}.

\subsection{Proof of Corollary \ref{cor:sequential_learning}: sequential learning of monomials}\label{sec:proof_seq_learning}
Let us formally state Corollary~\ref{cor:sequential_learning} and prove it.
\begin{corollary}[Second layer training, sum of monomials; formal statement]\label{cor:sequential_learning_formal}
Let $$h_* (\bz) = \sum_{l \in [L]} \prod_{i=1}^{P_l} z_l$$ for some $P_1 < P_2 < \dots < P_L = P$. Then there exists an activation function $\sigma : \R \to \R$ such that the following is true. For any constants $C_*>0$ and $\eps>0$, there exist $C_i$ for $i= 0, \ldots , 10$, that only depend on $D, K$ and $C_*$ such that taking width $M = C_0 \eps^{-C_0}$,  bias initialization scale $\rho = C_1$, second-layer initialization scale $\kappa = \frac{1}{C_2 M d^{C_2}}$, second-layer regularization $\lambda_a = M\eps / C_3$, and $\Delta = \eps^{C_4} / C_4 $, and $r = \eps^{C_{5}} / C_{5} $, and
\[
\begin{aligned}
\oT_1 =&~ C_6 d^{D-1} \log(d)^{C_6}\, , \qquad &&\eta_1 = \frac{1}{C_7 \kappa d^{D/2}\log(d)^{C_7}}\, , \\
\oT_2 =&~ C_8 \eps^{-C_8} \, , \qquad &&\eta_2 =  \eps^{C_9}/(C_9 M)\, ,
\end{aligned}
\]
we have for large enough $d \geq C_{10}\eps^{-C_{10}}$, that with probability at least $1 - d^{-C_*} - \eps$ at the end of the dynamics,
\[
R ( \bTheta^{\oT_1 + \oT_2} ) \leq \eps \, .
\]
\end{corollary}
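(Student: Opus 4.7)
The proof follows the template of Corollary~\ref{cor:learning_one_monomial}, the new challenge being to express the sum of monomials $h_*(\bz) = \sum_{l\in[L]} z_1 \cdots z_{P_l}$ as a linear combination of trained-neuron activations. I first apply Theorem~\ref{thm:sequential_alignment} with the stated $(\oT_1,\eta_1,\kappa,\Delta,r)$: with probability at least $1 - Md^{-C_*}$, for each neuron $j\in[M]$ whose initialization satisfies the sign condition $a_j^0\,\mu_{\oD_l}(\sigma)\prod_{i\leq P_l}\sign(w^0_{j,i}) > 0$ for all $l \in [L]$, the trained weights obey $|w^{\oT_1}_{j,i} - \sign(w^0_{j,i})\Delta| = O(1/\sqrt{d\log d})$ for $i\in[P]$ and $|w^{\oT_1}_{j,i} - w^0_{j,i}| = O(r^2/\sqrt d)$ for $i>P$, with $\|\bw^{\oT_1}_{j,P+1:d}\|_2=1$. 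Fix once and for all an activation $\sigma$ satisfying Assumption~\ref{ass:sigma_I} and, additionally, $\mu_{P_l}(\sigma) \neq 0$ for every $l\in[L]$; this is an open-dense property (e.g., holds for a shifted sigmoid for almost every shift).

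\textbf{Low-norm certificate via characters.} Mirroring Lemma~\ref{lem:certificate-formal-monomial}, I build a certificate $\bTheta^{\text{cert}} = (\bW^{\oT_1},\ba^{\text{cert}})$ as follows. Partition the good neurons by their first-layer sign pattern $\bdelta_j := (\sign(w^0_{j,i}))_{i\in[P]} \in \{\pm 1\}^P$, yielding sets $S_\bdelta$. By a Hoeffding bound over the $M = C_0\eps^{-C_0}$ random sign vectors plus a union bound over the $2^P$ patterns, with probability at least $1-\eps$ we have $|S_\bdelta| \geq M/2^{P+2}$ for every $\bdelta$. For each level $l$ define
\[
c^{(l)}_\bdelta := \frac{P_l!}{2^P\,\Delta^{P_l}\,\mu_{P_l}(\sigma)}\prod_{i=1}^{P_l}\delta_i,
\]
and set $a^{\text{cert}}_j := |S_{\bdelta_j}|^{-1}\sum_{l\in[L]} c^{(l)}_{\bdelta_j}$ on $\bigcup_\bdelta S_\bdelta$ (and zero otherwise). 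A direct computation yields $\|\ba^{\text{cert}}\|_2^2 = O(1/(M\Delta^{2D}))$.

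\textbf{Certificate error and second-layer convergence.} To control $R^{\text{sq}}(\bTheta^{\text{cert}})$, generalize Lemma~\ref{lem:construct-smoothed-activation} to the biased setting: averaging neurons in $S_\bdelta$ with similar biases $b_j \approx b$ approximates $\sigma_{\text{smooth}}(\Delta\langle\bdelta,\bz\rangle + b) := \E_G[\sigma(\Delta\langle\bdelta,\bz\rangle + b + G)]$ in $L^2$ up to $O(1/\sqrt M + r + 1/\sqrt{d\log d})$. Taylor-expanding $\sigma_{\text{smooth}}(\Delta u + b) = \sum_{k\leq D+1}\mu_k^{(b)}(\sigma)(\Delta u)^k/k! + O(\Delta^{D+2}|u|^{D+2})$ and applying the character identity $2^{-P}\sum_\bdelta \bdelta^T \prod_{i\leq P_l}\delta_i = \ind\{T_i + \ind_{i\leq P_l} \text{ even for all } i\}$, the only multi-indices $T$ that contribute to the $l$-th piece of the certificate have $T_i$ odd on $[P_l]$ and even outside. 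The unique such $T$ with $|T|=P_l$ is $T=\mathbf 1_{[P_l]}$, which reproduces $z_1\cdots z_{P_l}$ exactly; every other contributing $T$ has $|T|\geq P_l + 2$, so after absorbing the $\Delta^{-P_l}$ factor in $c^{(l)}_\bdelta$ its contribution is $O(\Delta^2)$ in $L^2$. Summing over $L=O(1)$ levels gives $\sum_{l,\bdelta}c^{(l)}_\bdelta\,\sigma_{\text{smooth}}(\Delta\langle\bdelta,\bz\rangle + b) = h_*(\bz) + O(\Delta^2)$, whence $R^{\text{sq}}(\bTheta^{\text{cert}}) \leq \eps/4$ for $\Delta = \eps^{C_4}/C_4$ with $C_4$ large enough. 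Finally, applying Lemma~\ref{lem:second-layer-certificate-square-loss} verbatim as in Section~\ref{sec:cor_1_part_a_parity}, with $\lambda_a = \Theta(M\eps\Delta^{2D})$, $\eta_2 = \Theta(\eps^{C_9}/M)$, and $\oT_2 = \Theta(\eps^{-C_8})$, yields $R^{\text{sq}}(\bTheta^{\oT_1+\oT_2}) \leq \eps$ with probability at least $1 - d^{-C_*} - \eps$.

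\textbf{Main obstacle.} The central technical hurdle is the last step: verifying that the character-based certificate, designed to pick out a single monomial in Corollary~\ref{cor:learning_one_monomial}, continues to work when summed over multiple levels that share the same aligned support $[P]$. The saving grace is that the characters $\prod_{i\leq P_l}\delta_i$ are orthogonal on $\{\pm 1\}^P$, so different levels do not interact at the leading Hermite order; the cross-level higher-degree contamination is merely inflated by a factor $L=O(1)$. Quantitatively, the non-vanishing $\mu_{P_l}(\sigma)\neq 0$ for every $l$ (ensured by our choice of $\sigma$ together with Assumption~\ref{ass:sigma_I}) keeps each $\alpha_l$ a finite constant in $d$, so that $\|\ba^{\text{cert}}\|$ and the certificate error are both polynomial in $1/\eps$ and independent of $d$, making the ridge-regression analysis of Lemma~\ref{lem:second-layer-certificate-square-loss} applicable.
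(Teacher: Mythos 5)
There is a genuine gap, and it is exactly the issue that forces the corollary to be stated as ``there exists an activation function'' rather than for all activations satisfying Assumption~\ref{ass:sigma_I}. Your certificate needs, for \emph{every} sign pattern $\bdelta \in \{\pm 1\}^P$, a constant fraction of neurons whose trained first-layer weights are $\approx \Delta\bdelta$ on the support; you claim this via a Hoeffding bound over the initialization. But Theorem~\ref{thm:sequential_alignment} only guarantees alignment for neurons satisfying the \emph{simultaneous} sign condition $a^0_j\,\mu_{\oD_l}(\sigma)\,w^0_{j,1}\cdots w^0_{j,P_l}>0$ for all $l\in[L]$. With a fixed activation the signs $\mu_{P_1}(\sigma),\dots,\mu_{P_L}(\sigma)$ are fixed, so this condition is solvable in $\sign(a^0_j)$ only when $\sign(\mu_{P_l}(\sigma))\prod_{i\le P_l}\delta_i$ is the same for all $l$; this imposes $L-1$ parity constraints on $\bdelta$, so for $L\ge 2$ only a $2^{-(L-1)}$ fraction of the $2^P$ sign patterns admit \emph{any} aligned neurons, and the theorem says nothing about the remaining patterns (their drift has the wrong sign at some leap, and they need not reach $\Delta\bdelta$). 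Hence most of your sets $S_{\bdelta}$ are not populated by aligned neurons, and the full-group character identity $2^{-P}\sum_{\bdelta}(\prod_{i\le P_l}\delta_i)\prod_{i\in T}\delta_i$ that isolates each monomial is unavailable. Restricting the sum to the admissible coset does not repair it: on that coset the top character $\prod_{i\le P_L}\delta_i$ coincides (up to a fixed sign) with lower characters $\prod_{i\le P_l}\delta_i$, so the lower-degree Taylor terms $\mu_{P_l}^{(b)}\Delta^{P_l}\langle\bdelta,\bz\rangle^{P_l}$ are no longer annihilated and are amplified by your normalization $c^{(L)}_{\bdelta}\propto \Delta^{-P_L}$, producing contributions of order $\Delta^{-(P_L-P_l)}$ that blow up as $\Delta\to 0$. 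So the step ``fix once and for all a generic $\sigma$'' cannot work as written with the paper's first-layer training guarantee.

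The paper's proof handles precisely this ``neurodiversity'' obstruction by \emph{constructing} the activation: a sum of truncated Hermite bumps placed at well-separated bias offsets, so that for every prescribed sign vector $\bs\in\{\pm1\}^P$ there is a bias $b(\bs)$ with $\mu_k(\sigma_{b(\bs)})=s_k$ for all $k\in[P]$. With constant-scale bias initialization $\rho=C_1$ (note your proposal implicitly wants small biases, while the statement fixes $\rho$ at a constant for this reason), for every weight-sign pattern $\bdelta$ there is a bias window in which the sequential sign condition of Theorem~\ref{thm:sequential_alignment} holds, so aligned neurons with pattern $\bdelta$ and bias near $b(\bs)+\zeta$, $\zeta\in[-\Delta,\Delta]$, do exist; each power $\langle\bdelta,\bz\rangle^{P_l}$ is then extracted by integrating against a signed measure $\nu_{P_l}(\zeta)$ over the bias perturbation, as in the single-Hermite case, before summing the character combination over all $\bdelta$. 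A secondary (fixable) inconsistency in your write-up is that your coefficients $c^{(l)}_{\bdelta}$ use the unshifted $\mu_{P_l}(\sigma)$ while your expansion is in the shifted coefficients $\mu_k^{(b)}(\sigma)$; but the essential missing idea is the bias-dependent sign control of the Hermite coefficients, without which the certificate cannot be built from the neurons that Theorem~\ref{thm:sequential_alignment} actually controls.
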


In contrast to the proof of Corollary~\ref{cor:learning_one_monomial}, we only prove this result for ``diverse'' enough activation functions. For the proof, we will construct a specific activation function that have this ``diversity'' property. This activation depends on $P$ (or upper bound on $P$), but otherwise is independent of $h_*$. The idea is that we will use biases of different magnitudes, which will change the signs of the Hermite coefficients of the activation, in order to ensure enough neurodiversity to learn the sum of increasing monomials. This is required due to the specific choice of training of the first layer weights considered in this paper. However, we show in simulations that standard ReLus activations are enough to learn these functions.

\paragraph*{Construction of activation function}
For any bias $b \in \R$, define $\sigma_b(x) = \sigma(x + b)$. We construct the activation function such that for all $\bs \in \{+1,-1\}^P$ there is a bias $ b(\bs) \in [-C,C]$ satisfying 
\begin{align}\label{eq:fancy-activation-hermite-guarantee}
\mu_k(\sigma_{b(\bs)}) = s_k \mbox{ for all } k \in [P],
\end{align}
for all $i \in [P]$. This can be achieved as follows. Let $\tau > 0$ be a constant that we will take large enough. Then for any $k$, define the ``truncated Hermite function''
\begin{align*}
p_{k,\tau}(x) = \He_k(x) m_{\tau}(x)\,,
\end{align*}
where $m_{\tau} : \R \to [-1,1]$ is a compactly-supported smooth function such that $$m_{\tau}(x) = \begin{cases} 0, & x \not\in [-\tau,\tau] \\ 1, & x \in [-\tau/2,\tau/2] \\ \in [-1,1], & \mbox{ otherwise}\end{cases}.$$
We order the sign vectors $\bs^{(1)},\ldots,\bs^{(2^P)} \in \{+1,-1\}^P$ arbitrarily. The bias $b(\bs^{(i)})$ is given by $b(\bs^{(i)}) = -4i\tau$. The activation function $\sigma : \R \to \R$ is given by
\begin{align*}
\sigma(x) = \sum_{\bs \in \{+1,-1\}^P} \sum_{k \in [P]} \gamma_{\bs,k} p_{k,\tau}(x - b(\bs))\,,
\end{align*}
for some choice of coefficients $\gamma_{\bs,k} \in \R$ depending only on $P$. This satisfies Assumption~\ref{ass:sigma_I} because $p_{k,\tau}$ is uniformly-bounded and has uniformly-bounded first $P+3$ derivatives. It remains to show that we can choose the coefficients $\gamma_{\bs,k}$ so that \eqref{eq:fancy-activation-hermite-guarantee} holds. This is because for any $\bs$ we have
\begin{align*}
\mu_k(\sigma_{b(\bs)}) = \E_G[\He_k(G) \sigma(G + b(\bs))] = \sum_{\bs' \in \{+1,-1\}} \sum_{k' \in [P]} \gamma_{\bs',k'} A_{(\bs',k'),(\bs,k)},
\end{align*}
where
\begin{align*}
A_{(\bs',k'), (\bs,k)} = \E_{G}[p_{k',\tau}(G - b(\bs') + b(\bs)) \He_k(G)]\,.
\end{align*}
And we show that $A_{(\bs',k'),(\bs,k)}$ is invertible when viewed as a $P2^P \times P2^P$ matrix.
For large enough $\tau$ depending on $k$, the diagonal elements are lower-bounded by a constant:
\begin{align*}
A_{(\bs,k),(\bs,k)} = \E_G[\He_k(G)\He_k(G)m_{\tau}(G)] > 1/2\,,
\end{align*}
And the off-diagonal elements are small. When $\bs \neq \bs'$, for large enough $\tau$ we have
\begin{align*}
|A_{(\bs',k'),(\bs,k)}| &= |\E_{G}[\He_{k'}(G - b(\bs') + b(\bs)) m_{\tau}(G - b(\bs') + b(\bs)) \He_k(G) ]| \\
&\leq C\int_{-\tau+b(\bs') - b(\bs)}^{\tau+b(\bs') - b(\bs)} \exp(-x^2/2) |x-b(\bs')+b(\bs)|^{k'} |x|^{k}dx \\
&\leq C\tau \min_{s \geq 3\tau} \exp(-s^2/2) |\tau|^{k'} |s+2\tau|^{k}dx \\
&< 1/\tau\,.
\end{align*}
And similarly when $\bs = \bs'$ but $k \neq k'$, for large enough $\tau$ we have
\begin{align*}
|A_{(\bs,k'),(\bs,k)}| &= |\E_G[\He_{k'}(G) \He_{k}(G) m_{\tau}(G)]| = |\E_G[\He_{k'}(G) \He_{k}(G) (1 - m_{\tau}(G))]| \\
&\leq |\E_G[|\He_{k'}(G)| |\He_k(G)| 1(|G| > \tau/2)]| \leq C |\E_G[|G|^{k'+k} 1(|G| > \tau/2)]| \\
&< 1/\tau\,.
\end{align*}
So if we take large enough $\tau$ the system of equations defined by $A_{(\bs',k'),(\bs,k)}$ is invertible, so coefficients $\gamma_{\bs,k}$ exist such that $\sigma$ satisfies \eqref{eq:fancy-activation-hermite-guarantee}.

\paragraph*{Certificate.} 
Now we provide a certificate for learning $h_*(\bz) = z_1 \cdots z_{P_1} + z_1 \cdots z_{P_2} + \dots + z_1 \cdots z_{P_L}$, which is a linear combination of functions of the form $\E_G[\sigma(\Delta\<\bdelta,\bz\>+G+b)]$ for different $\bdelta \in \{+1,-1\}^P$ and biases $b \in [-C,C]$.

The main difficulty is that we no longer have access to $\E_G[\sigma(\Delta\<\bdelta,\bz\> + G)]$ for each $\bdelta \in \{+1,-1\}^d$, so we have to compensate by using the biases. For each $\bdelta \in \{+1,-1\}^P$, let $\bs \in \{+1,-1\}^P$ be a sign vector such that $s_l \prod_{i=1}^{P_l} \delta_{i} > 0$ for all $l \in [L]$. Then, by Theorem~\ref{thm:sequential_alignment} and the guarantee from \eqref{eq:fancy-activation-hermite-guarantee} a constant fraction of neurons $j$ after training the first layer have $\bw^{\oT_1}_{j,1:P} \approx \Delta \bdelta$ and bias $b_j \approx b(\bs) + \zeta$ for any $\zeta \in [-\Delta,\Delta]$. (Note that we can apply Theorem~\ref{thm:sequential_alignment} despite its restriction that $\rho \in [-\Delta,\Delta]$, since we only care about the result holding for neurons whose bias is in $b(\bs) + [-\Delta,\Delta]$ for different $\bs$). So by Lemma~\ref{lem:construct-smoothed-activation}, we can combine them first layer to approximate $\E_{G}[\sigma(\Delta\<\bdelta,\bz\> + G + b(\bs) + \zeta)]$ for any $\zeta\in [-\Delta,\Delta]$. By an analogous argument to Section~\ref{sec:single_hermite_case}, we can find a measure with density $\nu_k$ with respect to $\zeta \sim \Unif([-\Delta,\Delta])$ that allows us to approximate
\begin{align*}
\int_{-\Delta}^{\Delta}\E_{G}[\sigma(\Delta\<\bdelta,\bz\> + G + b(\bs) + \zeta)] \nu_k(\zeta) d\zeta = \<\bdelta,\bz\>^k + O(\Delta)\,,
\end{align*}
and where $\nu_k(\zeta)$ has second moment bounded by $1/\Delta^{Ck}$. Since we can estimate $\<\bdelta,\bz\>^k$ to $O(\Delta)$ error for each $\bdelta \in \{+1,-1\}^P$, we can approximate $h_*$ via a linear combination
\begin{align*}
\sum_{\bdelta \in \{+1,-1\}^P} \sum_{l=1}^{L} \left(\prod_{i=1}^{P_l} \delta_i \right) \big(\<\bdelta,\bz\>^{P_l} + O(\Delta)\big) = h_*(\bz) + O(\Delta)\,.
\end{align*}
We conclude analogously to the proof of Corollary~\ref{cor:learning_one_monomial}, using the bounded-norm certificate to obtain a generalization guarantee.

\subsection{Technical result: last iterate convergence of SGD on linear models}\label{sec:technical-fitting}

We analyze of the last iterate for online-SGD on a linear model with ridge-regularized least-squares loss by using the well-known bias-variance decomposition \cite{jain2017markov}. A very similar analysis also appears in the appendix of \cite{abbe2022merged}; the key difference is that we analyze online gradient descent with one sample per iteration (as opposed to online minibatch gradient descent) with a small learning rate in order to match the setting of the theorem. Compare also to \cite{zhang2004solving} which gives final-iterate bounds for the final risk, but these hold in expectation instead of with exponentially high probability.

Given an embedding of data $\phi(\bx) \in \R^N$, consider training a linear model $\<\ba, \phi(\bx)\>$ with online-SGD. In this section, write the square loss as
\begin{align*}
\cL(\ba) = \frac{1}{2}\E_{\bx,y}[(y - \<\ba,\phi(\bx)\>)^2]\,.
\end{align*}
For a parameter $\lambda_a > 0$, the ridge-regularized square loss is 
\begin{align*}
\cL_{\lambda_a}(\ba) = \cL(\ba) + \frac{\lambda_a}{2} \|\ba\|^2
\end{align*}
Each iteration of the dynamics of online-SGD on the ridge-regulariezd square loss is is given by
\begin{align*}
\ba^{t+1} &= (1 - \lambda_a) \ba^t + \eta (y^t - \<\ba^t, \phi(\bx^t)\>) \phi(\bx^t)\,.
\end{align*}

\begin{lemma}[Analysis of online-SGD on linear model with ridge-regularized square loss]\label{lem:second-layer-certificate-square-loss}
There is a universal constant $C > 0$ such that following holds. Suppose there is $B_1 \geq 1$ such that $\|\phi(\bx)\| \leq B_1 \sqrt{N}$ almost surely, and $|y^s| \leq B_1$ for all $0 \leq s \leq t$, and $\E[y^2] \leq B_1^2$, and $\lambda_a \leq N$. Then for any $\ba^{cert} \in \R^N$
\begin{align*}
\P\left[\cL(\ba^t) \geq \cL_{\lambda_a}(\ba^{cert}) + C B_1^2 N \left((1-\lambda_a \eta)^{2t} (\|\ba^0\|^2 + \frac{B_1^2}{\lambda_a}) + \log(t / \delta) \frac{\eta B_1^6 N^2}{\lambda_a^2}\right)\right] \leq \delta\,.
\end{align*}
\end{lemma}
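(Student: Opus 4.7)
The plan is to carry out a bias–variance decomposition for SGD on a strongly convex quadratic, in the style of \cite{jain2017markov} and the analogous lemma in the appendix of \cite{abbe2022merged}. Let $\ba^* := \argmin_\ba \cL_{\lambda_a}(\ba) = (\Sigma + \lambda_a I)^{-1}\E[y\phi(\bx)]$ with $\Sigma := \E[\phi(\bx)\phi(\bx)^\top]$. Three preliminaries will be used throughout: (i) $\|\Sigma\|_{\op} \leq B_1^2 N$ from the almost-sure bound on $\|\phi(\bx)\|$; (ii) $\cL(\ba^*) \leq \cL_{\lambda_a}(\ba^*) \leq \cL_{\lambda_a}(\ba^{\mathrm{cert}})$ by optimality; (iii) $\|\ba^*\|^2 \leq 2\cL_{\lambda_a}(0)/\lambda_a \leq B_1^2/\lambda_a$. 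Combined with the quadratic identity $\cL(\ba) - \cL(\ba^*) = \frac{1}{2}(\ba-\ba^*)^\top \Sigma (\ba-\ba^*) \leq \frac{1}{2} B_1^2 N \|\ba-\ba^*\|^2$, it suffices to produce a high-probability bound on $\|\ba^t - \ba^*\|^2$.

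Rewriting the update (reading the weight-decay factor consistently with the stated rate $(1-\lambda_a\eta)^{2t}$) as $\ba^{t+1} - \ba^* = (I - \eta A^t)(\ba^t - \ba^*) + \eta \zeta^t$ with $A^t := \lambda_a I + \phi(\bx^t)\phi(\bx^t)^\top$ and $\zeta^t := (y^t - \langle \ba^*,\phi(\bx^t)\rangle)\phi(\bx^t) - \lambda_a \ba^*$, the first-order condition at $\ba^*$ gives $\E[\zeta^t]=0$, and the hypotheses yield $\|\zeta^t\| \leq C B_1^3 N / \sqrt{\lambda_a}$ a.s.\ (using $\lambda_a \leq N$ and (iii)). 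Split $\ba^t - \ba^* = \be^{\mathrm{b},t} + \be^{\mathrm{v},t}$ into a noise-free sequence $\be^{\mathrm{b},t+1} = (I - \eta A^t)\be^{\mathrm{b},t}$ with $\be^{\mathrm{b},0} = \ba^0 - \ba^*$, and a noise-driven sequence $\be^{\mathrm{v},t+1} = (I - \eta A^t)\be^{\mathrm{v},t} + \eta \zeta^t$ with $\be^{\mathrm{v},0} = 0$. Provided $\eta(\lambda_a + B_1^2 N)\leq 1$, each $I-\eta A^t$ is PSD with $\|I-\eta A^t\|_{\op} \leq 1-\eta\lambda_a$, yielding the deterministic bias bound
$$\|\be^{\mathrm{b},t}\|^2 \leq (1-\eta\lambda_a)^{2t}\|\ba^0-\ba^*\|^2 \leq 2(1-\eta\lambda_a)^{2t}\big(\|\ba^0\|^2 + B_1^2/\lambda_a\big).$$

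The technical core is the variance bound $\|\be^{\mathrm{v},t}\|^2 \lesssim \log(t/\delta)\cdot \eta B_1^6 N^2 / \lambda_a^2$, and this is the main obstacle. Unrolling gives $\be^{\mathrm{v},t} = \eta \sum_{s=0}^{t-1}\Pi_{s+1,t-1}\,\zeta^s$ with $\Pi_{s,t} := (I-\eta A^t)\cdots (I-\eta A^s)$, but $\Pi_{s+1,t-1}$ and $\zeta^s$ are coupled through the subsequent $\bx^{s'}$, obstructing a direct Azuma argument. The workaround proceeds in two stages. First, an inductive second-moment calculation based on $\E[\|\be^{\mathrm{v},t+1}\|^2 \mid \cF_{t-1}] \leq (1-\eta\lambda_a)^2 \|\be^{\mathrm{v},t}\|^2 + \eta^2 \E\|\zeta^t\|^2 + \text{(lower order)}$—where the residual cross-term from $\E[A^t\zeta^t]\ne 0$ is $O(\eta^3)$ and absorbed in the contraction—yields $\E\|\be^{\mathrm{v},t}\|^2 \lesssim \eta B_1^6 N^2/\lambda_a^2$. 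Second, applying Freedman's inequality to the scalar submartingale $\|\be^{\mathrm{v},t}\|$ with bounded increments $\eta\|\zeta^t\| \lesssim \eta B_1^3 N/\sqrt{\lambda_a}$ and conditional quadratic variation controlled by the first-stage bound produces the $\log(t/\delta)$ factor with probability at least $1-\delta$. Combining via $\cL(\ba^t)-\cL_{\lambda_a}(\ba^{\mathrm{cert}}) \leq B_1^2 N\big(\|\be^{\mathrm{b},t}\|^2 + \|\be^{\mathrm{v},t}\|^2\big)$ gives the claim.
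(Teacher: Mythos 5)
Your overall architecture matches the paper's: decompose around the ridge optimum $\ba^*$, reduce the excess loss to $\|\ba^t-\ba^*\|^2$ via $\|\bH\|_{\op}\lesssim B_1^2N$, split into a noise-free bias recursion contracted by $(1-\eta\lambda_a)$ and a zero-mean noise-driven variance recursion, and bound $\|\ba^*\|^2\le B_1^2/\lambda_a$; your bias bound and your bounds on $\|\bzeta^t\|$ are correct and essentially identical to the paper's (both proofs, yours explicitly and the paper's implicitly, also need $\eta$ small enough that $\|\id-\eta(\phi(\bx^t)\phi(\bx^t)^\top+\lambda_a\id)\|_{\op}\le 1-\eta\lambda_a$, a condition absent from the lemma statement).

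The genuine gap is in your second-stage concentration argument for the variance term. First, $\|\be^{\mathrm{v},t}\|$ is not a martingale, and Freedman's inequality does not apply to it directly; after a Doob decomposition the predictable part is exactly where the contraction has to be exploited, and handling it is the crux, not a formality. Second, and more importantly, your claim that the conditional quadratic variation is ``controlled by the first-stage bound'' conflates an unconditional second-moment bound $\E\|\be^{\mathrm{v},s}\|^2\lesssim \eta B_1^6N^2/\lambda_a^2$ with what Freedman actually requires, namely control of the random predictable quadratic variation $\sum_{s<t}\E[(\Delta_s)^2\mid\cF_{s-1}]$ along the trajectory. If you center the increments of $\|\be^{\mathrm{v},s}\|$ (or of $\|\be^{\mathrm{v},s}\|^2$) and sum their conditional variances without re-using the geometric damping inside the concentration step, you get a quantity of order $t\,\eta^2\E\|\bzeta\|^2\sim t\,\eta^2 B_1^6N^2/\lambda_a$, which exceeds the target $\eta B_1^6N^2/\lambda_a^2$ by a factor of roughly $t\eta\lambda_a$; for the relevant horizons $t\gg 1/(\eta\lambda_a)$ this is fatal. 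A correct execution must fold the contraction into the concentration argument itself, e.g.\ by rescaling by $(1-\eta\lambda_a)^{-t}$ to obtain a supermartingale and stopping it to keep the (exploding) increments bounded, or equivalently by proving a self-normalized bound for a process satisfying $\E[X_{t+1}\mid\cF_t]\le a+cX_t$ and $|X_{t+1}-cX_t|\le M+2\sqrt{cMX_t}$ — this is precisely the paper's Lemma~\ref{lem:technical-martingale-bound}, applied to the squared norm of the variance term (after dominating it by an auxiliary sequence $\tilde m_t$ to decouple the product of random contractions from the noise). You correctly identified the coupling obstruction and the need for a $\log(t/\delta)$-type martingale bound, but the step that resolves it is missing from your proposal; as written, the variance bound does not go through. (Your observation that the residual cross term from $\E[A^t\bzeta^t]\neq 0$ is $O(\eta^3)$ and absorbable is fine, and is in fact spelled out more carefully than in the paper.)
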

\begin{proof}
Let $\ba^*$ be the minimizer of $\cL_{\lambda_a}$, which is unique by strict convexity when $\lambda_a > 0$. We prove the following convergence to the optimum. For any iteration $t$, define the gap to optimality
\begin{align*}
\balpha^t = \ba^t - \ba^*\,.
\end{align*}
Defining $\bH = \E_\bx[\phi(\bx) \otimes \phi(\bx)] + \lambda_a \bI$ and $\bv = \E[\phi(\bx)y]$, the excess loss at iteration $t$ equals
\begin{align*}\cL_{\lambda_a}(\ba^t) - \cL_{\lambda_a}(\ba^*) &= \frac{1}{2} \<\ba^t \otimes \ba^t, \bH\> - \frac{1}{2} \<\ba^* \otimes \ba^*, \bH\>-\<\bv,\ba^t - \ba^*\> \\
&= \frac{1}{2} \<\balpha^t \otimes \balpha^t, \bH\> ,
\end{align*}
by the first-order optimality condition $\bH \ba^* = \bv$. So 
\begin{align}\label{eq:linear-optimality-gap}
\cL_{\lambda_a}(\ba^t) - \cL_{\lambda_a}(\ba^*) &\leq \frac{1}{2} \|\balpha^t\|^2 (\E_\bx \|\phi(\bx)\|^2 + \lambda_a)\,.
\end{align}
It remains to bound $\|\balpha^t\|$. We write the evolution of $\balpha^t$ as:
\begin{align*}
\balpha^{t+1} = \bP^t \balpha^t + \eta \bzeta^t
\end{align*}
where
\begin{align*}
\bP^t = \bI - \eta (\phi(\bx^t) \otimes \phi(\bx^t) + \lambda_a \bI) \\
\bzeta^t = y^t \phi(\bx^t) - (\phi(\bx^t) \otimes \phi(\bx^t) + \lambda_a \bI) \ba^*\,.
\end{align*}

Inductively, one obtains the well-known ``bias-variance'' decomposition $$\balpha^t = \underbrace{\Big(\prod_{l=t-1}^{0} \bP^l\Big) \balpha^0}_{(\mbox{Bias term})} + \underbrace{\eta \sum_{j=0}^{t-1} \Big(\bP^{t-1} \cdots \bP^{j+1} \Big) \bzeta^j}_{(\mbox{Variance term})}.$$

Notice that (a) $\bzero \lesssim \bP^l \lesssim (1 - \eta \lambda_a) \bI$, and (b) $\cL_{\lambda_a}(\ba) \geq \frac{\lambda_a}{2} \|\ba\|^2$ and $\cL_{\lambda_a}(\bzero) = \frac{1}{2} \E[y^2]$, so
\begin{align}\label{eq:bias-term-bound}
\|\mbox{(Bias term)}\| \stackrel{(a)}{\leq} (1 - \eta\lambda_a)^t \|\balpha^0\| \stackrel{(b)}{\leq} (1-\eta\lambda_a)^t(\|\ba^0\| + \sqrt{\E[y^2] / \lambda_a})\,.
\end{align}

To bound the variance term, define the norm squared of the variance term:
$$m_t = \eta^2\Big\|\sum_{j=0}^{t-1} \left(\bP^{t-1} \cdots \bP^{j+1} \right) \bzeta^j\Big\|^2\,.$$
Also, for any time $t$, define $\tilde{m}_0 = 0$ and $$\tilde{m}_{t+1} = \eta^2 \|\bzeta^t\|^2 + 2 \eta^2 \Big\<\bzeta^t, \bP^t \sum_{j=0}^{t-1} (\bP^{t-1} \dots \bP^{j+1}) \bzeta^j \Big\> + (1- \eta \lambda_a)^2 \tilde{m}_t.$$
By induction on $t$, we can show that $\tilde{m}_t \geq m_t$ at all times $t$. The base case is clear since $m_0 = \tilde{m}_0 = 0$. The inductive step is:
\begin{align*}
m_{t+1} &= \eta^2 \Big\|\bzeta^{t} + \bP^{t}\sum_{j=0}^{t-1} (\bP^{t-1} \dots \bP^{j+1}) \bzeta^j \Big\|^2 \\
&\leq \eta^2 \|\bzeta^t\|^2 + 2 \eta^2 \Big\<\bzeta^t, \bP^t \sum_{j=0}^{t-1} (\bP^{t-1} \dots \bP^{j+1}) \bzeta^j \Big\> + (1-\eta \lambda_a)^2 m_t^2 \\
&\leq \tilde{m}_{t+1}\,,
\end{align*}
where we use the inductive hypothesis $\tilde{m}_t \geq m_t$.

The reason we study $\tilde{m}_t$ instead of $m_t$ is because it satisfies these bounded differences:
\begin{align}\label{ineq:tilde-m-bounded-difference}
|\tilde{m}_{t+1} - (1-\eta \lambda_a)^2 \tilde{m}_t| \leq \eta^2\|\bzeta^t\|^2 + 2\eta\|\bzeta^t\|(1-\eta \lambda_a)\sqrt{\tilde{m}_t}\,.
\end{align}
Furthermore, let $\cF_t = \sigma \big(\{ \bx^s,y^s \}_{s \leq t} \big)$ be the history until time $t$. Since $\E[\bzeta^t \mid \cF_{t-1}] = \E[\bzeta^t \mid \cF_{t-1}] = \bv - \bH \ba^* = \bzero$,
\begin{align}\label{ineq:tilde-m-expectation-recurrence}
\E[\tilde{m}_{t+1} \mid \cF_{t}] &= \eta^2 \E[\|\bzeta^t\|^2] + (1-\eta \lambda_a)^2 \tilde{m}_t\,.
\end{align}
So the martingale concentration bound in Lemma~\ref{lem:technical-martingale-bound} applied to $\tilde{m}_t$ and using \eqref{ineq:tilde-m-bounded-difference} and \eqref{ineq:tilde-m-expectation-recurrence} with $c = (1-\eta\lambda_a)^2$, $a = \eta^2 \E[\|\bzeta^0\|^2]$ and $M = \max_{0 \leq t' \leq t-1} \eta^2\|\bzeta^{t'}\|^2$, yields, for any $\eps \geq \max(M/c, a^2 / Mc)$, and some large enough universal constant $C > 0$,
\begin{align*}
\P\Big[\tilde{m_t} \geq \frac{a}{1-c} + \eps\Big] \leq t\exp \Big(-\frac{\eps (1-c^2)}{ C M} \Big)\,.
\end{align*}
By applying $\|\ba^*\| \leq \sqrt{\E[y^2] / \lambda_a}$ and triangle inequalities, we have
\begin{gather*}
M \lesssim \eta^2 (B_1^4 N + ((B_1^3 N / \sqrt{\lambda_a} + \sqrt{\lambda_a} B_1)^2)) \lesssim \eta^2 B_1^6 N^2 / \lambda_a \, ,\\
a \lesssim \eta^2 B_1^6 N^2 / \lambda_a \, , \\
\eta \lambda_a \leq 1-c \leq 1 -c^2 \leq 4\eta \lambda_a \leq 4\eta N\,.
\end{gather*}
Plug this in and simplify,
\begin{align*}
\P\Big[\tilde{m}_t \geq C\frac{\eta B_1^6 N^2}{\lambda_a^2} + \eps \Big] \leq t\exp \Big(-\frac{\eps 
\lambda_a^2}{C 
 \eta B_1^6 N^2} \Big)\,,
\end{align*}
for all $\eps > 0$. So using $\tilde{m}_t \geq m_t = \|(\mbox{Variance term})\|^2$, there is a universal constant $C$ such that for any $0 < \delta < 1/2$,
\begin{align}\label{ineq:variance-term-bound}
\P\Big[\|(\mbox{Variance term)}\|^2 \geq C\log(t/\delta)\frac{\eta B_1^6 N^2}{\lambda_a^2} \Big] \leq \delta\,.
\end{align}

So combining \eqref{eq:bias-term-bound} and \eqref{ineq:variance-term-bound} with \eqref{eq:linear-optimality-gap},
\begin{align*}
\P\left[\cL_{\lambda_a}(\ba^t) - \cL_{\lambda_a}(\ba^*) \geq C B_1^2 N \left((1-\lambda_a \eta)^{2t} (\|\ba^0\|^2 + \frac{B_1^2}{\lambda_a}) + \log(t / \delta) \frac{\eta B_1^6 N^2}{\lambda_a^2}\right)\right] \leq \delta\,.
\end{align*}
The lemma follows by plugging in the expression for $\cL_{\lambda_a}(\ba^{cert})$ and using that $\ba^*$ is optimal, so $\cL_{\lambda_a}(\ba^*) \leq \cL_{\lambda_a}(\ba)$.
\end{proof}

\begin{lemma}[Martingale high-probability bound]\label{lem:technical-martingale-bound}
There is constant $C > 0$ such the the following holds. Suppose that $X_0,\ldots,X_t,\ldots$ are nonnegative random variables and are such that $X_0 = 0$, and $\E[X_{t+1} \mid \cF_t] \leq a + c X_t$ and almost surely $|X_{t+1} - cX_t| \leq M + 2\sqrt{cMX_t}$ for constants $M, a \geq 0$ and $0 < c < 1$. Then for any $t$ and $\eps \geq \max(M/c, a^2 / Mc)$,
\begin{align*}
\P \Big[X_t \geq \frac{a}{1-c} + \eps \Big] \leq t \exp \Big(-\frac{\eps (1-c^2)}{C M} \Big)\,.
\end{align*}
\end{lemma}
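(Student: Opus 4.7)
The strategy is to turn the self-bounding recursion into a concentration bound via a martingale-plus-stopping-time argument. First, I would iterate $X_{t+1} = cX_t + Y_t$ with $Y_t := X_{t+1} - cX_t$ to obtain the ``variation of parameters'' formula
$$X_t = \sum_{j=0}^{t-1} c^{t-1-j} Y_j,$$
valid because $X_0 = 0$. Writing $Y_j = \E[Y_j\mid\cF_j] + W_j$, where $\E[Y_j\mid\cF_j]\leq a$ and $W_j$ is a mean-zero martingale difference with $|W_j| \leq 2(M+2\sqrt{cMX_j})$, one gets
$$X_t \leq a\cdot\frac{1-c^t}{1-c} + N_t \leq \frac{a}{1-c} + N_t, \qquad N_t := \sum_{j=0}^{t-1} c^{t-1-j} W_j.$$
Thus $\{X_s > \mu + \eps\} \subseteq \{N_s > \eps\}$ for $\mu := a/(1-c)$ and any $s$, so it suffices to control $\P[N_s > \eps]$ uniformly over $s\leq t$.

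Second, to handle the $X_j$-dependent envelope on $|W_j|$, I would use a stopping argument. Let $K := \mu + \eps$, $\tau := \inf\{s : X_s > K\}$, and define the stopped martingale differences $\tilde W_j := W_j\mathbf 1\{j < \tau\}$. For $j < \tau$ one has the deterministic bound $|\tilde W_j| \leq 2(M + 2\sqrt{cMK})$. Setting $\tilde N_s := \sum_{j<s} c^{s-1-j}\tilde W_j$, observe that on $\{\tau = s\}$, $\tilde N_s = N_s > \eps$, so
$$\P[\tau \leq t] \leq \sum_{s=1}^{t} \P[\tilde N_s > \eps].$$
For fixed $s$, $\tilde N_s$ is a sum of bounded martingale differences with weights summing in square to at most $4(M+2\sqrt{cMK})^2/(1-c^2)$. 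Azuma--Hoeffding therefore yields
$$\P[\tilde N_s > \eps] \leq \exp\Big(-\frac{\eps^2(1-c^2)}{8(M+2\sqrt{cMK})^2}\Big).$$

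Third, the two hypotheses $\eps\geq M/c$ and $\eps\geq a^2/(Mc)$ are tuned so that $(M+2\sqrt{cMK})^2 = O(M\eps)$. Indeed $(M+2\sqrt{cMK})^2 \leq 2M^2 + 8cMK = 2M^2 + 8cM\mu + 8cM\eps$; the term $2M^2$ is $\leq 2cM\eps$ by $\eps\geq M/c$; the term $8cM\eps$ is already of the right form; and the term $8cM\mu = 8cMa/(1-c)$ must be absorbed using $a \leq \sqrt{Mc\eps}$ (which follows from $\eps\geq a^2/(Mc)$) together with a case split on whether $\mu \leq \eps$. Substituting this back gives a rate of $\Omega(\eps(1-c^2)/M)$ in the exponent, and summing over $s = 1,\ldots,t$ produces the claimed bound.

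The main obstacle is precisely the middle term $8cMa/(1-c)$: when $c$ is close to $1$ and $\mu$ potentially dominates $\eps$, a naive Azuma estimate using the worst-case envelope $K$ is lossy. I expect one needs either (i) to do the case analysis $\mu \leq \eps$ vs.\ $\mu > \eps$ carefully, exploiting the joint form of the two conditions on $\eps$ to show $cMa/(1-c) = O(M\eps)$ in both cases, or (ii) to refine Azuma to Freedman/Bernstein, so that the relevant quantity is the predictable quadratic variation $\sum_j c^{2(s-1-j)}\E[\tilde W_j^2\mid\cF_j]$ rather than the deterministic envelope. Either refinement must then be combined with the stopping/union-bound scheme above to yield the desired $t\exp(-\eps(1-c^2)/(CM))$ tail.
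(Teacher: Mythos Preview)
Your approach is essentially the paper's. The paper builds the supermartingale $Z_t = c^{-t}\bigl(X_t - a/(1-c)\bigr)$, stops it at $\tau = \inf\{t : X_t \geq \iota\}$ to obtain increments bounded by $c^{-t-1}\tilde M$ with $\tilde M = M + a + 2\sqrt{cM\iota}$, applies Azuma--Hoeffding, and then union-bounds over $t' \leq t$. Your decomposition $X_t \leq a/(1-c) + N_t$ together with the stopped weighted sum $\tilde N_s$ is the same construction in different coordinates (your $c^{t-1-j}$-weighted martingale is the paper's $Z_t$ rescaled by $c^t$), and it produces the same Azuma exponent $\eps^2(1-c^2)/\bigl(C\tilde M^2\bigr)$ with $\tilde M^2 \lesssim cM\iota$ once both hypotheses on $\eps$ are invoked.

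You are right that the final reduction of this exponent to $\eps(1-c^2)/(CM)$ is the sticking point; it amounts to showing $\iota = a/(1-c) + \eps \lesssim \eps$. Your case split does not close this: from $\eps \geq M/c$ and $\eps \geq a^2/(Mc)$ one can only extract $\eps \geq a/c$, not $\eps \gtrsim a/(1-c)$, so the case $\mu > \eps$ is not excluded and the desired estimate $cMa/(1-c) = O(M\eps)$ does not follow from the stated hypotheses alone. The paper's proof is loose at exactly the same step (it writes an inequality tantamount to $(\iota - a/(1-c))^2 \geq \iota^2$, which goes the wrong way). In the paper's only application of the lemma the parameters satisfy $a \asymp M$ and the bound is invoked in a regime where it is informative, so this slack does not propagate; but neither your argument nor the paper's establishes the lemma as stated without an additional restriction of the form $\eps \gtrsim a/(1-c)$.
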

\begin{proof}
Construct $Z_t = c^{-t} (X_t - \frac{a}{1-c})$. Then $Z_t$ is a super-martingale:
\begin{align*}
\E[Z_{t+1} \mid \cF_t] \leq c^{-t-1}a + c^{-t}X_t - \frac{c^{-t-1}a}{1-c} = Z_t + a c^{-t-1} \Big(1 - \frac{1}{1-c} + \frac{c}{1-c} \Big) \leq Z_t\,.
\end{align*}

Let $\tau = \inf \{t \geq 0: X_t \geq \iota\}$ be a stopping time for some $\iota > a / (1-c)$. Then $\tilde{Z}_t = Z_{\min(t,\tau)}$ is also a super-martingale. Furthermore, we have the bounded differences:
\begin{align*}
|\tilde{Z}_{t+1} - \tilde{Z}_t| &\leq |c^{-t-1}(X_{t+1} - cX_t - a)| \\
&\leq c^{-t-1}|M + 2\sqrt{cMX_t} + a| \\
&\leq c^{-t-1} (M+a + 2\sqrt{cM\iota}) =: c^{-t-1}\tilde{M} ,
\end{align*}
if $t < \tau$ and $|\tilde{Z}_{t+1} - \tilde{Z}_t| = 0$ if $t \geq \tau$.

So by the Azuma-Hoeffding inequality, since $Z_0 \leq 0$,
\begin{align*}
\P[\tilde{Z}_t \geq \eps] \leq \exp \Big(-\eps^2 / (2\sum_{j=1}^{t} c^{-2j} \tilde{M}^2) \Big) \leq \exp \Big(-\frac{1}{2}(\eps / \tilde{M})^2 c^{2t} (1-c^2) \Big)\,.
\end{align*}
Let $E$ be the event that $\tilde{Z}_{t'} < c^{-{t'}}(\iota - (a/(1-c)) \mbox{ for all } t' \in \{0,\ldots,t\}$. By a union bound,
\begin{align*}
\P[E] \geq 1  - t \exp \Big(-\frac{(\iota - (a/(1-c)))^2(1-c^2)}{2\tilde{M}^2} \Big) \geq 1 - t\exp \Big(-\frac{\iota^2(1-c^2)}{2\tilde{M}^2} \Big)\,.
\end{align*}
Finally, note that under event $E$ we have $\tilde{Z}_t = Z_t$, and $X_t < \frac{1}{1-c} + \iota$. And for $\iota \geq \max(M / c, a^2 / Mc)$ we have $\tilde{M}^2 \leq 16 c M \iota$.
\end{proof}

\clearpage

\section{Lower bounds for linear methods and CSQ methods}\label{app:lower_bounds}

\subsection{Linear methods}
We define our general linear methods as follows (see for example \cite[Appendix H]{abbe2022merged} for additional discussion). Fix a Hilbert space $(\cH, \<\cdot, \cdot\>_{\cH})$ and a feature map $\psi : \R^d \to \cH$. Given data points $(y_i,\bx_i)_{i \in [n]}$, the linear method constructs weights $\hat{\ba} \in \cH$ by minimizing the regularized empirical risk for some loss function $L : \R^{2n} \to \R \cup \{\infty\}$ and some regularization parameter $\lambda > 0$,
\begin{align*}
\hat{\ba} = \arg\min_{\ba \in \cH} \{L((y_i, \<\ba,\psi(\bx_i)\>)_{i \in [n]}) + \lambda \|\ba\|_{\cH}^2\},
\end{align*}
and estimates the target function using the linear prediction model
\begin{align*}
\hf(\bx) = \<\hat{\ba}, \psi(\bx)\>\,.
\end{align*}

The takeaway of this section is that to learn any degree-$D$ functions with small support on isotropic data, linear methods must pay at least $\Omega(d^D)$ samples (and ``width'' $\dim(\cH) \geq d^D$) when the support is not known. 
This is proved by \cite{abbe2022merged} in the case of the binary hypercube:
\begin{proposition}[Limitations for linear methods on hypercube, cf. Proposition~11 of \cite{abbe2022merged}]\label{prop:degree-linear-boolean}
Let $h_* : \{+1,-1\}^P \to \R$ be a function given by
\begin{align*}
h_*(\bz) = \sum_{S \subseteq [P]} \hat{h}(S) \prod_{i \in S} z_i\,.
\end{align*}
Let $D = \max \{|S| : \hat{h}_*(S) \neq 0\}$ be the degree of $h_*$.
Consider the class of functions which depend as $h_*$ on some subset of coordinates $$\cF = \cup_{\sigma \in S_d}\{f_{*,\sigma} : \{+1,-1\}^d \to \R, \mbox{ where } f_{*,\sigma}(\bx) = h_*(x_{\sigma(1)},\ldots,x_{\sigma(P)})\}.$$
For any linear method, let $\hf_{\sigma}$ be the function estimated by the linear method on (possibly noisy) samples $(\bx_i,f_{*,\sigma}(\bx_i) + \epsilon_i)_{i \in [n]}$. Then there are constants $C_{h_*},c_{h_*} > 0$ such that
\begin{align*}
\frac{1}{|S_d|} \sum_{\sigma \in S_d} \E_{\bx \sim \{+1,-1\}^d}[(f_{*,\sigma}(\bx) - \hf_{\sigma}(\bx))^2] \geq c_{h_*} - C_{h_*} \min(n,\dim(\cH)) d^{-D}\,.
\end{align*}
\end{proposition}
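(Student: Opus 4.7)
My plan is to prove Proposition~\ref{prop:degree-linear-boolean} via a dimension-counting argument in the Fourier domain. The first step is to observe that any linear method of the type described satisfies a representer theorem: since the regularizer depends on $\ba$ only through $\|\ba\|_{\cH}^2$ and the loss only through $\<\ba, \psi(\bx_i)\>$, the optimizer $\hat\ba$ lies in $\spn\{\psi(\bx_i)\}_{i\in[n]}$. Hence, regardless of the samples or noise, the estimated function $\hf$ lies in a (data-dependent) subspace $V_{\mathrm{data}} \subseteq L^2(\{+1,-1\}^d)$ of dimension at most $K := \min(n, \dim(\cH))$, namely the span of the evaluation functionals $\<\psi(\bx_i), \psi(\cdot)\>$.

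The second step is to choose a worst-case Fourier mode of $h_*$. Let $S_* \subseteq [P]$ be a subset with $|S_*| = D$ and $\hat h_*(S_*) \neq 0$. Under the parametrization $f_{*,\sigma}(\bx) = h_*(x_{\sigma(1)},\ldots,x_{\sigma(P)})$, the function $f_{*,\sigma}$ has Fourier coefficient $\hat h_*(S_*)$ on the character $\chi_{\sigma(S_*)}$. As $\sigma$ ranges uniformly over $S_d$, the set $\sigma(S_*)$ is (up to multiplicity) uniform over the $\binom{d}{D}$ subsets of $[d]$ of size $D$. The key geometric lemma is that for any deterministic subspace $V \subseteq L^2(\{+1,-1\}^d)$ of dimension at most $K$,
\begin{equation}
\frac{1}{\binom{d}{D}} \sum_{T \subseteq [d],\, |T| = D} \|P_V \chi_T\|^2 \;=\; \frac{\tr(P_V Q_D)}{\binom{d}{D}} \;\le\; \frac{K}{\binom{d}{D}},
\end{equation}
where $Q_D$ is the orthogonal projection onto the span of degree-$D$ characters and we use $\tr(P_V Q_D) \le \tr(P_V) \le K$.

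Combining these, for any realization of the samples, the bias-type bound $\|f_{*,\sigma} - \hf_\sigma\|_{L^2}^2 \geq \hat h_*(S_*)^2 \cdot (1 - \|P_{V_{\mathrm{data}}} \chi_{\sigma(S_*)}\|^2)$ follows by projecting onto $\chi_{\sigma(S_*)}^\perp \cap V_{\mathrm{data}}^\perp$ and applying Parseval. Taking expectation over samples (which only randomizes $V_{\mathrm{data}}$, a subspace of dimension $\le K$) and averaging over $\sigma$, the key lemma applied pointwise in the sample randomness yields
\begin{equation}
\frac{1}{|S_d|}\sum_{\sigma} \E\big[\|f_{*,\sigma} - \hf_\sigma\|_{L^2}^2\big] \;\geq\; \hat h_*(S_*)^2\left(1 - \frac{K}{\binom{d}{D}}\right).
\end{equation}
Since $\binom{d}{D} \geq d^D/D!$ for $d \geq D$, setting $c_{h_*} = \hat h_*(S_*)^2$ and $C_{h_*} = D! \cdot \hat h_*(S_*)^2$ gives the desired lower bound.

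The main obstacle is keeping the geometric argument intact under the randomness of the samples and the label noise $\epsilon_i$; this is handled by noting that the representer-theorem subspace $V_{\mathrm{data}}$ is defined purely in terms of $\{\bx_i\}$, and the inequality $\|f_{*,\sigma} - \hf_\sigma\|^2 \geq \hat h_*(S_*)^2 (1 - \|P_{V_{\mathrm{data}}} \chi_{\sigma(S_*)}\|^2)$ is a pointwise statement that survives taking expectations. A minor subtlety is that the average of $\|P_V \chi_T\|^2$ over $T$ is $\le K/\binom{d}{D}$ for \emph{any} fixed $V$, so the argument is oblivious to whether $V$ is random or deterministic — this is where exchanging the permutation-average over $\sigma$ with the sample-average commutes cleanly.
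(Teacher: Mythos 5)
Your representer-theorem step, the reduction to a data-dependent subspace $V_{\mathrm{data}}$ of dimension at most $K=\min(n,\dim(\cH))$ (valid since $\lambda>0$ and $V_{\mathrm{data}}$ depends only on $\{\bx_i\}$ and $\psi$, not on $\sigma$ or the labels), and the trace lemma $\frac{1}{\binom{d}{D}}\sum_{|T|=D}\|P_V\chi_T\|^2\le K/\binom{d}{D}$ are all correct. The gap is in the pointwise inequality
$\|f_{*,\sigma}-\hf_\sigma\|^2\ \ge\ \hat h_*(S_*)^2\bigl(1-\|P_{V_{\mathrm{data}}}\chi_{\sigma(S_*)}\|^2\bigr)$,
which is false whenever $h_*$ has other nonzero Fourier coefficients. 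The correct pointwise statement is $\|f_{*,\sigma}-\hf_\sigma\|^2\ge \|P_{V^\perp}f_{*,\sigma}\|^2$, and $\langle P_{V^\perp}f_{*,\sigma},\chi_{\sigma(S_*)}\rangle=\hat h_*(S_*)-\langle P_V f_{*,\sigma},P_V\chi_{\sigma(S_*)}\rangle$: the cross term coming from the \emph{other} modes of $f_{*,\sigma}$ cannot be discarded. Concretely, take $h_*(\bz)=z_1z_2+z_3z_4$ (so $D=2$, $S_*=\{1,2\}$), $\sigma=\mathrm{id}$, and a one-dimensional feature map with $V=\spn\{(\chi_{\{1,2\}}+\chi_{\{3,4\}})/\sqrt{2}\}$; then the linear method can output $\hf_{\mathrm{id}}=f_{*,\mathrm{id}}$ exactly, yet your bound would assert error at least $1-\tfrac12=\tfrac12$. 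The same problem occurs with lower-degree terms (e.g. $h_*=z_1+z_1z_2$ and $V=\spn\{(\chi_{\{1\}}+\chi_{\{1,2\}})/\sqrt2\}$). Since your averaged conclusion is obtained by averaging this false pointwise bound over $\sigma$, the proof as written does not go through, even though the averaged conclusion itself is true (only few permutations can benefit from such alignments).

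The argument is salvageable in two standard ways, and it is worth seeing how they relate to the paper's proof. One repair keeps your single-mode strategy: use $\|P_{V^\perp}f_{*,\sigma}\|\ge |\hat h_*(S_*)|-\|h_*\|\,\|P_V\chi_{\sigma(S_*)}\|$ and then a Markov-type argument with your trace lemma (the fraction of $\sigma$ for which $\|P_V\chi_{\sigma(S_*)}\|^2$ exceeds $\hat h_*(S_*)^2/(4\|h_*\|^2)$ is $O(K d^{-D})$), which yields the proposition with $c_{h_*}=\hat h_*(S_*)^2/4$. The other repair, which is what the paper does by invoking Proposition~11 of \cite{abbe2022merged} with $\Omega$ the degree-$D$ homogeneous subspace, is to lower bound $\|f_{*,\sigma}-\hf_\sigma\|^2$ by the error of the projections onto $\Omega$ and bound the $\sigma$-averaged energy of $\proj_\Omega f_{*,\sigma}$ captured by any $K$-dimensional subspace, using that $\max_\sigma\frac{1}{|S_d|}\sum_{\sigma'}|\E[f_{*,\sigma}\proj_\Omega f_{*,\sigma'}]|=O(d^{-D})$; this handles exactly the cross-mode interactions your pointwise step ignores (degree-$D$ interactions via Cauchy--Schwarz over the finitely many coefficients of $h_*$, lower-degree ones by projecting them away).
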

\begin{proof}
Apply Proposition~11 of \cite{abbe2022merged}, letting $\Omega$ be the subspace of $f \in L^2(\{+1,-1\}^d)$ that are degree-$D$ homogeneous. Then $\max_{\sigma} \frac{1}{|S_d|} \sum_{\sigma' \in S_d} |\E[f_{*,\sigma}(\bx) \proj_{\Omega}f_{*,\sigma'}(\bx)]| \leq O(d^{-D})$.
\end{proof}
We now give an analogous result for the Gaussian data distribution, where the degree also drives the complexity for linear methods. This bound is new and was not derived in \cite{abbe2022merged}.
\begin{proposition}[Limitations for linear methods on Gaussian data]\label{prop:degree-linear-gaussian}
Let $h_* : \R^P \to \R$ be a function given by
\begin{align*}
h_*(\bz) = \sum_{S = (k_1,\ldots,k_P) \in \N^P} \hat{h}_*(S) \prod_{i \in [P]} \He_{k_i}(z_i)\,.
\end{align*}
Let $D = \max \{\sum_{i} k_i : \hat{h}_*(S) \neq 0\}$ be the degree of $h_*$.
Consider the class of functions which depend as $h_*$ on some subspace of coordinates $$\cF = \bigcup_{\substack{\bM \in \R^{P \times d} \\ \bM\bM^\top = \bI}}\{f_{*,\bM} : \R^d \to \R, \mbox{ where } f_{*,\bM}(\bx) = h_*(\bM\bx)\}.$$
For any linear method, let $\hf_{\bM}$ be the function estimated by the linear method on (possibly noisy) samples $(\bx_i,f_{*,\bM}(\bx_i) + \epsilon_i)_{i \in [n]}$. Then there are constants $C_{h_*},c_{h_*} > 0$ such that with respect to a uniformly random $\bM \sim \R^{P \times d}$, satisfying $\bM\bM^{\top} = \bI$, we have
\begin{align*}
\E_{\bM}[\E_{\bx \sim \normal(0,I_d)}[(f_{*,\bM}(\bx) - \hat{f}_{\bM}(\bx))^2]] \geq c_{h_*} - C_{h_*} \min(n,\dim(\cH))d^{-D}\,.
\end{align*}
\end{proposition}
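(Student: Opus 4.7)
The plan adapts the proof of Proposition~\ref{prop:degree-linear-boolean}, replacing the symmetric group $S_d$ acting by coordinate permutations with the orthogonal group $O(d)$ acting by rotation, and the Boolean Fourier basis with the Hermite basis. The guiding intuition is that the mass of $f_{*,\bM}$ concentrates in the top $O(d)$-irreducible representation, whose dimension is $\Theta(d^D)$.

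First I would reduce to a fixed-subspace lower bound. The estimator $\hat f(\bx) = \langle \hat\ba, \psi(\bx)\rangle$ always lies in the $W$-dimensional subspace $V_\psi := \{\bx \mapsto \langle \ba, \psi(\bx)\rangle : \ba \in \cH\} \subset L^2(\normal(0,\bI_d))$. Because the regularizer is $\|\ba\|_\cH^2$ and the loss depends on $\ba$ only through $\{\langle \ba, \psi(\bx_i)\rangle\}_{i \leq n}$, the standard representer argument gives $\hat\ba \in \spn\{\psi(\bx_i)\}_{i \leq n}$, so $\hat f$ also lies in the sample-dependent $n$-dimensional subspace $V_{\data} := \spn\{K(\bx_i,\cdot)\}_{i \leq n}$, where $K(\bx,\bx') := \langle \psi(\bx),\psi(\bx')\rangle$. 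Setting $m := \min(n,W)$ and $V \in \{V_\psi, V_{\data}\}$ of dimension $\leq m$ (conditioning on $\{\bx_i\}$ in the $V_{\data}$ case, and using that $V_{\data}$ is independent of $\bM$), we obtain the standard inequality
\[
\E_\bM\!\left[\|\hat f - f_{*,\bM}\|^2 \mid \{\bx_i\}\right] \geq \E_\bM\!\left[\|f_{*,\bM}\|^2 - \|\proj_V f_{*,\bM}\|^2 \mid \{\bx_i\}\right].
\]

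Next I would restrict to the top $O(d)$-isotypic component. The $D$-th Wiener chaos $H_D^{(d)} \subset L^2(\normal(0,\bI_d))$ decomposes under $O(d)$ as $\bigoplus_{j=0}^{\lfloor D/2\rfloor} \mathcal{V}_{D-2j}^{(d)}$, where $\mathcal{V}_\ell^{(d)}$ is the space of degree-$\ell$ solid spherical harmonics (an irreducible $O(d)$-representation) of dimension $\Theta(d^\ell)$. Let $\tilde f_{*,\bM} := \proj_{\mathcal{V}_D^{(d)}} f_{*,\bM}$ and $\tilde V := \proj_{\mathcal{V}_D^{(d)}} V$ of dimension $\leq m$. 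Since the $\mathcal{V}_D^{(d)}$-component of the error $\hat f - f_{*,\bM}$ lies in $\tilde V - \tilde f_{*,\bM}$, Pythagoras gives
\[
\|\hat f - f_{*,\bM}\|^2 \geq \|\proj_{\mathcal{V}_D^{(d)}}(\hat f - f_{*,\bM})\|^2 \geq \|\tilde f_{*,\bM}\|^2 - \|\proj_{\tilde V} \tilde f_{*,\bM}\|^2.
\]
Right-invariance of the Haar law on the Stiefel manifold $\{\bM : \bM\bM^\top = \bI_P\}$ makes the distribution of $f_{*,\bM}$ $O(d)$-invariant, so $a_0 := \|\tilde f_{*,\bM}\|^2$ is a constant (independent of $\bM$) and the covariance $\E_\bM[\tilde f_{*,\bM} \otimes \tilde f_{*,\bM}]$ is an $O(d)$-intertwiner of the irrep $\mathcal{V}_D^{(d)}$ with itself. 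Schur's lemma then forces
\[
\E_\bM\!\left[\tilde f_{*,\bM} \otimes \tilde f_{*,\bM}\right] = \frac{a_0}{\dim \mathcal{V}_D^{(d)}}\, \bI_{\mathcal{V}_D^{(d)}},
\]
and summing over an orthonormal basis $\{v_\ell\}$ of $\tilde V$ yields
\[
\E_\bM\!\left[\|\proj_{\tilde V}\tilde f_{*,\bM}\|^2\right] = \sum_\ell \E_\bM\!\left[\langle v_\ell, \tilde f_{*,\bM}\rangle^2\right] = \frac{a_0 \dim \tilde V}{\dim \mathcal{V}_D^{(d)}} \leq \frac{C_D \, a_0 \, m}{d^D}.
\]
Combining the two displays, $\E_\bM[\|\hat f - f_{*,\bM}\|^2] \geq a_0(1 - C_D m/d^D)$.

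It remains to show that $a_0 \geq c_{h_*} > 0$, which is the main obstacle. By $O(d)$-symmetry it suffices to compute the top-harmonic mass for one specific rotation $\bM$, reducing to a deterministic quantity that depends only on the top-Hermite-degree component $h_*^{(D)}$ of $h_*$. Expanding each monomial $(\bv^\top\bx)^k$ for unit $\bv$ via the Funk-Hecke / Gegenbauer decomposition into harmonic polynomials on $\R^d$ and tracking the coefficient of the degree-$D$ harmonic part, one finds that this coefficient converges (as $d \to \infty$) to a nonzero combinatorial constant depending only on $D$; by linearity this extends to an arbitrary nonzero Hermite polynomial $h_*^{(D)}$, yielding a uniform-in-$d$ lower bound $a_0 \geq c_{h_*}$ for $d$ large enough. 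The harmonic-decomposition bookkeeping to make $c_{h_*}$ explicit is the technical heart of the argument; once it is in place, combining with the preceding displays yields the claimed bound $c_{h_*} - C_{h_*}\min(n,W)/d^D$.
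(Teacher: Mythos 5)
Your proposal is correct in outline but follows a genuinely different route from the paper's. The paper never touches harmonic analysis on the sphere: it first shows, by averaging $h_*(\bR^{\bdelta}\bx)$ over sign vectors $\bdelta\in\{\pm1\}^D$ (where the rows of $\bR^{\bdelta}$ spread each latent coordinate over its $k_i$ ambient coordinates), that the multilinear monomial $\prod_{i\in[D]}x_i$ is an \emph{exact} finite linear combination of members of $\cF$ with bounded coefficients; it then transfers the lower bound to the class $\{\prod_{i\in[D]}(\bR\bx)_{\sigma(i)}\}$ over a random rotation $\bR$ and permutation $\sigma$, whose members with distinct supports are exactly orthogonal, and invokes Proposition~11 of \cite{abbe2022merged}. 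Your argument instead works directly with the target class: project onto the top $O(d)$-irreducible component $\mathcal{V}_D^{(d)}$ of the degree-$D$ chaos and use Haar invariance plus Schur's lemma (self-adjointness of the covariance already forces eigenspaces to be invariant, so the multiple-of-identity conclusion needs no discussion of real/complex type) to bound the expected captured mass by $\dim\tilde V/\dim\mathcal{V}_D^{(d)}=O(m d^{-D})$. This is more conceptual, avoids the monomial-reconstruction trick, and does not rely on the external Proposition~11; the price is that it shifts all the work onto the lower bound $a_0\geq c_{h_*}$, a step the paper's reduction avoids entirely because distinct-support monomials are exactly orthogonal.

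That last step is also where your write-up has a real gap. The claim is true, but "each monomial's degree-$D$ harmonic coefficient converges to a nonzero constant, and by linearity this extends to arbitrary nonzero $h_*^{(D)}$" is not an argument: linearity of $\proj_{\mathcal{V}_D^{(d)}}$ does not rule out cancellation among the harmonic projections of the different monomials appearing in $h_*^{(D)}(\bM\bx)$. The clean fix is to bound the complement rather than each term: identify the degree-$D$ chaos component of $f_{*,\bM}$ with a symmetric $D$-tensor $T$ built from the $P$ orthonormal rows of $\bM$, so that $\|T\|^2$ equals $\|h_*^{(D)}\|_{L^2}^2>0$ independently of $d$ and $\bM$, and observe that the non-harmonic (trace) part of $T$, i.e.\ its projection onto tensors of the form $\Sym(\bI_d\otimes S)$ with $S$ a symmetric $(D-2)$-tensor, carries at most $C_{D,P}/d$ of $\|T\|^2$, uniformly over such $\bM$ (the contractions of $T$ have norm $O(\|T\|)$ since the rows of $\bM$ are orthonormal, while the embedded trace directions have norm of order $\sqrt{d}$). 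This gives $a_0\geq \|h_*^{(D)}\|_{L^2}^2(1-C/d)\geq c_{h_*}$ for $d$ large enough, with small $d$ absorbed into $C_{h_*}$, and combined with your Schur bound it yields the stated inequality. With this piece filled in, your proof is a valid alternative to the paper's.
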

\begin{proof}
First, we we can write a degree-$D$ monomial as a linear combination of functions in $\cF$.
\begin{claim}
There are semiorthogonal matrices $\bM^1,\ldots,\bM^{2^D}$ and coefficients $b_1,\ldots,b_{2^D}$ such that
\begin{align*}
\prod_{i\in [D]} x_i = \sum_{j=1}^{2^D} b_j h_*(\bM^j \bx)\,.
\end{align*}
Furthermore, for all $j$ we have $|a_j| \leq C_{h_*}$, which is a constant depending only on $h_*$.
\end{claim}
\begin{proof}[Proof of claim]

Let $S = (k_1,\ldots,k_P)$ such that $\hat{h}_*(S) \neq 0$ and $\sum_i k_i = D$. Define the prefix sums $s_i = \sum_{i' < i} k_i$. Then for each $\bdelta \in \{+1,-1\}^D$, let $\bR^{\delta} \in \R^{P \times d}$ be the matrix which for any $i \in [P]$ satisfies 
$$\bR^{\bdelta}_{i,:} = \frac{1}{\sqrt{k_i}}\sum_{j=1}^{k_i} \delta_{s_i + j} \be_{s_i + j}.$$
Notice that $\bR^{\bdelta} (\bR^{\bdelta})^{\top} = \bI$, so this is a valid semi-orthogonal matrix, and so $h_*(\bR^{\bdelta} \bx) \in \cF$. Now let us show that we can write the monomial as a linear combination of functions of the form $h_*(\bR^{\bdelta} \bx)$. Specifically,
for any $S' = (k_1',\ldots,k_P')$ with $\sum_{i} k_i' \leq D$ we have
\begin{align}\label{eq:construct-monomial-gaussian}
\E_{\bdelta \sim \{\pm 1\}^D}&\left[\left(\prod_{j \in [D]} \delta_j\right) \left(\prod_{i \in [P]} \He_{k'_i}((\bR^{\bdelta} \bx)_i)\right)\right] 
= \prod_{i \in [P]} \E_{\bdelta \sim \{\pm 1\}^{k_i}}\left[\left(\prod_{j \in [k_i]} \delta_j\right) \He_{k'_i}(\frac{1}{\sqrt{k_i}}\sum_{j=1}^{k_i} \delta_j x_{j+s_i}) \right] \nonumber \\
&\propto \prod_{i \in [P]} \begin{cases} 0, & 
k_i' < k_i \\
\prod_{j=1}^{k_i} x_{j+s_i}, & k_i' = k_i \\
\mbox{ something else}, & k_i' > k_i\end{cases} \nonumber\\
&\propto 1(S = S') \prod_{j \in [D]} x_j\,,\end{align}
with a nonzero proportionality constant that only depends on $S$. Therefore,
\begin{align*}
\sum_{\bdelta \in \{\pm 1\}} (\prod_{i=1}^D \delta_i) h_*(\bR^{\bdelta} \bx) \propto \prod_{i \in [D]} x_i\,,
\end{align*}
with a nonzero proportionality constant that only depends on $h_*$. This proves the claim.
\end{proof}

We will use this claim to lower-bound the error of the linear method on $\cF$. Notice that the linear method must predict $\<\hat{\ba}, \psi(\bx)\>$, where $\hat{\ba} \in \Span\{\psi(\bx_i)\}_{i \in [n]}$. So the error is lower-bounded by the norm of the orthogonal projection to this subspace. For $\bx \sim \normal(0,I_d)$ throughout,
\begin{align*}
\E_{\bM}[\E_{\bx}[(h_*(\bM\bx) - \hat{f}_{\bM}(\bx))^2]] &\geq \E_{\bM}[\min_{\ba \in \Span\{\psi(\bx_i)\}_{i \in [n]}}\E_{\bx}[(h_*(\bM\bx) - \<\ba, \psi(\bx)\>)^2]] = (\ast)\,.
\end{align*}
Now let $\bM^1,\ldots,\bM^{2^D}$ and $b_1,\ldots,b_{2^D}$ be the matrices and coefficients from the claim. Let $\bR \in \R^{d \times d}$ be a uniformly random rotation and let $\sigma$ be a uniformly random permutation. Since $\bM^i \sigma \bR$ has the same distribution as $\bM$,
\begin{align*}
(\ast) &= \E_{\bM}[\min_{\ba \in \Span\{\psi(\bx_i)\}_{i \in [n]}}\E_{\bx}[(h_*(\bM \bx) - \<\ba, \psi(\bx)\>)^2]] \\
&\geq \frac{1}{2^D (\sum_{i=1}^{2^D}b_i^2)}\E_{\bR,\sigma}[\min_{\ba \in \Span\{\psi(\bx_i)\}_{i \in [n]}}\E_{\bx}[(\sum_{i=1}^{2^D} b_i h_*(\bM^i \sigma \bR) - \<\ba, \psi(\bx)\>)^2]] \\
&= \frac{1}{2^D (\sum_{i=1}^{2^D}b_i^2)} \E_{\bR,\sigma}[\min_{\ba \in \Span\{\psi(\bx_i)\}_{i \in [n]}}\E_{\bx}[(\prod_{i\in [D]}(\bR \bx)_{\sigma(i)} - \<\ba, \psi(\bx)\>)^2]] = (\ast\ast)\,.
\end{align*}
However, Proposition~11 of \cite{abbe2022merged} provides a lower-bound on the error for learning the class $\cup_{\sigma \in S_d}\{\prod_{i \in [D]} (\bR \bx)_{\sigma(i)}\}$ with a linear method. Specifically, for two permutations $\sigma,\sigma'$ such that $\sigma([D]) \neq \sigma'([D])$, we have $\E_{\bx}[\prod_{i \in [D]}(\bR \bx)_{\sigma(i)} (\bR \bx)_{\sigma'(i)}] = \E_{\bx}[\prod_{i \in [D]}x_{\sigma(i)} x_{\sigma'(i)}] = 0$. So Proposition~11 of \cite{abbe2022merged} implies that there is some constant $C$ depending only on $D$, such that
\begin{align*}
(\ast\ast) \geq c_{h_*}(1 - C \min(n, \dim(\cH))d^{-D})\,.
\end{align*}
Putting together the equations proves the lemma.

\end{proof}

\subsection{Correlational Statistical Query (CSQ) methods}

A Correlational Statistical Query (CSQ) algorithm \citep{bendavid1995learning,bshouty2002using,reyzin2020statistical} accesses the data via queries $\phi : \R^d \to [-1,1]$ and returns $\E_{\bx,y}[\phi(\bx)y]$ up to some error tolerance $\tau$. In our case, since $y = f_*(\bx) + \eps$, where $\eps$ is independent zero-mean noise, the query returns a value in $\E_{\bx}[\phi(\bx)f_*(\bx)] + [-\tau,+\tau]$. The CSQ algorithm outputs a guess $\hat{f}$ of the true function $f_*$. An example of a CSQ algorithm is gradient descent on the population square loss if we inject noise in the gradients (see, e.g., \cite{abbe2022non}).

First, we give a lower bound on the CSQ complexity of learning a function with leaps when $\bx$ is drawn uniformly from the hypercube. The below lower bound is qualitatively similar to the argument in \cite{abbe2022non} based on the ``alignment'' quantity. The bounds of \cite{abbe2022non} have tighter constants in the exponents of the bound, but they have the disadvantage that they apply only to noisy population gradient descent instead of to general CSQ algorithms.
\begin{proposition}[Limitations for CSQ algorithms on hypercube]\label{prop:leap-csq-boolean}
Let $h_* : \{+1,-1\}^P \to \R$, and let $\Leap(h_*)$ be its leap. Consider the class of functions given by applying $h_*$ on some subset of coordinates
\begin{align*}
\cF = \cup_{\sigma \in S_d} \{f_{*,\sigma} : \{+1,-1\}^d \to \R, \mbox{ where } f_{*,\sigma}(\bx) = h_*(x_{\sigma(1)},\ldots,x_{\sigma(P)})\}\,.
\end{align*}
Then a CSQ algorithm with $n$ queries of error tolerance $\tau$ outputs $\hat{f}$ such that with probability $\geq 1 - C_{h_*}nd^{-\Leap(h_*)} / \tau^2$ over the random choice of $f_* \sim \cF$,
\begin{align*}
\E_{\bx \sim \{+1,-1\}^d}[(f_*(\bx) - \hat{f})^2] \geq c_{h_*} > 0\,.
\end{align*}
\end{proposition}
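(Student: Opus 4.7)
The plan is an adversarial CSQ argument that exploits the combinatorial leap structure. Let $L = \Leap(h_*)$ and write $h_*(\bz) = \sum_{S \in \cS(h_*)} \hat{h}_*(S)\,\chi_S(\bz)$. The central device is a \emph{leap-separating} subset $B \subsetneq [P]$ with the property that every $S \in \cS(h_*)$ either satisfies $S \subseteq B$ or $|S \setminus B| \geq L$. I will construct such a $B$ from the optimal leap ordering $\pi^*$ of Definition~\ref{def:leap_general}: let $i^* = \arg\max_j L_{\pi^*(j)}$, initialize $B = \cup_{j < i^*} S_{\pi^*(j)}$, and iteratively absorb any later monomial $S$ with $0 < |S \setminus B| < L$. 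The leap set $A_{i^*}$ of size $L$ always sits inside $[P] \setminus B$, so the construction terminates with a valid $B$ satisfying $|[P] \setminus B| \geq L$; verifying this carefully is the key combinatorial ingredient.

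Fix a reference permutation $\sigma_0$ (e.g., the identity) and consider the subfamily $\cF' = \{f_{*,\sigma} : \sigma|_B = \sigma_0|_B\}$ of size $\Theta_{h_*}(d^{|[P]\setminus B|}) \geq c_{h_*}d^L$. By the law of total probability it suffices to lower-bound the failure probability when $\sigma$ is uniform on $\cF'$. The adversary answers every query $\phi$ with the null value $c_\phi := \E_{\sigma \in \cF'}[\<\phi, f_{*,\sigma}\>]$; this answer is within tolerance iff $|\<\phi, f_{*,\sigma}\> - c_\phi| \leq \tau$. Since monomials with $S \subseteq B$ are constant on $\cF'$, only monomials with $|S \setminus B| \geq L$ contribute to the variance, and for each such $S$, the image $\sigma(S) = \sigma_0(S \cap B) \cup \sigma(S \setminus B)$ is a uniformly random superset of $\sigma_0(S \cap B)$ of size $|S|$ inside $[d] \setminus \sigma_0(B)$. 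Parseval then gives
\begin{align*}
\mathrm{Var}_\sigma[\hat\phi(\sigma(S))] \;\leq\; \E_\sigma[\hat\phi(\sigma(S))^2] \;\leq\; \binom{d - |B|}{|S \setminus B|}^{-1} \;\leq\; C_{h_*}\, d^{-L},
\end{align*}
and Cauchy--Schwarz controls the cross terms, yielding $\mathrm{Var}_\sigma[\<\phi, f_{*,\sigma}\>] \leq C_{h_*}d^{-L}$. By Chebyshev, each fixed query is answered out of tolerance with probability at most $C_{h_*}/(d^L \tau^2)$.

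A union bound over the $n$ (adaptively chosen) queries shows that with probability at least $1 - C_{h_*}n d^{-L}/\tau^2$ every response matches the null answer, in which case the algorithm's future queries and its final output $\hat{f}$ are all deterministic and independent of $\sigma$. A Fano-type finish closes the argument: distinct elements of $\cF'$ are pairwise separated in $L^2$ by a constant $\delta_{h_*} > 0$, since any two distinct $\sigma, \sigma' \in \cF'$ must disagree on some monomial $S \in \cS(h_*)$, contributing $2\hat{h}_*(S)^2$ after Fourier orthogonality (minus lower-order cross terms absorbed into constants). Choosing $c_{h_*} = \delta_{h_*}/8$, at most one $\sigma \in \cF'$ satisfies $\E_\bx[(f_{*,\sigma}-\hat{f})^2] < c_{h_*}$, so the conditional success probability on the ``null-consistent'' event is at most $1/|\cF'| = O_{h_*}(d^{-L})$; summing both contributions gives the claimed $C_{h_*}n d^{-L}/\tau^2$ bound. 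The main obstacle is the combinatorial setup---producing a leap-separating $B \subsetneq [P]$ for arbitrary $h_*$ and showing the iterative absorption terminates without letting $B$ grow to all of $[P]$; later monomials in $\pi^*$ that partially overlap $A_{i^*}$ are the cases one must handle carefully. Once $B$ is in hand, the second-moment bound and the adversarial reduction follow the standard CSQ template.
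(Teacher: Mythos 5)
Your high-level strategy coincides with the paper's proof: condition on the algorithm knowing $\sigma$ on a separating subset of $[P]$, bound the second moment of query correlations over the random placement of the remaining coordinates by $C_{h_*}d^{-\Leap(h_*)}$ via Parseval, union bound over the $n$ adaptive queries, and conclude constant error. Your mean-answering adversary with a Chebyshev bound and a packing/Fano finish is an immaterial variant of the paper's version (which answers $0$ after subtracting the part of $h_*$ supported on the known coordinates and treats $\hat f$ as one final query); the variance computation, the $\binom{d-|B|}{|S\setminus B|}^{-1}$ bound, and the final accounting are all fine.

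The genuine gap is exactly the step you flagged: your construction of the leap-separating set $B$, and in particular the asserted invariant that $A_{i^*}$ stays inside $[P]\setminus B$ throughout absorption, is false for an arbitrary optimal ordering, and natural tie-breaking choices of $i^*$ do not rescue it. Concretely, take $h_*(\bz)=z_1z_2+z_2+z_3z_4$, which has $\Leap(h_*)=2$ since the monomial $\{3,4\}$ forces a leap of $2$ in every ordering. The ordering $\pi^*=(\{1,2\},\{2\},\{3,4\})$ has leaps $(2,0,2)$ and is optimal; choosing $i^*=1$ gives $B=\emptyset$, your rule absorbs the later monomial $\{2\}$ (one new coordinate), and the process terminates with $B=\{2\}$. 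But the monomial $\{1,2\}$ sits at position $i^*$, is never examined by the absorption rule, and now has exactly one coordinate outside $B$, so $B$ is not leap-separating: conditioning on $\sigma|_B$, a query correlating with $x_{\sigma(1)}x_{\sigma(2)}$ has variance of order $d^{-1}$, not $d^{-2}$, and the claimed $C_{h_*}d^{-\Leap}$ bound fails for this $B$. Choosing the last argmax instead fails on $h_*(\bz)=z_1z_2+z_3z_4+z_2z_3$ with the optimal ordering $(\{1,2\},\{3,4\},\{2,3\})$: there $i^*=2$, $B_0=\{1,2\}$, the absorption of $\{2,3\}$ yields $B=\{1,2,3\}$, and $\{3,4\}$ is left with one coordinate outside $B$. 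The repair is simpler than your construction: run the absorption from $B=\emptyset$ over \emph{all} monomials, absorbing any $S\in\cS(h_*)$ with $0<|S\setminus B|\le L-1$. If every monomial were eventually absorbed, the absorption order (with already-contained monomials inserted at leap $0$) would be an ordering of $\cS(h_*)$ with maximal leap at most $L-1$, contradicting $\Leap(h_*)=L$; hence at termination some monomial survives, and every surviving monomial has at least $L$ coordinates outside $B$. This is precisely the set $T$ the paper invokes ``by definition of the leap,'' and with this replacement the rest of your argument goes through.
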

\begin{proof}
For any subset $T$, define $\cS^{\not\subseteq T} = \{S \subseteq [P] : S \not\subseteq T, \hat{h}_*(S) \neq 0\}$. By definition of the leap, there is a subset $T \subseteq [P]$ such that $\cS^{\not\subseteq T} \neq \emptyset$ and for all $S \in \cS^{\not\subseteq T}$ we have $|S \setminus T| \geq \Leap(h_*)$.
Without loss of generality, assume $T = \{1,\ldots,k\} \subseteq [P]$. Write
\begin{align*}
h_*(\bz) = h_{\subseteq T}(\bz) + h_{\not\subseteq T}(\bz)\,,
\end{align*}
where
\begin{align*}
h_{\subseteq T}(\bz) = \sum_{S \subseteq T} \hat{h}_*(S) \prod_{i \in S} z_i, \mbox{ and } h_{\not\subseteq T}(\bz) = \sum_{S \not\subseteq T} \hat{h}_*(S) \prod_{i \in S} z_i\,.
\end{align*}
Suppose that the CSQ algorithm knows $\sigma(1),\ldots,\sigma(k)$, which can only help it. Then the problem of learning $f_{*,\sigma}$ from CSQ queries is equivalent to the problem of learning
$f_{\not\subseteq T,\sigma}(\bx) = h_{\not\subseteq T}(x_{\sigma(1)},\ldots,x_{\sigma(P)})$ from CSQ queries.
However, for random permutations $\sigma'$ conditioned on $\sigma'(1) = \sigma(1),\ldots,\sigma'(k) = \sigma(k)$ we have
\begin{align*}
\cC &= \sup_{\phi \in L^2(\{+1,-1\}^d), \|\phi\|^2 = 1} \E_{\sigma'}[\<f_{\not\subseteq T,\sigma'},\phi\>^2] \leq \E_{\sigma'}\left[\left(\sum_{S \in \cS^{\not\subseteq T}} \hat{\phi}(\sigma'(S)) \hat{h}_*(S)\right)^2 \right] \\
&\leq C_{h_*} \max_{S \in \cS^{\not\subseteq T}} \E_{\sigma'}[\hat{\phi}(\sigma'(S))^2] \leq C_{h_*} \binom{d-k}{\Leap(h_*)} \leq C_{h_*} d^{-\Leap(h_*)}.
\end{align*}
So by a union bound, with probability $\geq 1 - \cC n / \tau^2$ all $n$ first CSQ queries can return 0. The final output $\hat{f}$ of the algorithm can also be viewed as a statistical query. So with probability at least $1 - n\cC / \tau^2 - \cC / \eps^2$,
\begin{align*}
\E_{\bx}[(f_{*,\sigma}(\bx) - \hat{f}(\bx))^2] &= \|f_{*,\sigma} - f_{\subseteq T,\sigma}\|^2 + \|\hf - f_{\subseteq T,\sigma}\|^2 - 2\<f_{*,\sigma} - f_{\subseteq T,\sigma},\hf - f_{\subseteq T,\sigma}\> \\
&\geq \|f_{\not\subseteq T,\sigma}\|^2 -(\|\E_{\sigma'}[f_{\not\subseteq T,\sigma'}]\| + \eps)^2\,.
\end{align*}
The proposition follows by letting $\eps$ be a small enough positive constant depending on $h_*$.
\end{proof}

\begin{proposition}[Limitations for CSQ algorithms on Gaussian data]\label{prop:isoleap-csq-gaussian} Let $h_* : \R^P \to \R$ be a polynomial of finite degree $D$. Let 
\[
\mathrm{isoLeap}(h_*) = \max_{R \in \cO_P} \mathrm{Leap}(h_*, R)
\]
be its isotropic leap (as defined in Appendix \ref{app:discussion_leap}). Consider the class of functions which given by applying $h_*$ on some subspace of coordinates $$\cF = \bigcup_{\substack{\bM \in \R^{P \times d} \\ \bM\bM^\top = \bI}}\{f_{*,\bM} : \R^d \to \R, \mbox{ where } f_{*,\sigma}(\bx) = h_*(\bM\bx)\}.$$
Then, for any CSQ algorithm with $n$ queries of tolerance $\pm \tau$, with probability $1 - \frac{C_{h_*}n}{\tau^2}d^{-\mathrm{isoLeap}(h_*)/2}$ over the random choice of $\bM$, the estimator $\hat{f}$ it returns for $f_* \in \cF$ satisfies
\begin{align*}
\E_{\bx \sim \normal(0,I_d)}[(f_*(\bx) - \hat{f}(\bx))^2] \geq c_{h_*} > 0\,.
\end{align*}
\end{proposition}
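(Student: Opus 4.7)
The proof will parallel the Boolean case (Proposition~\ref{prop:leap-csq-boolean}), with the discrete permutation group replaced by a uniform rotation on the Stiefel manifold, and with moment estimates coming from spherical integrals rather than counting arguments. First I would pick $R_* \in \cO_P$ achieving $\mathrm{Leap}(h_*,R_*) = L := \mathrm{isoLeap}(h_*)$; since $\cF$ is invariant under rotations of the latent subspace (replacing $\bM$ by $\bM R_*$ has the same distribution), we may assume the canonical Hermite basis on $\R^P$ realizes the isoleap. By definition of leap there exists a subset $T \subseteq [P]$, which we take WLOG to be $T = [k]$, such that $\cS^{\not\subseteq T} := \{S \in \naturals^P : \hat h_*(S) \neq 0,\ \mathrm{supp}(S)\not\subseteq T\}$ is nonempty and satisfies $\sum_{i \notin T} S_i \geq L$ for every $S \in \cS^{\not\subseteq T}$. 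Writing $h_* = h_{\subseteq T} + h_{\not\subseteq T}$, the ``easy'' part $h_{\subseteq T}$ depends only on the first $k$ rows $\bM_T$ of $\bM$; so we may assume the algorithm is given $\bM_T$ for free, reducing the problem to learning $f_{\not\subseteq T,\bM}(\bx):=h_{\not\subseteq T}(\bM\bx)$, where conditionally on $\bM_T$ the remaining rows $\bM_{[P]\setminus T}$ are uniform on the Stiefel manifold of $(P-k)$-frames in the $(d-k)$-dimensional orthogonal complement of $\Span\{\bM_T\}$.

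Next, for any fixed query $\phi:\R^d\to[-1,1]$, I would bound
\[
\cC \;:=\; \E_{\bM}\!\big[\<\phi, f_{\not\subseteq T,\bM}\>_{L^2(\gamma_d)}^{\,2}\big] \;\leq\; C_{h_*}\, d^{-L}.
\]
Using the identity $\prod_i \He_{S_i}(\<\bu_i,\bx\>) = \sum_{|\alpha|=|S|_1} c_\alpha(\bM)\,\He_\alpha(\bx)$ with coefficients $c_\alpha$ polynomial in the entries of $\bM_{[P]\setminus T}$ of total degree equal to the number of entries drawn from the unknown rows (which is $\geq L$ by construction of $T$), the inner product $\<\phi, f_{\not\subseteq T,\bM}\>$ becomes a linear combination of terms of the form $\hat\phi(\alpha)\cdot(\mbox{monomial of degree }\geq L\mbox{ in rows outside }T)$. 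Taking expectations reduces $\cC$ to moments $\E[\bM^{\alpha+\beta}]$ on the Stiefel manifold, which scale as $d^{-|\alpha+\beta|/2}$ by standard spherical-harmonic/Weingarten-type identities; this should yield $\cC \lesssim d^{-L}$. Even if cross-terms between different $S\in\cS^{\not\subseteq T}$ only allow the coarser estimate $\cC \lesssim d^{-L/2}$, that suffices for the stated bound in the proposition.

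Finally, I would proceed exactly as in the Boolean argument. Apply Markov's inequality to get $\Pr_\bM[|\<\phi,f_{\not\subseteq T,\bM}\>| \geq \tau] \leq \cC/\tau^2$ for any single query, and then induct over queries: conditional on the event that all previous $n$ CSQ responses are consistent with the zero function, the next query $\phi$ depends only on the responses (not on $\bM_{[P]\setminus T}$ directly), so the bound applies again. Treating the final hypothesis $\hat f$ as one additional ``correlational query'' against $f_{\not\subseteq T,\bM}$, a triangle inequality decomposition
$$\E_\bx[(f_{*,\bM}-\hat f)^2]\;\geq\; \|f_{\not\subseteq T,\bM}\|^2 \;-\;\big(\|\E_{\bM_{[P]\setminus T}}[f_{\not\subseteq T,\bM}]\| + \eps\big)^2$$
(taking $\eps$ a small constant) lower-bounds the risk by a positive constant $c_{h_*}>0$, with the stated failure probability.

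\paragraph{Main obstacle.} The delicate step is the moment computation in (4): unlike the discrete combinatorics of the symmetric group in the Boolean proof, we must uniformly control mixed moments $\E[\bM^{\alpha+\beta}]$ of the Stiefel measure as $(\alpha,\beta)$ ranges over the (possibly unbounded) support of Hermite coefficients of $\phi$ and over all $S \in \cS^{\not\subseteq T}$; the cross-terms between distinct $S,S'\in\cS^{\not\subseteq T}$ of different total degree appear to be what loses a square root and forces the weaker $d^{L/2}$ scaling rather than the $d^L$ scaling of the Boolean case, and matches the sharp rate observed in examples such as $z_1+z_2+z_1z_2$.
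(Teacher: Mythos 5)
Your scaffolding matches the paper's proof: reduce to a basis realizing the isoleap, choose the witness set $T$, hand the algorithm the rows $\bM_T$ for free, bound the worst-case expected squared correlation $\cC$ of a single query with $h_{\not\subseteq T}(\bM\bx)$ over the random remaining rows, then run the Markov-plus-union-bound argument over adaptive queries and treat $\hat f$ as one extra query before the final norm decomposition. The genuine gap is in the only quantitative step, the bound on $\cC$. Your claimed estimate $\cC \lesssim d^{-L}$ is false in general, and the reason you give for retreating to $d^{-L/2}$ (cross-terms between distinct $S,S' \in \cS^{\not\subseteq T}$ of different total degree) is not the actual mechanism: the square-root loss already occurs for a single monomial. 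For $h_*(\bz)=\He_2(z_1)$, take $\phi$ proportional to $d^{-1/2}\sum_{i=1}^d \He_2(x_i)$ (normalized, and truncated at a constant level so it is a legitimate $[-1,1]$-valued query); then $\E_\bx[\phi(\bx)\,\He_2(\langle\bu,\bx\rangle)] \asymp d^{-1/2}$ for \emph{every} unit vector $\bu$, so $\cC \asymp d^{-1}=d^{-L/2}$ rather than $d^{-L}$. The point is that an isotropically rotated degree-$L$ Hermite monomial spreads its unit $L^2$ mass over order $d^{L}$ Hermite basis elements, and an adversarial unit-norm $\phi$ can concentrate on the coefficients where that mass accumulates; bounding each Stiefel moment $\E[\bM^{\alpha+\beta}]$ by $d^{-|\alpha+\beta|/2}$ term by term does not control the sum, because the number of pairs $(\alpha,\beta)$ with nonvanishing moment is itself polynomially large in $d$.

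Consequently your plan, as written, establishes neither $d^{-L}$ nor the $d^{-L/2}$ you actually need; the hedging sentence asserts the weaker rate without an argument for it. What closes the gap in the paper is precisely a multiplicity count: after integration by parts, $\E_\bx[\He_\alpha(\bR\bx)\phi(\bx)]$ is a sum over matrices $\Upsilon\in\Z_{\geq 0}^{P\times d}$ with row sums $\alpha_{1:P}$ of terms $C_\Upsilon\bigl(\prod_{i,j}R_{i,j}^{\Upsilon_{i,j}}\bigr)\hat\phi(\bones^\sT\Upsilon)$; the rotation moment $\E_\bR[\prod_{i,j}R_{i,j}^{\Upsilon_{i,j}+\Upsilon'_{i,j}}]$ vanishes unless $\Upsilon+\Upsilon'$ is entrywise even and is then $O(d^{-\|\alpha\|_1})$, and, crucially, each $\Upsilon$ admits at most $O(d^{\lfloor\|\alpha\|_1/2\rfloor})$ partners $\Upsilon'$ with $\Upsilon+\Upsilon'$ even, so Cauchy--Schwarz against $\sum_\beta\hat\phi(\beta)^2\leq 1$ yields $\sup_{\|\phi\|=1}\E_\bR\bigl[\E_\bx[\He_\alpha(\bR\bx)\phi(\bx)]^2\bigr]=O(d^{-\lceil\|\alpha\|_1/2\rceil})$, which is exactly the $d^{-L/2}$ input the proposition needs. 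Some argument of this kind (equivalently, an operator-norm bound on $\E_\bM[f_{\not\subseteq T,\bM}\otimes f_{\not\subseteq T,\bM}]$) is indispensable; a generic appeal to Weingarten-type moment scalings does not substitute for it.
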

\begin{proof}
The proof follows a similar strategy to that of the previous proposition. Without loss of generality, suppose that $\Leap(h_*) = \mathrm{isoLeap}(h_*)$ (in other words, that we are already in a basis that maximizes the leap without having to apply a rotation). Then there must be $T \subseteq [P]$ which we can take to be $T = \{1,\ldots,r\}$ without loss of generality such that if we define
\begin{align*}h_{\subseteq T}(\bz) = \sum_{S = (k_1,\ldots,k_r) \in \N^r} \hat{h}_*(S) \prod_{i \in [P]} \He_{k_i}(z_i) \mbox{ and } h_{\not\subseteq T}(\bz) = h_*(\bz) - h_{\subseteq T}(\bz)\,,
\end{align*}
we must have $h_{\not\subseteq T} \not\equiv 0$, and for each nonzero Fourier coefficient $S = (k_1,\ldots,k_P)$ such that $\hat{h}_{\not\subseteq T}(S) \neq 0$ we must have $\sum_{i \in [P] \setminus T} k_i \geq \Leap(h_*)$. Knowing the first $r$ rows $\bM_{1,:},\ldots,\bM_{r,:}$ can only help the CSQ algorithm, so we just have to show that over the choice of random $\bM' \in \R^{P \times d}$ conditioned on $\bM'_{i,:} = \bM_{i,:}$ for $i \in [r]$ we have
\begin{align*}
\cC = \sup_{\phi \in L^2(\normal(0,I_d)), \|\phi\|^2 = 1} \E_{\bM'}[\E_{\bx \sim \normal(0,I_d)}[h_{\not\subseteq T}(\bM' \bx) \phi(\bx)]^2] \leq C_{h_*} d^{-\Leap(h_*) / 2}\,.
\end{align*}

For ease of notation, we consider the case when $T = \emptyset$ since the general case is analogous. This follows from the following claim, where for any $\beta \in \N^d$ be define $\He_{\beta}(\bx) = \prod_{i=1}^d \He_{\beta_i}(x_i)$.

\begin{claim}
Let $\alpha \in \N^d$ be such that with $\alpha_{i} = 0$ for all $i > P$. Let $\bR \in \R^{d \times d}$ be a random rotation drawn according to the Haar measure. Then
$$\sup_{\phi \in L^2(\normal (0,\id_d)), \|\phi\|^2 = 1} \E_{\bR}[\E_{\bx \sim \normal(0,I_d)}[\He_{\alpha}(\bR \bx)\phi(\bx)]^2] = O(d^{-\ceil{\|\alpha\|_1/2}})\,.$$
\end{claim}
\begin{proof}
Write $\phi(\bx) = \sum_{\substack{\beta \in \N^d}} \hat{\phi}(\beta) \He_{\beta}(\bx)$. By integration by parts,
\allowdisplaybreaks
\begin{align*}
&\;\;\;\;\;\E_{\bx \sim \normal(0,I_d)}[\He_{\alpha}(\bR\bx) \phi(\bx)] \\
&=  \E_{\bx_{-1}}\left[\prod_{i \neq 1} \He_{\alpha_i}(x_i)\E_{x_1}[\He_{\alpha_1}(x_1) \phi(\bR^{\top}\bx)]\right] \\
&= \E_{\bx_{-1}} \left[ \prod_{i \neq 1} \He_{\alpha_i}(x_i) \E_{x_1}\left[ \frac{\partial^{\alpha_1}}{\partial x_1^{\alpha_1}} \phi(\bR^{\top} \bx)\right]\right] \\
&= \E_{\bx_{-1}}\left[\prod_{i \neq 1} \He_{\alpha_i}(x_i) \E_{x_1}\left[\sum_{j_1,\ldots,j_{\alpha_1} \in [d]} \left( \prod_{k=1}^{\alpha_1} R_{1,j_k} \frac{\partial}{\partial z_{j_k}} \right) \phi(\bz) \mid_{\bz = \bR^{\top} \bx}\right]\right] \\
&= \E_{\bx}\left[\sum_{j_{1,1},\ldots,j_{1,\alpha_1} \in [d]} \dots \sum_{j_{P,1},\ldots,j_{P,\alpha_P} \in [d]} \left(\prod_{l=1}^{\alpha_1} R_{1,j_{1,l}} \frac{\partial}{\partial z_{j_{1,l}}}\right) \dots \left(\prod_{l=1}^{\alpha_P} R_{P,j_{P,l}} \frac{\partial}{\partial z_{j_{P,l}}}\right) \phi(\bz) \mid_{\bz = \bR^{\top} \bx}\right] \\
&= \sum_{\substack{\Upsilon \in \Z_{\geq 0}^{P \times d} \\ 
\Upsilon \bones = \alpha_{1:P}}}  \E_{\bx}\left[\left(\prod_{i \in [P],j \in [d]} R_{i,j}^{\Upsilon_{i,j}} (\Upsilon_{i,j}!) \frac{\partial^{\Upsilon_{i,j}}}{{\partial z_{j}^{\Upsilon_{i,j}}}}\right) \phi(\bz) \mid_{\bz = \bR^{\top} \bx}\right] \\
&= \sum_{\substack{\Upsilon \in \Z_{\geq 0}^{P \times d} \\ 
\Upsilon \bones = \alpha_{1:P}}} 
\E_{\bx}\left[\left(\prod_{i \in [P],j \in [d]} R_{i,j}^{\Upsilon_{i,j}} (\Upsilon_{i,j}!) \frac{\partial^{\Upsilon_{i,j}}}{{\partial x_{j}^{\Upsilon_{i,j}}}} \right) \phi(\bx)\right] \\
&= \sum_{\substack{\Upsilon \in \Z_{\geq 0}^{P \times d} \\ 
\Upsilon \bones = \alpha_{1:P}}}  C_{\Upsilon} \left(\prod_{i \in [P],j \in [d]} R_{i,j}^{\Upsilon_{i,j}} \right) \hat{\phi}(\bones^{\top} \Upsilon)\,.
\end{align*}

So
\begin{align*}
& \;\;\;\;\; \E_{\bR}[\E_{\bx}[\He_{\alpha}(\bR\bx) \phi(\bx)]^2] \\
&= \sum_{\substack{\Upsilon,\Upsilon' \in \Z_{\geq 0}^{P \times d} \\ \Upsilon \bones = \Upsilon' \bones = \alpha_{1:P}}} C_{\Upsilon} C_{\Upsilon'} \hat{\phi}(\bones^{\top} \Upsilon) \hat{\phi}(\bones^{\top} \Upsilon') \E_{\bR}\left[ \prod_{i \in [P],j \in [d]} R_{i,j}^{\Upsilon_{i,j} + \Upsilon'_{i,j}}\right] \\
&\lesssim \frac{1}{d^{\|\alpha\|_1}} \sum_{\substack{\Upsilon,\Upsilon' \in \Z_{\geq 0}^{P \times d} \\ \Upsilon \bones = \Upsilon' \bones = \alpha_{1:P}}} C_{\Upsilon} C_{\Upsilon'} \hat{\phi}(\bones^{\top} \Upsilon) \hat{\phi}(\bones^{\top} \Upsilon') 1(\Upsilon_{i,j} + \Upsilon'_{i,j} \in 2\N \mbox{ for all } i, j) \\
&\lesssim \frac{1}{d^{\|\alpha\|_1}} \sum_{\substack{\Upsilon,\Upsilon' \in \Z_{\geq 0}^{P \times d} \\ \Upsilon \bones = \Upsilon' \bones = \alpha_{1:P}}} C_{\Upsilon}^2 \hat{\phi}(\bones^{\top} \Upsilon)^2 1(\Upsilon_{i,j} + \Upsilon'_{i,j} \in 2\N \mbox{ for all } i, j) = (\ast)\,.
\end{align*}
We argue that for any matrix $\Upsilon \in \Z_{\geq 0}^{P \times d}$ with $\Upsilon \bones = \alpha$, there are at most $d^{\floor{\|\alpha\|_1/2}}$ matrices $\Upsilon'$ such that $\Upsilon' \bones = \balpha$ and $\Upsilon + \Upsilon'$ has all even entries. Indeed, let $S_{even}(\alpha) \subseteq [P] \times [d]$ denote the coordinates $(i,j)$ where $\Upsilon_{i,j} > 0$ is even. Let $S_{odd}(\Upsilon) \subseteq [P] \times [d]$ denote the coordinates where $\Upsilon_{i,j} > 0$ is odd. Then if $\Upsilon + \Upsilon'$ has all even entries, we must have $S_{odd}(\Upsilon) = S_{odd}(\Upsilon')$. So there are at most $\binom{Pd}{|S_{even}(\Upsilon')|} \leq \binom{Pd}{\floor{\|\alpha\|_1/2}} \lesssim d^{\floor{\|\alpha\|_1 / 2}}$ choices for $S_{even}(\Upsilon')$, where we use $S_{even}(\Upsilon') \leq \floor{\|\alpha\|_1 / 2}$. So
\begin{align*}
(\ast) &\lesssim \frac{1}{d^{\|\alpha\|_1}} \sum_{\Upsilon \in \Z^{P \times d}_{\geq 0}, \Upsilon \bones = \alpha_{1:P}} \hat{\phi}(\bones^\top \Upsilon)^2 d^{\floor{\|\alpha\|_1 / 2}} \lesssim d^{-\ceil{\|\alpha\|_1 / 2}}\,,
\end{align*}
where we use the normalization  $\|\phi\|^2 \leq 1$ which implies $\sum_{\beta} \hat{\phi}(\beta)^2 \lesssim 1$. We also use that for each $\beta \in \N^d$, there are at most $C_P$ matrices $\Upsilon \in \Z_{\geq 0}^{P \times d}$ such that $\bones^{\top} \Upsilon = \beta$.
\end{proof}
\end{proof}

\end{document}